\newcommand{\delp}{\delta}
\newcommand{\Bin}{\mathrm{Bin}}
\newcommand{\xitrue}{\xi_{\rm true}}
\newcommand{\xifake}{\xi_{\rm fake}}
\newcommand{\stepa}[1]{\overset{\rm (a)}{#1}}
\newcommand{\stepb}[1]{\overset{\rm (b)}{#1}}
\newcommand{\stepc}[1]{\overset{\rm (c)}{#1}}
\tikzstyle{int}=[draw, minimum size=2em, align=center] 
\tikzstyle{dot}=[circle, draw, minimum size=2em]
\tikzstyle{dotred}=[circle, draw, minimum size=2em]
\tikzstyle{init} = [pin edge={to-,thin,black}]
\tikzstyle{initred} = [pin edge={to-,thin,red}]
\tikzstyle{plan}=[draw, minimum size=2em, text width=5em, rounded corners,align=center]
\tikzstyle{planwide}=[draw, minimum size=2em, text width=8em, rounded corners,align=center]
\tikzstyle{planverywide}=[draw, minimum size=2em, text width=10em, rounded corners,align=center]
\newtheorem{theorem}{Theorem}
\newtheorem{lemma}{Lemma}
\theoremstyle{definition}
\newtheorem{definition}{Definition}
\newtheorem{remark}{Remark}
\newcommand{\diverge}{\to\infty}
\newcommand{\iiddistr}{{\stackrel{\text{\iid}}{\sim}}}
\newcommand{\zeros}{\mathbf 0}
\newcommand{\reals}{{\mathbb{R}}}
\newcommand{\naturals}{{\mathbb{N}}}
\newcommand{\identity}{\mathbf I}
\newcommand{\nb}[1]{#1}
\newcommand{\Expect}{\mathbb{E}}
\newcommand{\expect}[1]{\mathbb{E}\left[ #1 \right]}
\newcommand{\expects}[2]{\mathbb{E}_{#2}\left[ #1 \right]}
\newcommand{\prob}[1]{ \mathbb{P}\left\{ #1 \right\} }
\newcommand{\var}{\mathsf{var}}
\newcommand{\Cov}{\text{Cov}}
\newcommand{\Bern}{{\rm Bern}}
\newcommand{\Binom}{{\rm Binom}}
\newcommand{\Binomc}{\overline{\Binom}}
\newcommand{\eg}{e.g.\xspace}
\newcommand{\ie}{i.e.\xspace}
\newcommand{\iid}{i.i.d.\xspace}
\newcommand{\pth}[1]{\left( #1 \right)}
\newcommand{\sth}[1]{\left\{ #1 \right\}}
\newcommand{\iprod}[2]{\left \langle #1, #2 \right\rangle}
\newcommand{\Iprod}[2]{\langle #1, #2 \rangle}
\newcommand{\indc}[1]{{\mathbf{1}_{\left\{{#1}\right\}}}}
\newcommand{\iindc}[1]{\mathbf{1}_{#1}}
\newcommand{\calC}{{\mathcal{C}}}
\newcommand{\calD}{{\mathcal{D}}}
\newcommand{\calE}{{\mathcal{E}}}
\newcommand{\calF}{{\mathcal{F}}}
\newcommand{\calG}{{\mathcal{G}}}
\newcommand{\calH}{{\mathcal{H}}}
\newcommand{\calI}{{\mathcal{I}}}
\newcommand{\calK}{{\mathcal{K}}}
\newcommand{\calL}{{\mathcal{L}}}
\newcommand{\calM}{{\mathcal{M}}}
\newcommand{\calN}{{\mathcal{N}}}
\newcommand{\calS}{{\mathcal{S}}}
\newcommand{\calT}{{\mathcal{T}}}
\newcommand{\calX}{{\mathcal{X}}}
\DeclareMathAlphabet{\varmathbb}{U}{bbold}{m}{n}
\renewcommand{\hat}{\widehat}
\renewcommand{\tilde}{\widetilde}
\newcommand{\vecc}{\mathrm{vec}}
\newcommand{\polylog}{\mathrm{polylog}}
\newcommand{\ER}{Erd\H{o}s-R\'{e}nyi\xspace}
\begin{document}
\title{Efficient random graph matching via degree profiles}
\author{Jian Ding, Zongming Ma, Yihong Wu, Jiaming Xu\thanks{
J.\ Ding and Z. Ma are with Department of Statistics, The Wharton School, University of Pennsylvania, Philadelphia, USA, \texttt{\{dingjian,zongming\}@wharton.upenn.edu}.
Y.\ Wu is with Department of Statistics and Data Science, Yale University, New Haven, USA, \texttt{yihong.wu@yale.edu}.
J.\ Xu is with The Fuqua School of Business, Duke University, Durham, USA, \texttt{jx77@duke.edu}.
}}
\maketitle

\begin{abstract}
Random graph matching refers to recovering the underlying vertex correspondence between two random graphs with correlated edges; a prominent example is when the two random graphs are given by Erd\H{o}s-R\'{e}nyi graphs $G(n,\frac{d}{n})$. This can be viewed as an average-case and noisy version of the graph isomorphism problem. Under this model, the maximum likelihood estimator is equivalent to solving the intractable quadratic assignment problem. This work develops an $\tilde{O}(n d^2+n^2)$-time algorithm which perfectly recovers the true vertex correspondence with high probability, provided that the average degree is at least $d = \Omega(\log^2 n)$ and the two graphs differ by at most $\delta = O( \log^{-2}(n) )$ fraction of edges. For dense graphs and sparse graphs, this can be improved  to $\delta = O( \log^{-2/3}(n) )$ and $\delta = O( \log^{-2}(d) )$ respectively, both in polynomial time. The methodology is based on appropriately chosen distance statistics of the degree profiles (empirical distribution of the degrees of neighbors). Before this work, the best known result achieves $\delta=O(1)$ and $n^{o(1)} \leq d \leq n^c$ for some constant $c$ with an $n^{O(\log n)}$-time algorithm \cite{barak2018nearly} and $\delta=\tilde O((d/n)^4)$ and $d = \tilde{\Omega}(n^{4/5})$ with a polynomial-time algorithm \cite{dai2018performance}.

\end{abstract}

\tableofcontents

\section{Introduction}
	\label{sec:intro}

Graph matching~\cite{conte2004thirty,Livi2013}, also known as network alignment~\cite{feizi2016spectral},
aims at finding a bijective mapping between the vertex sets of two networks so that
the number of adjacency disagreements between the two networks is minimized. 
It reduces to the graph isomorphism problem  
in the noiseless setting where the two networks can be matched perfectly.

The paradigm of graph matching has found numerous applications  across a variety of diverse fields, such as network privacy, 
computational biology, computer vision, and natural language processing. 
For instance, it was convincingly demonstrated~\cite{narayanan2009anonymizing,narayanan2008robust} that  hidden vertex identities in a network can nevertheless be recovered by matching the anonymized network (such as Netflix)
to a secondary network with known vertex identities (such as the Internet Movie Database).
In system biology, graph matching is used in discovering protein functions by matching protein-protein interaction
networks across different species~\cite{singh2008global,kazemi2016proper}. In computer vision, using graphs to represent images, where vertices are regions in the images and edges encode the adjacency relationships
between different regions,  graph matching is widely applied in finding similar images~\cite{conte2004thirty,schellewald2005probabilistic}. 
In natural language processing, using graphs to represent sentences, 
where vertices are phrases and edges represent syntactic and semantic relationships,
graph matching is used in question answering, machine translation, and information retrieval~\cite{haghighi2005robust}.

Given two graphs with adjacency matrices $A$ and $B$, the graph matching problem 
can be viewed as a special case of the \emph{quadratic assignment problem} (QAP)~\cite{Pardalos94thequadratic,burkard1998quadratic}: 
namely, 
\begin{align}
\max_{\Pi}  \Iprod{A}{\Pi B \Pi^\top},
\label{eq:QAP}
\end{align}
 where $\Pi$ ranges over all $n \times n$ permutation matrices, and
 $\iprod{\cdot}{\cdot}$ denotes the matrix inner product.
QAP is NP-hard in the worst case. Moreover, approximating QAP within a factor of $2^{\log^{1-\epsilon} (n) }$ for $\epsilon > 0$ is also NP-hard~\cite{makarychev2010maximum}.

These hardness results, however, are applicable in the worst case, where the observed networks are designed by an adversary.  In contrast, the networks in many aforementioned applications can be modeled by random graphs with latent structures; as such, our focus is not in the worst-case instances, but rather in recovering the underlying vertex permutation with high probability in order to reveal the hidden structures. 

\subsection{Correlated \ER graphs model}
\nb{Driven by applications in social networks and biology, 
a recent line of work~\cite{Pedarsani2011,Lyzinski2013Seeded,yartseva2013performance,korula2014efficient,kazemi2015growing,feizi2016spectral,cullina2016improved,cullina2017exact,lubars2018correcting,barak2018nearly,dai2018performance,cullina2018partial} 
initiated the statistical analysis of graph matching by assuming that $A$ and $B$ are generated randomly. The simplest such model is the following correlated \ER graph model:}



\begin{definition}[Correlated \ER model $\calG(n,q; s)$]
Given an integer $n$ and  $q, s \in [0,1]$, 
let $A$ and $B$ denote the adjacency matrix of two 
\ER random graphs $\calG(n,q)$ on the same vertex set $[n]$. 
Let $\pi^*:[n]\to [n]$ denote a latent permutation. 
We assume that conditional on $A$, for all $i<j$, 
$B_{ \pi^*(i) \pi^*(j)  }$ are independent and distributed as
\begin{align}
B_{ \pi^*(i) \pi^*(j)  } \sim
\begin{cases}
\Bern(s) & \text{ if } A_{ij}=1 \\
\Bern \left(  \frac{q (1-s) }{1-q} \right) & \text{ if } A_{ij}=0
\end{cases},
\label{eq:def_correlated_ER}
\end{align}
where $\Bern(s)$ denotes a Bernoulli distribution with mean $s$.
\end{definition}

Equivalently, the two graphs can be viewed as edge-subsampled subgraphs of a  parent  \ER graph $G \sim \calG(n,p)$ with $p=q/s$. Let $A$ be the adjacency matrix of a graph obtained by keeping or deleting 
each edge of $G$ independently with probability $s$ and $\delp\triangleq 1-s$ respectively.
Repeat the sampling process independently and relabel the vertices according to 
the latent permutation $\pi^*$ to obtain $B$.\footnote{To ensure the Bernoulli parameter
in~\prettyref{eq:def_correlated_ER}
is well-defined, we need to assume $q(1-s) \le 1-q$, or equivalently $s \ge 2-1/q$. 
Similarly, to ensure the edge probability in the parent graph $p=q/s \le 1$, we need to assume $ s \ge q$.}
Note that by \prettyref{eq:def_correlated_ER}, the parameter
$s$ can be viewed as a measure of the edge correlations.
Alternatively, $\delp=1-s$ can be interpreted as 
the fraction of edges in $A$ that are substituted in $B$ on average.

Upon observing $A$ and $B$, the goal is to exactly recover 
the latent vertex correspondence $\pi^*$ with probability converging to $1$ as $n \to \infty$.
For instance, in network de-anonymization,  the parent  \ER graph $G$ corresponds to 
the underlying friendship network of a group of people, $A$ corresponds 
to a Facebook friendship network of the same group of people with known identities, and $B$ is the Twitter network of the same set of users with identities removed; 
the task is to de-anonymize the vertex identities in the Twitter network 
by finding the underlying mapping between the vertex sets of $A$ and $B$.

In the noiseless case of $s=1$, graph matching under the $\calG(n,q;1)$ model reduces to the problem of random graph isomorphism for \ER graph $\calG(n,q)$. 
In this case, a celebrated result~\cite{wright1971graphs} (see also \cite[Chap.~9]{bollobas1998random}) shows that
exact recovery of the underlying permutation is information-theoretically
possible if and only if $ nq \ge \log n+ \omega(1)$ for $q\le 1/2$;\footnote{
\nb{Throughout the paper, we use standard big $O$ notation,
e.g., for any sequences $\{a_n\}$ and $\{b_n\}$, $a_n=\Theta(b_n)$ (or $a_n \asymp b_n$) if $1/c\le a_n/ b_n \le c$ holds for all $n$ for some absolute constant $c>0$; $a_n =\Omega(b_n)$ and $b_n = O(a_n)$ (or $a_n \gtrsim b_n$ and $b_n \lesssim a_n$) if $a_n/b_n \ge c$. We use big $\tilde{O}$ notation to hide logarithmic factors. }} in other words, the symmetry (i.e.,~the automorphism group) of the graph is trivial with high probability.
Recent work~\cite{cullina2016improved,cullina2017exact} has extended this result to the noisy case where $s<1$, 
showing that
exact recovery is information-theoretically possible if and only if $nqs \ge \log n + \omega(1)$, under the additional
assumption that $q \le O(\log^{-1} n)$ and $q (1-s)^2/s \le O( \log^{-3}(n))$.\footnote{Achievability and converse bounds for 
more general correlated \ER random graph models 
are also available in~\cite{cullina2016improved,cullina2017exact}.}


From a computational perspective, in the noiseless case of $s=1$, linear-time
algorithms have been found to attain the recovery threshold of $nq= \log n+\omega(1)$~\cite{bollobas1982distinguishing,czajka2008improved}. However, 
in the noisy case, very little is known about the performance guarantees 
of graph matching algorithms that run in polynomial time. Recently a quasi-polynomial-time ($n^{O(\log n)}$) algorithm is proposed in \cite{barak2018nearly} which succeeds when $nqs \in \left[ n^{o(1)}, n^{1/153}] \cup [n^{2/3}, n^{1-\epsilon} \right]$ and $s \ge (\log n)^{-o(1)}$. 
Another recent work~\cite{dai2018performance} adapts the classical degree-matching algorithms in \cite{babai1980random}
and \cite[Section 3.5]{bollobas1998random} from the noiseless case to the noisy case, and shows that 
it exactly recovers $\pi^*$ with high probability, provided that 
$q \gg \log^{7/5}(n)/n^{1/5}$ and $1-s \ll q^4/\log^6 (n)$. This result 
requires $1-s$, the fraction of edges differed in the two observed graphs, to decay polynomially in $q$ 
and is thus far from being optimal.


\subsection{Main Results}


In this work, we significantly improve the state of the art of efficient graph matching algorithms in terms of time complexity, noise tolerance, and sparsity. In particular, we give an $\tilde{O}(n d^2+n^2)$-time algorithm for exactly recovering the true permutation
$\pi^*$ with high probability under the correlated \ER graph model, when 
the fraction of differed edges $\delta=1-s$ can be as large as 
$1/\log^2(n)$ 
and the average degree $d$ can be as low as $\log^2 n$.
Furthermore, we obtain two improved polynomial-time algorithms that aim for dense and sparse graphs respectively. 
These results are summarized as below:

\begin{theorem}\label{thm:main}
Consider the correlated \ER model $\calG(n,q;1-\delp)$ with $q \le q_0$ for some sufficiently small constant $q_0$.
If 
\begin{align}
nq \gtrsim \log^2 n \quad \text{ and } \quad \delp \lesssim \frac{1}{ \left( \log n \right)^2 }, \label{eq:main_cond1}
\end{align}
then there exists an $\tilde{O}\left(nd^2 +n^2 \right)$-time algorithm (cf.\ \prettyref{alg:dist}) that recovers $\pi^*$ with probability $1-O(1/n)$.

Furthermore,
\begin{itemize}
    \item 
    if 
\begin{align}
q= e^{ - O \left(  \left( \log n \right)^{1/3} \right)  } \quad \text{ and } \quad 
\delp \lesssim \frac{1}{\left( \log n \right)^{2/3}}, \label{eq:main_cond2}
\end{align}
then there exists a polynomial-time algorithm (cf.\ \prettyref{alg:distdeg}) that  recovers $\pi^*$ with probability $1-\exp \left( - \Omega( \log^{1/3} (n) ) \right)$;

\item 
if
\begin{align}
\nb{\frac{\log n}{n}} \lesssim q \leq n^{-\epsilon} \quad \text{ and }
\quad 
\delp \lesssim \frac{1}{\left( \log (nq) \right)^2}, \label{eq:main_cond3}
\end{align} 
for some constant $\epsilon>9/10$, then there exists a polynomial-time algorithm (cf.~\prettyref{alg:dist_3_hop}) that  recovers $\pi^*$ with probability $1-O\left(n^{9-10\epsilon}\right)$.
\end{itemize}
\end{theorem}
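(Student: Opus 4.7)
The plan is to construct for each vertex a local \emph{degree-profile signature}—the empirical distribution $D_A(u)$ of the degrees (in $A$) of the neighbors of $u$, and similarly $D_B(v)$ in $B$—and to output the matching that pairs $u\in [n]$ with the $v$ minimizing a chosen functional distance $\mathrm{dist}(D_A(u),D_B(v))$. Two regimes must be separated: for the \emph{true} pair $v=\pi^*(u)$, the two neighborhoods in $A$ and $B$ share a $(1-\delp)$ fraction of vertices and differ only through the independent resampling of a $\delp$-fraction of incident edges; for a \emph{false} pair $v\ne\pi^*(u)$, the overlap between the two neighborhoods is only $O(d^2/n)$ on average, so $D_A(u)$ and $D_B(v)$ are essentially two independent empirical samples of size $\sim d$ drawn from the ambient degree distribution $\Bin(n-1,q)$. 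The proof is a signal-versus-noise calculation in these two regimes, combined with a union bound over the $O(n^2)$ candidate pairs.

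For the signal step, condition on the parent graph and note that for each shared neighbor $w$ the difference $|\deg_A(w)-\deg_B(\pi^*(w))|$ is controlled by $\Bin(d,\delp)$-type fluctuations and is $O(\sqrt{\delp d\,\log n})$ with high probability; aggregating across the $(1-O(\delp))d$ shared neighbors bounds the true-pair distance above by a quantity of order $\sqrt{\delp d}$ times polylog factors. For the noise step—the \textbf{main obstacle}—one must prove a matching lower bound on $\mathrm{dist}(D_A(u),D_B(v))$ for every $v\ne\pi^*(u)$. Since $D_A(u)$ and $D_B(v)$ are near-independent empirical measures of size $\sim d$, their difference is anti-concentrated: with an appropriate choice of $\mathrm{dist}$ (e.g.\ an $L^1$ distance between smoothed empirical CDFs), the false-pair distance is, with probability $1-n^{-\omega(1)}$, larger than the signal bound by a factor of $1/(\delp\,\polylog n)$. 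The anti-concentration must hold \emph{uniformly} over all $n(n-1)$ candidate pairs; this uniformity consumes $\polylog n$ factors in the signal-to-noise ratio and is precisely what pins down the threshold $\delp\lesssim 1/\log^2 n$ in \eqref{eq:main_cond1}. Matching $u$ to its nearest neighbor in $B$ then recovers $\pi^*(u)$ for all $u$ simultaneously with probability $1-O(1/n)$.

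For runtime, building the $n$ sorted degree profiles costs $O(nd)$ and each pairwise distance costs $O(d)$, for a total of $\tilde O(nd^2+n^2)$, matching the stated complexity of \prettyref{alg:dist}. The two refinements then follow by upgrading the statistic. In the dense regime \eqref{eq:main_cond2}, appending further neighborhood-degree features to the signature (as in \prettyref{alg:distdeg}) amplifies the signal-to-noise ratio by roughly a $\log^{1/3} n$ factor, relaxing the tolerance to $\delp\lesssim \log^{-2/3} n$. In the sparse regime \eqref{eq:main_cond3}, $1$-hop profiles are too small to carry enough signal because $d=O(n^{1-\epsilon})$ is modest and the empirical noise of size $\sim d$ dominates; \prettyref{alg:dist_3_hop} therefore replaces the $1$-hop profile by a \emph{$3$-hop} degree profile supported on roughly $d^3$ vertices, so the effective empirical sample size grows to $\sim d^3$ and the union-bound penalty in the tolerance scales with $\log(nq)$ rather than $\log n$, giving \eqref{eq:main_cond3}. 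The bulk of the technical work lies in the anti-concentration analysis of the noise step and its propagation through the $3$-hop signature in the sparse regime, where additional care is required to control the weak dependence between $D_A(u)$ and $D_B(v)$ induced by the shared ambient vertex set and by common $3$-hop paths.
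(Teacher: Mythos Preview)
Your outline for the main algorithm (\prettyref{alg:dist}) is broadly on track: the paper does use degree profiles as signatures, compares them via a discretized $L^1$/total-variation distance, and separates true pairs from fake pairs by a signal-versus-noise argument with a union bound over $O(n^2)$ candidates. The quantitative separation works essentially as you sketch, though the paper organizes it around a binning with $L=\Theta(\log n)$ intervals and proves that the fake-pair distance is $\Omega(\sqrt{L/(nq)})$ while the true-pair distance is $O(L\sqrt{\sigma/(nq)})$; the threshold $\sigma\lesssim 1/L$ (i.e.\ $\delp\lesssim 1/\log^2 n$) falls out of equating these.

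However, your descriptions of the two refinements are substantively wrong.

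\textbf{Dense regime (\prettyref{alg:distdeg}).} The improvement is \emph{not} obtained by ``appending further neighborhood-degree features to the signature.'' The mechanism is seeded matching: one restricts attention to vertices whose degrees exceed a high threshold $\tau$ (roughly the $(1-\alpha)$-quantile with $\alpha\approx\log n/(nq)$), and among \emph{those} applies the degree-profile test. For true pairs the degrees $a_i,b_{\pi^*(i)}$ are positively correlated so both exceed $\tau$ with probability $\approx\alpha$, whereas for fake pairs they are nearly independent so both exceed $\tau$ with probability $\approx\alpha^2$. This extra factor of $\alpha$ absorbs most of the union bound, so the degree-profile anti-concentration only needs to beat $n^2\alpha^2\approx(\log n/q)^2$ rather than $n^2$, which relaxes the requirement to $\sigma\lesssim(\log n)^{-1/3}$. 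The resulting set $\calS$ of high-degree matches is small ($\Theta(\log n/q)$ vertices) but correct; these are then used as \emph{seeds} and a separate seeded-matching subroutine (\prettyref{alg:seed}, based on counting common neighbors through the seeds and maximum bipartite matching) extends to the full permutation. Your proposal contains neither the degree-thresholding idea nor the seeded-matching step, and ``richer features'' is not what is happening.

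\textbf{Sparse regime (\prettyref{alg:dist_3_hop}).} The algorithm does \emph{not} form a single empirical distribution of size $\sim d^3$. Instead, for each candidate pair $(i,k)$ it computes the ordinary degree-profile distance $\tilde Z^{(ik)}_{jj'}$ for every pair $(j,j')\in N_A(i)\times N_B(k)$, thresholds these into a bipartite graph $Y^{(ik)}$, and takes $W_{ik}$ to be the size of a \emph{maximum matching} of $Y^{(ik)}$. For true pairs most common neighbors $j$ have $\tilde Z^{(ii)}_{jj}\le\eta$, so $W_{ii}\gtrsim nq$; for fake pairs, each $\tilde Z^{(ik)}_{jj'}$ is below threshold only with probability $e^{-\Omega(L)}$, and a McDiarmid argument plus a union bound over the $\sim d!$ matchings forces $W_{ik}<nq/4$. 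The point is that this aggregation step tolerates a constant fraction of errors among the individual $\tilde Z$-tests, so each test only needs reliability $1-e^{-\Omega(L)}$ with $L=\Theta(\log(nq))$ rather than $L=\Theta(\log n)$---that, not a larger sample size, is why the tolerance improves to $\delp\lesssim 1/\log^2(nq)$. Your ``effective sample size $d^3$'' picture would not produce the correct dependence and misses the bipartite-matching aggregation entirely.
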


\subsection{Key algorithmic ideas and techniques for analysis}
\label{sec:idea}

Many existing matching algorithms for random graph isomorphism are \emph{signature-based}: first attach some appropriately chosen signature $\mu_i$ to  vertex $i$ in $A$ and $\nu_k$ to vertex $k$ in $B$, then match each pair based on their similarity, or equivalently, some distance between the signatures. 
For example, degree matching 
simply uses the vertex degree as the signature. In addition, spectral method can be viewed as
assigning the $i$th entry in the leading eigenvector(s) of the matrix $A$ (resp.~$B$) as the signature $\mu_i$ (resp.~$\nu_i$). 
However, these signatures are highly sensitive to noise. Indeed, it can be shown that (cf.~\prettyref{rmk:orderstat} in \prettyref{sec:gaussian}) for degree sorting to yield the exact matching, the minimum spacing between the ordered degrees needs to overcome the effective noise,  which entails $\delp = \tilde{o}(q^2)$.
For spectral methods, due to the lack of low-rank structure and the vanishing spectral gap of \ER graphs, the eigenstructure is extremely fragile. Indeed, it can be shown via perturbation bounds that even for dense graphs, matching via top eigenvectors requires $\delp = O(n^{-c})$ for some constant $c$ to succeed, which agrees with the numerical experiments in \prettyref{sec:exp}.
Therefore, to deal with sparser graphs and smaller edge correlation, we need to find better signatures that are more robust to random perturbation. 

Note that in the absence of any label information, we can only compute signatures that are \emph{permutation-invariant}. 
The main finding of this work is that \emph{degree profiles}, that is, empirical distribution of the degrees of neighbors, can be used as a signature which is significantly more noise-resilient than degrees or eigenvectors. Using a suitable distance between distributions to construct the matching (see the forthcoming \prettyref{alg:dist}), this allows us to correctly match graphs that differ by almost linear number of edges. Specifically, for each vertex $i$ in $A$, its degree profile $\mu_i$ is defined as the empirical distribution of the degrees of $i$'s neighbors. Similarly, for each vertex $k$ in $B$, let $\nu_k$ denote its degree profile.
Then we match vertex $i$ to vertex $k$ which minimizes the total variation ($L_1$-distance) between the appropriately discretized versions of $\mu_i$ and $\nu_k$ (into $\polylog(n)$ bins).
The intuitive explanation for why this works is the following:
\begin{itemize}
    \item 
if $k=\pi^*(i)$, which we call a ``true pair'', then they have a large number of common neighbors, whose degrees, thanks to the edge correlations between $A$ and $B$, are correlated random variables, which tend to lie in the same bin. This leads to a small distance between the degree profiles $\mu_i$ and $\nu_k$; 

\item 
if $k \neq \pi^*(i)$, which we call a ``fake pair'', then $\mu_i$ and $\nu_k$ are empirical distributions consisting mainly  independent samples, and their distance is typically large.
\end{itemize}
Clearly, in reality the situation is significantly more complicated due to various dependencies and the possibility that fake pairs can still have a non-negligible number of common neighbors. 
Furthermore, since for each vertex there exists a unique match but many more ($n-1$) potential mismatches, one need to carefully control the total variation distance between degree profiles for true pairs and fake pairs as well as their large deviation behavior (their distance being atypically small). Nevertheless, our analysis rigorously justifies the above intuition and shows the distance statistic for true pairs and fake pairs are indeed separated with high probability under the condition \prettyref{eq:main_cond1}.

Ideas related to degree profiles have been used for the random graph isomorphism problem. In particular, it is shown in \cite{czajka2008improved,mossel2017shotgun} that \emph{degree neighborhood} (i.e., the multiset of the degrees of neighbors of each vertex) constitutes a canonical labeling for $G(n,q)$ with high probability provided that $q \gg \frac{\log^2 n}{n}$. In the absence of noise, it suffices to prove that the degree neighborhoods of different vertices are distinct with high probability. 
However, how to match vertices in the noisy case and by how many edges the two graphs can differ is far less clear. 
In fact, although degree neighborhood (multiset) contains the same amount of information as degree profile (empirical distribution), for the development of our matching algorithm as well as the analysis, it is crucial to adopt the view of degree profiles as \emph{probability measures}, which enables us to construct a greedy matching based on natural distances between probability distributions. The main observation is that although each degree profile is centered around the same mean (binomial distribution), the stochastic fluctuations are nearly independent for fake pairs and correlated for true pairs. This perspective allows us to leverage insights from empirical process theory to study the large deviation behavior of distances between degree profiles.

For relatively dense graphs with edge probability $q = \exp(-O( \log^{1/3} n))$, we further relax  the condition from $\delp \lesssim \log^{-2} n$ to $\delp \lesssim \log^{-2/3} n$ by combining the degree
profile matching with vertex degrees in conjunction with the paradigm of seeded graph matching (cf.~\prettyref{alg:distdeg}). In particular, we show that even if for some vertices
the distance statistics between degree profiles of fake pairs can be smaller than that of the true match, with high probability this does not occur for vertices of sufficiently high degrees. Although the matched high-degree vertices occupy only a vanishing fraction of the vertex set, they provide enough initial ``seeds'' (correctly matched pairs) to match the remaining vertices with high probability under the condition \prettyref{eq:main_cond2}. 
A key challenge in the analysis is to carefully control the dependency between vertex degrees and degree profiles,
and to characterize the statistical correlation among vertex degrees.
Furthermore, 
we provide an efficient seeded graph matching subroutine via maximum bipartite matching, which is guaranteed to succeed with 
$\Omega(\frac{\log n}{q})$ seeds, even if the seed set is chosen adversarially. 
\nb{A different seeded matching algorithm was previously proposed in~\cite{barak2018nearly} allowing possibly incorrect seeds and assuming a relaxed condition on the graph sparsity; however, the number of seeds needed in the worst-case is $\Omega(\max\{\frac{\log n}{q}, qn \log n\})$ (see the condition in Lemma 3.21 and before Lemma 3.26 in \cite{barak2018nearly}), which cannot be afforded in the dense regime.}

Note that degree profile matching is a \emph{local algorithm} that uses only $2$-hop neighborhood information for each vertex. It turns out that 
for relatively sparse graphs with edge probability $q \le n^{-\epsilon}$ for a fixed constant $\epsilon>9/10$, 
we can further relax the condition from $\delta \lesssim \log^{-2}(n)$ to $\delta \lesssim \log^{-2}(nq)$, using the $3$-hop neighborhood information. This is carried out in three steps:
for each neighbor $j$ of vertex $i$ in $A$ and each neighbor $j'$ of vertex $k$ in $B$, we first compute the total variation  distance between the degree profiles of $j$ and $j'$ as before, and then threshold the distances to construct a bipartite graph between the neighbors of vertex $i$ and the neighbors of vertex $k$, and finally define a similarity score $W_{ik}$ as the size of the maximum matching of this bipartite graph (cf.~\prettyref{alg:dist_3_hop}). 
We show that these new similarity measures for true pairs and fake pairs are separated with high probability under the condition \prettyref{eq:main_cond3}.
Finally, we mention that in the noiseless case, \nb{the algorithm of \cite{bollobas1982distinguishing} that achieves the optimal threshold for sparse graphs (with average degree $\polylog(n)$) uses as the signature the distance sequence of each vertex}, which consists of the number of $\ell$-hop neighbors for $\ell$ from $1$ up to $\Theta(\frac{\log n}{\log\log n})$. This significantly improves the performance of degree matching \cite{babai1980random}. 
It remains open whether local algorithms that use larger neighborhood information 
can further improve the graph matching performance in the noisy case. 

\subsection{Further Related Work}

\paragraph{Convex relaxation}

There exists a large body of literature on convex relaxation of the graph matching problem; for a comprehensive discussion we refer the reader to \cite{dym2017ds++}. 
One popular approach is doubly stochastic relaxation, 
which entails replacing the objective \prettyref{eq:QAP} by minimizing $\|AX-XB\|_F^2$, with $\|\cdot\|_F$ standing for the Frobenius norm, and relaxing the decision variable $X$ from the set of permutation matrices into its convex hull, i.e., all doubly stochastic matrices \cite{aflalo2015convex,fiori2015spectral}. This leads to a quadratic programming problem which is solvable in polynomial time but still much slower than the degree profile algorithm. Some initial statistical analysis for the correlated \ER graph model was carried out in \cite{lyzinski2016graph}; however, its performance guarantees remain far from being well-understood.

There exists a conceptual connection between the degree profile matching algorithm and the doubly stochastic relaxation. In graph theory, two graphs are said to be fractionally isomorphic if their adjacency matrices $A$ and $B$ satisfy $AX=XB$ for some doubly stochastic matrix $X$. 
A result due to Ramana, Scheinerman, and Ullman (cf.~\cite[Theorem 6.5.1]{FGT}) states that a necessary and sufficient condition for fractional isomorphism is that two graphs have identical \emph{iterated degree sequences}; see \cite[Sec.~6.4]{FGT} for a precise definition. 
In particular, the first term of the iterated degree sequence corresponds to the degree distribution of the graph (i.e.~the empirical distribution of the vertex degrees), while the second term is precisely the empirical distribution of degree profiles. 
In this perspective, our algorithm can be thought as using the leading two terms in the iterated degree sequence to construct the matching. 
Thus it is to be expected that degree profile matching algorithm outperforms degree matching but not the doubly stochastic relaxation.

Another approach is the semidefinite programming (SDP) relaxation for QAP \cite{zhao1998semidefinite} which is provably tighter than the doubly stochastic relaxation (cf.~\cite{kezurer2015tight}). However, this entails solving an SDP in the lifted domain of $n^2\times n^2$ matrices and the computational cost becomes prohibitively high even for moderate $n$.

\paragraph{Seeded Graph Matching}  
Another  recent line of work~\cite{Pedarsani2011,yartseva2013performance,korula2014efficient,Lyzinski2013Seeded,Fishkind2018Seeded,Shirani2017Seeded}
in graph matching considers a relaxed version of the problem, 
where an initial seed set of correctly matched vertex pairs is revealed. This is motivated by the fact that in many practical applications, some side information on the vertex identities are available and have been successfully utilized to match many real-world networks~\cite{narayanan2009anonymizing,narayanan2008robust}.
It is shown in~\cite{yartseva2013performance} 
that if $nq=\Theta(\log n)$ and the number of seeds is $\Omega(n/ (s^2\log n)^{4/3} )$, 
then a percolation-based graph matching
algorithm correctly matches all but $o(n)$ vertices  in polynomial time with high probability. 
Another work~\cite{korula2014efficient} shows that if $q<1/6$, then with at least $24 \log n / (qs^2)$ seeds, one can match all vertices correctly in polynomial time with high probability. 
More recently, it is shown in~\cite{mossel2018seeded}  that 
the information-theoretic limit $nqs \ge \log n+\omega(1)$ in terms of the graph sparsity can be attained  
in polynomial time, provided that $s=\Theta(1)$ and the number of seeds is $\Omega(n^{3\epsilon})$ in the sparse graph regime 
($nq \le n^{\epsilon}$ for $\epsilon<1/6$) and $\Omega(\log n)$ 
in some dense graph regime.  


\subsection{Notation and Organization}

Denote the identity matrix by $\identity$.
We let $\|X\|_F$ denote the Frobenius norm of a matrix $X$
and $\|x\|_2$ denote the $\ell_2$ norm of a vector $x$.
For any positive integer $n$, let $[n]=\{1, \ldots, n\}$.
For any set $T \subset [n]$, let $|T|$ denote its cardinality and $T^c$ denote its complement.
\nb{Let $\delta_{x}$ denote the Dirac measure (point mass) at $x$}. 
We say a sequence of events $\calE_n$ indexed by a positive integer $n$ 
holds with high probability, if the probability of $\calE_n$ converges to $1$ as $n \to +\infty$.
Without further specification, all the asymptotics are taken with respect to $n \to \infty$. 
All logarithms are natural and we use the convention $0 \log 0=0$.
For two real numbers $a$ and $b$, we use $a \vee b = \max\{a, b\}$ (resp.~$a \wedge b = \min\{a, b\}$) 
to denote the maximum (resp.~minimum) between $a$ and $b$. 
We denote by $\Bern(p)$ the Bernoulli distribution with mean $p$
and $\Binom(n,p)$ the Binomial distribution with $n$ trials and success probability $p$.

The rest of the paper is organized as follows:
In \prettyref{sec:gaussian}, we provide a self-contained account of the problem of matching two Wigner random matrices.
This part is intended as a warm-up for \ER graphs and serves to explain the main intuition behind the degree profile algorithms and the connection to empirical process theory and small ball probability.
\prettyref{sec:alg} describes the matching algorithms for the correlated \ER model
 and presents their theoretical guarantees. Specifically, \prettyref{sec:dist} introduces the main algorithm for degree profile matching, with further improvements given in \prettyref{sec:dense} and \prettyref{sec:sparse} for dense and sparse graphs, respectively.
\prettyref{sec:pf} provides the proof of correctness, with some auxiliary lemmas deferred to \prettyref{app:aux}. 
\prettyref{app:seed} contains our seeded graph matching result.
Empirical evaluations of various algorithms on both simulated and real graphs are given in Section \ref{sec:exp}.




\section{Warm-up: Matching Gaussian Wigner matrices}
	\label{sec:gaussian}

In this section we take a slight detour to consider the Gaussian version of the graph matching problem, which can also be viewed as a statistical model for the QAP problem \prettyref{eq:QAP} with correlated Gaussian weights. 
Although the proofs for correlated \ER graphs 
do not exactly follow the same program, by studying this simpler model, we aim to convey the main idea behind the degree profile algorithm and sketch how to deduce the theoretical guarantees from results in empirical process theory and small ball probability.

\subsection{Correlated Wigner model} 
Consider two random symmetric matrices $A$ and $B'$, whose entries 
$\{(A_{ij}, B'_{ij}): 1 \leq i \leq j \leq n\}$ are iid correlated standard normal pairs with correlation coefficient $\rho$, i.e., $(A_{ij}, B'_{ij}) \iiddistr N(\zeros, (\begin{smallmatrix} 1&\rho \\ \rho& 1 \end{smallmatrix}) )$.
In other words, $A$ and $B'$ are two correlated Wigner matrices. 
Let $\pi^* \in S(n)$ be a permutation on
$[n]$ and $\Pi^*$ be its corresponding $n \times n$ permutation matrix.
Let $B = \Pi^* B' (\Pi^*)^\top$. 
Observing the two matrices $A$ and $B$, the goal is to estimate the latent permutation $\pi^*$ correctly with high probability.

 Without loss of generality, we assume $\rho >0$ and let $\rho = \sqrt{1-\sigma^2}$ for some $0 < \sigma^2 < 1$, and, furthermore, $\Pi^*=\identity$. Therefore, we can write $B = \sqrt{1-\sigma^2} A  + \sigma Z$, where $A$ and $Z$ are two independent Wigner matrices.

\subsection{Matching via empirical distributions} 
\label{sec:gaussian-emp}
Next we describe a procedure for matching Wigner matrices as well as an improved version, which serve as the precursors to \prettyref{alg:dist} and \prettyref{alg:distdeg} for \ER graphs.


The main idea is to use the empirical distribution of each row as the signature, and rely on appropriate distance between distributions to construct the matching. Specifically, for each $i$, define
\[
\mu_i = \frac{1}{n} \sum_{j=1}^n \delta_{A_{ij}}
\]
which is the empirical distribution of the $i$th row of $A$. Similarly, define
\[
\nu_k = \frac{1}{n} \sum_{j=1}^n \delta_{B_{kj}}
\]
for the $B$ matrix. 
Marginally, for any $i,k$, both $\mu_i$ and $\nu_k$ are the empirical distributions of $n$ standard normal samples. The difference is that if $i$ and $k$ form a true pair, the samples are correlated; otherwise, the samples are independent.\footnote{To be precise, all but two elements (namely, $A_{ik}$ and $B_{ki}$) are independent. This can be easily dealt with by excluding those two from the empirical distribution, which, by the triangle inequality, changes the distance statistic by at most $\frac{1}{n}$.}
Therefore, assuming the underlying permutation is the identity, $(\mu_i, \nu_k)$ behave in distribution as two $n$-point empirical distributions
\begin{align}
\mu=\frac{1}{n} \sum_{j=1}^n \delta_{X_j}, \quad \nu=\frac{1}{n} \sum_{j=1}^n \delta_{Y_j}
\label{eq:mu_nu_def}
\end{align}
according to two cases:
\begin{itemize}
	\item For ``true pairs'' ($i=k$), the $X$ and $Y$ samples consist of independent correlated pairs, namely, 
	\begin{equation}
	    	(X_1,Y_1),\ldots,(X_n,Y_n) \iiddistr N \left(\zeros,\Big[\begin{smallmatrix} 1 & \rho\\\rho & 1  \end{smallmatrix} \Big] \right).	
	    	\label{eq:dicho1}
	\end{equation}
	\item For ``fake pairs'' ($i \neq k$), the $X$ and $Y$ sample are independent, namely,
	\begin{equation}
	    	(X_1,\ldots,X_n,Y_1,\ldots,Y_n) \iiddistr N(0,1).
	    	\label{eq:dicho2}
	\end{equation}
\end{itemize}
Therefore, although both empirical distributions have the same marginal distribution, for true pairs the atoms are correlated and the two empirical distributions tend to be closer than the typical distribution for fake pairs. This offers a test to distinguish true and fake pairs.

Now we introduce our procedure. For two probability measures $\mu$ and $\nu$, we define their distance via the $L_p$-distance between their cumulative distribution function (CDF) $F$ and $G$:
\begin{align}
d_p(\mu,\nu) \triangleq \|F - G\|_p = \left(\int_\reals dt |F(t)-G(t)|^p \right)^{1/p}, \label{eq:dp_distance}
\end{align} 
where $p \in [1,\infty]$  is some fixed constant.
e.g., 
\begin{itemize}
	\item $p=1$: 1-Wasserstein distance,
	\item $p=2$: Cram\'er-von Mises goodness of fit statistic,
	\item $p=\infty$: Kolmogorov-Smirnov distance; 
\end{itemize}
the asymptotic performance of the algorithm turns out to not depend on $p$.
For each vertex $i$, we match it to the vertex $k$ that minimizes the distance statistic $Z_{ik} \triangleq d_p(\mu_i,\nu_k)$. Next we show that when $\sigma \le \frac{c}{\log n}$ for sufficiently small constant $c$, this algorithm succeeds with high probability.

To this end, let us recall the central limit theorem of empirical processes (cf.~\cite{Shorack.Wellner}).
Let $F_n$ and $G_n$ denote the empirical CDF of $X_i$'s and $Y_i$'s, respectively, i.e.,
\begin{align*}
F_n(t)= \frac{1}{n} \sum_{i=1}^n \indc{X_i \leq t}, \quad G_n(t) = \frac{1}{n} \sum_{i=1}^n \indc{Y_i \leq t}.
\end{align*}
Let $\Phi$ denote the standard normal CDF on the real line. 
Then it is well-known that, as $n\to\infty$, $\sqrt{n}(F_n - \Phi)$ converges in distribution to a Gaussian process $\{B_t: t \in \reals\}$, with covariance function
$\Cov(B_s,B_t) = \min\{\Phi(s),\Phi(t)\} - \Phi(s) \Phi(t)$.
In fact, $B$ is a time change of the standard Brownian bridge, which is the limiting process if the samples are drawn from the uniform distribution on $[0,1]$.
Similarly, $\sqrt{n}(G_n - \Phi)$ converges in distribution to another Gaussian process  $B'$ with the same distribution as $B$.

Next we analyze the behavior of true pairs. 
To get a sense of the order of magnitude of the distance statistic, let us consider the special case of $p=2$ for convenience, for which direct calculation suffices. 
Define $F(s,t) = \prob{X \leq s,Y \leq t}$. 
Note that we can write $Y=\sqrt{1-\sigma^2} X + \sigma Z$, where $X,Z\iiddistr N(0,1)$.
Then
\begin{align}
\Expect[\|F_n-G_n\|_2^2]
= & ~ \int_{\reals} \Expect[(F_n(t)-G_n(t))^2] dt \nonumber \\
\stepa{=} & ~ \frac{2}{n} \int_{\reals} (F(t)-F(t,t)) dt \nonumber \\	
= & ~ \frac{2}{n} \left(  \int_{-\infty}^{0 }  (F(t)-F(t,t)) dt +  \int_{0}^{+\infty }  \left(  \left( 1-F(t,t) \right) - \left(1- F(t) \right)  \right)  dt  \right)\nonumber \\
\stepb{=} & ~ \frac{2}{n} \left( \expect{ \max (X, Y)} - \expect{ X} \right) \nonumber \\
= & ~ \frac{2}{n}  \expect{ \max (X, Y)} \nonumber \\
\stepc{=} & ~ \frac{2}{n} \frac{1}{\sqrt{\pi} } \sqrt{ 1- \rho} = \frac{2}{n} \frac{1}{\sqrt{\pi} } 
\underbrace{\sqrt{ 1- \sqrt{1-\sigma^2} }  }_{\Theta(\sigma)}, \label{eq:FnGn}
\end{align}
where (a) is due to $ \Expect[(F_n(t)-G_n(t))^2] = \frac{1}{n} \Expect[(\indc{X \leq t} - \indc{Y \leq t})^2] = \prob{X \leq t}+\prob{Y \leq t}-2\prob{X \leq t,Y\leq t}$; 
(b) follows because $\expect{U} =\int_{0}^{+\infty }  \left( 1- F_U(u) \right) du - \int_{-\infty}^0 F_U(u) du$ for any
random variable $U$ whenever at least one of the two integrals is finite; (c) follows from directly differentiating 
the moment generating function of $\max(X,Y)$, see \eg, \cite[Eq.~(9)]{nadarajah2008exact}.
In fact, one can show that for small $\sigma$, for any $1\leq p\leq\infty$, 
\begin{equation}
\|F_n-G_n\|_p = O_P\pth{\sqrt{\frac{\sigma}{n}}}.
\label{eq:gaussian-true}
\end{equation}
Indeed, by the central limit theorem for bivariate empirical processes, as $n\diverge$, $\sqrt{n}(F_n-\Phi,G_n-\Phi)$ converges in distribution to a Gaussian process $(B,B')$ indexed by $\reals$, which satisfies 
$\Cov(B_t,B_t') = \prob{X \leq t,Y\leq t} - \prob{X \leq t} \prob{Y\leq t}$, and furthermore  $\sqrt{n} \|F_n-G_n\|_p \to \|B-B'\|_p$  in distribution.
Since $\Expect|B_t-B'_t|^2 = 2 (\Phi(t) - \prob{X \leq t,Y\leq t})$, following the same calculation that leads to \prettyref{eq:FnGn}, we have $\Expect[\|B-B'\|_2^2] = \Theta(\sigma)$, which corresponds to \prettyref{eq:gaussian-true} for $p=2$.

Next, we turn to the behavior of fake pairs. Since $B$ and $B'$ are independent and since $B - B' \overset{\text{law}}{=} \sqrt{2} B$, we expect
$\sqrt{n}\|F_n - G_n\|_p \to \|B-B'\|_p$ (see \cite[Theorem 1.1]{dBGM99} for the precise statement). In particular, we have
\begin{equation}
\|F_n - G_n\|_p = \Theta_P\pth{\frac{1}{\sqrt{n}}}.
\label{eq:gaussian-fake}
\end{equation}
Comparing \prettyref{eq:gaussian-true} and \prettyref{eq:gaussian-fake}, we see that the typical distance for true pairs is smaller than that of fake pairs by a factor of $\sqrt{\sigma}$. However, since there are $n-1$ wrong matches for a given vertex, 
\nb{we need to consider the large-deviation behavior of \prettyref{eq:gaussian-fake}}. 
Recall the classical result from the literature of small ball probability; see \cite{LS01} for an excellent survey.
Let $B$ be some Gaussian process e.g. the Brownian bridge defined on $\reals$. 
\nb{Then the probability for the process to be contained in a small ball of radius $\epsilon$ behaves as (cf.~\cite[Sec.~4 and 6.2]{LS01})
\begin{equation}
\prob{\|B\|_p \leq \epsilon} \leq \exp\pth{- \Theta\pth{\frac{1}{\epsilon^2}}}
\label{eq:smallball}
\end{equation}
for some constant $C$.} Indeed, one can show that 
\begin{equation}
\prob{\|F_n - G_n\|_p \leq \sqrt{\frac{\sigma}{n}}} \leq \exp\pth{- \Theta\pth{\frac{1}{\sigma}}}.
\label{eq:smallball1}
\end{equation}
Setting this probability to $o(\frac{1}{n^2})$ and applying a union bound, we conclude that the matching algorithm succeeds with high probability if $\sigma \le \frac{c}{\log n}$ for sufficiently small constant $c$.

\subsection{Improvement with seeded matching} 
\label{sec:gaussian-imp}

In this subsection we improve the previous matching algorithm with empirical distributions to $\sigma = 
O((\log n)^{-1/3})$. To this end, we turn to the idea of \emph{seeded matching}. Given a partial permutation that gives the correct matching for a subset of vertices, which we call \emph{seeds}, one can extend it to a full matching by various methods, e.g., by solving a bipartite matching (see \prettyref{alg:seed}). It turns out for Wigner matrices, it suffices to obtain $\Omega(\log n)$ seeds, which can be found by combining both the distance-based matching and degree thresholding. 
The same idea applies to \ER graphs, except that for edge density $p$, the number of seeds needed is $\Omega(\frac{\log n}{p})$, a fact which will be exploited in \prettyref{sec:distdeg}.

To explain the main idea, let $a_i = \frac{1}{\sqrt{n}}\sum_{j=1}^n A_{ij}$ and $b_k = \frac{1}{\sqrt{n}} \sum_{j=1}^n B_{kj}$ be the standardized row sums, which are the counterparts of ``degrees'' for Gaussian matrices. Consider the set of pairs $(i,k)$ such that both $a_i$ and $b_k$ exceed some threshold $\xi$.
Then for any fake pair $i\neq k$, by independence, we have 
\[
\prob{a_i \geq \xi, b_k \geq \xi} =\prob{a_i \geq \xi} \prob{b_k \geq \xi}  = Q(\xi)^2,
\]
 where $Q\triangleq 1-\Phi$ is the complementary CDF for the standard normal distribution. For true pairs, 
since we have the representation 
\begin{equation}
b_i=\sqrt{1-\sigma^2} a_i + \sigma z_i,
\label{eq:abz}
\end{equation}
 where $a_i,z_i\iiddistr N(0,1)$, we have
\[
\prob{a_i \geq \xi, b_i \geq \xi} \geq \prob{a_i \geq \frac{\xi}{\sqrt{1-\sigma^2}}, z_i \geq 0} \geq \frac{1}{2} Q\pth{\frac{\xi}{\sqrt{1-\sigma^2}}}
\geq Q(\xi ) \exp(- O(\sigma^2 \xi^2)).
\]
\nb{Now let us consider the seed set consisting of those high-degree pairs $i$ and $k$ whose empirical distributions satisfy $d_p(\mu_i,\nu_k) \lesssim \sqrt{\frac{\sigma}{n}}$. 
Thus to create enough seeds, we need 
\begin{align}
n Q(\xi ) \exp\left(- O(\sigma^2 \xi^2) \right) \geq \Omega(\log n), \label{eq:Gaussian_seed_cond}
\end{align}
and to eliminate all fake pairs we need (in view of the small-ball estimate \prettyref{eq:smallball1})
\begin{align}
    n^2 Q(\xi )^2 \exp \left(- \Omega \left(\sigma^{-1} \right) \right) = o(1). \label{eq:Gaussian_fake_cond}
\end{align}}
Choosing $\xi =\Theta(\sqrt{\log n})$ and substituting it into \prettyref{eq:Gaussian_seed_cond}, we get that $Q(\xi) =
\Omega \left( \frac{\log n}{n}\right) \exp(O(\sigma^2 \log n))$. Substituting 
this back into \prettyref{eq:Gaussian_fake_cond}, we conclude that 
$\sigma \le \frac{c}{(\log n)^{1/3}}$ for some small constant $c$ suffices.

We end this section with a few remarks:

\begin{remark}[Order statistics]
\label{rmk:orderstat}	
As described in \prettyref{sec:idea}, degree matching fails unless 
the fraction of differed edges is polynomially small. 
Similarly, for the Gaussian model directly sorting the degrees (row sums) in both matrices fails to  yield the correct matching unless $\sigma \leq n^{-c}$ for some constant $c$. Indeed, sort the row sums $a_i$'s decreasingly as $a_{(1)} \geq \ldots \geq a_{(n)}$ and similarly for $b_{(1)} \geq \ldots \geq b_{(n)}$. Thus, degree matching amounts to match the vertices according to the sorted degrees. 
Since $a_i$'s are iid standard normal, it is well-known from the extreme value theory \cite{DN70} that, with high probability, the order statistics behaves approximately as $a_{(i)} \approx \Phi^{-1}(i/n)$ which is approximately $\sqrt{2 \log \frac{n}{i}}$ for $i \le n/2$ and $-\sqrt{2 \log \frac{n}{n+1-i}}$ for $i \ge n/2$. In particular, $a_{(1)} = a_{\max} \approx \sqrt{2 \log n}$ and 
$a_{(n)} = a_{\min} \approx -\sqrt{2 \log n}$.
Furthermore, the $i$th spacing of the order statistics is approximately
\begin{equation}
\sqrt{2 \log \frac{n}{i}} - \sqrt{2 \log \frac{n}{i+1}} = \Theta\pth{ \frac{1}{i \sqrt{\log \frac{n}{i}}} }    
\label{eq:orderstat-gap}
\end{equation}
Therefore, and intuitively so, for most of the samples the spacing is as small as $\Theta(\frac{1}{n})$. In view of \prettyref{eq:abz}, we can write 
$b_i = a_i + \Delta_i$, where $\Delta_i = (\sqrt{1-\sigma^2}-1) a_i + \sigma z_i$.
Thus degree matching succeeds if $|\Delta_i| \leq \min\{|a_{i-1}-a_i|, |a_i-a_{i+1}|\}$ for all $i$. 
Since $|z_i| \leq O(\sqrt{\log n})$ and $|a_i|  \leq O(\sqrt{\log n})$ for all $i$ with high probability, this shows that degree matching requires very small noise $\sigma = o(\frac{1}{n \sqrt{\log n}})$, which is much worse than degree profiles. Simulation shows that this condition is necessary up to logarithmic factors. 

Following the same idea in this subsection, an immediate improvement is to use degree matching to produce enough seeds to initiate the seeded graph matching process. Indeed, this is possible because the spacing of the first few order statistics is much bigger and more robust to noise. 
More precisely, 
in order to produce $\Omega(\log n)$ seeds, it suffices to ensure that the \textit{minimum spacing} of the first $i$ order statistics, which is at least $\tilde{\Omega}(\frac{1}{i^2 \sqrt{\log n}})$, far exceeds the noise which is $ O(\sigma \sqrt{\log n})$. With $i=\Theta(\log n)$, this translates to $\sigma = o(\frac{1}{(\log n)^4})$, which is comparable to but still worse  than the guarantee of degree profiles of $\sigma = O(\frac{1}{\log n})$ 
\nb{as established in \prettyref{sec:gaussian-emp}}. 
More importantly, a fundamental limitation of degree matching is that it fails for sparse graphs, because the number of seeds needed is $\Omega(\frac{\log n}{q})$ where $q$ is the edge density of the observed graphs \nb{(cf.~\prettyref{lmm:seed} and \cite[Theorem 1]{korula2014efficient})}. Following the similar analysis above for binomial distribution, for the correlated \ER graph model $\calG(n,q;1-\delp)$, 
it is well-known that (cf.~\cite[Theorem 3.15]{bollobas1998random})
the minimum of the first $i$ spacing of sorted degrees is $\tilde{\Omega}(\frac{\sqrt{n q}}{i^2})$ with high probability 
and \nb{the degrees of a true pair differ by at most $\tilde{O}( \sqrt{\delp nq})$}. Thus, producing $\Omega(\frac{\log n}{q})$ seeds requires the deletion probability to be as small as $\delp = \tilde o(q^4)$. This explains the recent result of \cite{dai2018performance}, which shows that degree-matching algorithm with seeded improvement succeeds under some extra conditions.
\end{remark}

\begin{remark}[From Gaussian matrices to \ER graphs]
\label{rmk:GB}	
To extend the matching algorithm based on empirical distributions from Gaussian matrices to \ER graphs, the main difficulty is that Bernoulli random variables are zero-one valued and hence directly
implementing the same empirical distribution matching algorithm using adjacency matrices does not work. As mentioned in \prettyref{sec:idea}, the idea is to use the {degree profile} of each vertex, that is, the empirical distribution of the degrees of the neighbors, each of which is binomially distributed and well-approximated by Gaussians. 
Indeed, the ideas in \prettyref{sec:gaussian-emp} and \prettyref{sec:gaussian-imp} lead to \prettyref{alg:dist} and \prettyref{alg:distdeg}, respectively, for \ER graphs.
However, the major technical difficulty is to address the dependency in the degree profiles. 
\nb{In the Gaussian case, each pair of degree profiles follows the simple dichotomy in \prettyref{eq:dicho1}--\prettyref{eq:dicho2}, behaving as a pair of empirical distributions of correlated (resp.~independent) samples for  true (resp.~fake) pairs.
This is no longer the case for \ER graphs}. 
For this reason, the approach for  \ER graphs deviates from the program for Gaussian matrices, in that
the algorithms in \prettyref{sec:pf} are based on a quantized version of the total variation distance as opposed to distances between empirical CDFs, and the analysis in \prettyref{sec:pf} 
does not explicitly resort to empirical process theory, although it is still guided by similar intuitions.
\end{remark}

\section{Matching algorithms for correlated \ER graphs}
\label{sec:alg}

\subsection{Preliminary definitions}
For each vertex $i$, define its \emph{open} neighborhood $N_A(i)$ (resp.~$N_B(i)$) in graph 
$A$ (resp.~$B$) as the set of vertices connecting to $i$ by an edge in $A$ (resp.~$B$);
define its \emph{closed} neighborhood $N_A[i]$ (resp.~$N_B[i]$) in graph 
$A$ (resp.~$B$) as the union of its open neighborhood in $A$ (resp.~$B$) and $\{i\}$.

Denote the degrees by 
\begin{align}
a_i = & ~ |N_A(i)| = \sum_{j \in [n]} A_{ij}  \label{eq:deg1}\\
b_i = & ~ |N_B(i)| = \sum_{j \in [n]} B_{ij}\label{eq:deg2}.
\end{align}
For each $i$ and $j$, define
\begin{align}
a_j^{(i)} = & ~ \frac{1}{\sqrt{(n-a_i-1) q (1-q)}} \sum_{\ell  \notin N_A[i]} (A_{\ell j}-q) \label{eq:degmod1}\\
b_j^{(i)} = & ~ \frac{1}{\sqrt{(n-b_i-1) q (1-q)}} \sum_{\ell  \notin N_B[i]} (B_{\ell j}-q),\label{eq:degmod2}
\end{align}
Note that $a_j^{(i)}$ (resp.~$b_j^{(i)}$) can be viewed as the standardized version of the ``outdegree'' of 
vertex $j$ by excluding $i$'s closed neighborhood in $A$ (resp.~$B$). 

To each vertex $i$ in $A$, attach a distribution which is the empirical distribution of the set $\{a_j^{(i)}: j \in N_A(i) \}$:
\begin{equation}
\mu_i \triangleq  \frac{1}{a_i} \sum_{j \in N_A(i)} \delta_{a_j^{(i)}},
\label{eq:degprofile1}
\end{equation}
and the centered version  (viewed as a signed measure)
\begin{equation}
\bar \mu_i \triangleq  \mu_i  - \Binomc(n-a_i-1,q),
\label{eq:degprofile1c}
\end{equation}
where $\Binomc(k,q)$ denotes the standardized binomial distribution, that is, the law of $\frac{X-kq}{\sqrt{k q (1-q)}}$ for $X\sim \Binom(k,q)$.
\nb{The centering in \prettyref{eq:degprofile1c} is due to the fact that conditioned on the neighborhood $N_A(i)$, each $a_i^{(j)}$ is distributed as $\Binomc(n-a_i-1,q)$
marginally.}
Similarly, for $B$ we define
\begin{equation}
\nu_i \triangleq  \frac{1}{b_i} \sum_{j \in N_B(i)} \delta_{b_j^{(i)}}.
\label{eq:degprofile2}
\end{equation}
and the centered version
\begin{equation}
\bar \nu_i \triangleq  \nu_i - \Binomc(n-b_i-1,q).
\label{eq:degprofile2c}
\end{equation}
Intuitively, $\mu_i$ is the degree profile for the neighbors of $i$ in $A$, if the summation in \prettyref{eq:degmod1} is over all $[n]$. We exclude edges within the neighborhood itself to reduce dependency and simplify the analysis.
Note that conditioned on $N_A(i)$,  $\{a_j^{(i)}: j \in N_A(i)\}$ are iid as $\Binomc(n-a_i-1,q)$; 
conditioned on $N_B(i)$,  $\{b_j^{(i)}: j \in N_B(i)\}$ are iid as $\Binomc(n-b_i-1,q)$.

Fix $L\in\naturals$  to be specified later.
Define $I_1, \ldots, I_L$ as the uniform partition of $[-1/2,1/2]$ such that $|I_\ell| = 1/L$. 
For each $i$ and $k$, define the following distance statistic:
\begin{equation}
Z_{ik} \triangleq \sum_{\ell \in [L]}  |\bar \mu_i(I_\ell) - \bar \nu_k(I_\ell)|.
\label{eq:Z}
\end{equation}
In other words, 
\begin{equation}
    Z_{ik} = d(\bar \mu_i, \bar \nu_k) \triangleq \|[\bar \mu_i]_L - [\bar \nu_k]_L \|_1,
    \label{eq:distance}
\end{equation}
 where $[\mu]_L$ denotes the discretized version of $\mu$ according to the partition $I_1,\ldots,I_L$, with
\begin{equation}
[\mu]_L(\ell) \triangleq \mu(I_\ell), \quad \ell \in [L].
\label{eq:discretization}
\end{equation}
Throughout the rest of the paper, for simplicity we use the parameterization 
\begin{equation}
s  \triangleq 1-\sigma^2,  \quad \delp \triangleq \sigma^2
\label{eq:sigmadelta-def}
\end{equation}
 to denote the sampling and deletion probability respectively, where $\sigma$ corresponds to the magnitude of the ``effective noise''.

\subsection{Matching via degree profiles}
	\label{sec:dist}
We present our first algorithm which matches the vertices in $A$
to vertices in $B$ based on the pairwise distance statistic $\{Z_{ik}\}$
in \prettyref{eq:Z}.
\begin{algorithm}
\caption{Graph matching via degree profiles}\label{alg:dist}
\begin{algorithmic}[1]
\STATE {\bfseries Input:} Graphs $A$ and $B$ on $n$ vertices, an integer $L$.
\STATE {\bfseries Output:} A permutation $\hat\pi \in S_n$.
\STATE  For each $i,k \in [n]$, compute $Z_{ik}$ in \prettyref{eq:Z}.
\STATE Sort $\{Z_{ik}: i,k\in[n]\}$ and let $\calS$ be the set of indices of the smallest $n$ elements.
\IF{$\calS$ defines a perfect matching on $[n]$, i.e., $\calS = \{(i,\hat\pi(i)): i \in [n]\}$ for some permutation $\hat\pi$}
\STATE Output $\hat\pi$;
\ELSE 
\STATE Output error.
\ENDIF
\end{algorithmic}
\end{algorithm}

The key  intuition underlying  \prettyref{alg:dist} is as follows:
\begin{itemize}
    \item For true pairs $k=\pi^*(i)$, we expect $i$ and $k$ to share many (about $nqs$) ``common neighbors'' $j$, in the sense that $j$ is $i$'s neighbor in
    $A$ and $\pi^*(j)$ is $k$'s neighbor in $B$.  For each such common neighbor
    $j$, its outdegree $a_j^{(i)}$ in $A$ is statistically correlated with 
    the outdegree $b_{\pi^*(j)}^{(k)}$ in $B$. As a consequence, the 
    two empirical distributions are strongly correlated, leading to a small distance $Z_{ik}$.
    \item For wrong pairs $k\neq\pi^*(i)$, we expect $i$ and $k$ share very few
    (about $nq^2$) ``common neighbors''. Hence, the two empirical distributions
    $\mu_i$ and $\nu_k$ are weakly correlated, leading to a large distance $Z_{ik}$. 
    \end{itemize}

\begin{remark}[Time complexity]
 Implementing \prettyref{alg:dist} entails three steps. 
 First, we precompute all outdegrees. Assuming the graph is represented as an adjacency list and the list of degrees are given, for each $i$ and each $j \in N_A(i)$, we have $a_j^{(i)} = a_j - 1 - |N_A(i)\cap N_A(j)|$, where 
$a_j$ is the degree of $j$ and $|N_A(i)\cap N_A(j)|$ is the number of common neighbors, which can be computed in $\title{O}(a_i+a_j)$ time. Thus, computing all outdegrees can be done in time that is $\sum_{i \sim j} \title{O}(a_i+a_j) = \title{O}(\sum_i a_i^2) = \title{O}(|E| |d_{\max}|)$.\footnote{Alternatively, outdegrees can be computed via the number of common neighbors by squaring the adjacency matrix using fast matrix multiplication.}
Next, we compute the discretized and centered degree profiles $[\bar \mu_i]_L$ for each $i$ in graph $A$ and $[\bar \nu_k]_L$ for each $k$ in graph $B$. 
These are identified as $L$-dimensional vectors (where $L=\polylog(n)$) and can be done in $\tilde{O}(|E|)$ time.
Finally, we compute the distance statistic $Z_{ik}$ in \prettyref{eq:Z} for all pairs $i$ and $k$ and implement greedy matching via sorting. Since $Z_{ik}$ is the $\ell_1$-distance between two $L$-dimensional vectors, this step can be computed in a total of $\tilde{O}(n^2)$ time. 
In summary, the total time complexity of \prettyref{alg:dist} is at most $\tilde{O}(|E| |d_{\max}| + |V|^2)$, which, for \ER graphs under the assumption of \prettyref{thm:main}, reduces $\tilde{O}(n^3 q^2+n^2)$.
 

The reason we use outdegrees instead of degrees in 
\prettyref{alg:dist} is a technical one, which aims at reducing the dependency and facilitating the theoretical analysis. In practice we can use degree profiles defined through the usual degrees and empirically the algorithm performs equally well. In this case, the time complexity reduces to $\tilde{O}(n^2)$.
\end{remark}

\begin{theorem}[Performance guarantee of \prettyref{alg:dist}]
\label{thm:guarantee-distance}
Let $s=1-\sigma^2$ and $q \le q_0$ for some sufficiently small positive constant $q_0$.
Assume that 
\begin{equation}
\sigma \leq \frac{\sigma_0}{\log n},
\label{eq:cond-main1}
\end{equation}
for some sufficiently small absolute
constant $\sigma_0$. 
Set 
\begin{equation}
L = L_0 \log n
\label{eq:L}
\end{equation}
and assume that 
\begin{equation}
n q \geq C_0 \log^2 n
\label{eq:cond-main2}
\end{equation}
for some large absolute constants $L_0, C_0$.
Then with probability $1-O(1/n)$, \prettyref{alg:dist} outputs $\hat\pi=\pi^*$.
\end{theorem}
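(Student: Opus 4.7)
Without loss of generality assume $\pi^* = \id$, so the correlated pairs are the diagonal. Algorithm~\ref{alg:dist} outputs $\pi^*$ as soon as the $n$ smallest entries of $\{Z_{ik}\}_{i,k \in [n]}$ are exactly the diagonal $\{Z_{ii}\}$, and a clean sufficient condition is
$$\max_{i \in [n]} Z_{ii} \;<\; \min_{i \ne k} Z_{ik}.$$
Thus the proof splits into a \emph{true-pair upper bound} showing $Z_{ii}$ is small uniformly in $i$ and a \emph{fake-pair lower bound} showing $Z_{ik}$ is large uniformly in $i \ne k$. The fake-pair estimate must survive a union bound over $n(n-1)$ pairs, and this is where the small-ball heuristic of \prettyref{eq:smallball1} forces the condition $\sigma \lesssim 1/\log n$.

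\paragraph{True pairs.} Fix $i$ and reveal the neighborhoods $N_A(i), N_B(i)$; by Chernoff their sizes concentrate around $nq$ with overlap of size $(1-o(1))nqs$, through the coupling with the parent graph $\calG(n,q/s)$. For each common neighbor $j \in N_A(i) \cap N_B(i)$, the difference $a_j^{(i)} - b_j^{(i)}$ is a normalized sum of Bernoullis supported on (i)~the symmetric difference $N_A[i] \triangle N_B[i]$ of size $\Theta(nq\delp)$ and (ii)~the parent-graph disagreements on the $\Theta(n)$ co-excluded vertices (each of variance $\Theta(\delp q)$), giving $|a_j^{(i)} - b_j^{(i)}| = O(\sigma\sqrt{\log n})$ with probability $1-n^{-3}$. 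Since the bin width is $1/L = \Theta(1/\log n)$ while $\sigma \le \sigma_0/\log n$, for most common neighbors $a_j^{(i)}$ and $b_j^{(i)}$ land in the same bin and their contributions to $\bar\mu_i(I_\ell) - \bar\nu_i(I_\ell)$ cancel exactly; bin-boundary effects produce only an $O(\sigma)$ fraction of mismatches per bin. Adding a Bernstein bound for the per-bin fluctuations of $\mu_i,\nu_i$ (each an average over $a_i \asymp nq$ conditionally iid standardized binomials), one obtains $Z_{ii} \le T_{\up}$ with $T_{\up}$ of order $\sqrt{L\sigma/(nq)}$ up to $\polylog(n)$ factors, except on an event of probability $o(1/n^2)$. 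A union bound over $i$ completes this half.

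\paragraph{Fake pairs.} For $k \ne i$ the overlap $N_A(i) \cap N_B(k)$ has size only $\Theta(nq^2)$, a $q$-fraction of $a_i$. Reveal the four neighborhoods; up to this small overlap and the two exceptional edges $A_{ik}, B_{ki}$, the two families $\{a_j^{(i)}\}_{j \in N_A(i)}$ and $\{b_{j'}^{(k)}\}_{j' \in N_B(k)}$ are independent, each conditionally iid standardized binomial. Hence $\bar\mu_i - \bar\nu_k$ behaves as the difference of two nearly independent centered empirical measures on $L = L_0\log n$ bins. Via a Gaussian coupling for the binomial together with the empirical-process CLT, one derives a quantitative small-ball estimate
$$\prob{Z_{ik} \le t} \;\le\; \exp\pth{-c\,nq\,\Psi(t,L)},$$
for an increasing function $\Psi$ in the spirit of \prettyref{eq:smallball}. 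Evaluating at $t = T_{\up}$ with $L = L_0\log n$ and $\sigma \le \sigma_0/\log n$ makes this probability of order $\exp(-\Omega(\log^2 n))$, absorbing the union bound over the $n^2$ fake pairs.

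\paragraph{Main obstacle.} The central technical difficulty is the fake-pair anti-concentration bound: we need a small-ball estimate on $\|[\bar\mu_i]_L - [\bar\nu_k]_L\|_1$ strong enough to beat $n^{-2}$ while simultaneously absorbing three intertwined sources of dependency: the small but nonempty overlap $N_A(i) \cap N_B(k)$, the two exceptional edges $A_{ik},B_{ki}$, and the fact that the centering $\Binomc(n-a_i-1,q)$ in the definition of $\bar\mu_i$ depends on the random degree $a_i$. Handling these requires a careful truncation that isolates the genuinely independent bulk of the outdegrees, a quantitative Gaussian approximation to reduce to a Gaussian small-ball problem, and a bin-counting step that instantiates the $L$-dependent small-ball exponent. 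The choice $L = \Theta(\log n)$ is dictated by the balance between the Gaussian approximation error (which prefers few bins) and the small-ball exponent growing with $L$ (which prefers many bins); this tradeoff is precisely what pins down $\sigma = O(1/\log n)$.
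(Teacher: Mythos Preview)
Your high-level strategy matches the paper exactly: reduce to $\max_i Z_{ii} < \min_{i\ne k} Z_{ik}$ and prove separate true-pair and fake-pair bounds (Lemmas~\ref{lmm:true} and~\ref{lmm:fake}) with failure probabilities that survive union bounds over $n$ and $n^2$ pairs. Your true-pair reasoning is also close in spirit, though the paper formalizes it through the common/non-common neighbor decomposition \prettyref{eq:muPQ1}--\prettyref{eq:muPQ2} and bounds the main term by the per-bin mismatch probability $\beta$ of Lemma~\ref{lmm:beta}, obtaining $\xitrue$ with leading term $L\sqrt{\beta/(nq)} \asymp L\sqrt{\sigma/(nq)}$ rather than your $\sqrt{L\sigma/(nq)}$; it is this extra factor $\sqrt{L}$, together with the McDiarmid fluctuation term $\sqrt{\Delta/(nq)}$ at $\Delta = \Theta(\log n)$, that forces $\sigma L = O(1)$ and hence $\sigma \lesssim 1/\log n$.

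The fake-pair half is where your proposal diverges and contains a gap. The paper does \emph{not} go through a Gaussian coupling or an empirical-process small-ball estimate; \prettyref{rmk:GB} explicitly flags that the \ER\ analysis avoids this route. Instead, Lemma~\ref{lmm:indsample} computes $\Expect[d(P',Q')] \ge c_0\sqrt{L/(nq)}$ directly, bin by bin, using the mean absolute deviation of a binomial, and then concentrates by McDiarmid's inequality. This yields $\Prob[Z_{ik} \le \xifake] \le O(e^{-\Delta/2})$ with $\Delta = \Theta(L) = \Theta(\log n)$, i.e.\ a rate $n^{-\Theta(1)}$---not your claimed $\exp(-\Omega(\log^2 n))$, which is not obtainable here: the bounded-difference constant is $\Theta(1/(nq))$ and the relevant deviation is $\Theta(\sqrt{L/(nq)})$, so the McDiarmid exponent is only $\Theta(L)$, regardless of how large $nq$ is. The polynomial rate suffices once $L_0$ is large. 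Finally, the dependency you flag between $\{a_j^{(i)}\}$ and $\{b_{j'}^{(k)}\}$ is handled not by truncation or Gaussian approximation but by conditioning on the edge sets $E_A(J,J')$ and $E_B(J,J')$ between $J = N_A(i)\setminus N_B(k)$ and $J' = N_B(k)\setminus N_A(i)$ (see \prettyref{fig:lemma2}); after this conditioning the outdegrees are independent but no longer identically distributed binomials, and Lemma~\ref{lmm:indsample} is deliberately stated to accommodate exactly this via the relaxed assumption~\prettyref{eq:partition} on a constant-fraction subset $J_0$.
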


\subsection{Dense graphs: Combining with high-degree vertices}
\label{sec:dense}

For relatively dense graphs, \prettyref{alg:dist} can be improved as follows.
Recall the notion of seeded graph matching previously mentioned in \prettyref{sec:gaussian-imp}, where a number of correctly matched vertices are given, known as seeds, and the goal is to match the remaining vertices. It turns out that for $G(n,q)$, 
\nb{provided $m = \Omega(\frac{\log n}{q})$ seeds}, solving a linear assignment problem (maximum bipartite matching) can successfully match the rest of the vertices with high probability. 
Note that the condition $\sigma = O((\log n)^{-1})$ in \prettyref{thm:guarantee-distance} ensures \prettyref{alg:dist} succeeds in one shot, in the sense that with high probability the distance statistics are below the threshold for \emph{all} $n$ true pairs and 
above the threshold for \emph{all} $\binom{n}{2}$ wrong pairs.
Thus, we can weaken this condition so that even if the distance statistics for most of the pairs are not correctly separated, those high-degree vertices can provide enough seeds that allow bipartite matching to succeed.
This idea leads to the improvement to $\sigma = O((\log n)^{-1/3})$ when the edge density 
$q = \exp(-O((\log n)^{1/3}))$.

Specifically, fix some thresholds $\tau$ and $\xi$. Consider the collection of pairs of vertices 
whose degrees are atypically high and the degree profiles are close:
\begin{equation}
\calS = \{(i,k): a_i \geq \tau, b_k \geq \tau+1, Z_{ik} \leq \xi \}.
\label{eq:seed}
\end{equation}
We show that, with high probability,
\begin{enumerate}
	\item $\calS$ does not contain any fake pairs, i.e., $(i,k) \not\in \calS$ for any $k\neq \pi^*(i)$.
	\item $\calS$ contain enough true pairs, i.e., $|\calS| = \Omega(\frac{\log n}{q})$.	
\end{enumerate}
Finally, we use the matched pairs in $\calS$ as seeds to resolve the rest of the matching by linear assignment; this is done in \prettyref{alg:seed}. 
The full procedure is given in \prettyref{alg:distdeg}.

As for the time complexity, 
compared to \prettyref{alg:dist},  \prettyref{alg:distdeg} has an extra step of computing the maximum matching on an $n\times n$ unweighted bipartite graph, which can be done in either $O(n^3)$ time using Ford--Fulkerson algorithm~\cite{ford1956maximal} or $O(n^{2.5})$ time using the Hopcroft--Karp algorithm~\cite{Hopcroft1971}. 

\begin{algorithm}
\caption{Combining degree profiles and large-degree vertices}\label{alg:distdeg}
\begin{algorithmic}[1]
\STATE {\bfseries Input:} Graph $A$ and $B$ on $n$ vertices; thresholds $\tau,\xi > 0$.
\STATE {\bfseries Output:} A permutation $\hat\pi \in S_n$.
\STATE Compute the distance statistic $Z_{ik}$ for each $i,k\in[n]$. Let $\calS$ be given in \prettyref{eq:seed}.
\IF{$\calS$ defines a matching, i.e., there exists $S\subset [n]$ and an injection $\pi_0: S \to [n]$, such that $\calS = \{(i,\pi_0(i)): i \in S\}$,}
\STATE Run \prettyref{alg:seed} using $\pi_0$ as the seeds and output $\hat\pi$. 
\ELSE 
\STATE output error;
\ENDIF
\end{algorithmic}
\end{algorithm}

\begin{algorithm}
\caption{Seeded graph matching} \label{alg:seed}
\begin{algorithmic}[1]
\STATE {\bfseries Input:} Graphs $A$ and $B$ on $n$ vertices; a bijection $\pi_0: S \to T$, where $S,T \subset [n]$;
\STATE {\bfseries Output:} A permutation $\hat\pi \in S_n$.
\STATE For each $i\in S^c$ and each $k\in T^c$, define $n_{ik} = \sum_{ j \in S} A_{ij} B_{k \pi_0(j)}$.
\STATE Define a bipartite graph with vertex set $S^c \times T^c$ and 
adjacency matrix $H$ given by $H_{ik} = \indc{n_{ik} \ge \kappa}$
for each $i \in S^c $ and each $k \in T^c$, where \nb{$\kappa = \frac{1}{2}|S| qs$}. 
Find a maximum bipartite matching of $H$, \ie, a perfect matching $\tilde{\pi}_1$
between $S^c$ and $T^c$ such that
$
\tilde{\pi}_1 \in \arg \max_{ \pi }  \;  w(\pi),
$
where
\begin{equation}
w(\pi)  \triangleq \sum_{i \in S^c } H_{i \pi(i) }.
\label{eq:weightw}
\end{equation}
Let $\pi_1$ denote a perfect matching on $[n]$ such that $\pi_1 \vert_S=\pi_0$ and $\pi_1 \vert_{S^{c}} = \tilde{\pi}_1$. 
\STATE For each $i,k \in [n]$, define 
$w_{ik} = \sum_{j=1}^n A_{ij} B_{k \pi_1(j)}$. 
\STATE Sort $\{w_{ik}: i,k \in [n] \}$ and let $\calT$ be the set of indices of the largest $n$ elements.
\IF{$\calT$ defines a perfect matching on $[n]$, i.e., $\calT=\{(i,\hat{\pi}(i)): i \in [n]\}$
for some permutation $\hat{\pi}$}
\STATE Output $\hat\pi$;
\ELSE 
\STATE Output error;
\ENDIF
\end{algorithmic}
\end{algorithm}

\begin{theorem}[Performance guarantee of \prettyref{alg:distdeg}]
\label{thm:guarantee-distance-deg}
Assume that $q \le q_0$ and 
\begin{equation}
\sigma \leq \sigma_0 \min\sth{\frac{1}{(\log n)^{1/3}}, \frac{1}{\log \frac{\log n}{q}}},
\label{eq:cond-main1_relax}
\end{equation}
for some small absolute
constants $q_0,\sigma_0$. 
Define
 \begin{equation}
\alpha \triangleq \left( \alpha_0 \frac{\log n}{nq} \right)^{ \frac{(1-p)s}{1-q} }
 \label{eq:alpha}
 \end{equation}
and 
\begin{equation}
L = L_0 \max \left\{  \log^{1/3} (n), \log \frac{\log n}{q} \right\}
\label{eq:L_relax}
\end{equation}
for some large absolute constants $\alpha_0, L_0$.
Let 
\begin{equation}
\tau \triangleq \min \left\{  0 \le k \le n:  
\prob{ \Binom (n-1 ,q ) \ge k } \le \alpha \right\},
\label{eq:tau}
\end{equation}
and 
\begin{equation}
\xi = C \sqrt{\frac{L}{nq}}
\label{eq:xi}
\end{equation}
for some absolute constant $C$.
Assume that 
\begin{equation}
n q^2 \geq C_0 \log^2 n
\label{eq:cond-main2_relax}
\end{equation}
for some large absolute constant $C_0$.
Then with probability $1-O\left( \frac{q}{\log n} \right)$, 
\prettyref{alg:distdeg} outputs $\hat\pi=\pi^*$.
\end{theorem}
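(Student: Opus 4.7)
The plan is to prove the theorem in three stages that mirror the two-phase architecture of \prettyref{alg:distdeg}: first, construct a clean and sufficiently large seed set from the high-degree/small-distance filter $\calS$ in \prettyref{eq:seed}; second, invoke a seeded-matching guarantee (the result behind \prettyref{app:seed}) to extend those seeds to a full recovery of $\pi^*$.

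\textbf{Stage 1: $\calS$ contains no fake pair.} For any fake pair $(i,k)$ with $k \neq \pi^*(i)$, the marginal laws of $a_i$ and $b_k$ are binomial and are independent up to the single potentially shared edge; by the definition of $\tau$ in \prettyref{eq:tau}, $\Prob\{a_i \geq \tau, b_k \geq \tau+1\} \leq \alpha^2 (1+o(1))$. Conditional on the neighborhoods $N_A(i)$ and $N_B(k)$ (of sizes $a_i, b_k$) and on the restriction that these neighborhoods overlap only through the typical $\Theta(nq^2)$ common neighbors, the centered discretized profiles $[\bar\mu_i]_L$ and $[\bar\nu_k]_L$ are essentially empirical measures of two independent families of standardized binomial samples. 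A small-ball estimate — the $\ell_1$ analogue of \prettyref{eq:smallball1} applied to the $L$-dimensional histogram — gives $\Prob\{Z_{ik} \leq \xi\, |\, a_i, b_k\} \leq \exp(-c L)$ for the choice $\xi$ in \prettyref{eq:xi}. With $L$ as in \prettyref{eq:L_relax} and $\alpha$ as in \prettyref{eq:alpha}, the product $n^2 \alpha^2 \exp(-cL)$ is $o(1)$ under \prettyref{eq:cond-main1_relax} and \prettyref{eq:cond-main2_relax}, so a union bound rules out every fake pair simultaneously.

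\textbf{Stage 2: $\calS$ contains $\Omega(\log n / q)$ true pairs.} For a true pair $(i,\pi^*(i))$, I use the parent-graph representation: write $b_{\pi^*(i)}$ as the sum of edges of $A$ at $i$ that survive the subsampling (a $\Binom(a_i,s)$ variable) plus independently resampled edges from non-neighbors. The exponent $(1-p)s/(1-q)$ in \prettyref{eq:alpha} is calibrated, via the bivariate-binomial tail estimate for $(a_i,b_{\pi^*(i)})$, exactly so that $\Prob\{a_i \geq \tau, b_{\pi^*(i)} \geq \tau+1\} \gtrsim \alpha_0 \log n /(nq)$. Conditioned on this event, the profiles $\mu_i$ and $\nu_{\pi^*(i)}$ share $\Theta(nqs)$ correlated samples centered at the same standardized binomial, and standard empirical-process concentration yields $\Prob\{Z_{i\pi^*(i)} \leq \xi\} = 1-o(1)$. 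Hence $\Expect|\calS \cap \text{true pairs}| \gtrsim \log n / q$. To upgrade this to a high-probability statement I would apply a second-moment argument, exploiting the fact that the indicators $\1\{(i,\pi^*(i))\in\calS\}$ depend essentially on local neighborhoods of $i$ and $\pi^*(i)$ and hence are only weakly correlated across different $i$; Chebyshev then delivers $|\calS| = \Omega(\log n / q)$ with probability $1-O(q/\log n)$.

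\textbf{Stage 3: seeded extension.} Once the seed bijection $\pi_0:S\to T$ is correct and $|S| \gtrsim \log n / q$, the witness counts $n_{ik} = \sum_{j \in S} A_{ij} B_{k \pi_0(j)}$ used in \prettyref{alg:seed} concentrate at $|S|qs$ for the true partner $k=\pi^*(i)$ and at $|S|q^2$ for every wrong $k$. With the threshold $\kappa = \frac{1}{2}|S|qs$ and $|S| = \Omega(\log n / q)$, Chernoff and a union bound yield that the thresholded bipartite graph $H$ has exactly one perfect matching, namely $\pi^*\vert_{S^c}$. The subsequent reweighting step with $w_{ik}=\sum_j A_{ij}B_{k\pi_1(j)}$ then certifies $\hat\pi=\pi^*$. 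This is precisely the content of the seeded-matching lemma proved in \prettyref{app:seed}, and it is the step that dictates the lower bound $\Omega(\log n/q)$ on the seed count.

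\textbf{Main obstacle.} The hard part is Stage 1: the small-ball estimate must survive conditioning on atypically large degrees $\tau \gg nq$. This conditioning pushes the neighbors' outdegrees into the tail of $\Binomc(n-a_i-1,q)$ and threatens to concentrate the atoms of $\mu_i,\nu_k$ into a few bins, shrinking the typical distance between independent profiles and destroying the union bound. The argument must therefore show that even after conditioning, enough independence remains between the two profiles of a fake pair to give a genuinely exponential $\exp(-cL)$ small-ball factor, and that the interaction between $\tau$, the number of bins $L$, and the noise level $\sigma$ is exactly balanced by the choices in \prettyref{eq:alpha}--\prettyref{eq:xi}. Carrying this through is what consumes the bulk of the technical work and what forces the stronger condition \prettyref{eq:cond-main1_relax} compared with \prettyref{eq:cond-main1}.
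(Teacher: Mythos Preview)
Your three-stage outline matches the paper's architecture, including the second-moment/Chebyshev argument for the seed count and the deferral to the seeded-matching lemma. However, you have misidentified the main obstacle, and there is a subtlety in Stage~3 that you gloss over.

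Conditioning on $a_i \geq \tau$ does \emph{not} push the neighbors' outdegrees $a_j^{(i)}$ into any tail. By construction (\prettyref{eq:degmod1}), $a_j^{(i)}$ counts edges from $j$ to vertices \emph{outside} $N_A[i]$; these edges are independent of those determining $a_i$. Hence, conditioned on $N_A(i)$ of any size, the outdegrees $\{a_j^{(i)}: j \in N_A(i)\}$ remain i.i.d.\ $\Binomc(n-a_i-1,q)$, and the atoms of $\mu_i$ are not concentrated into fewer bins. This is exactly why \prettyref{lmm:true} and \prettyref{lmm:fake} are stated conditionally on the neighborhoods: they apply verbatim to high-degree vertices, and the small-ball factor $\exp(-cL)$ needs no modification. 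The actual technical work specific to this theorem is (i) the sharp bivariate binomial tail estimate giving $\Prob\{a_i\geq\tau,\,b_k\geq\tau+1\}\leq\alpha^2$ for fake pairs and $\gtrsim \alpha^{(1-q)/((1-p)s)}$ for true pairs (\prettyref{lmm:degcorr}, via the Zubkov--Serov Gaussian approximation and a careful comparison of $Q(tr)$ with $Q(t)^{r^2}$), together with verifying that the auxiliary good events $\Theta_{ik},\Theta_i$ on which \prettyref{lmm:true}--\prettyref{lmm:fake} are conditioned still hold with the right probability \emph{after} intersecting with the high-degree event (\prettyref{lmm:deg_Theta_ik_cond}--\prettyref{lmm:deg_Theta_i_cond}); and (ii) the second-moment bound, where the paper does not appeal to ``local neighborhoods'' but conditions on the parent-graph degrees $g_i,g_j$ and shows their near-independence yields $\Prob\{\calD_{ii}\cap\calD_{jj}\}\leq t^2(1+e^{-\Omega(L)})$.

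In Stage~3, a point the paper stresses and you do not: because the seed set $S$ is produced from the \emph{entire} graph via degree profiles, the seeded-matching guarantee must hold \emph{uniformly over all possible seed sets} of the given size --- it is a worst-case statement, and \prettyref{lmm:seed} accordingly takes a union bound over the $\binom{n}{m}$ choices of $S$. Relatedly, your claim that the thresholded bipartite graph $H$ has exactly one perfect matching is too strong: the first stage of \prettyref{alg:seed} only guarantees $\pi_1$ with $O(\log n/(qs))$ errors (\prettyref{lmm:seed_matching_mm}), and a second cleanup round (\prettyref{lmm:seed_matching_last}) is needed to correct them.
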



We briefly explain the choice of parameters and the condition \prettyref{eq:cond-main1_relax} on $\sigma$. 
According to \prettyref{eq:tau}, the threshold $\tau$ is chosen to be the $(1-\alpha)$-quantile of $a_i$, so that $\prob{a_i \geq \tau} \approx \alpha$.
The crucial observation is the following:
\begin{itemize}
	\item For true pairs $k=\pi^*(i)$, the degrees $a_i$ and $b_k$ are both sampled from the same vertex in the parent graph and are hence positively correlated. Indeed, we have
\begin{align}
\prob{a_i \geq \tau , b_k \geq \tau+1}  
= \Omega\left( \alpha^{ \frac{1-q}{(1-p)s} } \right), \quad 
k = \pi^*(i). \label{eq:deg_correlation0}
\end{align}
Here the exponent $\frac{1-q}{(1-p)s}$ is slightly bigger than one:
\begin{align}
\frac{1-q}{(1-p)s} = 1+ \frac{1-s}{(1-p)s} = 1+ \frac{\sigma^2}{(1-p)s}.
\label{eq:exponent_break_1}
\end{align}
\item For fake pairs $k \neq \pi^*(i)$, the degrees $a_i$ and $b_k$ are almost independent, and indeed we have  
\begin{align}
\prob{a_i \geq \tau , b_k \geq \tau+1}  
= O\left( \alpha^2 \right), \quad k \neq \pi^*(i). 
\label{eq:deg_correlation1}
\end{align}
\end{itemize}
Both \prettyref{eq:deg_correlation0} and \prettyref{eq:deg_correlation1} will be made precise in \prettyref{lmm:degcorr}.

In order for \prettyref{alg:distdeg} to succeed, on the one hand, 
we need to ensure the seed set $\calS$ in \prettyref{alg:distdeg}  
contains at least $\Omega(\frac{\log n}{q})$ correctly matched pairs. Indeed, 
under the condition $L=O(1/\sigma)$ and the choice 
of $\xi$ in \prettyref{eq:xi}, we will show 
that for any true pair $(i,k)$ the distance statistic $Z_{ik}$ is below $\xi$ with high probability. Thus, we have in expectation:
 $$
 \Expect[|\calS|] \overset{\prettyref{eq:deg_correlation0}}{\geq} n \alpha^{ \frac{1-q}{(1-p)s} } \overset{\prettyref{eq:alpha}}{=} \alpha_0\frac{\log n}{q},
 $$
and we will show that this holds with high probability as well.

On the other hand, we need to ensure that no 
 fake pair is included in
$\calS$ with high probability. 
We will show that for any wrong pair $(i,k)$, $Z_{ik} \leq \xi$ with probability at most $e^{-\Omega(L)}$ (see \prettyref{lmm:fake}).
By the union bound, in view of \prettyref{eq:deg_correlation1}, it suffices to guarantee that 
\begin{align}
n^2 \alpha^2 \exp \left( - \Omega(L) \right)
\overset{\prettyref{eq:alpha}}{=} & ~ n^2 \left( \alpha_0 \frac{\log n}{nq} \right)^{ 2 \frac{(1-p)s}{1-q} } \exp \left( - \Omega(L) \right)	\nonumber \\
\overset{\prettyref{eq:exponent_break_1}}{\le}  & ~ \left(\alpha_0 \frac{\log n}{q} \right)^{2}  
 \exp \left( \frac{2 \sigma^2}{1-q} \log n - \Omega(L) \right) =o(1). \label{eq:nalpha}
\end{align}
Also, recall that $L=O(1/\sigma)$. Thus, the desired \prettyref{eq:nalpha} holds provided that $\sigma \lesssim \frac{1}{(\log n)^{1/3}} \wedge \frac{1}{\log \frac{\log n}{q}}$,
 and $q$ is bounded away from $1$, by choosing 
 $ L \gtrsim (\log n)^{1/3} \vee \log \frac{\log n}{q}$.
 
 Finally, we mention that since the seed set obtained from \prettyref{alg:dist} and degree thresholding depends on the entire graph, the analysis of \prettyref{alg:distdeg} entails a worst-case analysis of the seeded matching subroutine. This is done in \prettyref{lmm:seed_matching_mm}
 in \prettyref{app:seed}, which guarantees the correctness of \prettyref{alg:seed} even for an adversarially chosen seed set.

 
\subsection{Sparse graphs: Matching via neighbors' degree profiles}
\label{sec:sparse}
For relatively sparse graphs, we can
improve the condition from $\sigma=O(1/\log(n))$ 
to  $ \sigma =O(1/\log(nq))$ by comparing 
neighbors' degree profiles. 
Next we describe our improved local algorithm, which uses the information of $3$-hop neighborhoods.

We start with some basic definitions.
The $\ell$-hop neighborhood of $i$ in graph $G$ is the subgraph of $G$ induced by the vertices within distance $\ell$ from $i$. Let $\tilde{N}_A(i)$ (resp.~$\tilde{N}_B(i)$) 
denote the set of vertices in the $2$-hop neighborhood of $i$ in graph $A$ (resp.~$B$). Denote the size of the $2$-hop neighborhood of $i$ in 
graph $A$ and $B$ by respectively
\begin{align*}
 \tilde{a}_i = | \tilde{N}_A(i)|, \quad \text{ and } \quad \tilde{b}_i = | \tilde{N}_B(i) |. 
\end{align*}
 
For each vertex $i$ and each
vertex $\ell$ at distance  two from $i$ in graph $A$ (resp.~$B$), define $\tilde{a}_{\ell}^{(i)}$ (resp.~$\tilde{b}_\ell^{(i)}$) as 
\begin{align}
\tilde{a}_{\ell}^{(i)} & = \frac{1}{ \sqrt{ ( n- \tilde{a}_i ) q (1-q)} } 
\sum_{k \notin \tilde{N}_A(i) } \left( A_{k \ell} -q \right), \label{eq:outdegree2hop1}\\
\tilde{b}_\ell^{(i)}
&= \frac{1}{ \sqrt{ ( n- \tilde{b}_i) q (1-q)} } 
\sum_{k \notin \tilde{N}_B(i) } \left( B_{k \ell} -q \right),
\label{eq:outdegree2hop2}
\end{align}
Analogous to \prettyref{eq:degmod1} and \prettyref{eq:degmod2}, $\tilde{a}_\ell^{(i)}$ (resp.~$\tilde{b}_\ell^{(i)}$) can also be viewed as the normalized ``outdegree'' of  vertex $\ell$, this time with the closed $2$-hop neighborhood of $i$ in $A$ (resp.~$B$) excluded. 

To each vertex $j \in N_A(i)$, attach the centered empirical distribution of the set $\{ \tilde{a}_\ell^{(i)}:  \ell \in N_A(j) \setminus N_A[i] \}$:
\begin{equation}
\tilde{\mu}^{(i)}_j \triangleq  \frac{1}{\left|  N_A(j) \setminus N_A[i] \right|} \sum_{ \ell \in N_A(j) \setminus N_A[i] } 
\delta_{ \tilde{a}_\ell^{(i)}} \; - \Binomc\left(n-\tilde{a}_i,q \right).
\end{equation}
Similarly, to each vertex $j \in N_B(i)$, attach the 
centered empirical distribution of the set $\{ \tilde{b}_\ell^{(i)}:  \ell \in N_B(i) \setminus N_B[i] \}$:
 \begin{equation}
\tilde{\nu}^{(i)}_{j} \triangleq  \frac{1}{ \left|N_B(j) \setminus N_B[i]  \right| } \sum_{ \ell \in N_B(j) \setminus N_B[i] } 
\delta_{ \tilde{b}_\ell^{(i)}} \; - \Binomc\left(n-\tilde{b}_i,q \right).
\end{equation}
Analogous to \prettyref{eq:degprofile1c}
and \prettyref{eq:degprofile2c}, $\tilde{\mu}_j$ (resp.~$\tilde{\nu}_j $) is the centered ``outdegree'' profile of $j$, this time defined over only $j$'s neighbors which are at exactly distance two from $i$ in $A$ (resp.~$B$). 

We now introduce a new distance statistic $W$ based on aggregating the original $Z$ statistic in \prettyref{eq:Z} over neighbors.
Recall the uniform partition $I_1,\ldots, I_L$ of $[-1/2,1/2]$ such that $|I_\ell|=1/L$. 
For each $j \in N_A(i)$ and $j' \in N_B(k)$, define the following distance statistic:
\begin{align}
\tilde{Z}^{(ik)}_{jj'} \triangleq \sum_{\ell \in [L]}  \left|\tilde{\mu}^{(i)}_j (I_\ell) - \tilde{\nu}^{(k)}_{j'} (I_\ell) \right|, \label{eq:Z_3_hop}
\end{align}
which is analogous to \prettyref{eq:Z} except that the definition of the outdegrees are modified.

For each $i,k \in [n]$, construct a bipartite graph with vertex set $N_A(i) \times N_B(k)$, whose adjacency matrix $Y^{(ik)}$ is given by 
\begin{align}
Y^{(ik)}_{jj'}=
\indc{\tilde{Z}^{(ik)}_{jj'} \le \eta},
\quad j \in N_A(i), j' \in N_B(k).
\label{eq:Y_3_hop}
\end{align}
Here $\eta$ is a threshold to be specified later. 
Define a similarity matrix $W$, where $W_{ik}$ is the size of a maximum bipartite matching of $Y^{(ik)}$:
\begin{align}
W_{ik} = \max \;  & \iprod{Y^{(ik)}}{M} \nonumber \\
\text{s.t. } & \sum_{j} M_{jj'} \le 1, \nonumber \\
&  \sum_{j'} M_{jj'} \le 1,  \nonumber \\
& M_{jj'} \in \{0, 1\}.  \label{eq:def_W_3_hop}
\end{align}
Finally, we match vertices in $A$ to vertices in $B$ greedily by sorting the similarities  $W_{ik}$'s.
The entire algorithm is summarized in 
\prettyref{alg:dist_3_hop} below.

\begin{algorithm}
\caption{Graph matching via neighbors' degree profiles}\label{alg:dist_3_hop}
\begin{algorithmic}[1]
\STATE {\bfseries Input:} Graphs $A$ and $B$ on $n$ vertices, an integer $L$, and a threshold $\eta >0$.
\STATE {\bfseries Output:} A permutation $\hat\pi \in S_n$.
\STATE  For each $i,k \in [n]$, compute $W_{ik}$ as in \prettyref{eq:def_W_3_hop}.
\STATE Sort $\{W_{ik}: i,k\in[n]\}$ and let $\calS$ be the set of indices of the largest $n$ elements.
\IF{$\calS$ defines a perfect matching on $[n]$, i.e., $\calS = \{(i,\hat\pi(i)): i \in [n]\}$ for some permutation $\hat\pi$}
\STATE Output $\hat\pi$;
\ELSE 
\STATE Output error.
\ENDIF
\end{algorithmic}
\end{algorithm}

The intuition behind \prettyref{alg:dist_3_hop} is as follows. Even if the $\tilde Z$ distance statistics of degree
profiles are not correctly separated for all pairs,
the new $W$ statistics are guaranteed to be well separated. Indeed, by setting 
\begin{equation}
\eta=\eta_0 \sqrt{ \frac{L}{nq}}
    \label{eq:eta-threshold}
\end{equation}
 for some
sufficiently small absolute constant $\eta_0$, 
we expect that 
\begin{itemize}
\item for true pairs $k=\pi^*(i)$,  $i$ and $k$ share many (about $nqs$) ``common neighbors''(in the sense that $j \in N_A(i)$ and $\pi^*(j) \in N_B(k)$). Moreover,  most of such common neighbors have $\tilde{Z}$ distance smaller than $\eta$. 
As a consequence, 
$W_{ik}$ is at least  $nq/4$ with high probability;

\item for fake pairs $k \neq \pi^*(i)$, $i$ and $k$ share very few (about $nq^2$) ``common neighbors''. 
Moreover, most of the fake pair of vertices $j \in N_A(i)$ and $j' \in N_B(k)$ have
$\tilde{Z}$ distance larger than $\eta$. 
As a consequence,
when $q$ is small, $W_{ik}$ is smaller than $nq/4$ with high probability. 
\end{itemize}

The performance guarantee of \prettyref{alg:dist_3_hop} is as follows:
\begin{theorem}
\label{thm:sparse}
Fix any constant $\epsilon>9/10$.
Suppose
$$
C_0 \log n \le nq \le n^{1-\epsilon} \quad \text{ and }
\quad \sigma \le \frac{\sigma_0}{\log (nq)}
$$
for some sufficiently large absolute constant $C_0$ and 
some sufficiently small absolute constant
$\sigma_0$. 
Set 
$
L= L_0 \log (nq)
$
and $\eta$ as in \prettyref{eq:eta-threshold}
for some sufficiently large absolute constant $L_0$ and some sufficiently small absolute constant $\eta_0$. Then  
with probability at least $1-O \left( n^{9-10\epsilon}\right)$, \prettyref{alg:dist_3_hop} outputs $\hat{\pi} =\pi^*$.
\end{theorem}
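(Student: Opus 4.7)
\textbf{Plan for proving Theorem~\ref{thm:sparse}.} The strategy is to establish the dichotomy suggested in the algorithm's motivation: for true pairs $k=\pi^*(i)$, show $W_{ik} \geq nq/4$ with high probability, and for fake pairs $k \neq \pi^*(i)$, show $W_{ik} < nq/4$ with high probability, then finish by union bound over all $n^2$ pairs. A preliminary observation is that under $nq \leq n^{1-\epsilon}$, the 2-hop neighborhoods $\tilde{N}_A(i), \tilde{N}_B(k)$ have size $\Theta((nq)^2) = o(n^{2(1-\epsilon)}) = o(n)$ with high probability, so the normalization $\sqrt{(n-\tilde{a}_i)q(1-q)}$ in \prettyref{eq:outdegree2hop1}--\prettyref{eq:outdegree2hop2} is essentially $\sqrt{nq(1-q)}$, and each $\tilde{a}_\ell^{(i)}$ is well approximated by a standardized $\Binom(n, q)$ random variable.

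\emph{True-pair analysis.} Fix $(i, k) = (i, \pi^*(i))$ and condition on the 2-hop neighborhoods. The set of common neighbors $C_{ik} \triangleq \{j \in N_A(i): \pi^*(j) \in N_B(k)\}$ is distributed as $\Binom(n-1, qs)$ and concentrates around $nq s \geq nq/2$ by Chernoff. For each $j \in C_{ik}$ with $j' = \pi^*(j)$, the pair $(j, j')$ is itself a true pair, so $\tilde{\mu}^{(i)}_j$ and $\tilde{\nu}^{(k)}_{j'}$ are centered empirical distributions over $j$'s, resp.~$j'$'s, neighbors outside the 2-hop neighborhood. By an argument parallel to the true-pair analysis for \prettyref{thm:guarantee-distance} (where for $L = L_0 \log(nq)$ the discretized total variation distance between such correlated empirical distributions is $O(\sqrt{L/(nq)})$ with high probability), one shows $\tilde{Z}^{(ik)}_{j j'} \leq \eta$ with probability $1 - (nq)^{-\omega(1)}$. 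A union bound over $|C_{ik}|$ common neighbors then certifies that $\{(j, \pi^*(j)): j \in C_{ik}\}$ is a matching inside $Y^{(ik)}$ of size at least $nq/4$, giving $W_{ik} \geq nq/4$.

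\emph{Fake-pair analysis.} Fix $(i, k)$ with $k \neq \pi^*(i)$. Partition $N_A(i) \times N_B(k)$ into $\calT_{ik} = \{(j, \pi^*(j)): j \in N_A(i), \pi^*(j) \in N_B(k)\}$, the ``true-pair common neighbors'', of size distributed as $\Binom(n-1, q^2 s) = O(nq^2)$ with high probability, and its complement $\calF_{ik}$ of size at most $(nq)^2$. Then $W_{ik} \leq |\calT_{ik}| + M_{ik}$, where $M_{ik}$ is the maximum matching size in the subgraph of $Y^{(ik)}$ restricted to $\calF_{ik}$. The first term is $O(nq^2) = o(nq)$, so it remains to bound $M_{ik}$. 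The key small-ball estimate is that for each fake pair $(j, j') \in \calF_{ik}$,
\[
\prob{\tilde{Z}^{(ik)}_{j j'} \leq \eta} \leq \exp(-\Omega(L)) = (nq)^{-\Omega(L_0)},
\]
which is the sparse-graph analogue of \prettyref{eq:smallball1}. Counting matchings of size $m$ in a bipartite graph with at most $nq$ vertices on each side by $(nq)^{2m}/m!$ and applying a union bound gives
\[
\prob{M_{ik} \geq nq/8} \leq \frac{(nq)^{nq/4}}{(nq/8)!} (nq)^{-\Omega(L_0) \cdot nq/8} \leq \left( 8e \cdot (nq)^{1 - \Omega(L_0)} \right)^{nq/8}.
\]
For $L_0$ large this is super-polynomially small in $nq$, and summing over $n^2$ fake pairs yields the claimed error bound $O(n^{9 - 10\epsilon})$ after balancing against $nq \geq C_0 \log n$.

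\emph{Main obstacle.} The central technical difficulty is that the distances $\tilde{Z}^{(ik)}_{j j'}$ for different $(j, j')$ within the same $(i, k)$ are \emph{not} independent: they all depend on the common normalized outdegrees $\tilde{a}_\ell^{(i)}$ and $\tilde{b}_\ell^{(k)}$. The usual remedy is to condition on the 2-hop neighborhoods $\tilde{N}_A(i), \tilde{N}_B(k)$ together with the associated edge information, after which edges to vertices outside these neighborhoods are independent and the small-ball bound above can be applied cleanly. The matching-count step must be executed on this conditional randomness and the result transferred back via a high-probability event on the 2-hop structure; this bookkeeping, along with the careful derivation of the small-ball bound using empirical process arguments analogous to those sketched in \prettyref{sec:gaussian-emp} and formalized in \prettyref{sec:pf}, will be the most demanding component of the proof.
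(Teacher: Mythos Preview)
Your overall strategy---establishing the $W$-statistic dichotomy at threshold $nq/4$ for true versus fake pairs and then union bounding---matches the paper's, and your true-pair analysis is essentially right (the paper also shows each common-neighbor pair satisfies $\tilde Z\le\eta$ with probability $1-e^{-\Omega(L)}$; it then uses independence plus Chernoff rather than a straight union bound, but your union bound also works given that failure probability).

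The fake-pair argument has a genuine gap. Your union bound over size-$m$ matchings implicitly uses
\[
\prob{\forall (j,j')\in M:\ \tilde Z^{(ik)}_{jj'}\le\eta}\ \le\ \prod_{(j,j')\in M}\prob{\tilde Z^{(ik)}_{jj'}\le\eta},
\]
which would require the events $\{\tilde Z^{(ik)}_{jj'}\le\eta\}$ to be (approximately) independent across $(j,j')\in M$. Conditioning on the $2$-hop neighborhoods does make the outdegrees $\{\tilde a^{(i)}_\ell\}_\ell$ mutually independent, but it does \emph{not} make the $\tilde Z$-statistics independent: $\tilde Z^{(ik)}_{j_1 j'_1}$ and $\tilde Z^{(ik)}_{j_2 j'_2}$ both involve $\tilde a^{(i)}_\ell$ whenever $\ell$ is a common $A$-neighbor of $j_1$ and $j_2$ at distance two from $i$, and similarly on the $B$ side. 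So the product bound does not follow from the remedy you describe.

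The paper resolves this differently. It first works on a \emph{tangle-free} event (at most one cycle in each $2$-hop neighborhood of the union graph), which holds with probability $1-O(n^{9-10\epsilon})$ under $nq\le n^{1-\epsilon}$; this event---together with an analogous structural event $\Gamma_{ik}$ bounding $|N_A(i)\cap N_B(k)|$ and related quantities---is where the stated error rate $O(n^{9-10\epsilon})$ actually originates, not the per-pair union bound. Tangle-freeness ensures each outdegree $\tilde a^{(i)}_\ell$ appears in at most two of the empirical distributions $\tilde\mu^{(i)}_j$. Then, for each \emph{fixed} matching $M$, the paper replaces $\indc{\tilde Z\le\eta}$ by a $(1/\eta)$-Lipschitz smoothing $F$ and views $W'=\sum_{(j,j')\in M}F(\tilde Z^{(ik)}_{jj'})$ as a function of the independent outdegrees; tangle-freeness plus the matching constraint yields the bounded-difference property with constant $O(1/(nq\,\eta))$, and McDiarmid's inequality gives $\prob{W'\ge nq/8}\le\exp(-\Omega((nq\,\eta)^2))=\exp(-\Omega(nq\,L))$. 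Only then is a union bound taken over the $(2nq)!$ matchings, which this concentration bound absorbs since $L=L_0\log(nq)$. Your counting/small-ball arithmetic is the right back-of-envelope, but the concentration mechanism is McDiarmid on the matching-level statistic, not independence of the individual $\tilde Z$'s.
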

\nb{We briefly explain the condition on the graph sparsity in \prettyref{thm:sparse}. 
On the one hand, the analysis of \prettyref{alg:dist_3_hop} requires the graphs to be sufficiently sparse ($nq \le n^{1-\epsilon} $ for $\epsilon>9/10$), so that all $2$-hop neighborhoods are tangle-free, each containing at most one cycle. On the other hand, \prettyref{thm:sparse} requires the graphs cannot be too sparse (\ie, 
$nq \gtrsim \log n $) so that each vertex has enough neighbors; this lower bound is information-theoretically necessary for exact recovery \cite{cullina2016improved,cullina2017exact}. }

\section{Analysis}
\label{sec:pf}

Throughout this section, without loss of generality, we assume the true permutation $\pi^*$ is the identity.

We introduce a number of events regarding the neighborhoods
$N_A(i)$ and $N_B(k)$. Recall that $a_i=|N_A(i)|$ and $b_k=|N_B(k)|$ denote the degrees.
Put 
\begin{equation}
c_{ik} = \left| N_A(i) \cap N_B(k) \right| . 
\label{eq:cik}
\end{equation}

First, for each $i \in [n]$, define the events
\begin{align}
\Gamma_A(i) & = \left\{  \frac{1}{2} nq  \le  a_i  \le 2nq \right\}, 
\quad \Gamma_B(i) = \left\{  \frac{1}{2} nq  \le  b_i  \le 2nq \right\},
 \label{eq:ab_bound} \\
\Gamma_{ii} & = \left\{ c_{ii}  \geq \frac{1}{2} nq \right\}.  
\label{eq:c_bound}
\end{align}
Second, for each pair of $i, k \in [n]$ with $i \neq k$, define the event 
\nb{\begin{align}
\Gamma_{ik} = \left\{ \sqrt{c_{ik}} \leq \sqrt{nq^2} + \sqrt{2 \log n}  \right\} \label{eq:c_bound_wrong}.
\end{align}}
Note that $a_i, b_i \sim \Bin(n-1,q)$. Moreover, 
$c_{ii} \sim \Bin(n-1, qs)$ which is stochastically larger than
$\Bin(n-1,3q/4)$ under the assumption $s=1-\sigma^2 \geq 3/4$;
for $i \neq k$, 
$c_{ik} \sim \Bin(n-2,q^2).$
Thus, it follows from the binomial tail bounds
\prettyref{eq:bintail} and \prettyref{eq:bintail_upper} that 
\begin{align}
\prob{\Gamma^c_A(i)}, \prob{\Gamma^c_B(i)}, \prob{\Gamma^c_{ii}} 
& \le e^{-\Omega(nq)} \le n^{-3},  \quad \forall i \in [n], \label{eq:thetac}\\
\prob{\Gamma_{ik}^c } & \le n^{-3},  \quad \forall i \neq k \in [n]\label{eq:thetac_diff},
\end{align}
where we use the assumption that $nq \ge C \log n$ for a sufficiently
large constant $C$.

Third, given any $\Delta>0$, for each pair of $i, k \in [n]$, define the event 
\begin{align}
\Theta_{ik} \triangleq  \left\{  | a_i - b_k | \le 4 \sqrt{ nq \Delta} \right\}.
\label{eq:def_Theta_ik}
\end{align}
In view of the binomial tail bounds \prettyref{eq:bintail_lower} and \prettyref{eq:bintail_upper}, we have that
\begin{align*}
\prob{ \sqrt{nq}-\sqrt{\Delta } \leq \sqrt{a_i} \leq \sqrt{nq} + \sqrt{\Delta} 
} \ge 1- 2e^{-\Delta}
\end{align*}
and similarly for $b_k$. Thus it follows from the union bound 
that 
\begin{align}
\prob{\Theta_{ik}} \ge 
\prob{\sqrt{nq}-\sqrt{\Delta } \leq \sqrt{a_i}, \sqrt{b_k} \leq \sqrt{nq} + \sqrt{\Delta} } \ge 1- 4e^{-\Delta}. \label{eq:Theta_ik_tail}
\end{align}

Lastly, for each $i \in [n]$, define the event 
\nb{
\begin{align}
\Theta_i = \left\{  \max\{ \sqrt{a_i - c_{ii}}, \sqrt{b_i - c_{ii} } \}
 \leq  \sqrt{nq (1-s) } + \sqrt{ \Delta }  
 \right\}. \label{eq:def_Theta_i}
\end{align}
}
Since both $a_i-c_{ii}$ and $b_i-c_{ii}$ are distributed as $\Binom(n-1,q(1-s))$,
it follows from the binomial tail bound \prettyref{eq:bintail_upper}  and the union bound that
\begin{align}
\prob{\Theta_i} \ge 1 - 2e^{-\Delta}.  \label{eq:Theta_i_tail}
\end{align}

\subsection{Proof of \prettyref{thm:guarantee-distance}}
	\label{sec:dist_proof}
	
	The proof of \prettyref{thm:guarantee-distance} is structured as follows:
	\begin{center}
	\begin{tikzpicture}[scale=1,transform shape,node distance=2.5cm,auto,>=latex']
    \node [coordinate] (l12) {};
    \node [plan] (l10) [right of=l12,node distance=3cm] {\prettyref{lmm:corsample}};    
		\node [plan] (l122) [below of=l10,node distance=2cm]  {\prettyref{lmm:beta}		};
		\node [plan] (l1) [right of=l10,node distance=3cm] {\prettyref{lmm:true}};    
		\node [plan] (l11) [below of=l1,node distance=2cm] {\prettyref{lmm:tvconc}};    
		\node [plan] (l2) [below of=l11,node distance=2cm] {\prettyref{lmm:fake}};    
		\node [plan] (l9) [left of=l2,node distance=3cm] {\prettyref{lmm:indsample}};
		\node [plan] (t1) [right of=l11,node distance=5cm] {\prettyref{thm:guarantee-distance}};    
	\draw[-Implies,double distance=1pt] (l122) -- (l1);
    \draw[-Implies,double distance=1pt] (l10) -- (l1);
		\draw[-Implies,double distance=1pt] (l11) -- (l1);
		\draw[-Implies,double distance=1pt] (l11) -- (l2);
		\draw[-Implies,double distance=1pt] (l9) -- (l2);
		\draw[-Implies,double distance=1pt] (l1) -- (t1);
		\draw[-Implies,double distance=1pt] (l2) -- (t1);
\end{tikzpicture}	
\end{center}
\medskip


We start with the following results on separating 
the maximum distance among true pairs $\max_{i \in [n]} Z_{ii}$ and 
the minimum distance among wrong pairs $\min_{i \neq k \in [n]} Z_{ik}$: 

\begin{lemma}[True pairs]
\label{lmm:true}	
Assume that 
$\sigma \leq \frac{1}{2}$, $q \le q_0 \le \frac{1}{8}$, $nq \ge C \max\{ \log n, L^2, \Delta\}$
for some sufficiently large constant $C$, and 
\begin{equation}
4 L  \sqrt{ nq\Delta}   \le n. \label{eq:condtrue}
\end{equation}
There exist absolute constants $\tau_1,\tau_2$ such that for each $i \in [n]$,
    \begin{equation}
\prob{ Z_{ii} \ge \xitrue \mid N_A(i), N_B(i)} 
\indc{\Gamma_{A} (i) \cap \Gamma_B(i) \cap \Gamma_{ii}  \cap \Theta_{i} \cap \Theta_{ii} } 
\le O(e^{-\Delta/2}),
    \label{eq:true}
    \end{equation}
where 
\nb{\begin{align}
\xitrue \triangleq L \sqrt{\frac{2\beta}{nq}} +\tau_2 \sqrt{\frac{\Delta}{nq}}+ \tau_2 \sigma \sqrt{\frac{L}{nq}}
\label{eq:xitrue}
\end{align}}
and 
    \begin{equation}
        \beta \triangleq \tau_2 \left( \sigma + \sqrt{\frac{\Delta }{n} } +  \frac{1}{\sqrt{nq}} +  e^{-\Delta} \right) + 
        \frac{1}{L} \exp \left( - \tau_1 \min \left\{ \frac{1}{\sigma^2 L^2}, \frac{n}{L^2 \Delta }, 
        \frac{\sqrt{np}}{L} \right\} \right).
        \label{eq:beta}
    \end{equation}


\end{lemma}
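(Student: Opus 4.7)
The plan is to condition on the neighborhoods $N_A(i)$ and $N_B(i)$ and work on the good event $\Gamma_A(i) \cap \Gamma_B(i) \cap \Gamma_{ii} \cap \Theta_i \cap \Theta_{ii}$, which guarantees $a_i,b_i \asymp nq$, a large common neighborhood $c_{ii} \geq nq/2$, small degree gap $|a_i-b_i| \lesssim \sqrt{nq\Delta}$, and small symmetric difference $|a_i - c_{ii}|, |b_i - c_{ii}| \lesssim nq\sigma^2 + \Delta$. Once we fix these neighborhoods, conditional on them the samples $\{a_j^{(i)}\}_{j\in N_A(i)}$ and $\{b_j^{(i)}\}_{j\in N_B(i)}$ have an explicit product structure coming from the residual edges in $[n]\setminus N_A[i]$ and $[n]\setminus N_B[i]$, which is what allows a clean probabilistic analysis.

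The key step is the decomposition of each of $\mu_i$ and $\nu_i$ into contributions from $N_A(i)\cap N_B(i)$ (common neighbors) and from the private parts $N_A(i)\setminus N_B(i)$ and $N_B(i)\setminus N_A(i)$. The private contributions enter $Z_{ii}$ only through the ratios $(a_i-c_{ii})/a_i$ and $(b_i-c_{ii})/b_i$, together with the mismatch between $\Binomc(n-a_i-1,q)$ and $\Binomc(n-b_i-1,q)$; both are controlled by the bounds on $\Theta_i \cap \Theta_{ii}$ and contribute an $O(\sigma + \sqrt{\Delta/n})$-type error, which is one ingredient of $\beta$ in \prettyref{eq:beta}. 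For a common neighbor $j \in N_A(i)\cap N_B(i)$, the pair $(a_j^{(i)}, b_j^{(i)})$ is a pair of standardized binomials over nearly the same index set with per-edge correlation $s$, and I will invoke \prettyref{lmm:corsample} to obtain a quantitative statement that these pairs are jointly close (on the $L$-discretization scale $1/L$) with high probability; the residual gap is the other ingredient of $\beta$, producing the small-ball-type term $\exp(-\tau_1\min\{1/(\sigma^2 L^2),\ldots\})$ in \prettyref{eq:beta}.

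With the bin-by-bin bias term $\Expect|\bar\mu_i(I_\ell) - \bar\nu_i(I_\ell)| \lesssim \sqrt{\beta/(nq)}$ in place (which is precisely what \prettyref{lmm:beta} will produce), summing over $\ell\in[L]$ yields the leading $L\sqrt{2\beta/(nq)}$ term of $\xitrue$. To upgrade the in-expectation bound to an in-probability bound, I will apply the total variation concentration \prettyref{lmm:tvconc} to the empirical measure of $\{a_j^{(i)}\}_{j\in N_A(i)}$ versus its target $\Binomc(n-a_i-1,q)$, and symmetrically for $B$. Each such concentration costs at most $\tau_2\sqrt{\Delta/(nq)}$ with failure probability $O(e^{-\Delta/2})$, giving the second term of $\xitrue$. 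The remaining $\tau_2\sigma\sqrt{L/(nq)}$ term accounts for the $\sqrt{L}$-factor fluctuation of the correlated common-neighbor contributions across the $L$ bins, which is where the effective per-bin variance $O(\sigma/(nqL))$ enters.

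The main obstacle will be carefully untangling the dependencies: the set $N_A(i)\cap N_B(i)$ and the residual edge sets $[n]\setminus N_A[i]$, $[n]\setminus N_B[i]$ share vertices, so the atoms $a_j^{(i)}$ for $j \in N_A(i)$ are not independent of the atoms $b_{j'}^{(i)}$ for $j' \in N_B(i)$ when the residual edges touch both neighborhoods. The definition of $a_j^{(i)}$ via summation over $[n]\setminus N_A[i]$ (rather than all $[n]$) is designed exactly to kill most of this coupling, but a careful accounting is still required so that, conditional on the two neighborhoods, the atoms in each profile are genuinely i.i.d.\ with the correct marginal $\Binomc(n-a_i-1,q)$, while the cross-dependence between the two profiles is captured purely by the common neighbor coupling of \prettyref{lmm:corsample}. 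Once that separation is clean, the bound \prettyref{eq:true} follows by combining \prettyref{lmm:corsample}, \prettyref{lmm:beta}, and \prettyref{lmm:tvconc} with a union bound over the $L$ bins and a final application of the condition \prettyref{eq:condtrue}, which ensures the error from the centering mismatch is dominated by the quantization scale.
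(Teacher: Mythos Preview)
Your high-level plan matches the paper's: condition on the neighborhoods, split $\bar\mu_i,\bar\nu_i$ into common-neighbor and private-neighbor pieces, and combine \prettyref{lmm:corsample}, \prettyref{lmm:beta}, and \prettyref{lmm:tvconc}. However, the bookkeeping of which lemma produces which term in $\xitrue$ is garbled, and one step as written would not go through.

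The quantity $\beta$ is \emph{entirely} the output of \prettyref{lmm:beta}: it bounds, for a single common neighbor $j$, the bin-disagreement probability $\prob{a_j^{(i)}\in I_\ell,\,b_j^{(i)}\notin I_\ell}+\prob{a_j^{(i)}\notin I_\ell,\,b_j^{(i)}\in I_\ell}$. The private-neighbor contributions do \emph{not} feed into $\beta$; they appear as a separate additive term in the triangle inequality
\[
Z_{ii}\ \le\ d(P,Q)\ +\ (1-\rho)\|[P']_L\|_1 + (1-\rho')\|[Q']_L\|_1\ +\ |\rho-\rho'|,
\]
where $\rho=c_{ii}/a_i$, $\rho'=c_{ii}/b_i$, and $P,Q$ (resp.\ $P',Q'$) are the centered empirical measures over the common (resp.\ private) neighbors. \prettyref{lmm:corsample} (fed by \prettyref{lmm:beta}) handles $d(P,Q)$ directly and already returns an in-probability bound $L\sqrt{2\beta/(nq)}+O(\sqrt{\Delta/(nq)})$ via McDiarmid; there is no separate ``upgrade from expectation'' step and no union bound over the $L$ bins.

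The role you assign to \prettyref{lmm:tvconc} is the actual gap. Applying it to the \emph{full} profile $\{a_j^{(i)}:j\in N_A(i)\}$ only yields $\|[\bar\mu_i]_L\|_1\le\sqrt{L/(nq)}+\sqrt{\Delta/(nq)}$, which is the scale of $\xifake$, not a concentration of $Z_{ii}$ about a small mean; triangulating through this would lose the separation. In the paper, \prettyref{lmm:tvconc} is applied to the \emph{private} pieces $P'$ and $Q'$, each an empirical measure on only $a_i-c_{ii}$ (resp.\ $b_i-c_{ii}$) samples. The resulting bound $\|[P']_L\|_1\le\sqrt{L/(a_i-c_{ii})}+\sqrt{\Delta/(a_i-c_{ii})}$ is then multiplied by the small weight $(1-\rho)=(a_i-c_{ii})/a_i$, giving
\[
(1-\rho)\|[P']_L\|_1 \ \le\ \frac{\sqrt{a_i-c_{ii}}}{a_i}\bigl(\sqrt{L}+\sqrt{\Delta}\bigr)\ \lesssim\ \frac{\sqrt{nq}\,\sigma+\sqrt{\Delta}}{nq}\bigl(\sqrt{L}+\sqrt{\Delta}\bigr),
\]
using $\sqrt{a_i-c_{ii}}\le\sqrt{nq}\,\sigma+\sqrt{\Delta}$ on $\Theta_i$. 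This is precisely where the $\tau_2\sigma\sqrt{L/(nq)}$ term in $\xitrue$ comes from---it is a private-neighbor term, not a ``fluctuation of the correlated common-neighbor contributions.'' The remaining $\tau_2\sqrt{\Delta/(nq)}$ collects the McDiarmid cost from \prettyref{lmm:corsample}, the $\sqrt{\Delta}$-parts of the display above, and $|\rho-\rho'|\le 8\sqrt{\Delta/(nq)}$ on $\Theta_{ii}$.
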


\begin{lemma}[Fake pairs]
\label{lmm:fake}
Assume that  
$\sigma \leq \frac{1}{2}$, $q \le q_0$ for some sufficiently small constant $q_0$, $nq \ge C \max\{ \log n, L^2, \Delta \}$,
and $L  \ge L_0$ for some sufficiently large constant $L_0$. 	 
Then there exist universal constants $c_1,c_2, c_3$, 
such that for each distinct pair $i\neq k$ in $[n]$, 
	\begin{equation}
    \prob{ Z_{ik} \le \xifake \mid N_A(i), N_B(k) } \indc{\Gamma_A(i) \cap \Gamma_B(k) \cap \Gamma_{ik} \cap \Theta_{ik}}
    \le O\left(e^{-\Delta/2}\right)
	\label{eq:fake},
	\end{equation}
where 
\begin{align}
\xifake\triangleq  c_1 \sqrt{\frac{L}{nq}} - c_2 \sqrt{\frac{\Delta}{nq}}.
\label{eq:xifake}
\end{align}	
	
\end{lemma}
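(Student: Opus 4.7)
The plan is to condition on the neighborhoods $N_A(i)$ and $N_B(k)$ and to reduce the lower bound for $Z_{ik}$ to an anti-concentration (small-ball) estimate for the $L_1$ distance between two nearly independent, discretized empirical distributions. Under the indicator events $\Gamma_A(i)\cap\Gamma_B(k)\cap\Gamma_{ik}\cap\Theta_{ik}$ we have $a_i,b_k\asymp nq$, $|a_i-b_k|\le 4\sqrt{nq\Delta}$, and the overlap $c_{ik}=|N_A(i)\cap N_B(k)|$ is at most $\sqrt{nq^2}+\sqrt{2\log n}$, which is $o(nq)$ under \prettyref{eq:cond-main2}. Given these neighborhoods, the samples $\{a_j^{(i)}:j\in N_A(i)\}$ are i.i.d.\ $\Binomc(n-a_i-1,q)$ and $\{b_{j'}^{(k)}:j'\in N_B(k)\}$ are i.i.d.\ $\Binomc(n-b_k-1,q)$; the two families share randomness only through (a) the few common neighbors in $N_A(i)\cap N_B(k)$, whose $A$- and $B$-outdegrees are coupled via the parent graph, and (b) each single edge $\{j,j'\}$ with $j\in N_A(i)$, $j'\in N_B(k)$, each of which contributes a perturbation of order $1/\sqrt{nq}$ to at most one term of each outdegree.

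First, I would use the triangle inequality to replace $Z_{ik}$ by the analogous distance between two \emph{fully independent} empirical measures $\tilde\mu_i,\tilde\nu_k$, at the cost of an error of order $c_{ik}/(nq)=o(\sqrt{L/(nq)})$ from swapping the contributions of common neighbors, and a further error of order $\sqrt{\Delta/(nq)}$ coming from two sources: the gap between the two centering measures $\Binomc(n-a_i-1,q)$ and $\Binomc(n-b_k-1,q)$ across the $L$ bins, which is controlled by $|a_i-b_k|\le 4\sqrt{nq\Delta}$, and the sample-size mismatch $|a_i-b_k|$ itself. Both corrections will be absorbed into the $-c_2\sqrt{\Delta/(nq)}$ slack in the statement.

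Second, I would invoke the small-ball estimate \prettyref{lmm:indsample} together with the binomial-bin concentration \prettyref{lmm:tvconc} for the idealized independent setting. The point is that the $L$ bins $I_1,\dots,I_L$ partition $[-1/2,1/2]$ into intervals carrying probability $\asymp 1/L$ under $\Binomc(n-a_i-1,q)$ (by the local CLT, since $nq\gtrsim L^2$), so each bin discrepancy $|\tilde\mu_i(I_\ell)-\tilde\nu_k(I_\ell)|$ behaves like $\sqrt{2/(a_iL)}\,|G|$ with $G$ standard normal, contributing on average $\Theta(\sqrt{1/(nqL)})$ to the sum. Summing over $L$ bins yields an expected distance $\Theta(\sqrt{L/(nq)})$; the anti-concentration bound gives
\[
\prob{\sum_{\ell\in[L]}|\tilde\mu_i(I_\ell)-\tilde\nu_k(I_\ell)|\le c_1\sqrt{L/(nq)}\,}\le e^{-\Omega(L)},
\]
and the choice $L\ge L_0$ together with $\Delta\le O(L)$ makes this dominate the target $e^{-\Delta/2}$.

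The main obstacle is the small-ball step: bin counts are \emph{not} independent across $\ell$ (they sum to the sample sizes), and one must show that the typical value $\Theta(\sqrt{L/(nq)})$ cannot be wiped out by an atypical alignment of the two empirical measures. My plan is to drop one bin, reducing to $L-1$ asymptotically independent coordinates, then apply a Berry--Esseen comparison to replace each $\sqrt{a_i}(\tilde\mu_i(I_\ell)-p_\ell)$ by an independent Gaussian, and finally use a standard $L_1$-norm small-ball bound of the form $\prob{\|G\|_1\le \epsilon\EE\|G\|_1}\le e^{-c(L-1)}$ (via Paley--Zygmund or a characteristic-function argument). Handling the Berry--Esseen error uniformly over bins requires $nq\gtrsim L^2$, which is built into the hypothesis, and managing the mild dependence between the two sides from the shared edges $\{j,j'\}$ will use that each such correlation is of size $1/\sqrt{nq}$ and the total number of these shared edges, being bounded by $a_ib_k/n=O(nq^2)$, is negligible relative to $nq$.
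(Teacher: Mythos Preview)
Your high-level plan matches the paper's: decompose $\bar\mu_i,\bar\nu_k$ according to common versus non-common neighbors, isolate the ``independent'' piece $d(P',Q')$ via the triangle inequality, bound it below via \prettyref{lmm:indsample}, and control the remainder via \prettyref{lmm:tvconc}. Two steps, however, are not right as stated.

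First, the cross-dependence between $\{a_j^{(i)}:j\in J\}$ and $\{b_{j'}^{(k)}:j'\in J'\}$ (with $J=N_A(i)\setminus N_B(k)$, $J'=N_B(k)\setminus N_A(i)$) is the technical crux, and ``each correlation is of size $1/\sqrt{nq}$ and the total number of shared edges is $O(nq^2)$'' neither produces the independence hypothesis of \prettyref{lmm:indsample} nor gives a coupling to independent samples with controlled error. The paper's device is to \emph{condition} on the edge sets $E_A(J,J')$ and $E_B(J,J')$: once these are fixed, the two families of outdegrees are genuinely independent, but no longer identically binomial. The real work then is to verify hypothesis \prettyref{eq:partition} of \prettyref{lmm:indsample} for a random subset $J_0\subset J$ of size at least $|J|/4$ on which the cross-contribution $e_A(j,J'\setminus\{i\})$ stays within $\sqrt{nq(1-q)/2}$ of its mean; a Berry--Esseen step on the remaining (shifted) binomial part of $a_j^{(i)}$ then gives $c_1/L\le\prob{a_j^{(i)}\in I_\ell}\le c_2/L$ for $j\in J_0$. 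This conditioning-and-verification step is what your sketch is missing.

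Second, your common-neighbor error is too crude, and you have misread $\Gamma_{ik}$: it asserts $\sqrt{c_{ik}}\le\sqrt{nq^2}+\sqrt{2\log n}$, not $c_{ik}$ itself. Hence $c_{ik}/(nq)\asymp q+\log n/(nq)$, which for $q$ a small \emph{constant} is \emph{not} $o(\sqrt{L/(nq)})$. The paper does not simply drop the common neighbors; it bounds $\rho\,d(P,Q)$ via \prettyref{lmm:tvconc}, yielding $O(\sqrt{c_{ik}L}/(nq))=O(\sqrt{q}\cdot\sqrt{L/(nq)})$, which \emph{is} absorbed for $q\le q_0$ small. Finally, your last paragraph (drop a bin, Berry--Esseen, Paley--Zygmund) is an alternative proof of \prettyref{lmm:indsample} itself, not an additional step; the paper proves that lemma by McDiarmid plus a per-bin mean lower bound, so the bin-count dependence you worry about never enters.
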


Note that the conclusions of Lemma 1 and 2 are stated in a conditional form conditioned on the 
neighborhoods $N_A(i)$ and $N_B(k)$.  This is for the purpose of analyzing Algorithm 2, where we will need to apply these lemmas to high-degree vertices (see proof of \prettyref{thm:guarantee-distance-deg}).

We now prove \prettyref{thm:guarantee-distance}:
\begin{proof}
It suffices to show that with probability $1-O(1/n)$, 
\[
\min_{i \neq k \in [n]} Z_{ik} > \max_{i \in [n]} Z_{ii}.
\]


Choose 
\begin{align}
\Delta = \left( \frac{c_1}{4 \max \{ c_2, \tau_2 \} }  \right)^2 L, \label{eq:choice_Delta}
\end{align}
where $c_1,c_2$ and $\tau_2$ are the absolute 
constants given in \prettyref{lmm:fake} and \prettyref{lmm:true}, respectively.

In view of the theorem assumptions 
$\sigma \le \sigma_0/\log n$, $L=L_0\log n$, and $nq \ge C_0 \log^2 n$, 
we have that $\beta$ in \prettyref{eq:beta} satisfies
\begin{align*}
\beta L & \leq \tau_2  L_0
\left( \sigma_0  
+  \log n \sqrt{\frac{\Delta}{n}}
+  \frac{1}{\sqrt{C_0}} + e^{-\Delta} \log n  \right)
+ \exp \left( -  \tau_1
\min \left\{  \frac{1}{\sigma_0^2 L_0^2}, \frac{n}{ L_0^2  \Delta \log^2 n }, 
\frac{\sqrt{C_0} }{L_0} \right\} \right)  \\
& \leq \frac{c_1^2}{32},
\end{align*}
provided that $\sigma_0 L_0$ is sufficiently small,
and $n$ and $\sqrt{C_0}/L_0$ are sufficiently large. 
Moreover, $\tau_2 \sigma \le 1/8$ when $\sigma_0$ is sufficiently small. 
Thus, in view of \prettyref{eq:xitrue}, \prettyref{eq:xifake}, and 
\prettyref{eq:choice_Delta}, we have
\begin{align}
\xifake \geq  \frac{3c_1}{4} \sqrt{\frac{L}{nq}}
> \frac{5c_1}{8} \sqrt{\frac{L}{nq}} 
\geq \xitrue. \label{eq:xi_separation}
\end{align}

Also, since $L=L_0 \log n$, \prettyref{eq:condtrue} is satisfied for sufficiently large $n$.    
Hence, all the conditions of \prettyref{lmm:true} and 
\prettyref{lmm:fake} are fulfilled. 
Furthermore, for $L_0$ sufficiently large,  
 we have $e^{-\Delta/2} \leq n^{-3}$. 

Applying \prettyref{lmm:true} and 
averaging over $N_A(i)$ and $N_B(i)$ over
both sides of \prettyref{eq:true}, we get that
$$
\prob{ \left\{ Z_{ii} \ge \xitrue \right\} 
\cap \Gamma_A(i) \cap \Gamma_B(i) \cap \Gamma_{ii} \cap \Theta_i \cap \Theta_{ii}}
\le O\left(e^{-\Delta/2}\right).
$$
By the union bound, we get that
\begin{align}
\prob{ \max_{i \in [n]} Z_{ii} \ge \xitrue } 
& \le
\sum_{i \in [n]} 
\bigg( \prob{ \left\{ Z_{ii} \ge \xitrue \right\} 
\cap \Gamma_A(i) \cap \Gamma_B(i) \cap \Gamma_{ii}  \cap \Theta_i \cap \Theta_{ii} } \nonumber \\
&~~~~ +\prob{ \Gamma^c_A(i) } +\prob{ \Gamma^c_B(i) }+ \prob{ \Gamma^c_{ii} } + \prob{ \Theta^c_i} + \prob{\Theta^c_{ii}} \bigg) \nonumber \\
&\le O\left( n^{-2} \right) + O\left( n e^{-\Delta/2}\right) \le O\left( n^{-2} \right).
\label{eq:xitrue_tail}
\end{align}
where the second-to-the-last inequality holds 
due to \prettyref{eq:thetac},
\prettyref{eq:Theta_ik_tail} and \prettyref{eq:Theta_i_tail}.

Similarly, for $i\neq k$, applying 
\prettyref{lmm:fake} and averaging over $N_A(i)$ and $N_B(k)$ over
both hand sides of \prettyref{eq:fake}, we get that
$$
\prob{ \left\{ Z_{ik} \le \xifake \right\} 
\cap \Gamma_A(i) \cap \Gamma_B(k) \cap \Gamma_{ik} \cap \Theta_{ik}}
\le O\left(e^{-\Delta/2}\right).
$$
By the union bound, we get that
\begin{align}
& \prob{ \min_{i \neq k \in [n] } Z_{ik}  \le \xifake }  \nonumber \\
& \le 
\sum_{i \neq k } 
\left(
 \prob{ \left\{ Z_{ik} \le \xifake \right\} \cap \Gamma_A(i)  \cap \Gamma_B(k) \cap \Gamma_{ik} 
 \cap \Theta_{ik}
}
+ \prob{\Gamma_A^c(i) } + \prob{\Gamma_B^c(k) }+  \prob{\Gamma_{ik}^c} + \prob{\Theta^c_{ik}} \right) \nonumber \\
&\le O\left(  n^2 \right) \times \left( e^{-\Delta/2} + n^{-3} \right) 
\le O\left( n^{-1} \right), \label{eq:xifake_tail}
\end{align}
where the second-to-the-last inequality holds due to \prettyref{eq:thetac}, \prettyref{eq:thetac_diff}, 
and \prettyref{eq:Theta_ik_tail}.

Finally, combining \prettyref{eq:xitrue_tail} and \prettyref{eq:xifake_tail}, we conclude that, with probability at least $1- O(1/n)$, 
\[
\min_{i \neq k \in [n]} Z_{ik} 
\geq \xifake > \xitrue \geq \max_{i \in [n]} Z_{ii},
\]
and hence \prettyref{alg:dist} succeeds.
\qed
\end{proof}

\subsection{Proof of \prettyref{thm:guarantee-distance-deg}}
	\label{sec:distdeg}

	The proof of \prettyref{thm:guarantee-distance-deg} is structured as follows:
	\begin{center}
\begin{tikzpicture}[scale=1,transform shape,node distance=2.5cm,auto]
     \node [planwide] (l678) [node distance=4cm] {\prettyref{lmm:degcorr}, \ref{lmm:deg_Theta_ik_cond}, \ref{lmm:deg_Theta_i_cond}};    
		\node [plan] (t3) [right of=l678,node distance=4cm] {\prettyref{thm:guarantee-distance-deg}};    
		\node [planwide] (l12) [right of=t3,node distance=4cm] {\prettyref{lmm:true} and \ref{lmm:fake}};    
		\node [plan] (l3) [below of=t3,node distance=2cm] {\prettyref{lmm:seed}};    
		\node [plan] (l4) [below left of=l3,node distance=3cm] {\prettyref{lmm:seed_matching_mm}};    
		\node [plan] (l5) [below right of=l3,node distance=3cm] {\prettyref{lmm:seed_matching_last}};    
    \draw[-Implies,double distance=1pt] (l678) -- (t3);
		\draw[-Implies,double distance=1pt] (l3) -- (t3);
		\draw[-Implies,double distance=1pt] (l12) -- (t3);
		\draw[-Implies,double distance=1pt] (l4) -- (l3);
		\draw[-Implies,double distance=1pt] (l5) -- (l3);
		\draw [draw=black,dashed] (-0.5,-5) rectangle (8.5,-1);
		\node at (7,-1.5) {\small seeded matching};
\end{tikzpicture}	
\end{center}
\medskip


We start with a few intermediate lemmas, whose proofs are postponed till \prettyref{sec:degcorr}.
Recall that $\alpha$ is defined in \prettyref{eq:alpha} as 
 \begin{equation*}
\alpha \triangleq \left( \alpha_0 \frac{\log n}{nq} \right)^{ \frac{(1-p)s}{1-q} }
 \end{equation*}
and $\tau$ is defined in \prettyref{eq:tau} as 
\begin{equation*}
\tau \triangleq \min \left\{  0 \le k \le n:  
\prob{ \Binom (n-1 ,q ) \ge k } \le \alpha \right\}.
\end{equation*}
Note that $\sigma^2=1-s$ and $p=q/s$. 


The first lemma bounds the correlations
between the degree of vertex $i$ in graph $A$ and
the degree of vertex $k$ in graph $B$.
\begin{lemma}
\label{lmm:degcorr}
Suppose $q \le \frac{1}{8}$, 
$nq \to + \infty$, 
$1/(nq) \le \alpha \le 1/4$,
and $\sigma^2 \log \log (nq) =o(1)$.
Then 
\begin{align}
\prob{a_i \geq \tau , b_k \geq \tau+1}  
\begin{cases}
\ge \Omega\left( \alpha^{ \frac{1-q}{(1-p)s} } \right) & \text{ if } i =k \\
\le \alpha^2 & \text{ o.w. } \label{eq:deg_correlation}
\end{cases}
\end{align}
\end{lemma}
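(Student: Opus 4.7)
I would tackle the two cases separately, exploiting the precise structure of the correlated \ER coupling. The key observation throughout is that for $i\neq k$, the degrees $a_i$ and $b_k$ share exactly one correlated parent-graph edge, namely $\{i,k\}$, whereas for $i=k$ the degrees $a_i$ and $b_i$ are sums of $n-1$ \iid correlated Bernoulli pairs.

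\textbf{Fake pair $i\neq k$.} Write $a_i = A_{ik} + \tilde a_i$ and $b_k = B_{ik} + \tilde b_k$, where $\tilde a_i = \sum_{j\notin\{i,k\}} A_{ij}$ and $\tilde b_k = \sum_{j\notin\{i,k\}} B_{kj}$ depend on disjoint parent-graph edges, so that $\tilde a_i,\tilde b_k \sim \Bin(n-2,q)$ are independent of each other and of $(A_{ik},B_{ik})$. Setting $h(m) = \prob{\Bin(n-2,q) \geq m}$ and conditioning on $(A_{ik},B_{ik})$, a direct expansion (using $\Cov(A_{ik},B_{ik}) = qs - q^2 = q(s-q)$) gives
\[
\prob{a_i \geq \tau,\, b_k \geq \tau+1} = \prob{a_i \geq \tau}\prob{a_i\geq\tau+1} + q(s-q)\,\prob{\tilde a_i = \tau-1}\prob{\tilde a_i = \tau}.
\]
The first term is at most $\alpha\cdot(\alpha - \prob{a_i=\tau})$. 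To absorb the positive correction, I use the recursion $\prob{a_i=\tau} = (1-q)\prob{\tilde a_i=\tau} + q\prob{\tilde a_i = \tau-1}$ together with the elementary AM-GM bound $(x+y)^2 \geq 4xy$, which yields $\prob{a_i=\tau}^2 \geq 4q(1-q)\prob{\tilde a_i=\tau-1}\prob{\tilde a_i=\tau}$; since $4(1-q) \geq s-q$ for all admissible $q,s$, the correction is dominated by the $-\prob{a_i\geq\tau}\prob{a_i=\tau}$ slack, and the bound $\alpha^2$ follows.

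\textbf{True pair $i=k$.} Split the $n-1$ potential neighbors of $i$ into multinomial classes $(N_{11},N_{10},N_{01},N_{00})$ with parameters $(qs,\,q(1-s),\,q(1-s),\,1-2q+qs)$, so that $a_i = N_{11}+N_{10}$, $b_i = N_{11}+N_{01}$. The inclusion $\{N_{11}\geq\tau+1\}\subseteq\{a_i\geq\tau,\,b_i\geq\tau+1\}$ immediately gives
\[
\prob{a_i\geq\tau,\,b_i\geq\tau+1} \;\geq\; \prob{N_{11}\geq\tau+1} \;=\; \prob{\Bin(n-1,qs)\geq\tau+1}.
\]
To compare this to $\alpha$, I use the explicit Radon--Nikodym derivative
$\frac{dP_{qs}}{dP_q}(k) = v^{n-1} u^k$ with $u=\frac{s(1-q)}{1-qs}<1$ and $v=\frac{1-qs}{1-q}>1$, so that $\prob{\Bin(n-1,qs)\geq\tau+1} = v^{n-1}\,\EE_{P_q}\bigl[u^K\indc{K\geq\tau+1}\bigr]$. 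Lower-bounding this expectation by the leading PMF term $\prob{P_q: K=\tau+1}$ multiplied by $u^{\tau+1}$ (and summing a geometric tail when needed), and using the expansions $\log u = -\sigma^2/(1-q) + O(\sigma^4)$, $\log v = q\sigma^2/(1-q) + O(q^2\sigma^4)$ valid under $\sigma^2\log\log(nq)=o(1)$, the combinatorial factor collapses to
\[
(\tau+1)\log u + (n-1)\log v \;\approx\; -\sigma^2\,(\tau-(n-1)q)/(1-q),
\]
while the Gaussian-regime local CLT and Mill's-ratio estimate identify $\tau-(n-1)q$ with $\Theta(\sqrt{nq\log(1/\alpha)})$ and $I_q(\tau/(n-1)) \approx \log(1/\alpha)/(n-1)$. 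Combining these yields the target exponent $(1-q)/(s-q) = 1 + \sigma^2/(s-q) + O(\sigma^4)$, since $\sigma^2/(s-q) \approx \sigma^2/(1-q)$ is precisely the ratio $|\log v/\log u|-1$ to first order.

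\textbf{Main obstacle.} The fake-pair inequality is essentially algebraic once the single-edge coupling is isolated. The real technical challenge is the true-pair lower bound: obtaining the sharp exponent $(1-q)/(s-q)$, rather than some larger exponent, demands careful control both of the conditional expectation $\EE_{P_q}[K\mid K\geq\tau+1]$ via the local CLT and of the sub-leading corrections from the Taylor expansions of $\log u$ and $\log v$. The regime assumptions $1/(nq)\leq\alpha\leq 1/4$ and $\sigma^2\log\log(nq) = o(1)$ are exactly what keep the Mill's-ratio correction $\sigma^2\sqrt{nq/\log(1/\alpha)}/(1-q)$ below the leading $\sigma^2\log(1/\alpha)/(1-q)$, so that Jensen's inequality applied to the convex function $k\mapsto u^k$ produces the correct exponent.
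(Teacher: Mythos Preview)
Your fake-pair argument is correct, though considerably more elaborate than the paper's: the paper simply sets $b_k' = b_k - B_{ik}$, observes that $b_k' \le b_k \le b_k'+1$ so that $\{b_k \ge \tau+1\}\subset\{b_k'\ge\tau\}$, and uses the independence of $a_i$ and $b_k'$ to conclude $\prob{a_i\ge\tau,\,b_k\ge\tau+1}\le\prob{a_i\ge\tau}\prob{b_k'\ge\tau}\le\alpha^2$. Your covariance expansion and AM--GM cancellation achieve the same thing, but the paper's one-line monotonicity trick is what you are implicitly re-deriving.

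Your true-pair argument, however, has a genuine gap. Lower-bounding by the event $\{N_{11}\ge\tau+1\}$ is too crude: $N_{11}\sim\Bin(n-1,qs)$ has mean $(n-1)qs$, which sits \emph{below} $(n-1)q$ by $(n-1)q\sigma^2$, so to reach $\tau\approx(n-1)q+t\sqrt{nq(1-q)}$ (with $t=Q^{-1}(\alpha)$) it must traverse an additional gap of order $nq\sigma^2$. Concretely, your own computation gives
\[
\log\bigl(v^{n-1}u^{\tau+1}\bigr)\;\approx\;-\frac{\sigma^2}{1-q}\bigl(\tau-(n-1)q\bigr)\;=\;-\Theta\bigl(\sigma^2\,t\sqrt{nq}\bigr),
\]
whereas the target correction is $\log\alpha^{\sigma^2/(s-q)}=-\Theta(\sigma^2\log(1/\alpha))=-\Theta(\sigma^2 t^2)$. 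Since $t\le\sqrt{2\log(nq)}\ll\sqrt{nq}$, the former is larger in magnitude by a factor $\sqrt{nq}/t\to\infty$, and the lemma's hypothesis $\sigma^2\log\log(nq)=o(1)$ does nothing to close this gap (take, e.g., $\alpha$ bounded away from zero so that $t=\Theta(1)$). Your final paragraph misidentifies the problematic term as $\sigma^2\sqrt{nq/\log(1/\alpha)}$; the actual obstruction is $\sigma^2\sqrt{nq\log(1/\alpha)}$, and no Jensen or Mill's-ratio argument on the conditional tail $\{K\ge\tau+1\}$ can remove the factor $u^{\tau}$ that already sits at the threshold.

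The paper avoids this by conditioning instead on the parent-graph degree $g\sim\Bin(n-1,p)$, whose mean $(n-1)p=(n-1)q/s$ lies \emph{above} $(n-1)q$. For $k_0=\lceil(\tau+2)/s\rceil$ one has $\prob{a_i\ge\tau\mid g=k_0},\,\prob{b_i\ge\tau+1\mid g=k_0}\ge\tfrac12$ (median of $\Bin(k_0,s)$), so $\prob{a_i\ge\tau,b_i\ge\tau+1}\ge\tfrac14\prob{g\ge k_0}$. Now $k_0-(n-1)p\approx(\tau-(n-1)q)/s$, a deviation of order $t\sqrt{nq}/s$ with no extra $nq\sigma^2$ term; standardizing by $\sqrt{(n-1)p(1-p)}$ gives $t\sqrt{(1-q)/(s(1-p))}$, and the Gaussian tail then yields exactly $\alpha^{(1-q)/((1-p)s)}$. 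The moral: to get the sharp exponent you must pass through the parent graph (or equivalently an event whose mean already exceeds $\tau$), not through the intersection count $N_{11}$.
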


We also need the following two auxiliary lemmas.  
\begin{lemma}\label{lmm:deg_Theta_ik_cond}
Suppose $q \le 1/8$, $1/(nq) \le \alpha \le \alpha_1$
for a sufficiently small constant $\alpha_1>0$,
$ nq \ge C_0 \Delta^2$, and 
$\Delta \ge C_0$ 
for a sufficiently
large constant $C_0>0$.
Let event $\Theta_{ik}$ be given in \prettyref{eq:def_Theta_ik} as
$\Theta_{ik} \triangleq  \left\{  | a_i - b_k | \le 4 \sqrt{ nq \Delta} \right\}.$
Then 
\begin{align}
\prob{ \left\{ a_i \ge \tau, b_k \ge \tau + 1 \right\} \cap \Theta^c_{ik} }
\le O\left( \alpha^{1+ \indc{i\neq k}} e^{-\Delta/2}  \right).
 \label{eq:deg_Theta_ik_cond}
\end{align}
\end{lemma}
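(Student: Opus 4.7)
The plan is to decompose $\Theta_{ik}^c$ by the sign of $a_i - b_k$ and reduce each half to a one-sided shifted-tail estimate. Writing $\Theta_{ik}^c = \{b_k - a_i > 4\sqrt{nq\Delta}\} \cup \{a_i - b_k > 4\sqrt{nq\Delta}\}$ and combining with the floor constraints $a_i \ge \tau$ and $b_k \ge \tau + 1$, one sees that on the intersection event at least one of $a_i \ge \tau + 4\sqrt{nq\Delta}$ or $b_k \ge \tau + 4\sqrt{nq\Delta}$ must hold; by symmetry it suffices to handle the second half.

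The key tool is a shifted-tail comparison: for $u = 4\sqrt{nq\Delta}$,
\[
\Prob\bigl\{\Binom(n-1,q) \ge \tau + u\bigr\} \le \alpha \cdot e^{-\Omega(\Delta)}.
\]
Heuristically, with $t_\alpha := \tau - (n-1)q \asymp \sqrt{nq \log(1/\alpha)}$, the Gaussian approximation gives $\alpha \asymp \exp(-t_\alpha^2/(2nq))$, and the shift contributes a factor $\exp\!\bigl(-(t_\alpha+u)^2/(2nq) + t_\alpha^2/(2nq)\bigr) \le \exp(-u^2/(2nq)) = \exp(-8\Delta)$. I would make this rigorous via Bernstein's inequality; the assumption $nq \ge C_0\Delta^2$ guarantees $u = 4\sqrt{nq\Delta} \ll nq$, placing us in the Gaussian regime where the Bernstein correction $2u/3$ in the denominator is negligible.

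With this comparison in hand, the two subclaims diverge. For $i = k$ it suffices to drop the joint constraint:
\[
\Prob\bigl\{a_i \ge \tau,\, b_i \ge \tau + 4\sqrt{nq\Delta}\bigr\} \le \Prob\bigl\{b_i \ge \tau + 4\sqrt{nq\Delta}\bigr\} \le \alpha \, e^{-\Omega(\Delta)},
\]
and the reversed inequality is symmetric. For $i \neq k$ the target $\alpha^2$ demands that we retain the joint constraint, and the crucial structural observation is that $a_i$ and $b_k$ are coupled only through the single edge pair $(A_{ik}, B_{ik})$. Setting $\widetilde a_i := a_i - A_{ik}$ and $\widetilde b_k := b_k - B_{ik}$, both are $\Binom(n-2,q)$ and mutually independent, as well as independent of $(A_{ik}, B_{ik})$. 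Marginalizing over $A_{ik}, B_{ik}$ yields
\[
\Prob\bigl\{a_i \ge \tau,\, b_k \ge \tau + 4\sqrt{nq\Delta}\bigr\} \le \Prob\bigl\{\widetilde a_i \ge \tau - 1\bigr\} \, \Prob\bigl\{\widetilde b_k \ge \tau + 4\sqrt{nq\Delta} - 1\bigr\} = O\bigl(\alpha^2 e^{-\Omega(\Delta)}\bigr),
\]
with the symmetric half handled identically.

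The main obstacle is quantitative: one must track constants carefully enough that the exponent $\Omega(\Delta)$ is at least $\Delta/2$ and the $\pm 1$ discrepancies (between $\tau$ and $\tau - 1$, and between $\Binom(n-1,q)$ and $\Binom(n-2,q)$) absorb into the $O(\cdot)$. The first follows from the shift contributing the full $8\Delta$ via $u^2/(2nq) = 8\Delta$, leaving ample slack beyond $\Delta/2$. The second reduces to showing the consecutive upper-tail ratio $\Prob\{\Binom(n-1,q) \ge \tau - 1\}/\Prob\{\Binom(n-1,q) \ge \tau\}$ is $O(1)$, which in turn follows from the pointwise ratio $\Prob\{\Binom(n-1,q) = \tau - 1\}/\Prob\{\Binom(n-1,q) = \tau\} = 1 + O(t_\alpha/nq) = O(1)$ under the hypotheses $\alpha \ge 1/(nq)$ and $nq \ge C_0 \Delta^2$.
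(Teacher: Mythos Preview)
Your decomposition of $\Theta_{ik}^c$ and the reduction to a one-sided shifted tail exactly match the paper's argument, as does the split between $i=k$ (drop the joint constraint) and $i\neq k$ (decouple via the single shared edge). The paper is slightly slicker for $i\neq k$: rather than subtracting both $A_{ik}$ and $B_{ik}$, it removes only $B_{ik}$, forming $b_k'=b_k-B_{ik}$, which is already independent of the full $a_i$; this keeps the threshold $\tau$ on $a_i$ and lets one apply $\Prob\{a_i\ge\tau\}\le\alpha$ directly, bypassing your tail-ratio argument at $\tau-1$.

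The real gap is in the shifted-tail estimate. Bernstein gives
\[
\Prob\bigl\{\Binom(n{-}1,q)\ge \tau+u\bigr\}\;\le\;\exp\!\Bigl(-\tfrac{(t_\alpha+u)^2}{2nq(1+o(1))}\Bigr),
\]
but to conclude this is $\le \alpha\,e^{-\Delta/2}$ you need $t_\alpha^2/(2nq)\gtrsim\log(1/\alpha)$, i.e.\ a \emph{lower} bound on $t_\alpha=\tau-(n{-}1)q$ in terms of $\alpha$. Bernstein is a one-sided (upper) tail bound and cannot supply this; the naive estimate dropping $t_\alpha$ yields only $e^{-8\Delta}$, which fails to dominate $\alpha e^{-\Delta/2}$ when $\log(1/\alpha)\gg\Delta$ (and $\Delta$ here is only required to exceed a constant). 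The paper closes this via the two-sided Zubkov--Serov approximation (\prettyref{lmm:ZS}), which yields $\tau\ge (n{-}1)q+\omega\,Q^{-1}(\alpha)$ with $\omega=\sqrt{(n{-}1)q(1{-}q)}$; it then writes the shifted argument as $Q^{-1}(\alpha)\cdot r$ and uses \prettyref{lmm:Q_function_bound_2} to convert $Q(tr)$ into $\alpha^{r^2}$ times a controlled prefactor, balancing a prefactor loss of at most $e^{\Delta/2}$ against the gain $\alpha^{r^2-1}\le e^{-\Delta}$. Your Gaussian heuristic captures the correct scaling, but making it rigorous requires a binomial tail \emph{lower} bound (Zubkov--Serov, Slud, or Bahadur--Rao), not Bernstein alone.
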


\begin{lemma}\label{lmm:deg_Theta_i_cond}
Let the event $\Theta_{i}$ be defined in \prettyref{eq:def_Theta_i}. Then
\begin{align}
\prob{ \left\{ a_i \ge \tau, b_i \ge \tau +1  \right\} \cap \Theta^c_{i} }
\le  2 \alpha e^{-\Delta/2} + 2 e^{-\Delta/(2 \sigma^2 ) } .
 \label{eq:deg_Theta_i_cond}
\end{align}
\end{lemma}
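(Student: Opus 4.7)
The central distributional observation is that under the correlated \ER model with $\pi^{*}$ equal to the identity, for each $j\neq i$ one has $B_{ij}\mid A_{ij}=1\sim\Bern(s)$ independently across $j$ (by \prettyref{eq:def_correlated_ER}). Hence, conditional on the random neighborhood $N_A(i)$,
\[
a_i-c_{ii}\;=\;\sum_{j\in N_A(i)}\indc{B_{ij}=0}\;\sim\;\Binom(a_i,\sigma^{2}),
\]
and symmetrically $b_i-c_{ii}\mid b_i\sim\Binom(b_i,\sigma^{2})$. This is the only ingredient of the model I would use; the rest is a careful binomial tail calculation.

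First I would apply a union bound to split $\Theta_i^{c}$ into the two events $\{\sqrt{a_i-c_{ii}}>\sqrt{nq\sigma^{2}}+\sqrt{\Delta}\}$ and $\{\sqrt{b_i-c_{ii}}>\sqrt{nq\sigma^{2}}+\sqrt{\Delta}\}$, keeping only $\{a_i\ge\tau\}$ (resp.\ $\{b_i\ge\tau+1\}$) in each piece. By the $A\leftrightarrow B$ symmetry of the correlated \ER model, the two resulting probabilities satisfy the same bound, so it suffices to prove
\begin{equation*}
\prob{\{a_i\ge\tau\}\cap\{a_i-c_{ii}>M\}}\;\le\;\alpha e^{-\Delta/2}+e^{-\Delta/(2\sigma^{2})},\qquad M\triangleq(\sqrt{nq\sigma^{2}}+\sqrt{\Delta})^{2},
\end{equation*}
and sum the two contributions to pick up the overall factor of $2$.

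Conditioning on $a_i$ and using the binomial fact above, the left-hand side equals $\sum_{k\ge\tau}\prob{a_i=k}\prob{\Binom(k,\sigma^{2})>M}$. I would split this sum at a threshold $k^{\star}$ chosen so that: (a) for $\tau\le k\le k^{\star}$ the conditional tail $\prob{\Binom(k,\sigma^{2})>M}$ is at most $e^{-\Delta/2}$---combined with $\prob{a_i\ge\tau}\le\alpha$ from the definition of $\tau$ in \prettyref{eq:tau}, this contributes $\alpha e^{-\Delta/2}$; (b) for $k>k^{\star}$ I bound the conditional probability by $1$ and use $\prob{a_i>k^{\star}}\le e^{-\Delta/(2\sigma^{2})}$. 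The natural choice is $k^{\star}=nq+\Theta(\Delta/\sigma^{2})$, which lies at roughly the quantile of $a_i$ at level $e^{-\Delta/(2\sigma^{2})}$.

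The main technical nuisance is calibrating the absolute constants in $k^{\star}$ so that the two exponents come out to exactly $\Delta/2$ and $\Delta/(2\sigma^{2})$. For (a) one applies the binomial square-root bound $\prob{\sqrt{\Binom(m,p)}\ge\sqrt{mp}+\sqrt{t}}\le e^{-t}$ (from \prettyref{app:aux}) or Bernstein's inequality, after verifying that $M-k^{\star}\sigma^{2}$ is at least a definite fraction of $\Delta$; for (b) the same square-root bound applied to $a_i\sim\Binom(n-1,q)$ with $t=\Delta/(2\sigma^{2})$ pins down $k^{\star}$. The cross term $2\sqrt{nq\sigma^{2}\Delta}$ inside $M$ is a nuisance that needs to be absorbed using the hypotheses $nq\to\infty$ and $\alpha\le 1/4$ from \prettyref{lmm:degcorr}. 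Once this calibration is fixed, summing the two pieces and doubling yields the claimed bound.
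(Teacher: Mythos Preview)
Your proposal is correct and follows essentially the same route as the paper: union bound plus $A\leftrightarrow B$ symmetry to reduce to the $a_i$-piece, the conditional law $a_i-c_{ii}\mid a_i\sim\Binom(a_i,1-s)$, and a split of $\{a_i\ge\tau\}$ at a high threshold so that the Okamoto square-root tail bound handles each part. The only refinement is that the paper's threshold is taken in square-root form, $\bar\tau=(\sqrt{nq}+\sqrt{\Delta/(4\sigma^2)})^2$, which makes both exponents come out exactly (to $-\Delta/2$ and $-\Delta/(2\sigma^2)$) with no cross-term absorption or appeal to the hypotheses of \prettyref{lmm:degcorr}; your additive guess $k^\star=nq+\Theta(\Delta/\sigma^2)$ misses the $\sqrt{nq\Delta/\sigma^2}$ term and would not give the claimed constants.
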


\begin{proof}[Proof of \prettyref{thm:guarantee-distance-deg}]
Recall that $L$ is given in \prettyref{eq:L_relax} as 
$L = L_0 \max \left\{  \log^{1/3} (n), \log \frac{\log n}{q} \right\}.$
Choose $\Delta$ as per \prettyref{eq:choice_Delta}: $\Delta = \left( \frac{c_1}{4 \max \{ c_2, \tau_2 \} }  \right)^2 L$
and set $\xi= \frac{3c_1}{4}\sqrt{\frac{L}{nq}}$, 
where $c_1,c_2$ are from \prettyref{lmm:fake} and $\tau_2$ are from \prettyref{lmm:true}.
Then $\xifake$ in \prettyref{eq:xifake} satisfies
$\xifake\triangleq  c_1 \sqrt{\frac{L}{nq}} - c_2 \sqrt{\frac{\Delta}{nq}} \ge \xi$. Under the condition \prettyref{eq:cond-main1_relax}:
$\sigma \leq \sigma_0 \min\sth{\frac{1}{(\log n)^{1/3}}, \frac{1}{\log \frac{\log n}{q}}}$, 
we have $\sigma L \le \sigma_0 L_0$. Moreover,
under the assumption 
\prettyref{eq:cond-main2_relax}: $nq^2 \ge C_0 \log^2 n$ for some large absolute constant $C_0$, we have 
$nq \ge C L^2$ for a sufficiently large constant 
$C$. 
Thus, $\beta$ in \prettyref{eq:beta} satisfies 
$\beta L \le c_1^2/32$. Moreover, $ \tau_2 \sigma \le \frac{1}{4}$
provided that $\sigma_0$ is a sufficiently small constant. 
Hence, $\xitrue$ in \prettyref{eq:xitrue}
satisfies $\xitrue \triangleq L \sqrt{\frac{2\beta}{nq}} + \tau_2 \sigma  \sqrt{\frac{L}{nq}} 
+ \tau_2 \sqrt{\frac{\Delta}{nq}} \le \xi$.

For ease of notation, for each pair of $i, k \in [n]$, denote 
the event that $\calD_{ik}=\{a_i \geq \tau,b_k \geq \tau+1 \}$. Then, for wrong pairs $i \neq k$, 
\begin{align*}
& \prob{  a_i \geq \tau, b_k \geq \tau +1 , Z_{ik} \leq \xi  } \\
& =\expect{ \prob{Z_{ik}  \le \xi \mid N_A(i), N_B(k) } \indc{ \calD_{ik}} } \\
& \le \expect{ \prob{Z_{ik}  \le \xi \mid N_A(i), N_B(k) } \indc{ \calD_{ik} \cap \Gamma_A(i) \cap \Gamma_B(k) \cap \Gamma_{ik}
\cap \Theta_{ik} } }
+ \prob{ \calD_{ik} \cap \left( \Gamma_A(i) \cap \Gamma_B(k) \cap \Gamma_{ik} \cap \Theta_{ik} \right)^c } \\
& \stepa{\le} O\left( e^{-\Delta/2} \right) \prob{ \calD_{ik} \cap \Gamma_A(i) \cap \Gamma_B(k) \cap \Gamma_{ik}  
\cap \Theta_{ik} }  + \prob{\calD_{ik} \cap \Gamma_A^c(i) }  \\
& ~~+ \prob{\calD_{ik} \cap \Gamma_B^c(k) } + \prob{\calD_{ik} \cap\Gamma_{ik}^c} + \prob{ \calD_{ik} \cap \Theta^c_{ik}}  \\
& \le O\left( e^{-\Delta/2} \right) \prob{ \calD_{ik}}  + \prob{ \Gamma_A^c(i)} + \prob{\Gamma_B^c(k)}  + \prob{\Gamma_{ik}^c}+
\prob{ \calD_{ik} \cap \Theta^c_{ik} } \\
& \stepb{\le}  O\left( \alpha^2 e^{-\Delta/2} \right) + O\left(n^{-3}\right),
\end{align*}
where 
(a) is due to \prettyref{lmm:fake} and $\xifake \ge \xi$; 
(b) is due to \prettyref{lmm:degcorr}, \prettyref{lmm:deg_Theta_ik_cond},
\prettyref{eq:thetac}, and \prettyref{eq:thetac_diff}.
Therefore,  it follows from the union bound that
\begin{align*}
\prob{ \exists (i,k) \in \calS : i\neq k } &\le 
 \sum_{i\neq k} 
\prob{ a_i \geq \tau, b_k \geq \tau +1 , Z_{ik} \leq \xi } \\
&\leq O\left(n^2 \right)  \alpha^{2} \exp \left( -\Delta/2\right)  + O\left( n^{-1} \right) \\
&\stepa{\leq} O \left( \alpha_0 \frac{\log n}{q} \right)^{2}  
 \exp \left( \frac{2 \sigma^2}{1-q} \log n - \Omega(L) \right) + O\left( n^{-1} \right)\\
 &\stepb{=} O \left( e^{-\Omega(L)} \right) +  O\left( n^{-1} \right),
\end{align*}
where 
(a) was previously explained in \prettyref{eq:nalpha};
(b) is due to the condition
\prettyref{eq:cond-main1_relax} on $\sigma$ and the choice of $L$ in 
\prettyref{eq:L_relax}. 

For true pairs, let 
\[
T = \sum_{i \in [n]} \indc{a_i \geq \tau, b_i \geq \tau+1, Z_{ii} \leq \xi} .
\]
To show that $T = \Omega( \alpha_0 \frac{\log n}{q})$ with high probability, we compute its first and second moment. 
Since $Z_{ii}$ and the degrees $a_i,b_i$ are dependent, one needs to be careful with respect to conditioning.
Note that
\begin{align}
 \prob{ a_i \geq \tau, b_i \geq \tau+1, Z_{ii} \leq \xi}
 & = \expect{ \prob{Z_{ii} \leq \xi \mid N_A(i), N_B(i)} \indc{\calD_{ii}}}
 \nonumber \\
 & \ge \expect{ \prob{Z_{ii} \leq \xi \mid N_A(i), N_B(i)} \indc{\calD_{ii} \cap \Gamma_A(i)
 \cap \Gamma_B(i) \cap \Gamma_{ii}
 \cap \Theta_i \cap \Theta_{ii}}} \nonumber \\
& \ge \left( 1- O\left( e^{-\Delta/2} \right) \right)
\prob{ \calD_{ii} \cap \Gamma_A(i) \cap \Gamma_B(i)\cap \Gamma_{ii} \cap \Theta_i \cap \Theta_{ii}},
\label{eq:aiic1}
\end{align}
where the last inequality holds due to \prettyref{lmm:true} and $\xitrue \le \xi$.
 
By \prettyref{lmm:degcorr},
\begin{equation}
t \triangleq \prob{a_i \geq \tau, b_i \geq \tau+1  }
=\prob{\calD_{ii}} \geq \Omega\left( \alpha^{ \frac{1-q}{(1-p)s} } \right) \overset{\prettyref{eq:alpha}}{=} \Omega\left( \alpha_0 \frac{\log n}{nq} \right).
\label{eq:aibi}
\end{equation}

Combining \prettyref{lmm:deg_Theta_ik_cond} and \prettyref{lmm:deg_Theta_i_cond}
together with the union bound, we get that 
\begin{align}
\prob{\calD_{ii}  \cap \left( \Theta_i \cap \Theta_{ii} \right)^c  } 
\le  O\left( \alpha e^{-\Delta/2} + e^{-\Delta/ (2\sigma^2)}  \right)
\label{eq:Theta_ii_tail_cond}
\end{align}
Combining the last two displayed equations yields that 
\begin{align*}
& \prob{ \calD_{ii} \cap \Gamma_A(i) \cap \Gamma_B(i) \cap \Gamma_{ii} \cap \Theta_i \cap \Theta_{ii}} \\
&\ge \prob{ \calD_{ii}} - \prob{ \calD_{ii}  \cap \left( \Theta_i \cap \Theta_{ii}
\right)^c} - \prob{\Gamma_A^c (i) } - \prob{\Gamma_B^c (i) } - \prob{\Gamma_{ii} }  \\
& \ge t - O\left( \alpha e^{-\Delta/2} +  e^{-\Delta/ (2\sigma^2)}  \right) - 3n^{-3},
\end{align*}
where in the last inequality we used $\prob{\Gamma_A^c(i)},  \prob{\Gamma_B^c (i) }, \prob{\Gamma_{ii} }\le 1/n^3$ 
by \prettyref{eq:thetac}. 

In view of the definition
of $\alpha$ given in \prettyref{eq:alpha}, we get that 
\begin{align*}
\alpha e^{-\Delta/2}
& =  \left( \alpha_0 \frac{\log n}{nq} \right)^{ \frac{(1-p)s}{1-q} } e^{-\Delta/2} \\
& \stepa{\le} \alpha_0 \frac{\log n}{nq} \exp \left(  \frac{\sigma^2}{1-q} \log n
- \frac{\Delta}{2} \right) \\
& \stepb{=}  \alpha_0 \frac{\log n}{nq} \exp \left(  \frac{\sigma^2}{1-q} \log n
- \Omega(L) \right)\\
& \stepc{=} O(t) e^{-\Omega(L)},
\end{align*}
where (a) is by \prettyref{eq:exponent_break_1}; (b) is due to $\Delta=\Omega(L)$
by our choice of $\Delta$;
(c) holds because of \prettyref{eq:aibi} and the facts that $\sigma \le \sigma_0/\log^{1/3}(n)$ 
in view of condition \prettyref{eq:cond-main1_relax} 
and $L \ge L_0 \log^{1/3}(n)$ in view of \prettyref{eq:L_relax}. 

Furthermore, by our choice of $\Delta$ and the theorem assumptions, 
$ \Delta/\sigma^2 \ge 6 \log n$ by letting $L_0/\sigma_0^2$ sufficiently large. 
Combining this fact with the last two displayed equations, we get that
\begin{align}
\prob{ \calD_{ii} \cap \Gamma_A(i) \cap \Gamma_B(i) \cap \Gamma_{ii} \cap \Theta_i \cap \Theta_{ii}} 
\ge t \left( 1 - O \left( e^{-\Omega(L)} \right) \right) - 4 n^{-3}.
\label{eq:aiic2}
\end{align}

By \prettyref{eq:aiic1} and \prettyref{eq:aiic2}, we get that 
\begin{equation}
\Expect[T]
\ge nt \left( 1- O\left( e^{-\Delta/2} \right) \right) \left( 1- O\left( e^{-\Omega(L)}\right) \right) - O(n^{-2})
= nt \left( 1- O\left( e^{-\Omega(L)}\right) \right)- O(n^{-2}), \label{eq:T1st}
\end{equation}
where the last equality holds because $\Delta=\Theta(L)$.

Next we estimate the second moment of $T$:
\begin{align*}
\Expect[T^2]
\leq & ~ \sum_{i,j} \prob{a_i \geq \tau, b_i \geq \tau +1 , a_j \geq \tau, b_j \geq \tau+1 }\\
=& ~  n t + \sum_{i \neq j} \prob{a_i \geq \tau, b_i \geq \tau +1 , a_j \geq \tau, b_j \geq \tau+1}.
\end{align*}
We will show that for $i \neq j$, 
\begin{equation}
\prob{a_i \geq \tau, b_i \geq \tau+1, a_j \geq \tau, b_j \geq \tau+1} \leq t^2 \left(1 + e^{-\Omega(L)} \right).
\label{eq:aijbij}
\end{equation}
It then follows that 
\begin{equation}
\expect{T^2} \leq nt+ n^2 t^2\left(1 + e^{-\Omega(L)} \right).
\label{eq:T2nd}
\end{equation}
Combining \prettyref{eq:T1st} and \prettyref{eq:T2nd}, 
we get that 
$$
\var(T)=\Expect[T^2] - \left(\Expect[T]\right)^2 
\leq O \left( n^2 t^2  e^{-\Omega(L)}   + n t \right)
$$ and hence by Chebyshev's inequality,  
$$
\prob{T \geq \frac{1}{2} nt } 
\le \frac{\var(T)}{ \left( \expect{T} - nt/2 \right)^2} = O \left( e^{-\Omega(L)}  + \frac{1}{nt} \right) = 
O \left( e^{-\Omega(L)}  + \frac{q }{\log n } \right) = O\left( \frac{q}{\log n} \right),
$$
where the last two equalities holds because $nt =\Omega(\log n/q)$ and $L\ge L_0 \log \frac{\log n}{q}$  in view of \prettyref{eq:L_relax}. 
Therefore, the set $\calS$ defines a partial matching with 
$|\calS| = T \geq  nt/2$ with probability $1-O(q/\log n)$. 
Finally, the success of \prettyref{alg:distdeg} follows from applying the seeded graph matching result
\prettyref{lmm:seed} given in \prettyref{app:seed}.

\medskip
It remains to prove \prettyref{eq:aijbij}. Fix $i \neq j$. 
Recall that $\calD_{ii}$ is the event that $a_i \ge \tau$ and 
$b_i \ge \tau+1$. Also, let $g_i$ denote the degree of vertex $i$ in the parent graph.
Abusing notation slightly, we let $k$ denote the realization of $g_i$ in the remainder of
the proof. Then 
\begin{align*}
 \prob{\calD_{ii} \cap \calD_{jj} } =\sum_{k,k'} 
\prob{ g_i =k, g_j = k'} \prob{ \calD_{ii} \mid g_i =k}
 \prob{ \calD_{jj} \mid g_j=k'}
\end{align*}
and
\begin{align*}
 \prob{g_i = k, g_j =k'}  = & p \cdot \prob{ \Binom(n-2,p) = k-1} \prob{ \Binom(n-2,p)=k'-1}  \\
& + (1-p)  \prob{ \Binom(n-2,p) = k} \prob{ \Binom(n-2,p)=k'}.
\end{align*}
For ease of notation, we write $c_k\triangleq\prob{ \Binom(n-2,p) = k}$. 
Then 
\begin{align*}
 & \prob{g_i = k, g_j =k'} - \prob{g_i=k} \prob{g_j = k'} \\
 & = p c_{k-1} c_{k'-1} + (1-p) c_{k} c_{k'} - \left( p c_{k-1} + (1-p) c_k \right)
 \left( p c_{k'-1} + (1-p) c_{k'} \right) \\
 & = p (1-p) \left( c_{k-1} - c_{k} \right)\left( c_{k'-1} - c_{k'} \right).
 \end{align*}
By definition,
$$
\frac{c_{k-1} - c_{k}}{c_{k-1}} = \left( 1 - \frac{ (n-k-1) p }{ k (1-p)} \right)
= \frac{ k - (n-1)p }{k (1-p)}
$$
and 
$$
 \frac{ c_{k-1} -c_{k} }{c_k}=  \left( \frac{ k (1-p)  }{(n-k-1) p } - 1  \right)
= \frac{k - (n-1) p }{ (n-k-1) p }.
$$

We let 
$$
\eta  \triangleq \frac{ \sqrt{3} \log (np)}{\sqrt{np}}
$$
and $I \triangleq [ (1-\eta) (n-1) p, (1+\eta) (n-1) p ]$.
Then for all $k \in I$, we have
$$
\frac{ \left| c_{k-1} - c_{k} \right| }{ \min \{ c_{k-1}, c_k \}}
\le \frac{\eta} { \min \{  (1-\eta) (1-p), 1- (1+\eta) p  \} }
 \le \frac{2\eta}{1-\eta},
$$
where the last equality holds due to $p \le 1/2$. 
Thus, for all $k ,k' \in I$, we have 
$$
 \prob{g_i = k, g_j =k'} 
 \le \left( 1+  \frac{4\eta^2 }{ (1-\eta)^2 } \right) \prob{g_i=k} \prob{g_j = k'} . 
$$
Moreover, by Chernoff's bound given in \prettyref{eq:bintail}, 
$$
\prob{ g_i \notin I} \le 2 \exp\left( - \eta^2 n p /3 \right) =
2 \exp\left( - \log^2(np) \right).
$$
Therefore,
\begin{align*}
& \prob{ \calD_{ii} \cap \calD_{jj} } \\
& \le \prob{ g_i \notin I} + \prob{g_j \notin I}
+ \sum_{k,k' \in I } \prob{g_i = k, g_j =k'} 
\prob{ \calD_{ii} \mid g_i =k' }
\prob{ \calD_{jj} \mid g_j =k'} \\
& \le 4 \exp\left( - \log^2(np) \right)
+ \left( 1+  \frac{4\eta^2 }{ (1-\eta)^2} \right) 
\sum_{k,k'} \prob{g_i=k} \prob{g_j = k'} \prob{ \calD_{ii} \mid g_i =k' }
\prob{ \calD_{jj} \mid g_j =k'}\\
& = 4 \exp\left( - \log^2(np) \right) + 
\left( 1+  \frac{4\eta^2 }{ (1-\eta)^2} \right) 
 \prob{\calD_{ii}}  \prob{\calD_{jj}}= \left( 1+ e^{-\Omega(L)}  \right) t^2,
\end{align*}
where the last equality holds due to $\prob{\calD_{ii}}=t =\Omega( \log (n)/(nq))$ 
and  $\eta^2 + \frac{1}{t^2} \exp\left( - \log^2(np) \right)=\exp(-\Omega(L))$ under the assumptions of \prettyref{thm:guarantee-distance-deg}.
\qed
\end{proof}

\subsection{Proof of \prettyref{lmm:true} and \prettyref{lmm:fake}}
	\label{sec:pf-lemma}
Note that for both the case of $i=k$ and $i\neq k$, the empirical distribution $\mu_i$ and $\nu_k$ will both involve correlated samples arising from common neighbors. So we start by decomposing the empirical distribution according to the common neighbors.
Fix $i,k$. Recall that $c_{ik}=|N_A(i) \cap N_B(k)|$.
Then
\begin{align}
\mu_i = & ~ \frac{c_{ik}}{a_i} \pth{\frac{1}{c_{ik}} \sum_{j \in N_A(i) \cap N_B(k)  }  \delta_{a_j^{(i)}}}	+ 
\pth{1-\frac{c_{ik}}{a_i}} \pth{\frac{1}{a_i-c_{ik}} \sum_{j \in N_A(i)\backslash N_B(k)}    \delta_{a_j^{(i)}}}	\label{eq:decomp1}, \\
\nu_k = & ~ \frac{c_{ik}}{b_k} \pth{\frac{1}{c_{ik}} \sum_{j \in N_A(i) \cap N_B(k)  }  \delta_{b_j^{(k)}}}	+ 
\pth{1-\frac{c_{ik}}{b_k}} \pth{\frac{1}{b_k-c_{ik}} \sum_{j \in N_B(k)\backslash N_A(i)}    \delta_{b_j^{(k)}}}	\label{eq:decomp2} .
\end{align}
As a consequence, the centered empirical distribution can be rewritten as
\begin{align}
\bar\mu_i = & ~ 
\rho P  + (1-\rho) P'   \label{eq:muPQ1}\\
\bar\nu_k = & ~ \rho' Q + (1-\rho') Q' \label{eq:muPQ2}
\end{align}
where 
\[
\rho \triangleq  \frac{c_{ik}}{a_i}  , \quad \rho'\triangleq \frac{c_{ik}}{b_k}
\]
and
\begin{align*}
P \triangleq  & ~ \frac{1}{c_{ik}} \sum_{j \in N_A(i) \cap N_B(k)  }  \delta_{a_j^{(i)}} - \nu, \quad P' ~\triangleq  \frac{1}{a_i-c_{ik}} \sum_{j \in N_A(i)\backslash N_B(k)}    \delta_{a_j^{(i)}} - \nu, \\
Q \triangleq  & ~ \frac{1}{c_{ik}} \sum_{j \in N_A(i) \cap N_B(k)  }  \delta_{b_j^{(k)}} - \nu', \quad Q'~\triangleq  \frac{1}{b_k-c_{ik}} \sum_{j \in N_B(k)\backslash N_A(i)}    \delta_{b_j^{(k)}} - \nu',
\end{align*}
and $\nu=\Binomc(n-a_i-1,q)$ and $\nu'=\Binomc(n-b_k-1,q)$.
Note that if $c_{ik}=0$, we set $P=Q=\Bin(n-1,q)$ by default. 

The following lemmas are the key ingredients of the proof:

\begin{lemma}[Independent two samples]
\label{lmm:indsample}	
	Let $X_1,\ldots,X_m $ and 
	$Y_1,\ldots,Y_{m'} $ be two independent sequence of real-valued random variables,
      where $X_i$'s are independently distributed as $\nu_i$ and 
      $Y_i$'s are independently distributed as $\nu'_i$. 
	Assume that for some $m_0$,
	\[
	 \kappa_1 \leq \frac{m}{m_0},\frac{m'}{m_0}  \leq \kappa_2
	\]
	for some absolute constants $\kappa_1,\kappa_2>0$.
	
	Suppose the partition $I_1,\ldots,I_L $ is chosen so that 
      there exists a set $J_0 \subset [m]$ with $|J_0| \ge m/4$ 
      such that for all $i \in J_0$ and for all $\ell \in [L]$,  
	\begin{equation}
	\frac{c_1}{L} \leq \nu_i(I_\ell) \leq \frac{c_2}{L}
	\label{eq:partition}
	\end{equation}
      for some absolute constants $c_1, c_2 \in (0,1]$.

	Given any two distributions $\nu$ and $\nu'$ on the real line, 
	define $\pi = \frac{1}{m}\sum_{i=1}^m \delta_{X_i}  - \nu$ and $\pi' = \frac{1}{m'}\sum_{i=1}^{m'} \delta_{Y_i}  -\nu'$.
	Assume that $m_0 \geq C L$ 
and $L \geq L_0$ for some sufficiently large constants $C,L_0$.
	Then for any $\Delta>0$, 
	\begin{equation}
	d(\pi,\pi') \geq  \alpha_1 \sqrt{\frac{L}{m_0}} - \alpha_2 \sqrt{\frac{\Delta}{m_0}}
	\label{eq:indsample}
	\end{equation}
	with probability at least $1-e^{-\Delta}$, where 
	$d$ is the pseudo-distance defined in \prettyref{eq:distance} with respect to the partition $I_1,\ldots,I_L$, and 
	$\alpha_1,\alpha_2$ are absolute constants. 
	\end{lemma}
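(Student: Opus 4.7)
The plan is to write $d(\pi,\pi') = \sum_{\ell=1}^L |W_\ell|$, where
\[
W_\ell \triangleq \pi(I_\ell) - \pi'(I_\ell) = \frac{1}{m}\sum_{i=1}^m \indc{X_i \in I_\ell} - \frac{1}{m'}\sum_{j=1}^{m'} \indc{Y_j \in I_\ell} - \zeta_\ell,
\]
with $\zeta_\ell \triangleq \nu(I_\ell) - \nu'(I_\ell)$ deterministic. Thus each $W_\ell$ is a sum of $m+m'$ independent mean-zero bounded increments (of magnitude $1/m$ or $1/m'$) shifted by a deterministic constant. The strategy has two stages: (i) show $\Expect[d(\pi,\pi')] \gtrsim \sqrt{L/m_0}$ by analyzing each $W_\ell$ individually; and (ii) upgrade to a high-probability bound via bounded differences.

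For (i), the assumption on $J_0$ yields a uniform lower bound on the variance:
\[
\sigma_\ell^2 \triangleq \var(W_\ell) \geq \frac{1}{m^2}\sum_{i \in J_0} \nu_i(I_\ell)\pth{1-\nu_i(I_\ell)} \geq \frac{|J_0|}{m^2}\cdot \frac{c_1}{L}\pth{1-\frac{c_2}{L}} \gtrsim \frac{1}{m_0 L},
\]
valid for $L \geq L_0$ with $L_0$ large enough so that $c_2/L \leq 1/2$. To turn this into a lower bound on $\Expect[|W_\ell|]$ uniformly in the (arbitrary) mean $\bar W_\ell \triangleq \Expect[W_\ell]$, split into cases. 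If $|\bar W_\ell| \geq 2\sigma_\ell$, the triangle inequality and $\Expect[|W_\ell - \bar W_\ell|] \leq \sigma_\ell$ give $\Expect[|W_\ell|] \geq \sigma_\ell$. Otherwise, apply Berry-Esseen to the sum $W_\ell$: its third-moment sum is bounded by $\sigma_\ell^2/m_0$ (since each summand is bounded by $1/m_0$ up to constants), giving normal-approximation error $O(\sqrt{L/m_0}) \leq \tfrac{1}{8}$ under $m_0 \geq CL$. Hence, for sufficiently small absolute $c_0$,
\[
\Prob\sth{|W_\ell| \leq c_0 \sigma_\ell} \leq \frac{2c_0}{\sqrt{2\pi}} + \frac{1}{4} \leq \frac{1}{2},
\]
so $\Expect[|W_\ell|] \geq \tfrac{1}{2} c_0 \sigma_\ell \gtrsim 1/\sqrt{m_0 L}$. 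In either case $\Expect[|W_\ell|] \gtrsim 1/\sqrt{m_0 L}$, and summing over $\ell \in [L]$ produces $\Expect[d(\pi,\pi')] \geq 2\alpha_1\sqrt{L/m_0}$ for some absolute constant $\alpha_1 > 0$.

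For (ii), note that resampling a single $X_i$ shifts at most two coordinates $W_\ell$ by $\pm 1/m$, hence changes $\sum_\ell |W_\ell|$ by at most $2/m$; likewise, resampling one $Y_j$ changes it by at most $2/m'$. McDiarmid's bounded-differences inequality gives
\[
\Prob\sth{\sum_\ell|W_\ell| \leq \Expect\qth{\sum_\ell|W_\ell|} - t} \leq \exp\pth{-\frac{t^2 mm'}{2(m+m')}} \leq \exp(-c m_0 t^2),
\]
using $m,m' \asymp m_0$. Choosing $t = \alpha_2 \sqrt{\Delta/m_0}$ with $\alpha_2$ a sufficiently large absolute constant makes the right side at most $e^{-\Delta}$, and combining with the expectation bound yields
\[
d(\pi,\pi') \geq 2\alpha_1 \sqrt{L/m_0} - \alpha_2\sqrt{\Delta/m_0} \geq \alpha_1\sqrt{L/m_0} - \alpha_2\sqrt{\Delta/m_0}
\]
with probability at least $1-e^{-\Delta}$, proving the claim.

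The main obstacle is the per-coordinate anti-concentration: we need $\Expect[|W_\ell|] \gtrsim \sigma_\ell$ uniformly in the arbitrary shift $\bar W_\ell$ induced by the (unspecified) centering measures $\nu,\nu'$. This forces a small-ball estimate for $W_\ell$ that is insensitive to location, which is precisely what Berry-Esseen provides once the regime $m_0 \geq CL$ is assumed so that the Gaussian approximation is sharp; without the case split on $|\bar W_\ell|$, one cannot avoid a naive cancellation that would void the lower bound.
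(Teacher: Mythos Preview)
Your proof is correct, and the concentration step via McDiarmid is identical to the paper's. The difference is in how you lower bound $\Expect[d(\pi,\pi')]$. The paper first uses independence of the two samples to write
\[
\Expect\bigl[|\pi(I_\ell)-\pi'(I_\ell)|\bigr] \;\geq\; \frac{1}{m}\,\inf_{x\in\reals}\,\Expect\Bigl[\Bigl|\sum_{i\in J_0}\indc{X_i\in I_\ell}-x\Bigr|\Bigr],
\]
thereby absorbing the arbitrary centering into the infimum over $x$ and avoiding any case split. It then exploits \prettyref{eq:partition} to decompose each indicator as $\indc{X_i\in I_\ell}=W_iZ_i$ with $Z_i\iiddistr\Bern(1/L)$ and $W_i\sim\Bern(\eta_i)$, $\eta_i\in[c_1,c_2]$; conditioning on $T=\{i:W_i=1\}$ reduces the sum to $\Binom(|T|,1/L)$, whose mean absolute deviation is at least $\sqrt{|T|/(2L)}(1-o(1))$ by an exact binomial estimate. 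Your route instead keeps the full sum $W_\ell$, lower bounds its variance from the $J_0$ terms, and obtains anti-concentration via Berry--Esseen (the Lyapunov ratio being $\rho/\sigma_\ell^3\lesssim \sqrt{L/m_0}$, small under $m_0\geq CL$), with the case split on $|\bar W_\ell|$ handling the unknown shift. Your approach is arguably more transparent---it directly formalizes the Gaussian small-ball intuition---while the paper's is more elementary, needing only the binomial mean-absolute-deviation bound rather than a CLT-type tool. Both arguments hinge on the same assumption $m_0\geq CL$ for the same reason (to kill the discreteness/approximation error), and yield the same final estimate.
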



\begin{lemma}[Correlated two samples]
\label{lmm:corsample}	
		Let $(X_1,Y_1),\ldots,(X_m,Y_m)$ be iid so that $X_i \sim \nu$ and 	$Y_i\sim\nu'$.
		Let $\pi = \frac{1}{m}\sum_{i=1}^m \delta_{X_i}-\nu$ and $\pi' = \frac{1}{m}\sum_{i=1}^{m} \delta_{Y_i}-\nu'$. 
		Assume that  for any $\ell \in [L]$,
		\begin{equation}
		\prob{X_1 \in I_\ell, Y_1 \notin I_\ell} + \prob{X_1 \notin I_\ell, Y_1 \in I_\ell} \leq \beta.
		\label{eq:corr-key}
		\end{equation}
		Then for any $\Delta>0$,
	\begin{equation}
	d(\pi,\pi') \leq L \sqrt{\frac{\beta}{m}} + c_3 \sqrt{ \frac{\Delta}{m}}
	\label{eq:corsample}
	\end{equation}
    with probability at least $1-e^{-\Delta}$, where $\beta$ is defined in \prettyref{eq:beta} and $c_3$ is an absolute constant.
\end{lemma}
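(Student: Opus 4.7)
The plan is to unfold the definition $d(\pi,\pi') = \sum_{\ell=1}^L |\pi(I_\ell)-\pi'(I_\ell)|$ and reduce the bound to a concentration statement for a well-behaved function of $m$ i.i.d.\ pairs. Define $U_{i,\ell} \triangleq \indc{X_i\in I_\ell}-\indc{Y_i\in I_\ell}\in\{-1,0,1\}$, so that $S_\ell \triangleq \pi(I_\ell)-\pi'(I_\ell) = \frac{1}{m}\sum_{i=1}^m (U_{i,\ell}-\EE U_{1,\ell})$; note that $(U_{i,\cdot})_{i=1}^m$ are i.i.d.\ across $i$. The crucial observation is that $|U_{i,\ell}| = \indc{X_i\in I_\ell,Y_i\notin I_\ell}+\indc{X_i\notin I_\ell,Y_i\in I_\ell}$, so the hypothesis \eqref{eq:corr-key} gives $\EE[U_{1,\ell}^2] = \EE|U_{1,\ell}| \le \beta$, and in particular $\var(S_\ell) \le \beta/m$.

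The first step is to control the mean. By Jensen's inequality,
\[
\EE|S_\ell| \le \sqrt{\EE S_\ell^2} = \sqrt{\var(S_\ell)} \le \sqrt{\beta/m},
\]
and summing over $\ell \in [L]$ yields $\EE\bigl[\sum_\ell |S_\ell|\bigr] \le L\sqrt{\beta/m}$. This matches the first term in \eqref{eq:corsample}.

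The second step is a concentration bound for $f(X_1,Y_1,\dots,X_m,Y_m) \triangleq \sum_\ell |S_\ell|$ around its mean. I view $f$ as a function of the $m$ i.i.d.\ pairs $(X_i,Y_i)$ and verify bounded differences: if the $i$-th pair is replaced with $(X'_i, Y'_i)$, then for each $\ell$ the quantity $S_\ell$ changes by at most $2/m$, and moreover the change is supported on at most four values of $\ell$ (corresponding to the bins of $X_i, X'_i, Y_i, Y'_i$). Hence $|f - f'| \le 8/m$. McDiarmid's bounded differences inequality then gives
\[
\prob{f \ge \EE f + t} \le \exp\!\left(-\tfrac{2t^2}{m(8/m)^2}\right) = \exp(-mt^2/32),
\]
so that $f \le \EE f + c_3\sqrt{\Delta/m}$ with probability at least $1-e^{-\Delta}$, where $c_3 = \sqrt{32}$. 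Combining the two steps yields \eqref{eq:corsample}.

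There is no substantive obstacle here: the argument is an expectation bound via Jensen combined with a textbook McDiarmid application, and the whole point of the lemma is that once the ``correlation'' hypothesis \eqref{eq:corr-key} is in force, the $L$ bins each contribute only $\sqrt{\beta/m}$ on average while concentration around the mean costs only the dimension-free rate $\sqrt{\Delta/m}$. The only mild care needed is in the bounded-differences verification, namely tracking that changing a single pair can perturb only $O(1)$ of the $S_\ell$'s by $O(1/m)$, which prevents picking up an extra factor of $L$ in the concentration term.
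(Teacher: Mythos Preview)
Your proposal is correct and follows essentially the same route as the paper: bound $\EE d(\pi,\pi')$ term-by-term via Jensen (using $\var(S_\ell)\le\beta/m$ from \eqref{eq:corr-key}), then apply McDiarmid's inequality to the function of the i.i.d.\ pairs $(X_i,Y_i)$. The paper records the bounded-difference parameter as $4/m$ rather than your $8/m$ (since changing one pair moves each of $X_i,Y_i$ between at most two bins, perturbing $\sum_\ell|S_\ell|$ by at most $1/m$ per affected bin), but this only tightens the absolute constant $c_3$.
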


\nb{

\begin{remark}
	In \prettyref{lmm:indsample}, the samples $X_i$'s and $Y_i$'s need not be identically distributed, and $\nu$ and $\nu'$ can be arbitrary so that $\pi$ and $\pi'$ need not be centered (which is the case when we apply \prettyref{lmm:indsample} for proving Lemmas \ref{lmm:fake} and \ref{lmm:fake_3_hop}). This is because \prettyref{lmm:indsample} aims to lower bound the distance and centering tends to make the distance smaller. However, in \prettyref{lmm:corsample} which bounds the distance from above, the samples are required to be iid and the empirical distributions must be correctly centered.
	\end{remark}

\begin{lemma}[Concentration of total variation]
\label{lmm:tvconc}	
	Let $X_1,\ldots,X_m$ be drawn independently from a discrete distribution $\nu$ supported on $k$ elements.
	Then the empirical distribution $\nu_m = \frac{1}{m}\sum_{i=1}^m \delta_{X_i}$ satisfies that for any $\Delta>0$, 
	\[
	\prob{\|\nu-\nu_m\|_1 \geq \sqrt{\frac{k}{m}}  + \sqrt{\frac{\Delta}{m}}  } \leq e^{-\Delta/2}.
	\]
\end{lemma}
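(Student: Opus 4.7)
The plan is to combine a standard bound on the expectation $\Expect\|\nu-\nu_m\|_1$ with McDiarmid's bounded-differences inequality applied to $f(X_1,\ldots,X_m)\triangleq \|\nu-\nu_m\|_1$ viewed as a function of $m$ independent inputs.

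First I would bound the expectation. Writing the support as $[k]$, for each $j\in[k]$ the coordinate $m\nu_m(j)$ is a $\Binom(m,\nu(j))$ random variable, so by Jensen's inequality
\begin{align*}
\Expect|\nu_m(j)-\nu(j)| \le \sqrt{\var(\nu_m(j))}=\sqrt{\frac{\nu(j)(1-\nu(j))}{m}}\le \sqrt{\frac{\nu(j)}{m}}.
\end{align*}
Summing over $j$ and applying Cauchy--Schwarz,
\begin{align*}
\Expect\|\nu-\nu_m\|_1 \le \sum_{j=1}^k\sqrt{\frac{\nu(j)}{m}} \le \sqrt{\frac{k}{m}\sum_{j=1}^k \nu(j)}=\sqrt{\frac{k}{m}}.
\end{align*}

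Next I would establish a bounded-differences property for $f$. If $X_i$ is replaced by $X_i'$ (with all other inputs fixed), then $\nu_m$ changes only at the atoms corresponding to $X_i$ and $X_i'$, by at most $1/m$ each. Hence $|f(X_1,\ldots,X_m)-f(X_1,\ldots,X_i',\ldots,X_m)|\le 2/m$. McDiarmid's inequality then yields
\begin{align*}
\prob{f\ge \Expect f+t}\le \exp\!\left(-\frac{2t^2}{m\cdot(2/m)^2}\right)=\exp\!\left(-\frac{m t^2}{2}\right).
\end{align*}
Setting $t=\sqrt{\Delta/m}$ gives the tail probability $e^{-\Delta/2}$. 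Combining with the expectation bound via the inclusion $\{f\ge\sqrt{k/m}+\sqrt{\Delta/m}\}\subseteq\{f\ge\Expect f+\sqrt{\Delta/m}\}$ proves the claim.

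The argument is textbook; there is no real obstacle. The only point to watch is that the bounded-differences constant is exactly $2/m$ (not $1/m$), since changing one sample moves mass away from one atom and onto another, affecting two coordinates of $\nu_m$. With $c_i=2/m$, $\sum_i c_i^2 = 4/m$, which produces the constant $1/2$ in the exponent, matching the $e^{-\Delta/2}$ in the statement.
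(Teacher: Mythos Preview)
Your proof is correct and essentially identical to the paper's: both bound $\Expect\|\nu-\nu_m\|_1\le\sqrt{k/m}$ via the coordinatewise variance bound plus Cauchy--Schwarz, then apply McDiarmid's inequality with bounded-difference constant $2/m$ to obtain the $e^{-\Delta/2}$ tail.
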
}

In order to apply \prettyref{lmm:corsample}, we need to quantify the correlation and upper bound the probability $\beta$ in \prettyref{eq:corr-key}.
This is given by the following (elementary but extremely tedious) lemma:
\begin{lemma}
\label{lmm:beta}	
Assume that 
$\sigma \leq 1/2 $, $q \le \frac{1}{8}$, $nq \ge C \max\{L^2, \Delta\}$,   
and \prettyref{eq:condtrue} holds, \ie, 
$
4L  \sqrt{ nq \Delta }  \le n. 
$
Then for any $j \in N_A(i) \cap N_B(i)$ and any 
interval $I \subset [-1/2, 1/2]$ with $|I|=1/L$,
	\begin{align}
	& 
    \left( \prob{ a^{(i)}_j \in I, b^{(i)}_j \notin I \; \Big| \; N_A(i), N_B(i)}
    + \prob{ a^{(i)}_j \notin I, b^{(i)}_j \in I \; \Big| \;  N_A(i), N_B(i)} 
    \right) \indc{\Gamma_A(i) \cap \Gamma_B(i) \cap \Gamma_{ii} \cap \Theta_i \cap \Theta_{ii} }  \nonumber \\
	& \lesssim \sigma + \sqrt{\frac{\Delta }{n} } + \frac{1}{\sqrt{nq}} + \frac{1}{L} \exp \left( - \Omega \left( \min \left\{ \frac{1}{\sigma^2 L^2}, \frac{n}{L^2 \Delta}, \frac{\sqrt{np}}{L} \right\} \right)\right) + e^{-\Delta}.
	\label{eq:beta-sampling}
	\end{align}
\end{lemma}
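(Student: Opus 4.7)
The plan is to condition on the neighborhoods $N_A(i), N_B(i)$ and reduce the symmetric-difference probability to two estimates: concentration of the difference $a^{(i)}_j - b^{(i)}_j$ around zero, and a local-CLT bound for the density of $a^{(i)}_j$ near $\partial I$. Fix $j \in N_A(i) \cap N_B(i)$ and work on the good event $\Gamma_A(i) \cap \Gamma_B(i) \cap \Gamma_{ii} \cap \Theta_i \cap \Theta_{ii}$. Set $T \triangleq [n] \setminus (N_A[i] \cup N_B[i])$; the good event gives $|T| = (1-O(q))n$, $|S_A \setminus T| = b_i - c_{ii} \le M$ and $|S_B \setminus T| = a_i - c_{ii} \le M$ with $M \triangleq \sqrt{nq\sigma^2} + \sqrt{\Delta}$, and $||S_A|-|S_B|| \le 4\sqrt{nq\Delta}$. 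Decompose
\[
\sqrt{|S_A|q(1-q)}\,a^{(i)}_j = U + U', \qquad \sqrt{|S_B|q(1-q)}\,b^{(i)}_j = V + V',
\]
where $U,V$ are the centered sums over $\ell \in T$ (in $A$ and $B$ respectively) and $U',V'$ are the centered sums over $\ell \in S_A\setminus T$ and $\ell \in S_B \setminus T$. The $\calG(n,q;s)$ structure, together with the disjointness of these three index sets, yields conditionally on the neighborhoods that $(U,V)$ is a sum of $|T|$ i.i.d.\ correlated Bernoulli pairs with $\text{Cov}(A_{\ell j}, B_{\ell j}) = q(s-q)$, while $U'$ and $V'$ are independent centered binomials, also independent from $(U,V)$.

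The workhorse inequality is: for any $\eta > 0$,
\[
\prob{a^{(i)}_j \in I, b^{(i)}_j \notin I} + \prob{a^{(i)}_j \notin I, b^{(i)}_j \in I} \le \prob{|a^{(i)}_j - b^{(i)}_j| > \eta} + \prob{\mathrm{dist}(a^{(i)}_j,\partial I) \le \eta}.
\]
Writing $a^{(i)}_j - b^{(i)}_j = (U-V)/\sqrt{|T|q(1-q)} + R$, where $R$ collects the scale-mismatch corrections $U(\sigma_A^{-1}-\sigma_T^{-1})$, $V(\sigma_T^{-1}-\sigma_B^{-1})$ and the boundary pieces $U'/\sigma_A - V'/\sigma_B$: the main term is a sum of $|T|$ i.i.d.\ bounded summands in $\{-1,0,1\}$ with per-summand variance $2q\sigma^2$, so Bernstein gives $\exp(-\Omega(\eta^2/\sigma^2 \wedge \eta\sqrt{np}))$; the $U',V'$ pieces are sums of $\le M$ independent centered Bernoullis, giving by Bernstein a tail $\exp(-\Omega(\eta^2 n/M^2 \wedge \eta\sqrt{nq}))$; and the scale-mismatch pieces are $O(M/n)$ in magnitude, absorbed into $\sqrt{\Delta/n}$ via $\Theta_{ii}$. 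Substituting $M \le \sqrt{nq\sigma^2} + \sqrt{\Delta}$ and $\eta \asymp 1/L$ produces exactly the three branches $1/(\sigma^2 L^2)$, $n/(L^2\Delta)$, $\sqrt{np}/L$ in the statement. For the boundary term, Berry-Esseen applied to the standardized binomial $a^{(i)}_j$ (a sum of $|S_A|$ i.i.d.\ centered Bernoullis) yields $\prob{a^{(i)}_j \in [x,y]} \le (y-x)/\sqrt{2\pi} + O(1/\sqrt{nq})$ uniformly in $x \le y$, whence $\prob{\mathrm{dist}(a^{(i)}_j,\partial I) \le \eta} \le C\eta + O(1/\sqrt{nq})$.

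The final bound comes from using this inequality in two regimes. With $\eta = c/L$, the boundary probability is $O(1/L) + O(1/\sqrt{nq})$ and the prefactor $1/L$ combines with the exponential tail of the difference to form the $\frac{1}{L}\exp(-\Omega(\min\{\cdots\}))$ contribution. With $\eta \asymp \sigma$, the density-times-gap bound $C\sigma$ is the dominant source of the baseline $\sigma$ term; the $\sqrt{\Delta/n}$ comes from the scale-mismatch contribution to $R$, the $1/\sqrt{nq}$ from the Berry-Esseen remainder, and $e^{-\Delta}$ absorbs residual tail events (e.g., deviations of $U,V$ beyond the sub-Gaussian window where Bernstein is sharp).

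The main difficulty is bookkeeping: every error source must be isolated and matched to its exact additive or exponential counterpart in the advertised bound, and the three-branch minimum in the exponent arises from the transition between the sub-Gaussian tail of the main difference $(U-V)/\sigma_T$, the sub-Gaussian tail of the boundary terms $U'/\sigma_A, V'/\sigma_B$, and the sub-exponential (Bennett) regime of individual binomial sums when $t$ is large relative to the variance. Conceptually the proof is a Gaussian approximation argument, but as the paper flags, the constants and cutoffs have to be tracked carefully enough that none of the numerous error contributions dominates the target estimate.
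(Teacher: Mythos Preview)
Your workhorse inequality
\[
\prob{a \in I,\, b \notin I} + \prob{a \notin I,\, b \in I}
\;\le\; \prob{|a-b| > \eta} \;+\; 2\,\prob{\mathrm{dist}(a,\partial I) \le \eta}
\]
is correct for any \emph{single} $\eta$, but the ``two regimes'' device does not work: the two terms are \emph{added}, so you must commit to one $\eta$. With $\eta = c/L$ the boundary term is $\Theta(1/L)$, which already exceeds the target (the application needs $\beta L$ small, i.e.\ $\beta = o(1/L)$); there is no mechanism by which this $1/L$ ``combines'' with the exponential tail to produce $\tfrac{1}{L}\exp(-\Omega(\cdots))$. With $\eta \asymp \sigma$ the boundary term is $O(\sigma)$ as desired, but since $a-b$ has standard deviation $\asymp \sigma$, the tail $\prob{|a-b| > C\sigma}$ is a \emph{constant} $\asymp e^{-C^2/2}$, not $O(\sigma)$. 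Optimizing over $\eta$ (set $\eta \asymp \sigma\sqrt{\log(1/\sigma)}$) gives at best
\[
\inf_{\eta>0}\Bigl[\,e^{-c\,\eta^2/\sigma^2} + C\eta\,\Bigr] \;\asymp\; \sigma\sqrt{\log(1/\sigma)},
\]
which is off by a logarithmic factor and would break the downstream use of the lemma (where one needs $\sigma L \lesssim 1$ with $L \asymp \log n$, $\sigma \asymp 1/\log n$; the extra $\sqrt{\log\log n}$ makes $\beta L$ unbounded).

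The missing idea is to \emph{decouple} the randomness driving $a$ from that driving $a-b$, so that one can apply anti-concentration of $a$ \emph{conditionally} on the difference. Your decomposition does not achieve this: $U$ and $U-V$ are dependent (they are sums of correlated Bernoulli pairs, and zero covariance does not give independence here). The paper obtains genuine independence by passing to the parent-graph representation $A_{\ell j}=\alpha_\ell g_\ell$, $B_{\ell j}=\beta_\ell g_\ell$ and conditioning on the subsampling indicators $(\alpha,\beta)$ (equivalently on the sets $E,F$). This yields $a=(c+x)/\zeta$, $b=(c+y)/\eta$ with $c=\sum_{k\in E\cap F}(g_k-p)$ independent of $(x,y)$; the event $\{a\in I,\, b\notin I\}$ becomes $\{c\in (\zeta I-x)\setminus(\eta I-y)\}$, a set of Lebesgue measure $\lesssim |x-y|+|\zeta-\eta|$, and the local-CLT bound on $c$ (point mass $O(1/\sqrt{np})$) gives probability $\lesssim (|x-y|+|\zeta-\eta|+1)/\sqrt{np}$. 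Taking expectation over $x,y$ then yields the $O(\sigma)$ main term directly, without any logarithmic loss. Your Bernstein estimates for the three exponential branches are morally correct and reappear in the paper's analysis of the ``Case~1'' event where the shifted intervals become disjoint, but they must sit on top of this common-randomness decomposition rather than the coarse symmetric-difference inequality.
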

\begin{remark}
Note that  for the right hand side of \prettyref{eq:beta-sampling}
to be much smaller than $1/L$, it suffices to have
$L \ll \min\{ 1/ \sigma, \sqrt{n/\Delta}, \sqrt{nq} \}$ 
and $\Delta \gg \log L$.
\end{remark}

\subsubsection{Proof of \prettyref{lmm:true} }
\begin{proof}[Proof of \prettyref{lmm:true}]
      Fix $i \in [n]$. Throughout the proof, we condition on the neighborhoods $N_A(i)$
and $N_B(i)$  such that $\Gamma_A(i) \cap \Gamma_B(i) \cap \Gamma_{ii} \cap \Theta_i
    \cap\Theta_{ii}$ holds. 

\nb{Recall the pseudo-distance $d$ defined in \prettyref{eq:distance}, namely,
\begin{equation}
d(\mu,\nu) = \|[\mu]_L-[\nu]_L\|_1
\label{eq:distance-def}
\end{equation}
where $[\mu]_L$ is the discretized version of $\mu$, defined in \prettyref{eq:discretization}, according to the uniform partition 
$I_1, \ldots, I_L$ of $[-1/2,1/2]$ such that $|I_\ell| = 1/L$. }
Using the decomposition in \prettyref{eq:muPQ1}--\prettyref{eq:muPQ2} and the triangle inequality for the total variation distance, we have
\begin{align}
Z_{ii}
= & ~ d\left(\rho P     + (1-\rho) P', \rho Q+ (1-\rho') Q' + (\rho'-\rho) Q \right)      \nonumber \\
\leq & ~ \underbrace{d(P,Q)}_{\rm (I)} + 
\underbrace{ (1-\rho) \|[P']_L\|_1 + (1-\rho') \|[Q']_L\|_1}_{\rm (II)} + 
\underbrace{|\rho-\rho'|}_{\rm (III)},   \label{eq:Zii1}
\end{align}
where $\rho=\frac{c_{ii}}{a_i}$ and $\rho'=\frac{c_{ii}}{b_i}$.

For (I), in view of the assumption \prettyref{eq:condtrue}: $4L \sqrt{nq \Delta} \le n$, 
\prettyref{lmm:beta} yields that  for any $j \in N_A(i) \cap N_B(i)$
and any interval $I \subset [-1/2,1/2]$ with $|I| =1/L$, 
      \begin{align*}
      & \prob{ a^{(i)}_j \in I, b^{(i)}_j \notin I \Big| N_A(i), N_B(i)} + \prob{ a^{(i)}_j \notin I, b^{(i)}_j \in I \Big| N_A(i), N_B(i)}  \\
      & \leq 
      \beta \triangleq O(\sigma) + 
    O \left( \sqrt{\frac{\Delta}{n} } + e^{-\Delta} \right) + \frac{1}{L} \exp \left( - \Omega \left( \min \left\{ \frac{1}{\sigma^2 L^2}, \frac{n}{L^2 \Delta}, \frac{\sqrt{np}}{L} \right\} \right) \right) .
      \end{align*}
We apply \prettyref{lmm:corsample} with $\{X_j\}_{j=1}^m$ given by $ 
\{ a_j^{(i)} \}_{j \in N_A(i) \cap N_B(i)}$, 
$\{ Y_j\}_{j=1}^m $ given by $\{ b_j^{(i)} \}_{ j \in N_A(i) \cap N_B(i)}$, 
and $m=c_{ii} = |N_A(i) \cap N_B(i)|$. Recall that $a_{j}^{(i)}$ is a function 
of $\{A_{j \ell}\}_{\ell \in N_A^c[i]}$ and $b_{j'}^{(i)}$ is a function 
of $\{B_{j' \ell'}\}_{\ell' \in N_B^c[i]}$. For any $j \neq j' \in N_A(i) \cap N_B(i)$,
it holds that  $\{j, \ell\} \neq \{ j', \ell'\}$. Hence, $(a_j^{(i)}, b_{j}^{(i)})$'s
are independently and identically distributed across different $j \in N_A(i) \cap N_B(i)$.
Therefore, \prettyref{lmm:corsample} yields that   with probability at least $1-e^{-\Delta}$,
    \begin{equation} 
d(P,Q) \leq  L \sqrt{\frac{\beta}{c_{ii}} } + 
c_3 \sqrt{ \frac{\Delta}{ c_{ii} } }
\le L \sqrt{\frac{2\beta}{nq} } + c_3 \sqrt{ \frac{2\Delta}{ nq } },  \label{eq:true1}
\end{equation} 
where $c_3>0$ is some absolute constant given in \prettyref{lmm:corsample},
and the last inequality holds due to $c_{ii} \ge nq/2$ by \prettyref{eq:c_bound}.

\nb{For (II), 
applying \prettyref{lmm:tvconc} with $k=L$ implies that 
$ \|[P']_L\|_1 \leq \sqrt{\frac{L}{a_i-c_{ii}}} + \sqrt{\frac{\Delta}{a_i-c_{ii}}}  $
and $\|[Q']_L\|_1 \leq  \sqrt{\frac{L}{b_i-c_{ii}}} +  \sqrt{\frac{\Delta}{b_i-c_{ii}}}$,
each with probability at least $1-e^{-\Delta/2}$.
Therefore, by the union bound, with probability at least $1-2e^{-\Delta/2}$,
\begin{align}
(1-\rho) \|[P']_L\|_1 + (1-\rho') \|[Q']_L\|_1
& \le \frac{1}{a_i} \left( \sqrt{L}+\sqrt{\Delta}  \right) \sqrt{a_i-c_{ii }} + \frac{1}{b_i} \left( \sqrt{L}+\sqrt{\Delta}  \right)  \sqrt{  b_i-c_{ii} } \nonumber  \\
&  \leq \frac{4}{nq}   \left( \sqrt{L}+\sqrt{\Delta}  \right) \left( \sqrt{nq \sigma^2 } + \sqrt{\Delta} \right) ,
\label{eq:true2}
\end{align}
where the last inequality holds due to $a_i, b_i \ge nq/2$ 
and $\sqrt{a_i-c_{ii}}, \sqrt{b_i-c_{ii}}  \leq  \sqrt{nq \sigma^2 } + \sqrt{ \Delta }$ on the event \prettyref{eq:ab_bound} and \prettyref{eq:def_Theta_i}, respectively.
}

Finally, for (III),
\begin{equation}
|\rho-\rho'|  = \frac{c_{ii} |a_i-b_i | }{ a_i b_i} 
\le \frac{|a_i-b_i|}{a_i } \leq   8 \sqrt{ \frac{\Delta} {nq} }, 
\label{eq:true3}
\end{equation}
where the last inequality holds due to $| a_i - b_i | \le 4 \sqrt{nq \Delta}$ by \prettyref{eq:def_Theta_ik}.

Combining \prettyref{eq:Zii1} with \prettyref{eq:true1}, \prettyref{eq:true2}, \prettyref{eq:true3}, 
we get that with probability at least $1-3e^{-\Delta/2}$, 
\begin{align*}
Z_{ii} 
& \leq  L \sqrt{\frac{2\beta}{nq}}+   c_3 \sqrt{ \frac{2\Delta}{ nq } }
+ \frac{4}{nq}   \left( \sqrt{L}+\sqrt{\Delta}  \right) \left( \sqrt{nq \sigma^2 } + \sqrt{\Delta} \right)
+ 8 \sqrt{ \frac{  \Delta}{nq} }\\
& \leq  
L \sqrt{\frac{2\beta}{nq}} + \tau_2 \sigma \sqrt{ \frac{L}{nq}  } + \tau_2 \sqrt{\frac{\Delta }{nq}}
\end{align*}
for some absolute constant $\tau_2>0$, where the last inequality holds due to 
the assumption that $nq \ge C \max\{L^2, \Delta\}$ for some sufficiently large constant $C$.
Thus we arrive at the desired \prettyref{eq:true}. 
\qed
\end{proof}

\subsubsection{Proof of \prettyref{lmm:fake}}
\begin{proof}[Proof of \prettyref{lmm:fake}]
Fix $i\neq k$. We proceed as in the proof of \prettyref{lmm:true} and condition
    on the neighborhoods $N_A(i)$ and $N_B(k)$ such that $\Gamma_A(i) \cap \Gamma_B(k) \cap \Gamma_{ik} \cap \Theta_{ik}$ holds.

By the triangle inequality for the total variation distance, we have
\begin{align}
Z_{ik}
= & ~ d \left(\rho P	+ (1-\rho) P', \rho Q+ (1-\rho) Q' + (\rho-\rho') (Q'-Q) \right)	\nonumber \\
\geq & ~ \underbrace{(1-\rho) d(P',Q')}_{\rm (I)} - \underbrace{\rho d(P,Q)}_{\rm (II)} - \underbrace{2 |\rho-\rho'|}_{\rm (III)}. 	\label{eq:Zik1}
\end{align}
where $\rho=\frac{c_{ik}}{a_i}$ and $\rho'=\frac{c_{ik}}{b_k}$.

For (I), note that $a_i, b_k \ge nq/2$ by \prettyref{eq:ab_bound}, and 
$c_{ik} \le nq/4$ for all $i \neq k$ by \prettyref{eq:c_bound_wrong} and the assumptions that $nq \ge C \log n$
and $q \le q_0$. 
Thus 
\begin{equation}
\rho, \rho' \le 1/2.
\label{eq:fake1}
\end{equation}
Let 
$$
J = N_A(i) \backslash N_B(k), \quad J'= N_B(k) \backslash N_A(i).
$$
To analyze $d(P',Q')$, we aim to apply \prettyref{lmm:indsample} with 
$m=|J|$, $m'=|J'|$, $m_0=nq$,
$\{X_j\}_{j=1}^m$ given by $\{ a_j^{(i)}\}_{j \in J}$, and 
$\{Y_j\}_{j=1}^{m'}$ given by $\{ b_j^{(k)}\}_{j \in J'}$. 
However,  \prettyref{lmm:indsample} is not directly applicable 
because the outdegrees are not independent due to the edges between nodes in $J$ and $J'$ (cf.~\prettyref{fig:lemma2}). 
\begin{figure}[ht]%
\centering
\includegraphics[width=0.7\columnwidth]{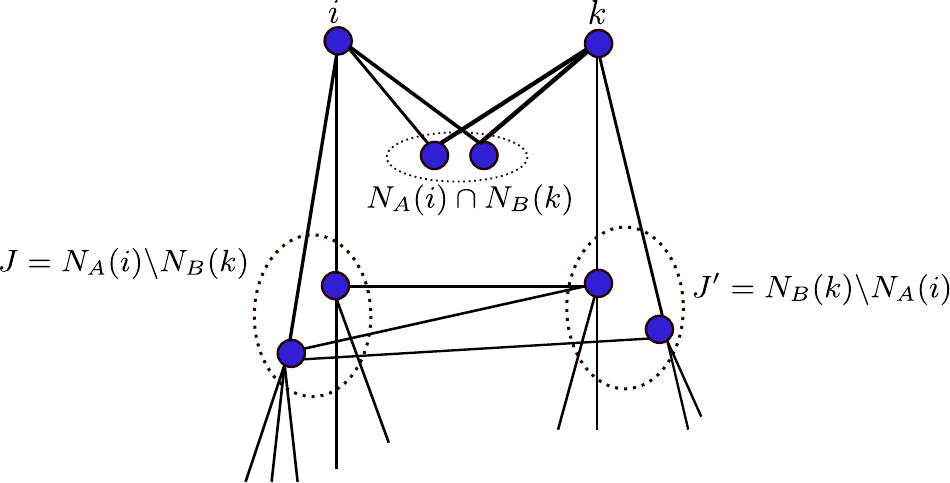}%
\caption{Conditioned on the edge set $E_A(J,J')$ and $E_B(J,J')$, the outdegrees are independent.}%
\label{fig:lemma2}%
\end{figure}
Indeed, note that $a_j^{(i)}$'s are independent across $j$, and 
$ b_{j'}^{(k)}$'s are independent across $j'$, but $a_j^{(i)}$ and 
$ b_{j'}^{(k)}$ are dependent, because
$A_{jj'} $ contributes to the outdegree $a_j^{(i)}$, 
$B_{jj'} $ contributes to the outdegree $b_{j'}^{(k)}$,
and $A_{jj'}$ are correlated with $B_{jj'}$. 
To deal with this dependency issue,
define $E_A(J, J')$ as the set of edges between vertices in $J$ and vertices in $J'$ in $A$
and let $e_A(J,J')=|E_A(J,J')|$. Similarly, define $E_B(J,J')$ and $e_B(J,J')$.
Conditioned on the edge sets $E_A(J, J')$ and $E_B(J, J')$, the outdegrees $\{a_j^{(i)}: j \in J\}$ and $\{b_{j'}^{(k)}: j' \in J'\}$ are mutually independent (although not identically distributed as binomials). 
 Indeed, let $\ell=| J \backslash \{k \}|$ and $\ell'=| J' \backslash \{ i\}|$. 
Then
$$
a_j^{(i)} =  \frac{1}{\sqrt{(n-a_i-1) q (1-q)}} 
\left[ e_A \left( j, N_A^c[i] \backslash J' \right) - (n-a_i-1-\ell') q 
+ e_A \left( j, J' \backslash \{i\} \right) - \ell' q \right]
$$
and
$$
b_{j'}^{(k)} =  \frac{1}{\sqrt{(n-b_k-1) q (1-q)}} 
\left[ e_B \left( j', N_B^c[k] \backslash J \right) - (n-b_k-1-\ell) q 
+ e_B \left( j', J \backslash \{k \}  \right) - \ell q \right]
$$
Note that $\{ e_A \left( j, N_A^c[i] \backslash J' \right)\}_{j \in J}$ are independent from
$\{ e_B \left( j', N_B^c[k] \backslash J \right)\}_{j' \in J}$. 

For each $j \in J \backslash \{k \}$, define the indicator random variable 
$$
\calX (j) =\indc{ \left| e_A \left( j, J' \backslash \{i\} \right) - \ell' q \right| \le \sqrt{nq(1-q)/2} }.
$$
Let 
\begin{equation}
J_0=\{ j \in J \backslash \{k \}: \calX(j)=1\}
\label{eq:J0}
\end{equation} 
Define the event
$$
\calH \triangleq \left\{ |J_0| \ge m/4 \right\}.
$$
Note that for each $j \in  J \backslash \{k \}$, 
$e_A \left( j, J' \backslash \{i\} \right) \sim \Binom(\ell', q)$. Hence, by Chebyshev's inequality, 
$$
\prob{ \calX(j) =1 } \ge 1- \frac{ 2 \ell' }{ n} \ge 1/2,
$$
where the last inequality holds because $\ell' \le 2 nq$  on the event 
$\Gamma_B(k)$ and $q \le 1/8$. Moreover, 
$e_A \left( j, J' \backslash \{i\} \right)$ are independent across $j \in J \backslash \{k \}$. 
Hence, $\sum_{j \in J} \calX_A (j)$ is stochastically lower bounded by 
$\Binom(m-1, 1/2)$. It follows from the binomial tail bound \prettyref{eq:bintail} that
$$
\prob{\calH} = \prob{ |J_0|   \ge m/4  } \ge 1- e^{- m /32 }.
$$

We first condition on $(E_A(J,J'), E_B(J,J'))$ such that the event $\calH$ holds 
and then apply \prettyref{lmm:indsample}. 
In view of \prettyref{eq:fake1},  
$m \ge a_i/2$ and $m' \ge b_k/2$ and thus 
\[
\frac{1}{4} \leq \frac{m}{m_0}, \frac{m'}{m_0} \leq 2. 
\]
Moreover, $nq \ge C L$ and $L \ge L_0$ by assumption. 
It remains to check the condition \prettyref{eq:partition}
in \prettyref{lmm:indsample}.


Let $I$ denote any subinterval of $[-1/2,1/2]$ with length $1/L$. 
Let 
$$
u_j = \frac{1}{\sqrt{(n-a_i-1-\ell') q (1-q)}} 
\left[ e_A \left( j, N_A^c[i] \backslash J' \right) - (n-a_i-1-\ell') q  \right]
$$
and 
$$
v_j= \frac{1}{\sqrt{(n-a_i-1) q (1-q)}}  \left[ e_A \left( j, J' \backslash \{i\} \right) - \ell' q \right] . 
$$
Let 
$$
\alpha_j = \sqrt{\frac{ n- a_i -1 - \ell' }{n-a_i-1}} . 
$$
Then 
$
a_j^{(i)} = \alpha_j u_j + v_j. 
$
It follows that 
$$
\prob{ a_j^{(i)} \in I } = \prob{  u_j \in \frac{I - v_j}{\alpha_j}  } . 
$$

Next we fix $j \in J_0$.
Note that on event $\Gamma_A(i) \cap \Gamma_B(k)$, $a_i,  \ell' \le 2nq$. 
By the assumptions $q \le 1/8$ and $n \ge 4$, 
$$
1 \ge \alpha_j \ge \sqrt{ \frac{ n - 4nq -1  }{n -2nq -1} } \ge \sqrt{\frac{1}{2}},
$$
and, by the definition of $J_0$,
$$
|v_j| \le \frac{\sqrt{nq(1-q)/2}}{ \sqrt{(n-2nq-1) q (1-q)}} \le 1.
$$
Hence,  $(I - v_j)/\alpha_j \subset [-3,3]$.
It follows that
$$
 \frac{1}{\sqrt{2\pi} L}  e^{-1/18} \le \prob{ N(0,1) \in \frac{I - v_j}{\alpha_j} } 
 \le \frac{1}{\sqrt{\pi} L} . 
$$
Note that $u_j \sim \Binomc(n- a_i -1 - \ell' ,q)$. 
By the Berry-Esseen theorem \cite[Theorem 5.5]{petrov}, 
we have
$$
\frac{1}{\sqrt{2\pi} L} e^{-1/18} - \frac{O(1)}{\sqrt{nq(1-q)}} 
\le \prob{  u_j \in \frac{I - v_j}{\alpha_j}  } 
\le \frac{1}{\sqrt{\pi} L} + \frac{O(1)}{\sqrt{nq(1-q)}}  .
$$
In view of the assumption $nq \ge C L^2$ for a sufficiently large constant $C$,  
we have for all $j \in J_0 $ and all $\ell \in [L]$,
$$
\frac{c_1}{L} \leq \prob{ a_j^{(i)}  \in I } \leq \frac{c_2}{L}.
$$
for two absolute constants $c_1, c_2 \in [0,1]$. Finally, recall that 
we have conditioned on $E(J,J')$ such that event
$\calH$ holds. Hence, $|J_0| \ge m/4$. 
Thus, condition \prettyref{eq:partition} in \prettyref{lmm:indsample}
is satisfied.

In conclusions, the assumptions of \prettyref{lmm:indsample}
are all satisfied. 
Then it follows from \prettyref{lmm:indsample} that  
\begin{align}
\prob{ d(P',Q') \geq  \alpha_1 \sqrt{\frac{L}{ n q } } 
- \alpha_2 \sqrt{\frac{\Delta}{ n q} }  ~\Bigg|~ E_A(J, J'), E_B(J,J')} 
 \ge \left( 1- e^{-\Delta} \right)  \iindc{\calH},
\end{align}
where $\alpha_1$ and $\alpha_2$ are absolute constants given in 
\prettyref{lmm:indsample}.
Taking the expectation of $(E_A(J,J'),E_B(J,J'))$ over the both hand sides of the last display,
we get that 
\begin{align}
\prob{ d(P',Q') \geq \alpha_1 \sqrt{\frac{L}{ n q } } 
- \alpha_2 \sqrt{\frac{\Delta}{ n q} }}  
& \ge \left( 1- e^{-\Delta} \right)  \prob{\calH}  \nonumber \\
& \ge \left( 1- e^{-\Delta} \right)  \left( 1- e^{- m /32 } \right) 
\ge 1- 2 e^{-\Delta}, \label{eq:fake2}
\end{align}
where the last inequality holds due to $m \ge  nq/4 \ge C \Delta /4$
for a sufficiently large constant $C$.

\nb{For (II), 
\prettyref{lmm:tvconc} implies that 
$
\|[P]_L\|_1 \leq  \sqrt{\frac{L }{c_{ik} } } +  \sqrt{\frac{\Delta }{c_{ik} } }
$
holds with probability at least $1-e^{-\Delta/2}$;
similarly for $\|[Q]_L\|_1 $.  Thus, 
by the triangle inequality and union bound, 
with probability at least $1- 2e^{-\Delta/2}$,
\[
d(P,Q) \leq \|[P]_L\|_1 + \|[Q]_L\|_1 \leq  2\sqrt{\frac{L }{c_{ik} } } +  2\sqrt{\frac{\Delta}{c_{ik}}}.
\]
Therefore, 
\begin{equation}
\rho \cdot d(P,Q)\leq \frac{c_{ik}}{a_i} \left( 2\sqrt{\frac{L }{c_{ik} } } +  2\sqrt{\frac{\Delta}{c_{ik}}} \right) 
\leq \frac{4}{nq} \left( \sqrt{L} + \sqrt{\Delta} \right) \left( \sqrt{nq^2} + \sqrt{2\log n} \right)
\label{eq:fake3}
\end{equation}
where the last inequality holds due to  $a_i \ge \frac{1}{2} nq $ 
and $\sqrt{c_{ik} } \le \sqrt{nq^2} + \sqrt{2\log n}$ on the event \prettyref{eq:ab_bound} and 
\prettyref{eq:c_bound_wrong} respectively.
}

For (III), 
\begin{equation}
|\rho-\rho'| = \frac{c_{ik} |a_i - b_k|}{a_i b_k} \leq \frac{|a_i - b_k|}{a_i} \leq 
8 \sqrt{ \frac{\Delta}{nq} },
\label{eq:fake4}
\end{equation}
where the last inequality holds due to \prettyref{eq:def_Theta_ik}.

Combining \prettyref{eq:Zik1} with \prettyref{eq:fake1}--\prettyref{eq:fake4}, we 
have that with probability at least 
$1-3e^{-\Delta/2}$, 
\[
Z_{ik} \geq  \frac{\alpha_1}{2}  \sqrt{\frac{L}{nq}} -  \alpha_2  \sqrt{\frac{\Delta}{nq}}
-\frac{4}{nq} \left( \sqrt{L} + \sqrt{\Delta} \right) \left( \sqrt{nq^2} + \sqrt{2\log n} \right)
 - 8 \sqrt{\frac{\Delta}{nq} } 
\geq  c_1 \sqrt{ \frac{L}{nq} } - c_2 \sqrt{\frac{\Delta}{nq}},
\]
for some absolute constants $c_1,c_2>0$, where the last inequality holds by
the assumptions that 
$q \le q_0$ and $nq \ge C \log n$ for some sufficiently small constant $q_0$ and sufficiently
large constant $C$.
\qed
\end{proof}


\subsubsection{Proof of \prettyref{lmm:indsample}, \ref{lmm:corsample}, \ref{lmm:tvconc}, and \ref{lmm:beta}}

\begin{proof}[Proof of \prettyref{lmm:indsample}]
Recall from \prettyref{eq:distance} that 
\[
 d(\pi,\pi')= \sum_{\ell \in [L]}  |\pi(I_\ell) - \pi'(I_\ell)|.
\]
We first show that it suffices to establish
\begin{equation}
\Expect d(\pi,\pi') \geq c_0 \sqrt{\frac{L}{m_0}}.
\label{eq:indmean}
\end{equation}
To prove the concentration inequality \prettyref{eq:indsample}, note that
$d(\pi,\pi')$, as a function of the independent random variables $(X_1,\ldots,X_m,Y_1,\ldots,Y_{m'})$, satisfies the bounded difference property.
Indeed, let 
\[
d(\pi,\pi')=f(X_1,\ldots,X_m,Y_1,\ldots,Y_{m'})
\] 
for some function $f$.
Then for any $i$ and any $x_i,x_i'$, we have, for some $\ell,\ell' \in [L]$,
\begin{align}
& ~ |f(x_1,\ldots,x_i,\ldots,x_m,y_1,\ldots,y_{m'})-f(x_1,\ldots,x_i',\ldots,x_m,y_1,\ldots,y_{m'})| \nonumber \\
\leq & ~ \left| |\pi(I_\ell) + \frac{1}{m} - \pi'(I_\ell)| + |\pi(I_{\ell'}) - \frac{1}{m} - \pi'(I_{\ell'})| - 
|\pi(I_\ell) - \pi'(I_\ell)| - |\pi(I_{\ell'}) - \pi'(I_{\ell'})| \right| \nonumber \\
\leq & ~ \frac{2}{m}. \label{eq:mcdia}
\end{align}
Thus, $f$ satisfies the bounded difference property with parameter $\frac{2}{m \wedge m'}$.
By McDiarmid's inequality, we have
\[
\prob{d(\pi,\pi')  \leq \Expect d(\pi,\pi') - c_1 
\sqrt{\frac{\Delta}{m_0}} } \leq e^{-\Delta},
\]
where $c_1$ depends only on $\kappa_1$ and $\kappa_2$.

It remains to show \prettyref{eq:indmean}. 
For any $\ell \in [L]$, 
\begin{align}
\expect{ \left|\pi(I_\ell) - \pi'(I_\ell) \right| } 
= & \expect{ \left| \frac{1}{m} \sum_{i=1}^m \indc{X_i \in I_\ell} - \frac{1}{m'} \sum_{i=1}^{m'} \indc{Y_i \in I_\ell} 
-  \nu\left( I_\ell \right) 
+ \nu' \left(  I_\ell \right) 
 \right| } \nonumber \\
\geq & ~ \frac{1}{m} \inf_{x \in \reals} \expect{ \left|  \sum_{i \in J_0} 
\indc{X_i \in I_\ell} - x \right| },
\label{eq:indmean_bound}
\end{align}
where the last inequality holds because $X_i$'s and $Y_i$'s are independent. 

For $i \in J_0$, define $\alpha_i \triangleq \prob{ X_i \in I_\ell}$ and $\alpha\triangleq 1/L$.
It follows from assumption \prettyref{eq:partition} that $c_1 \alpha \le \alpha_i \le c_2 \alpha$
for two absolute constants $c_1, c_2 \in (0,1]$. Therefore, 
we can write 
$\indc{X_i \in I_\ell}= W_i Z_i$, where 
$Z_i \iiddistr \Bern(\alpha)$ and $W_i$'s are independently distributed as $\Bern(\eta_i)$ where 
$c_1 \le \eta_i \le c_2$. Let $T=\{i \in J_0: W_i =1\}$. Then for any $x \in \reals$, 
conditional on $T$, 
\begin{align}
\expect{ \left|  \sum_{i \in J_0} \indc{X_i \in I_\ell} - x \right| \mid T}
=\expect{ \left| \sum_{i \in T} Z_i -x \right| }
\ge \expect{ \left|  \Binom(|T| ,\alpha)- x_0 \right| }, \label{eq:mean_absolute_deviation_bound}
\end{align}
where $x_0$ is the median of $\Binom(|T|,\alpha)$, which satisfies $\left|x_0- |T| \alpha \right| \leq 1$~\cite{KB80}.
Using the estimate for the mean absolute deviation of binomial distribution (e.g.~\cite[Theorem 1]{BK13}), we have
\[
\expect{ \left|  \Binom( |T| ,\alpha)- |T| \alpha\right| } \geq \frac{\sqrt{ |T| \alpha (1-\alpha)}}{\sqrt{2}},
\quad  \frac{1}{|T| } \leq \alpha \leq 1-\frac{1}{|T| }.
\]
By assumption, $L \geq L_0$ for some large constant $L_0$. Thus if 
$|T| \ge 16 L$, then $|T| \alpha (1-\alpha) \ge 8$. 
Hence, by triangle inequality, 
\[
\expect{ \left|  \Binom( |T| ,\alpha)- x_0 \right| } \geq 
\left( 
\frac{\sqrt{ |T| \alpha (1-\alpha)}}{\sqrt{2}} -1  \right) \indc{|T| \ge 16 L}
\geq \frac{\sqrt{ |T| \alpha (1-\alpha)}}{2 \sqrt{2}}\indc{|T| \ge 16 L}
\]
Therefore, combining the last displayed equation with \prettyref{eq:mean_absolute_deviation_bound},
we get that for any $x \in \reals$, 
$$
\expect{ \left|  \sum_{i \in J_0} \indc{X_i \in I_\ell} - x \right| \mid T}
\ge  \frac{\sqrt{ |T| \alpha (1-\alpha)}}{2 \sqrt{2}}\indc{|T| \ge 16 L}.
$$
Taking expectation over $T$ and then infimum over $x \in \reals$
on both hand sides of the last displayed equation yields that 
$$
\inf_{x \in \reals} \expect{ \left|  \sum_{i \in J_0} \indc{X_i \in I_\ell} - x \right|}
\ge \frac{\sqrt{\alpha (1-\alpha)}}{2 \sqrt{2} }  
\expect{ \sqrt{|T| } \indc{|T| \ge 16 L} }
$$
It remains to bound $\expect{ \sqrt{|T| } \indc{|T| \ge 16 L} }$ from the below. 
By assumption, it holds that $|J_0| \ge m/4$. Further, recall that
$W_i$'s are independently distributed as $\Bern(\eta_i)$ where 
$c_1 \le \eta_i \le c_2$. Hence 
$|T|$ is stochastically lower bounded by $ U \sim \Binom(m/4, c_1)$ and
thus 
$$
\expect{ \sqrt{|T| } \indc{|T| \ge 16 L} } \ge 
\expect{ \sqrt{U } \indc{U \ge 16 L} } = \expect{\sqrt{U}} - \expect{ \sqrt{U} \indc{U < 16 L}}.
$$
Note that for any $y>0$, $\sqrt{y} \ge 1+ (y-1)/2- (y-1)^2/2$. 
Plugging $y= U/\expect{U}$ and taking expectation, we get that 
$$
\expect{\sqrt{U}} \ge \sqrt{\expect{U}} \left( 1- \frac{ \var(U) } { 2 \left( \expect{ U} \right)^2 } \right)
=\sqrt{m c_1/4}- \frac{1-c_1}{2 \sqrt{ m c_1/4}}. 
$$
Moreover, 
$$
\expect{ \sqrt{U} \indc{U < 16 L}} 
\le 4 \sqrt{L} \prob{ U < 16 L} 
\le 4 \sqrt{L} e^{-\Omega(m)},
$$
where the last inequality follows from the Chernoff bound \prettyref{eq:bintail}
and the fact that $m \geq \kappa_1 m_0 \geq \kappa_1 C L$ for some large constant $C$. 
Combining the last four displays, we have that
$$
\inf_{x \in \reals} \expect{ \left|  \sum_{i \in J_0} \indc{X_i \in I_\ell} - x \right|}
\ge c_3 \sqrt{\frac{m}{L}},
$$
for some absolute constant $c_3$. Combining the last display with
\prettyref{eq:indmean_bound}, we get that 
\[
\expect{ \left|\pi(I_\ell) - \pi'(I_\ell) \right| } \geq c_3 \sqrt{\frac{1}{mL}}.
\]
Summing over $\ell \in [L]$ and noting that $m \ge \kappa_1 m_0$ yields \prettyref{eq:indmean}.
\qed
\end{proof}

\begin{proof}[Proof of \prettyref{lmm:corsample}]
Similar to the proof of \prettyref{lmm:indsample}, observe that 
 $d(\pi,\pi')$ is a function of the independent randomness  $(X_1,Y_1),\ldots,(X_m,Y_m)$ 
satisfying the bounded difference property with parameter $\frac{4}{m}$. 
Thus, by McDiarmid's inequality, to show \prettyref{eq:corsample}, it suffices to show
\begin{equation}
\Expect d(\pi,\pi') \leq L \sqrt{\frac{\beta}{m}}.
\label{eq:cormean}
\end{equation}

Note that 
$$
\expect{d(\pi,\pi')} = \sum_{\ell=1}^L \expect{ \left|\pi(I_\ell) - \pi'(I_\ell) \right| }
$$
and
$$
\pi(I_\ell) - \pi'(I_\ell) = \frac{1}{m} \sum_{i=1}^m 
\left[ \left(  \indc{X_i \in I_\ell } - \indc{Y_i \in I_\ell } \right)
- \left( \prob{ X_i \in I_\ell} - \prob{Y_i \in I_\ell} \right) \right].
$$
For each $i$, 
$$
 \indc{X_i \in I_\ell } - \indc{Y_i \in I_\ell } =
 \begin{cases}
 1 & \text{ w.p. } \prob{X_i \in I_\ell, Y_i \notin I_\ell} \\
 -1 & \text{ w.p. } \prob{X_i \notin I_\ell, Y_i \in I_\ell} \\
 0 & \text{ o.w. }
 \end{cases}
$$
Hence,
\begin{align*}
& \expect{ \left|\pi(I_\ell) - \pi'(I_\ell) \right| } \\
& \le 
\sqrt{\expect{ \left( \pi(I_\ell) - \pi'(I_\ell) \right)^2 } } \\
& =  \frac{1}{m} 
\sqrt{\expect{  \left( \sum_{i=1}^m \left( \indc{X_i \in I_\ell }  - \indc{Y_i \in I_\ell }
- \prob{X_i \in I_\ell} + \prob{Y_i \in I_\ell } \right)  \right)^2 }} \\
& = \frac{1}{m} \sqrt{ \sum_{i=1}^m \expect{  \left(  \indc{X_i \in I_\ell }  - \indc{Y_i \in I_\ell }
- \prob{X_i \in I_\ell} + \prob{Y_i \in I_\ell }  \right)^2 }}  \\
& = \frac{1}{\sqrt{m}} 
\sqrt{ \prob{X_1 \in I_\ell, Y_1 \notin I_\ell} + \prob{X_1 \notin I_\ell, Y_1 \in I_\ell}
- \left(\prob{X_1 \in I_\ell, Y_1 \notin I_\ell} - \prob{X_1 \notin I_\ell, Y_1 \in I_\ell} \right)^2 }\\
& \le \frac{1}{\sqrt{m}} 
\sqrt{ \prob{X_1 \in I_\ell, Y_1 \notin I_\ell} + \prob{X_1 \notin I_\ell, Y_1 \in I_\ell}} \leq \sqrt{\frac{\beta}{m}}.
\end{align*}
Summing over $\ell\in[L]$ gives the desired \prettyref{eq:cormean}.
\qed
\end{proof}

\begin{proof}[Proof of \prettyref{lmm:tvconc}]
Let $\nu$ be supported on the set $\{a_1,\ldots, a_k\}$ with $\nu_i=\nu(\sth{a_i})$. Then by Cauchy-Schwarz inequality, 
\begin{align*}
\Expect\|\nu-\hat\nu\|_1 
& =\frac{1}{m} \sum_{i=1}^k \Expect \left|   \sum_{j=1}^m \left( \indc{X_j=a_i} -\nu_i \right) \right| \\
& \le \frac{1}{m} \sum_{i=1}^k 
\sqrt{\Expect \left(   \sum_{j=1}^m \left( \indc{X_j=a_i} -\nu_i \right) \right)^2 } \\
& = \frac{1}{m} \sum_{i=1}^k  \sqrt{m\nu_i(1-\nu_i)} 
\leq \sum_{i=1}^k \sqrt{\frac{\nu_i}{m}} \leq \sqrt{\frac{k}{m}},
\end{align*}
where the last inequality follows from Jensen's inequality.
	Note that $(X_1,\ldots,X_m) \mapsto \|\nu-\hat{\nu}\|_1$ satisfies the bounded difference property with parameter $\frac{2}{m}$. 
	Thus, by McDiarmid's inequality, we have 
	\[
	\prob{\|\nu-\hat{\nu}\|_1 \geq \sqrt{\frac{L}{m}} + \sqrt{\frac{\Delta}{m} }  } \leq e^{-  \frac{2 \Delta/m}{m (2/m)^2}}  = e^{-\Delta /2}.
	\]	
	\qed
\end{proof}

\begin{proof}[Proof of \prettyref{lmm:beta}]
Let us suppress $i$ and $j$, and 
abbreviate  $a^{(i)}_j$ and $b^{(i)}_j$ as $a$ and $b$. 
Throughout the proof, we condition on $N_A[i]=S$ and $N_B[i]=T$ 
such that event $\Gamma_A(i) \cap \Gamma_B(i) \cap \Gamma_{ii} \cap \Theta_i \cap \Theta_{ii}$ holds,
and aim to show that 
\begin{align}
& \prob{a \in I, b \notin I } \nonumber \\
& \lesssim \sigma + \sqrt{\frac{\Delta}{n} } + \frac{1}{\sqrt{nq}} + \frac{1}{L} \exp \left( - \Omega \left( \min \left\{ \frac{1}{\sigma^2 L^2}, \frac{n}{L^2 \Delta}, \frac{\sqrt{np}}{L} \right\} \right)\right) +  e^{-\Delta}.
\label{eq:ab1} 
\end{align}
The second probability in \prettyref{eq:beta-sampling} follows from the same bound.

Define 
\begin{align}
\zeta=\sqrt{(n-|S|)q(1-q)} \quad \text{ and } \quad \eta=\sqrt{(n-|T|) q (1-q)}. \label{eq:def_zeta_eta}
\end{align}
Recall that on the event $\Theta$,
\begin{align}
 |S \cup T| \le |S| + |T| \le 4 nq \le  n/2,
 \label{eq:theta_2_bound}
\end{align}
where the last inequality holds due to $q \le 1/8$. Hence,
\begin{align}
\sqrt{nq}/2 \le \zeta, \eta \le \sqrt{ n q } . \label{eq:zeta_bound}
\end{align}

Then we can rewrite $a$ and $b$ as 
\begin{align}
a &= \frac{1}{\zeta}  \sum_{k \notin S} (\alpha_k g_k-q), \\
b & = \frac{1}{\eta}  \sum_{k \notin T} (\beta_k g_k-q),
\end{align}
where $g_k$'s are iid as $\Bern(p)$ and $\alpha_k,\beta_k$'s are iid as $\Bern(s)$.
Recall that $\sigma^2=1-s$ and $p=q/s$. 

Define 
\begin{align*}
E= \{ k \notin S: \alpha_k =1 \} \quad  \text{ and } \quad F =\{ k \notin T: \beta_k = 1 \} .
\end{align*}
Then we can decompose $a$ and $b$ as
\begin{align*}
a =   \frac{1}{\zeta} \left( c + x \right) \quad \text{ and } \quad b = \frac{1}{\eta} \left( c+ y \right),
\end{align*}
where
\begin{align}
c &=   \sum_{k \in E \cap F} (g_k - p) \nonumber \\
x  & =   \sum_{k \in E \backslash F} \left(g_k - p \right) + p |E| - ( n - |S| ) q  
\quad \text{ and } \quad  y  =    \sum_{k \in F \backslash E} \left(g_k - p \right) + p |F| - ( n- |T|  ) q .
\label{eq:def_x_y}
\end{align}
Conditional on $\{E, F\}$, $c, x, y$ are mutually independent. 


\nb{We pause to give some intuition behind the remaining argument. 
Loosely speaking, the quantity $c$ captures the correlation between the outdegrees $a$ and $b$, while $x$ and $y$ correspond to the fluctuations. A key step of
the proof is to relate the event $ \{a \in I, b \notin I\} $ to the event that $c$ belongs to an interval of length roughly $|x-y|$. We further show that $|x-y|$ is typically $O(\sqrt{np} \sigma)$. Coupled with the anti-concentration of $c$ (the maximum probability mass of which is at most $O(1/\sqrt{np})$), this shows that $c$ belongs to an interval of length $|x-y|$ with probability at most $O(\sigma)$, giving rise to the first (main) term in the upper bound \prettyref{eq:ab1}. 
The complication comes from the fact that we also need to control the large deviation behavior of 
$|x-y|$, the mismatches between the normalization factors $\zeta$ and $\eta$, as well as the atypical behavior of 
$E, F$.}

Returning to the main proof, note that 
\begin{align*}
E \cap F &= \{ k \in S^c \cap T^c: \alpha_k= \beta_k=1 \} \\
E \backslash F  &= \{ k \in S^c \backslash T^c : \alpha_k =1 \}  \cup \{ k \in  S^c \cap T^c : \alpha_k=1, \beta_k=0 \}  \\
F \backslash E  &= \{ k \in T^c \backslash S^c : \beta_k =1 \}  \cup \{ k \in  S^c \cap T^c : \alpha_k=0, \beta_k=1 \} .
\end{align*}
Therefore, 
\begin{align*}
| E\cap F| & \sim  \Bin \left( |S^c \cap T^c|, s^2 \right) \\
 | E \backslash F | & \sim \Bin \left( |S^c \backslash T^c|, s \right)  + \Bin \left( |S^c \cap T^c|, s (1-s) \right) \\
  | F \backslash E | & \sim \Bin \left( |T^c \backslash S^c|, s \right)  + \Bin \left( |S^c \cap T^c|, s (1-s) \right).
\end{align*}

Recall that on event $\Gamma_A(i) \cap \Gamma_B(i) \cap \Theta_{i}$,
\begin{align*}
|S^c \cap T^c | & = n - |S \cup T| \ge n/2  \\
|S^c \backslash T^c| = |T \backslash S| & \le \left( \sqrt{n(1-s)} + \sqrt{\Delta} \right)^2 \\
|T^c \backslash S^c| = |S \backslash T|  & \le \left( \sqrt{n(1-s)} + \sqrt{\Delta} \right)^2.
\end{align*}
Define 
\begin{align*}
 \tau_1 & = \left( \sqrt{n(1-s)} + 2\sqrt{\Delta} \right)^2 
 + \left( \sqrt{n(1-s)} + \sqrt{\Delta} \right)^2 \quad
 \text{ and } \quad \tau_2 = \left( \sqrt{ n s^2/2} - \sqrt{\Delta } \right)^2 
\end{align*}
and event
$$
\calE=\{ |E \cap F| \ge \tau_2 \} \cap \{ |E \backslash F| \le \tau_1 \} \cap \{ |F \backslash E| \le \tau_1 \}.
$$
Then by binomial tail bounds~\prettyref{eq:bintail_lower}
and~\prettyref{eq:bintail_upper}, 
we have $\prob{\calE^c} \le e^{-\Delta} +4 e^{-2\Delta} \le 5 e^{-\Delta}$.
Moreover, we have that  
\begin{align}
\tau_1 \le 4n (1-s) + 10 \Delta. \label{eq:tau_1_bound}
\end{align}
Also, in view of the assumption $\sigma \le 1/2$ so that $s \ge 3/4$, we have that 
\begin{align}
\tau_2 \ge \left( \frac{3}{4} \sqrt{ n/2}   - \sqrt{\Delta} \right)^2 \ge n/4,  \label{eq:tau_2_bound}
\end{align}
where 
the last inequality holds for sufficiently large $n$ due to $nq \ge C \Delta$.

Note that 
\begin{align}
\prob{ a \in I, b\notin I } & =\expects{\prob{ a \in I, b\notin I \mid  E, F}}{E,F} \nonumber \\
&=\expects{\prob{ a \in I, b\notin I \mid E, F}\iindc{\calE}}{E,F} + \expects{\prob{ a \in I, b\notin I \mid E, F}\iindc{\calE^c} 
}{E,F} \nonumber \\
& \le \expects{\prob{ a \in I, b\notin I \mid  E, F}\iindc{\calE} }{E,F}  + \prob{\calE^c} \nonumber \\
& \le \expects{\prob{ a \in I, b\notin I \mid  E, F}\iindc{\calE} }{E,F}  + 5e^{-\Delta}. \label{eq:decomp_1}
\end{align}
Hence, it remains to bound $\expects{\prob{ a \in I, b\notin I \mid  E, F}\iindc{\calE} }{E,F}$.
Note that 
\begin{align*}
\prob{ a \in I, b\notin I \mid E, F}\iindc{\calE} 
& =\prob{ c \in \zeta I-x, c \notin \eta I - y  \mid  E, F} \iindc{\calE} \\
& =\expects{\prob{ c \in (\zeta I-x) \backslash (\eta I -y) \mid E, F,x,y} \iindc{\calE} }{x,y}  
\end{align*}


Next consider the following two cases by assuming $I=[l, r]$ with $-1/2 \le l \le r \le 1/2$.
\begin{itemize}
\item Case 1: Either $\zeta r -x \le \eta l -y$ or $\eta r - y \le \zeta l-x$. In this case, we have
$(\zeta I-x) \cap (\eta I -y) = \emptyset$. Thus, we have 
$$
\prob{ c \in (\zeta I-x) \backslash (\eta I -y) \mid S, T, E, F,x,y} 
\overset{(a)}{\lesssim} \frac{1}{\sqrt{\tau_2 p}}  \left( \frac{\zeta }{L}  + 1 \right) \lesssim  \frac{1}{L},
$$
where $(a)$ holds because the 
maximum probability mass of $c$ is $\Theta(1/\sqrt{|E\cap F|p})$,
and the number of integral points in $\zeta I-x$ is at most $\zeta/L+1$;
the last inequality holds because $\tau_2 \ge n/4$ in view of
\prettyref{eq:tau_2_bound}, $\zeta \le \sqrt{nq}$ in view of \prettyref{eq:zeta_bound},
and $nq \ge C L^2$ for a sufficiently large constant $C$.

\item Case 2: $\zeta r - x \ge \eta l -y$ and $\eta r - y \ge \zeta l-x$. In this case, we have
$(\zeta I-x) \cap (\eta I -y) \neq \emptyset$. Moreover,
$$
(\zeta I-x) \backslash (\eta I -y) \subset \left[ \zeta l-x, \eta l-y \right]
\cup \left[ \eta r-y, \zeta r -x \right]. 
$$
Hence,
$$
\left|  (\zeta I-x) \backslash (\eta I -y) \right|
\le \left|  x -y + (\eta - \zeta) l \right| + \left| y - x + (\zeta - \eta) r \right|
\le 2 |x-y| + |\eta-\zeta|,
$$
where the last inequality follows from the triangle inequality and the assumption that 
$-1/2 \le l \le r \le 1/2$. Thus,
$$
\prob{ c \in (\zeta I-x) \backslash (\eta I -y) \mid S, T, E, F,x,y} 
\lesssim \frac{1}{ \sqrt{ n p} }  \left( 2 |x-y| + |\eta-\zeta|  + 1 \right),
$$
where the last step holds because the maximum probability mass of $c$ is $\Theta(1/\sqrt{|E\cap F|p})$,
$|E \cap F| \ge \tau_2 \ge n/4$ in view of
\prettyref{eq:tau_2_bound}, and the number of integral points in 
$(\zeta I-x) \backslash (\eta I -y)$ is at most $2 |x-y| + |\eta-\zeta|  + 1$.
\end{itemize}

Combining the above two cases, we get that 
\begin{align*}
& \prob{ c \in (\zeta I-x) \backslash (\eta I -y)  | E, F,x,y } \iindc{\calE} \\ 
& \lesssim  
\left( \frac{1}{ \sqrt{ n p} }
\left( \left| x - y \right| + |\eta-\zeta| + 1 \right)  
+ \frac{1}{L} 
\indc{ x -y \in [\zeta r - \eta l, +\infty) \cup (-\infty, \zeta l-\eta r] } 
\right) \iindc{\calE}.
\end{align*}
Taking expectation of $x,y$ over both hand sides of the last displayed equation, we get that
\begin{align*}
& \prob{ a \in I, b\notin I | E, F}\iindc{\calE} \\
& \lesssim 
\left( \frac{1}{ \sqrt{n p} }  \left( 
\expect{\left| x - y \right| \mid  E, F} +
|\eta-\zeta|  +1 \right)  + \frac{1}{L} 
\prob{ x -y  \in [\zeta r - \eta l, +\infty) \cup (-\infty, \zeta l-\eta r] \mid E, F } 
\right) \iindc{\calE}.
\end{align*}
Further taking expectation of $E, F$ over both hand sides of the last displayed equation, we get that
\begin{align}
& \expects{ \prob{ a \in I, b\notin I | E, F}\iindc{\calE} }{E,F} \label{eq:term_0} \\
 & \lesssim  \frac{1}{ \sqrt{n p} } 
\expects{ \expect{\left| x - y \right| \mid  E, F} \iindc{\calE} }{E, F} 
\label{eq:term_1} \\
&~~~~ + \frac{|\eta-\zeta| +1 }{\sqrt{np}} \label{eq:term_2} \\
&~~~~ + \frac{1}{L}  
 \expects{\prob{ x -y  \in [\zeta r - \eta l, +\infty) \cup (-\infty, \zeta l-\eta r] \mid E, F } 
 \iindc{\calE}}{E,F}  \label{eq:term_3}.
\end{align}
Next we upper bound the three terms \prettyref{eq:term_1}, \prettyref{eq:term_2}, and \prettyref{eq:term_3}
separately. 

\medskip 
\textbf{Upper bound \prettyref{eq:term_1}}:  
\begin{align*}
\expect{ |x| \mid  E, F } \iindc{\calE}
&\le \expect{ \sum_{k \in E \backslash F} (g_k - p) \mid E, F} \iindc{\calE} + \left| p |E| - (n-|S|) q \right| \\
& \le \sqrt{ |E \backslash F| p (1-p) }\iindc{\calE} + \bigg| p |E| - (n-|S|) q \bigg|\\
& \le \sqrt{ \tau_1 p (1-p)} + \bigg| p |E| - (n-|S|) q \bigg| \\
& \lesssim \sqrt{ n p (1-s)} + \sqrt{ p \Delta} + p \bigg|  |E| - (n-|S|) s \bigg|,
\end{align*}
where the last inequality holds due to $\tau_1 \lesssim n(1-s) + \Delta$
in \prettyref{eq:tau_1_bound}. It follows that 
$$
\expects{\expect{ |x| \mid  E, F } \iindc{\calE}}{E,F} 
\lesssim \sqrt{ n p (1-s)} + \sqrt{ p \Delta}   + p \sqrt{ (n - |S|) s (1-s) }
\lesssim \sqrt{ np (1-s)} + \sqrt{ p \Delta },
$$
where the first inequality uses the fact that $|E| \sim \Binom(n - |S|, s)$ 
and hence $\expect{ \big|  |E| - (n-|S|) s \big|} \le
\sqrt{ \expect{ \left(   |E| - (n-|S|) s \right)^2 }} =\sqrt{ (n-|S|) s (1-s)}.$
Similarly, 
$$
\expects{ \expect{  |y| \mid E, F } \iindc{\calE}} {E,F}
\lesssim  \sqrt{ np (1-s) } + \sqrt{ p \Delta }. 
$$
Therefore, by triangle inequality, 
\begin{align}
\expects{\expect{ |x-y| \mid  E, F } \iindc{\calE}}{E,F }
\lesssim \sqrt{ np (1-s)}  +  \sqrt{ p \Delta }.  \label{eq:term_1_end}
\end{align}

\medskip 
\textbf{Upper bound \prettyref{eq:term_2}}: 
In view of definitions of $\zeta$ and $\eta$ in \prettyref{eq:def_zeta_eta}, 
\begin{align}
\left| \eta - \zeta \right| & = \sqrt{q(1-q)} \left| \sqrt{n-|S|} - \sqrt{n-|T| } \right| \nonumber \\
& \le \sqrt{q} \; \frac{ \left| |T| - |S|  \right|}{ \sqrt{n-|S|} + \sqrt{ (n-|T|)} } \nonumber \\
 & \lesssim  q \sqrt{\Delta}, \label{eq:term_2_end}
\end{align}
where the last inequality holds because on event $\Gamma_A(i) \cap\Gamma_B(i) \cap \Theta_{ii}
$, $|S\cup T| \le n/2$ and $ \left| |T| - |S| \right| \le 4 \sqrt{nq \Delta} $.

\medskip 
\textbf{Upper bound \prettyref{eq:term_3}}:
It follows from the last displayed equation that  
$$
\left| \frac{\eta}{\zeta} -1  \right| \le  \frac{ \left|  |T\backslash S| - |S \backslash T| \right| }{ 2 \left( n- |S \cup T| \right) } 
\le \frac{ 4 \sqrt{ nq \Delta}  }{n} \le \frac{1}{L},
$$
where the last inequality holds by the assumption \prettyref{eq:condtrue},
i.e, $4 L \sqrt{ nq \Delta} \le n$.
As a consequence, 
$$
\zeta r- \eta l = \zeta (r-l) +  (\zeta - \eta) l \ge \zeta \left( \frac{1}{L} - 
\frac{1}{2} \left| \frac{\eta}{\zeta} -1 \right|  \right) =  \frac{\zeta}{2L}.
$$
Similarly, 
$
\eta r - \zeta l \ge \frac{ \zeta }{2L}. 
$
Therefore, 
\begin{align}
\prob{ x - y \in [\zeta r -\eta l, +\infty) 
\cup (-\infty, \zeta l - \eta r]  ~\Bigg|~ E, F
}\le \prob{ |x|  \ge \frac{\zeta}{4L} ~\Bigg|~  E, F} 
+\prob{ |y|  \ge \frac{\zeta}{4L} \mid  E, F}. \label{eq:term_3_1}
\end{align}

Recall the definition of $x$ in \prettyref{eq:def_x_y}, 
\begin{align*}
\prob{ |x| \ge \frac{\zeta}{4L} ~\Bigg|~   E, F}   \iindc{\calE}
& \le \prob{  \sum_{k \in E \backslash F} \left(g_k - p \right) \ge \frac{\zeta}{8L}  ~\Bigg|~ E, F} \iindc{\calE}
+\indc{ | p |E| - (n-|S|) q \ge \zeta /(8 L) } .
\end{align*}
By Bernstein's inequality,
\begin{align*}
\prob{  \sum_{k \in E \backslash F} \left(g_k - p \right) \ge \frac{\zeta}{8L}  ~\Bigg|~ E, F} \iindc{\calE}
& \le \exp \left(  - \Omega \left( 
\min\left\{ \frac{\zeta^2}{ |E\backslash F| L^2 p }, \frac{\zeta}{L} \right\} \right) \right) \iindc{\calE} \\
& \le \exp \left(  - \Omega \left( 
\min\left\{ \frac{n }{ \left( n(1-s) + \Delta \right) L^2 }, 
\frac{\sqrt{np} }{L} \right\} \right) \right),
\end{align*}
where the last inequality holds because $ \zeta \ge \sqrt{nq}/2 $ in \prettyref{eq:zeta_bound},
$s \ge 3/4$,  and $| E \backslash F| \le \tau_1 \lesssim n(1-s)+\Delta$ on the event $\calE$
in view of \prettyref{eq:tau_1_bound}. 
By Bernstein's inequality again,
\begin{align*}
\prob{ | p |E| - (n-|S|) q \ge \zeta /(8 L)  }  
& \le \exp \left( - \Omega \left( 
\min \left\{  
\frac{ \zeta^2 }{ n (1-s) L^2 p^2 }, \frac{\zeta} {Lp }  \right\} \right) \right)  \\
& \le \exp \left( - \Omega \left( 
\min \left\{  
\frac{ 1 }{  (1-s) L^2 p }, \frac{\sqrt{n}} {L\sqrt{p} }  \right\} \right) \right),
\end{align*}
where the last inequality holds because $ \zeta \ge \sqrt{nq}/2 $ in \prettyref{eq:zeta_bound}
and $s \ge 3/4$. 
Combining the last three displayed equations yields that 
\begin{align*}
 \expects{\prob{ |x| \ge \frac{\zeta}{4L} ~\Bigg|~  E, F}   \iindc{\calE}}{E,F} 
 \le  
\exp \left( - \Omega \left( \min \left\{ \frac{1}{\sigma^2 L^2}, \frac{n}{L^2 \Delta}, \frac{\sqrt{np}}{L} \right\} \right)\right),
\end{align*}
where we used $\sigma^2=1-s$.
Similarly,
\begin{align*}
\expects{\prob{ |y| \ge \frac{\zeta}{4L} ~\Bigg|~  E, F}  \iindc{\calE}  }{E, F} 
\le \exp \left( - \Omega \left( \min \left\{ \frac{1}{\sigma^2 L^2}, \frac{n}{L^2 \Delta}, \frac{\sqrt{np}}{L} \right\} \right)\right).
\end{align*}
Combining the last two displayed equation with \prettyref{eq:term_3_1},
we get that 
\begin{align}
& \expects{\prob{ x - y \in [\zeta r -\eta l, +\infty) 
\cup (-\infty, \zeta l - \eta r]  \mid E, F
} \iindc{\calE}
}{E,F}  \nonumber \\
& \le \exp \left( - \Omega \left( \min \left\{ \frac{1}{\sigma^2 L^2}, \frac{n}{L^2 \Delta}, \frac{\sqrt{np}}{L} \right\} \right)\right).
\label{eq:term_3_2}
\end{align}


\medskip

Assembling \prettyref{eq:decomp_1}, \prettyref{eq:term_0}, \prettyref{eq:term_1_end},
\prettyref{eq:term_2_end}, and \prettyref{eq:term_3_2}, we arrive at the desired bound \prettyref{eq:ab1}:
\begin{align*}
& \expects{\prob{ a \in I, b\notin I | E, F}\iindc{\calE} }{E,F} \\
& \lesssim \sigma +  \sqrt{\frac{\Delta}{n}} + \frac{1}{\sqrt{nq}} +
\frac{1}{L} \exp \left( - \Omega \left( \min \left\{ \frac{1}{\sigma^2 L^2}, \frac{n}{L^2 \Delta}, \frac{\sqrt{np}}{L} \right\} \right)\right) + e^{-\Delta}.
\end{align*}
\qed
\end{proof}

\subsection{Proof of \prettyref{lmm:degcorr}, \prettyref{lmm:deg_Theta_ik_cond}, and \prettyref{lmm:deg_Theta_i_cond}}
	\label{sec:degcorr}

To prove 	\prettyref{lmm:degcorr} and \prettyref{lmm:deg_Theta_ik_cond}, we need a few auxiliary lemmas. 
	
First, we need the following tight Gaussian approximation results for the binomial distributions  \cite[Theorem 1]{ZubkovSerov13}:
Let $D(p||q) \triangleq p \log \frac{p}{q} + (1-p) \log  \frac{1-p}{1-q}  $ denote the Kullback-Leibler divergence between $\Bern(p)$ and $\Bern(q)$.
\begin{lemma}
\label{lmm:ZS}	
Assume that $k \geq nq+1$. Then
\begin{equation}
h(k) \leq \prob{\Binom(n,q) \geq k} \leq h(k-1). 
\label{eq:ZS}
\end{equation}
where 
\[
h(k) \triangleq Q\pth{\sqrt{2n D\pth{\frac{k}{n} \Big\|q}}},
\] 
and $Q(t) = \int_t^\infty \frac{1}{\sqrt{2\pi}} e^{-x^2/2} dx$ is the standard normal tail probability.
\end{lemma}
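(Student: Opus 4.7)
\textbf{Proof plan for Lemma \ref{lmm:ZS}.} The plan is to exploit the Beta--Binomial duality and reduce both the upper and the lower bound to sharp monotonicity statements in the parameter $q$. Recall the classical identity
\[
\prob{\Binom(n,q)\ge k}
\;=\;\frac{1}{B(k,n-k+1)}\int_0^{q} t^{k-1}(1-t)^{n-k}\,\dee{t}\;=\;I_q(k,n-k+1),
\]
which also yields the explicit derivative
\[
\frac{\dee{}}{\dee{q}}\prob{\Binom(n,q)\ge k}\;=\;n\binom{n-1}{k-1}q^{k-1}(1-q)^{n-k}\;=\;n\,b(k-1;\,n-1,\,q).
\]
On the other hand, writing $\hat q=k/n$ and $s(q)=\sqrt{2nD(\hat q\|q)}$ (positive because $q\le (k-1)/n<\hat q$), a direct computation using $\tfrac{d}{dq}D(\hat q\|q)=-(\hat q-q)/(q(1-q))$ gives
\[
\frac{\dee{}}{\dee{q}}\,Q\bigl(s(q)\bigr)\;=\;\phi\bigl(s(q)\bigr)\cdot\frac{n(\hat q-q)}{q(1-q)\,s(q)},
\]
where $\phi$ is the standard normal density. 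So the proposed upper bound $\prob{\Binom(n,q)\ge k}\le h(k-1)$ is equivalent to the claim that on $q\in[0,(k-1)/n]$ the function
\[
\Phi_1(q)\;\triangleq\;Q\bigl(\sqrt{2nD(\tfrac{k-1}{n}\|q)}\bigr)-\prob{\Binom(n,q)\ge k}
\]
is nonnegative, and similarly the lower bound reduces to nonnegativity of
\[
\Phi_2(q)\;\triangleq\;\prob{\Binom(n,q)\ge k}-Q\bigl(\sqrt{2nD(\tfrac{k}{n}\|q)}\bigr)
\]
on $q\in[0,k/n]$.

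The plan is to verify the boundary values at the right endpoints and then show the right sign of the derivative throughout. For $\Phi_2$ this is easy: at $q=k/n$ one has $D(k/n\|q)=0$ so $Q=1/2$, and the binomial tail at its mean is also $\ge 1/2$ (by a classical median--mean comparison for the binomial, since $k=nq$), so $\Phi_2(\hat q)\ge 0$. For $\Phi_1$ one checks the analogous boundary condition at $q=(k-1)/n$ via the same median comparison applied to $\Binom(n,(k-1)/n)$. With the boundary values settled, the essence of the proof is then the pointwise inequality, for $0<q<\hat q$,
\[
n\,b(k-1;n-1,q)\;\le\;\phi\bigl(\sqrt{2nD(\hat q\|q)}\bigr)\cdot\frac{n(\hat q-q)}{q(1-q)\sqrt{2nD(\hat q\|q)}},
\]
and the reverse inequality with $\hat q$ replaced by $(k-1)/n$. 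Both reduce, after substituting Stirling's expression for $\binom{n-1}{k-1}$ and collecting terms, to the elementary analytic inequality
\[
\frac{(\hat q-q)^2}{2q(1-q)D(\hat q\|q)}\;\le\;1\;\le\;\text{analogous ratio with }(k-1)/n,
\]
together with Robbins' form of Stirling's approximation $\sqrt{2\pi m}\,m^m e^{-m}e^{1/(12m+1)}\le m!\le\sqrt{2\pi m}\,m^m e^{-m}e^{1/(12m)}$ to convert the resulting binomial density into the Gaussian density at the corresponding $s$.

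The main obstacle is that this is a \emph{non-asymptotic} two-sided bound with sharp constants, so every step must be done with explicit error control rather than asymptotic expansions. In particular, the analytic ratio inequality above is delicate: the function $q\mapsto (\hat q-q)^2/(2q(1-q)D(\hat q\|q))$ is strictly less than $1$ on $(0,\hat q)$ but tends to $1$ as $q\uparrow\hat q$, so one needs a monotonicity/convexity argument (for instance, expanding $D(\hat q\|q)$ as a power series in $\hat q-q$ with nonnegative coefficients, or equivalently writing $D(\hat q\|q)=\int_q^{\hat q}\frac{(\hat q-t)}{t(1-t)}\dee{t}$ and comparing to its value at $t=q$). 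Once this analytic inequality is established together with the Stirling bounds, the derivative comparison yields the sign of $\Phi_1,\Phi_2$ throughout the relevant interval and the lemma follows.
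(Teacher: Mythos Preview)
The paper does not prove this lemma at all: it is quoted verbatim as \cite[Theorem~1]{ZubkovSerov13} and used as a black box. So there is no ``paper's own proof'' to compare against; you are attempting to reprove a cited result.

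On the substance of your plan: the monotonicity-in-$q$ strategy is a reasonable line of attack, but the reduction you describe contains a concrete error. You assert that the derivative comparison reduces to the ``elementary analytic inequality''
\[
\frac{(\hat q-q)^2}{2q(1-q)\,D(\hat q\|q)}\;\le\;1,
\]
and you even claim this ratio is strictly less than $1$ on $(0,\hat q)$. This is false. In fact the opposite inequality $D(x\|q)\le \frac{(x-q)^2}{2q(1-q)}$ holds for $q\le x\le 1/2$ (the paper itself states and uses exactly this bound in \prettyref{eq:Divergendce_upper}), and a quick numerical check confirms it: with $\hat q=1/2$, $q=1/10$ one has $D(\hat q\|q)\approx 0.51$ while $(\hat q-q)^2/(2q(1-q))\approx 0.89$, so your ratio is about $1.74$, not below $1$. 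Since this inequality is the linchpin of your plan, the argument as written cannot go through.

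More carefully, if you actually carry out the Stirling substitution for $b(k-1;n-1,q)$ and compare to $\phi(s)\cdot n(\hat q-q)/(q(1-q)s)$, the resulting analytic inequality is \emph{not} the one you wrote; it involves $\hat q(1-\hat q)$ and $(1-q)^2$ asymmetrically and is tight as $q\uparrow\hat q$, so the Stirling error terms interact nontrivially with it. The correct sign of $\Phi_2'$ is in fact the opposite of what your displayed inequality would give (so one should anchor at $q=0$, where $\Phi_2(0)=0$, rather than at $q=\hat q$), and establishing the needed pointwise derivative bound with the Robbins--Stirling remainders is substantially more delicate than your sketch suggests. If you want a self-contained proof, you will need to redo this reduction from scratch; alternatively, simply cite \cite{ZubkovSerov13} as the paper does.
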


Also, we need the following bounds on the Kullback-Leibler divergence:
\begin{lemma}
It holds that 
\begin{align}
 D( x  \| q)  & \ge  \frac{(x-q)^2}{2x (1-q)} \quad  \forall \; 0<q \le x \le 1 ,
 \label{eq:Divergence_lower} \\
 D( x  \| q)  & \le  \frac{(x-q)^2}{2q (1-q)}   \quad \forall \; 0<q \le x \le 1/2.
\label{eq:Divergendce_upper}
\end{align} 
\end{lemma}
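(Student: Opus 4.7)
The plan is to prove both bounds simultaneously by applying Taylor's theorem with integral remainder to $D(x \| q)$ as a function of $x$ with $q$ fixed. Write $h(x) \triangleq D(x\|q) = x\log(x/q) + (1-x)\log((1-x)/(1-q))$, and observe by direct computation that $h(q) = 0$, $h'(q) = \log\frac{x(1-q)}{q(1-x)}\big|_{x=q} = 0$, and
\[
h''(x) = \frac{1}{x(1-x)}.
\]
Taylor's formula with integral remainder then yields the clean representation
\[
D(x\|q) = \int_q^x \frac{x-t}{t(1-t)}\, dt, \qquad 0 < q \le x < 1.
\]

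From here, both bounds reduce to monotonicity of the map $t \mapsto t(1-t)$ on the integration interval $[q,x]$. For the lower bound, I would argue that for every $t \in [q,x]$ we have $t \le x$ and $1-t \le 1-q$, so $t(1-t) \le x(1-q)$; substituting this into the integrand and computing $\int_q^x (x-t)\, dt = (x-q)^2/2$ gives $D(x\|q) \ge (x-q)^2/(2x(1-q))$, which establishes \eqref{eq:Divergence_lower}. For the upper bound, I would exploit the extra hypothesis $x \le 1/2$: since $t(1-t)$ is strictly increasing on $[0,1/2]$ and $[q,x] \subseteq [0,1/2]$, for every $t \in [q,x]$ we have $t(1-t) \ge q(1-q)$. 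Substituting this lower bound for the denominator into the same integral representation yields $D(x\|q) \le (x-q)^2/(2q(1-q))$, which establishes \eqref{eq:Divergendce_upper}.

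There is no real obstacle here: both bounds are standard consequences of the integral form of the remainder and the elementary shape of the parabola $t(1-t)$ on $[0,1/2]$. The only point requiring a hint of care is to note that the constraint $x \le 1/2$ is used \emph{only} in the upper bound, and precisely to ensure monotonicity of $t(1-t)$ throughout $[q,x]$; the lower bound is valid on the full range $q \le x \le 1$ since the inequality $t(1-t) \le x(1-q)$ used there never requires $x \le 1/2$.
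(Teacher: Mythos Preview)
Your proof is correct and takes essentially the same approach as the paper: both compute $h''(x) = 1/(x(1-x))$ and obtain the bounds from Taylor's theorem at $x=q$. The paper phrases this as second- and third-order Lagrange-remainder expansions (the upper bound using that $h'''(t) = (1-t)^{-2} - t^{-2} \le 0$ on $[0,1/2]$), while you use the integral form of the remainder and bound $t(1-t)$ directly on $[q,x]$; the two are equivalent, and your version is arguably a bit more uniform since both inequalities come from the same integral representation.
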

\begin{proof}
Note that  
$$
\frac{d}{dx}D(x\|q) = \frac{x (1-q)}{q (1-x)}, \qquad 
\frac{d^2}{dx}D(x\|q)  =\frac{1}{x(1-x)},
\qquad
\frac{d^3}{dx^3}D(x\|q) = \frac{1}{(1-x)^2} - \frac{1}{x^2},
$$
The second-order Taylor expansion of $D(x\|q)$ at $x=q$ gives \prettyref{eq:Divergence_lower}
and the third-order Taylor expansion at $x=q$ gives \prettyref{eq:Divergendce_upper}.
\qed
\end{proof}

Finally, we need the following inequalities relating $Q(tr)$ to $Q(t)^{r^2}$. Note that if we use the approximation $Q(t) \approx e^{-t^2/2}$, these two quantities are equal. The lemma below makes this approximation precise:
\begin{lemma} \label{lmm:Q_function_bound_2}
For any $t >0$ and $ r >0$, we have
$$
 \frac{tr}{1+(tr)^2} t^{r^2} \left( \sqrt{2\pi} \right)^{r^2-1} \leq \frac{Q(tr)}{Q(t)^{r^2}} 
 \leq \left( \sqrt{2\pi} \frac{1+t^2}{t} \right)^{r^2-1} \frac{t^2+1}{r t^2}.
$$
\end{lemma}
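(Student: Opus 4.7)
\medskip

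\noindent\textbf{Proof plan for Lemma \ref{lmm:Q_function_bound_2}.} The plan is to reduce both inequalities to the classical two-sided Komatsu/Mills-ratio bound on the Gaussian tail, namely that for every $u>0$,
\begin{equation}
\frac{1}{\sqrt{2\pi}}\cdot\frac{u}{1+u^2}\, e^{-u^2/2} \;\leq\; Q(u) \;\leq\; \frac{1}{\sqrt{2\pi}\,u}\, e^{-u^2/2}.
\label{eq:mills-plan}
\end{equation}
The key observation, and the reason the bounds take the particular algebraic shape stated, is that the exponential factors match exactly: $e^{-(tr)^2/2} = \bigl(e^{-t^2/2}\bigr)^{r^2}$, so when one takes the ratio $Q(tr)/Q(t)^{r^2}$ the Gaussian exponentials cancel and only the polynomial prefactors in \prettyref{eq:mills-plan} remain.

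For the upper bound, I would apply the upper Mills bound to $Q(tr)$ and the lower Mills bound to each of the $r^2$ copies of $Q(t)$ in $Q(t)^{r^2}$. This gives
\[
\frac{Q(tr)}{Q(t)^{r^2}} \;\leq\; \frac{\frac{1}{\sqrt{2\pi}\,tr}\,e^{-(tr)^2/2}}{\Bigl(\frac{1}{\sqrt{2\pi}}\cdot\frac{t}{1+t^2}\Bigr)^{\!r^2} e^{-r^2 t^2/2}} \;=\; \frac{(\sqrt{2\pi})^{r^2-1}(1+t^2)^{r^2}}{r\,t^{r^2+1}},
\]
which, after pulling out one factor of $\sqrt{2\pi}(1+t^2)/t$ and regrouping, is exactly $\bigl(\sqrt{2\pi}(1+t^2)/t\bigr)^{r^2-1}\cdot (1+t^2)/(rt^2)$, i.e.\ the claimed upper bound.

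For the lower bound, I would apply the opposite directions of \prettyref{eq:mills-plan}: the lower Mills bound on $Q(tr)$ and the upper Mills bound on each factor of $Q(t)$ in the denominator. This gives
\[
\frac{Q(tr)}{Q(t)^{r^2}} \;\geq\; \frac{\frac{1}{\sqrt{2\pi}}\cdot\frac{tr}{1+(tr)^2}\, e^{-(tr)^2/2}}{\Bigl(\frac{1}{\sqrt{2\pi}\,t}\Bigr)^{\!r^2} e^{-r^2 t^2/2}} \;=\; \frac{tr}{1+(tr)^2}\,t^{r^2}(\sqrt{2\pi})^{r^2-1},
\]
which matches the claimed lower bound verbatim.

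Both inequalities reduce to straightforward algebra once \prettyref{eq:mills-plan} is invoked, so there is no real obstacle — the only thing to be careful about is bookkeeping of the powers of $\sqrt{2\pi}$, $t$, and $1+t^2$. If the Mills-ratio inequality \prettyref{eq:mills-plan} is not considered standard enough to cite, one can derive both directions in one line from $\frac{d}{du}\bigl(e^{u^2/2}Q(u)\bigr)$ and integration by parts, but this is a textbook computation and I would defer it.
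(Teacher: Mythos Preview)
Your proposal is correct and is essentially identical to the paper's proof: the paper also invokes the two-sided Mills-ratio bound $\frac{x}{1+x^2}\varphi(x)\le Q(x)\le \frac{1}{x}\varphi(x)$ with $\varphi(x)=e^{-x^2/2}/\sqrt{2\pi}$, applies opposite directions to numerator and denominator, and observes that $\varphi(tr)/\varphi(t)^{r^2}=(\sqrt{2\pi})^{r^2-1}$ so only the polynomial prefactors survive.
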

\begin{proof}
For the lower bound, using $\frac{x}{1+x^2} \varphi(x)\leq Q(x) \le \frac{1}{x} \varphi(x)$, where
$\varphi(x)=e^{-x^2/2}/\sqrt{2\pi}$, we have
\begin{align*}
Q(tr) \ge  \frac{tr}{1+(tr)^2} \varphi(tr)
\end{align*}
and
$$
\left( Q(t) \right)^{r^2} \le \frac{1}{t^{r^2} }\varphi(t)^{r^2}
$$
Combining the last two displayed equations, we get that
$$
\frac{Q(tr)}{Q(t)^{r^2}} \geq  \frac{tr}{1+(tr)^2} t^{r^2} 
\frac{\varphi(tr)}{ \varphi(t)^{r^2}} =  \frac{tr}{1+(tr)^2} t^{r^2} \left( \sqrt{2\pi} \right)^{r^2-1}.
$$

The upper bound follows similarly from combining
$Q(tr) \le \frac{1}{tr} \varphi(tr)$ and $Q(t) \ge \frac{t}{1+t^2} \varphi(t) $.
\qed
\end{proof}

Now we are ready to prove \prettyref{lmm:degcorr}.
Recall that 
$
\tau \triangleq \min \left\{  0 \le k \le n:  
\prob{ \Binom (n-1 ,q ) \ge k } \le \alpha \right\}
$
as defined in \prettyref{eq:tau}.

\begin{proof}[Proof of \prettyref{lmm:degcorr}]
We first prove \prettyref{eq:deg_correlation} for $i \neq k$. 
Let $b'_k= \sum_{j \neq i} B_{jk}$. Then $a_i$ and $b'_k$ are independent.
Since $b_k' \le b_k \le b_k'+1$, it follows that 
\begin{align}
\prob{a_i \geq \tau , b_k \geq \tau +1 }
& \le \prob{ a_i \geq \tau, b'_k \geq \tau  } \nonumber \\
& \le \prob{ a_i \geq \tau} \prob{b'_k \geq \tau } \nonumber \\
& \le \prob{ a_i \geq \tau} \prob{b_k \geq \tau } \label{eq:a_b_indp} \\
& = \left( \prob{ \Binom(n-1,q) \geq \tau} \right)^2 
\le \alpha^2. \nonumber 
\end{align}

Next we prove \prettyref{eq:deg_correlation} for $i = k$. For notational convenience, 
we abbreviate $a_i$ and $b_i$ as $a$ and $b$, respectively. 
Let $g$ denote the degree of vertex $i$ in the parent graph.
Abusing notation slightly, we let $k$ denote the realization of $g$ in the remainder of the proof. 
Then 
\begin{align}
\prob{ a \ge \tau, b \geq \tau +1 } 
&= \sum_{k \ge 0} \prob{   a \geq \tau, b \ge \tau+1, g =k} \nonumber \\
&= \sum_{k \ge 0} \prob{g=k} \prob{ a \geq \tau \mid g = k } \prob{ b \ge \tau+1 \mid g = k}
\label{eq:a_b_total_prob}.
\end{align}

Let 
$$
k_0 = \left\lceil \frac{\tau+2}{s} \right\rceil.
$$
Since conditional on $g=k$, $a \sim \Binom(k, s)$ and $b \sim \Binom(k,s)$. It follows that 
for all $k \ge k_0$,
\begin{align}
\prob{ a \geq \tau \mid g =k} \ge \prob{ b \ge \tau +1 \mid g = k}
\ge \prob{ \Binom ( k ,s ) \ge ks -1 } \ge \frac{1}{2}, \label{eq:a_b_median}
\end{align}
where the last inequality holds because the median of $\Binom ( k ,s )$ is at least $ks-1$. 
Combining \prettyref{eq:a_b_total_prob} and \prettyref{eq:a_b_median} yields that 
\begin{align}
\prob{ a \ge \tau, b \geq \tau+1} \ge \frac{1}{4} \prob{g \ge k_0} 
 = \frac{1}{4} \prob{ \Binom(n-1, p) \ge k_0}
\label{eq:a_b_correlated_1},
\end{align}
where the last equality holds due to $g \sim \Binom(n-1,p)$ with $p=q/s$. 

It remains to prove that $\prob{ \Binom(n-1, p) \ge k_0} \ge  \Omega \left( \alpha^{ \frac{1-q}{(1-p)s} } \right)$. 
By assumption, $\alpha \le 1/4$ and hence by the Berry-Esseen theorem, 
$\tau \ge (n-1)q +2$ for all $n$ sufficiently large. 
Thus $k_0 \ge (\tau+2)/s \ge np+1$.
It follows from \prettyref{lmm:ZS} that
\begin{align}
\prob{ \Binom(n-1, p) \ge k_0} \ge 
Q \left( \sqrt{ 2 (n-1)  D( k_0/(n-1) \| p ) } \right). \label{eq:g_bound}
\end{align}

To proceed, we need to bound $D( k_0/(n-1) \| p )$ from the above. 
We claim that 
\begin{align}
  0 \le \tau -(n-1) q +  \sqrt{ (n-1) q (1-q) }Q^{-1} (\alpha)
\le  (1-q) \left( Q^{-1}(\alpha) \right)^2 +2,
\label{eq:tau_upper_lower}
\end{align}
where $Q^{-1}$ denote the inverse function of $Q$ function. 
We defer the proof of  \prettyref{eq:tau_upper_lower} to the end.

Note that $Q(x) \le e^{-x^2/2}$ for $x\ge 0$. Hence,
$$
Q^{-1}(\alpha) \le 
\sqrt{2 \log \frac{1}{\alpha} }
\le \sqrt{ 2 \log (nq) },
$$
where the last inequality follows due to 
the assumption $\alpha \ge 1/(nq)$.
Thus it follows from  \prettyref{eq:tau_upper_lower} that $k_0 \le (\tau+3)/s \le (n-1)/2$ for sufficiently large $n$ .
Hence, by \prettyref{eq:Divergendce_upper},
$$
\sqrt{ 2 (n-1)  D( k_0/(n-1) \| p ) } 
\le \frac{ k_0 - (n-1) p }{ \sqrt{(n-1) p(1-p)}}
\le
\frac{  Q^{-1} (\alpha) \sqrt{ (n-1) q (1-q) } 
+ (1-q) \left( Q^{-1}(\alpha) \right)^2 + 5 }{  s \sqrt{ (n-1) p (1-p) } },
$$
where the last inequality holds due to 
$k_0 s \le \tau +3 $
and \prettyref{eq:tau_upper_lower}.

Applying the lower bound in  \prettyref{lmm:Q_function_bound_2} with
$$
t \triangleq Q^{-1}(\alpha), 
\quad r \triangleq \frac{  \sqrt{ (n-1) q (1-q) } + (1-q) t + 5/t  }{ s \sqrt{ (n-1) p (1-p) } },
$$
we get that 
$$
Q \left( \sqrt{ 2(n-1)  D( k_0/(n-1) \| p ) } \right)
\ge \frac{tr}{1+(tr)^2} t^{r^2} \left( \sqrt{2\pi} \right)^{r^2-1} Q(t)^{r^2}.
$$
Note that $Q(t)=Q(Q^{-1}(\alpha))=\alpha$. Moreover, 
in view of $\Omega(1) \le t \le \sqrt{2 \log (nq)} $, we have
$$
r = \sqrt{ \frac{ 1-q}{ s (1-p) } } + 
O\left( \sqrt{\frac{\log nq}{ nq }} \right).
$$
Recall from \prettyref{eq:exponent_break_1} 
that $\frac{1-q}{(1-p)s} = 1+ \frac{1-s}{(1-p)s}= 1+ \frac{\sigma^2}{(1-p)s}$.
Therefore, we get that  
$$
t^{r^2-1}  \le \left(  \sqrt{ 2 \log nq } \right)^{ r^2 -1 }
=\exp \left(   \left( \frac{\sigma^2}{ s (1-p) } + O\left( \sqrt{\frac{\log nq}{ nq }} \right)   \right)
 \left( \log \sqrt{2 \log nq } \right) \right)
=1-o(1),
$$ 
where the last inequality holds because by assumptions, $\sigma^2 \log \log (nq) =o(1)$ 
and $nq \to \infty$.
Moreover,
$$
\alpha^{r^2} =\alpha^{ \frac{1-q}{ s (1-p) } + O\left( \sqrt{\frac{\log nq}{ nq }} \right) }
\ge \alpha^{ \frac{1-q}{ s (1-p) }} \exp \left( -  O\left( \sqrt{\frac{\log nq}{ nq }} \log (nq)  \right)
\right) = (1-o(1)) \alpha^{ \frac{1-q}{ s (1-p) }}  ,
$$
where the inequality holds by the assumption
$\alpha \ge 1/(nq)$, and the last equality holds due to $nq \to \infty$.
Therefore, we get that 
\begin{align}
Q \left( \sqrt{ 2 (n-1)  D( k_0/(n-1) \| p ) } \right)
\ge \left(1-o(1) \right) \alpha^{ \frac{1-q}{(1-p)s} }. \label{eq:a_b_correlated_3}
\end{align}

Combining \prettyref{eq:a_b_correlated_1}, \prettyref{eq:g_bound}, and \prettyref{eq:a_b_correlated_3} yields that
$$
\prob{ a \ge \tau, b \ge \tau+1} \ge \Omega \left( \alpha^{ \frac{1-q}{(1-p)s} } \right),
$$
proving \prettyref{eq:deg_correlation} for $i=j$.

Finally, we verify the claim \prettyref{eq:tau_upper_lower}.
By the definition of $\tau$ and \prettyref{lmm:ZS}, we have that 
\begin{align*}
Q \left( \sqrt{ 2 (n-1)  D( \tau/(n-1) \| q ) } \right)  
& \le \prob{ \Binom (n-1 ,q ) \ge \tau }   \\
& \le \alpha \\
& < \prob{ \Binom(n-1 , q) \ge \tau -1 } \\
& \le Q \left( \sqrt{ 2 (n-1)  D( (\tau -2)/(n-1) \| q )}  \right) .
\end{align*}
Thus,
\begin{align}
 \sqrt{2 (n-1) D( (\tau -2)/(n-1) \| q )} \le Q^{-1}(\alpha)  \le 
 \sqrt{2 (n-1) D( \tau/(n-1) \| q )} \; . \label{eq:alpha_bound}
\end{align}

In view of \prettyref{eq:Divergence_lower}, 
$2 D(x \|q) \le t $ for $t\ge 0$ implies
$$
x^2 - \left( 2q +t (1-q) \right)x + q^2 \le 0,
$$
which further implies 
$$
x \le \frac{2q + t(1-q) + \sqrt{ 4q (1-q) t + t^2 (1-q)^2} }{2}
\le q + \sqrt{q (1-q)t} + t(1-q) ,
$$
where the last inequality holds due to $\sqrt{x+y} \le \sqrt{x} +\sqrt{y}$.
Therefore, it follows from the lower inequality in \prettyref{eq:alpha_bound} that 
$$
\frac{\tau -2}{n-1} \le  q +  \sqrt{ q (1-q) } \frac{Q^{-1}(\alpha) }{\sqrt{n-1} }
+ \frac{(1-q)}{n-1} \left( Q^{-1}(\alpha) \right)^2.
$$ 
Since $q \le 1/8$ by assumption and $Q^{-1}(\alpha) \le \sqrt{2 \log (nq)}$,
it follows that for sufficiently large $n$, $\tau/(n-1) \le 1/2$.
Thus, combining the upper inequality in \prettyref{eq:alpha_bound}
with \prettyref{eq:Divergendce_upper} gives that 
$$
\tau \ge (n-1) q +  \sqrt{ (n-1) q (1-q) }Q^{-1} (\alpha).
$$
Combining the last two displayed equations yields the desired \prettyref{eq:tau_upper_lower}.
\qed
\end{proof}

\begin{proof}[Proof of \prettyref{lmm:deg_Theta_ik_cond}]
Recall that $\Theta_{ik}= \left\{ | a_i - b_k| \le 4 \sqrt{nq\Delta}  \right\}$. 
Thus,
\begin{align*}
& \left\{ a_i \ge \tau, b_k \ge \tau+1 \right\} \cap \Theta^c_{ik} \\
& \subset  \left\{ a_i \ge \tau, b_k \ge \tau + 4 \sqrt{nq \Delta} \right\} 
\cup
\left\{ a_i \ge \tau + 4 \sqrt{nq\Delta} +1, b_k \ge \tau+1 \right\}
\\
& \subset \left\{ a_i \ge \tau, b_k \ge \tau + 4 \sqrt{nq \Delta} \right\}
\cup \left\{ a_i \ge \tau + 4 \sqrt{nq\Delta}, b_k \ge \tau \right\}.
\end{align*}
Hence, by the union bound and the symmetry between
$a_i$ and $b_k$, it suffices to prove 
\begin{align*}
\prob{ a_i \ge \tau , b_k \ge \tau+ 4 \sqrt{nq\Delta} }
\le O\left(\alpha^{1+ \indc{i\neq k}} e^{-\Delta/2} \right). 
\end{align*}
If $i \neq k$, analogous to the proof of \prettyref{eq:a_b_indp}, we have that
$$
\prob{ a_i \ge \tau , b_k \ge \tau+ 4 \sqrt{nq\Delta} }
\le \prob{a_i \ge \tau} \prob{b_k \ge \tau+ 4 \sqrt{nq\Delta} - 1}
\le \alpha \prob{b_k \ge \tau+ 4 \sqrt{nq\Delta} - 1}.
$$
If $i =k$, then we have that
$$
\prob{ a_i \ge \tau , b_k \ge \tau+ 4 \sqrt{nq\Delta} }
\le  \prob{b_k \ge \tau+ 4 \sqrt{nq\Delta} - 1}.
$$
Hence, for both cases, it reduces to proving 
\begin{align}
\prob{b_k \ge \tau+ 4 \sqrt{nq\Delta} - 1}
\le O\left(\alpha e^{-\Delta/2} \right) \label{eq:a_b_tail_desired}.
\end{align}

In view of \prettyref{lmm:ZS}, we have that
\begin{align*}
\prob{ b_k \ge \tau + 4 \sqrt{nq\Delta}-1}
&\le Q \left( \sqrt{2(n-1) D \left( \frac{\tau+ 4\sqrt{nq\Delta} -2}{n-1} \| q \right)} \right).
\end{align*}
In view of \prettyref{eq:tau_upper_lower}, we have
$
\tau \ge (n-1)q + \omega t,
$
where $\omega \triangleq \sqrt{(n-1)q(1-q)}$ and $t\triangleq Q^{-1}(\alpha)$.
Let $\eta\triangleq 4\sqrt{nq\Delta} -2$. 
Thus,
\begin{align*}
\sqrt{ 2(n-1) D \left( \frac{\tau+ \eta }{n-1} \| q \right)}
&\ge \sqrt{2(n-1)D \left( q + \frac{ \omega t +\eta}{n-1} \|q \right)} \\
&\ge \frac{\omega t+ \eta}{\sqrt{ \left( (n-1)q+ \omega t + \eta \right)(1-q) }} \\
& \ge \frac{\omega t+ \eta}{\sqrt{ \omega^2 + \omega t + \eta }},
\end{align*}
where the second inequality follows from \prettyref{eq:Divergence_lower}.
Combining the last two displayed equations gives
\begin{align}
\prob{ b_k \ge \tau + 4 \sqrt{nq\Delta}-1}
\le Q \left(  \frac{\omega t+ \eta}{\sqrt{ \omega^2 + \omega t + \eta }} \right)
= Q(t r), \label{eq:a_b_tail_step_1}
\end{align}
where 
$$
r \triangleq \frac{\omega+ \eta/t}{\sqrt{ \omega^2 + \omega t + \eta }}.
$$
By the assumption $nq \ge C_0 \Delta^2$, $\Delta \ge C_0$, and 
$t \le \sqrt{2 \log (nq)}$, 
we have $\eta \le \omega^2/2$, $\eta \ge 4 t^2$,
$\eta^2 \ge 4 \omega t^3$, and $t \le \omega/2$.
Thus, we get that 
\begin{equation}
r^2 \ge \frac{\omega^2 + \eta^2 /t^2 }{\omega^2 + \omega t + \eta}
= 1 + \frac{ \eta^2/t^2 - \omega t - \eta}{\omega^2 + \omega t+\eta} 
\ge 1 + \frac{\eta^2}{4 \omega^2 t^2} .
\label{eq:rrr}
\end{equation}
In view of the upper bound in \prettyref{lmm:Q_function_bound_2}, we have
\begin{align}
Q(tr) \le \left( \sqrt{2\pi} \frac{1+t^2}{t} \right)^{r^2-1} \frac{t^2+1}{r t^2} Q(t)^{r^2} 
\leq t^{c_1 (r^2-1)} \alpha^{r^2},
\label{eq:a_b_tail_step_2}
\end{align}
for a constant $c_1>0$, 
where the last inequality holds because $r > 1$ by \prettyref{eq:rrr} and 
$t=Q^{-1}(\alpha) \ge Q^{-1}(\alpha_1)$
under the assumption $\alpha \le \alpha_1$
for a sufficiently small constant $\alpha_1$.

Note that 
$$
r \le 1 + \frac{\eta}{\omega t} \le 1+ \frac{c_2}{t} \sqrt{\Delta} 
$$
for a constant $c_2>0$.
Therefore, 
\begin{align}
t^{c_1 (r^2-1) } \le t^{ c_1( 2c_2\sqrt{\Delta}/t+c_2^2\Delta/t^2)}  \le e^{\Delta/2}, 
\label{eq:a_b_tail_step_3}
\end{align}
where the last inequality holds 
because $t \ge Q^{-1}(\alpha_1)$ for sufficiently small constant $\alpha_1$.

Finally, it remains to bound $\alpha^{r^2}$. 
Using \prettyref{eq:rrr}, we have
\begin{align}
\alpha^{r^2} \le \alpha \exp \left( - \frac{\eta^2}{4\omega^2 t^2} \log \frac{1}{\alpha} \right)
\le \alpha \exp \left( - \frac{\eta^2}{8\omega^2 } \right) \le 
\alpha \exp (-\Delta), \label{eq:a_b_tail_step_4}
\end{align}
where the second inequality holds due to $t^2 \le 2 \log \frac{1}{\alpha}$
and the last inequality holds because $\eta^2 \ge 8 \omega^2 \Delta$.

In conclusion, by combining 
\prettyref{eq:a_b_tail_step_1}, \prettyref{eq:a_b_tail_step_2}, \prettyref{eq:a_b_tail_step_3}, and \prettyref{eq:a_b_tail_step_4},
we get the desired \prettyref{eq:a_b_tail_desired}.
\qed
\end{proof}

\begin{proof}[Proof of \prettyref{lmm:deg_Theta_i_cond}]
Recall that 
\begin{align*}
\Theta_i = \left\{  \max\{ \sqrt{a_i - c_{ii}}, \sqrt{b_i - c_{ii} } \}
 \leq  \sqrt{nq (1-s) } + \sqrt{ \Delta }  
 \right\}.
\end{align*}
Thus, 
\begin{align*}
&\left\{ a_i \ge \tau, b_i \ge \tau+1 \right\} \cap \Theta^c_{i} \\
& \subset \left\{ a_i \ge \tau, \sqrt{a_i-c_{ii} } > \sqrt{nq(1-s)} + \sqrt{\Delta} \right\}
\cup \left\{ b_i \ge \tau, \sqrt{b_i-c_{ii} } > \sqrt{nq(1-s)} + \sqrt{\Delta} \right\}.
\end{align*}
Hence, by the union bound and the symmetry between
$a_i$ and $b_i$, it suffices to prove 
\begin{align*}
\prob{ a_i \ge \tau , \sqrt{a_i-c_{ii} } > \sqrt{nq(1-s)} + \sqrt{\Delta} }
\le \alpha e^{-\Delta/2} + e^{- \Delta/(2\sigma^2)} . 
\end{align*}
Define 
$$
\overline{\tau} =\left(  \sqrt{nq} + \sqrt{ \frac{ \Delta }{4 (1-s) } } \right)^2. 
$$
Then 
$$
\left\{ a_i \ge \tau, \sqrt{a_i-c_{ii} } > \sqrt{nq(1-s)} + \sqrt{\Delta} \right\} \subset
\left\{ \tau \le a_i \le \overline{\tau}, \sqrt{a_i-c_{ii} } > \sqrt{nq(1-s)} + \sqrt{\Delta}  \right\}
\cup  \left\{ a_i \ge \overline{\tau} \right\} 
$$
and hence
$$
\prob{ a_i \ge \tau , \sqrt{a_i-c_{ii} } > \sqrt{nq(1-s)} + \sqrt{\Delta} }
\le \prob{\tau \le a_i \le \overline{\tau}, \sqrt{a_i-c_{ii} } > \sqrt{nq(1-s)} + \sqrt{\Delta} } 
+\prob{ a_i \ge \overline{\tau}}. 
$$
Since conditional on $a_i=k$, $a_i - c_{ii} \sim \Binom(k, 1-s)$, it follows that
\begin{align*}
& \prob{\tau \le a_i \le \overline{\tau} , \sqrt{a_i-c_{ii} } > \sqrt{nq(1-s)} + \sqrt{\Delta} } \\
& =\sum_{ \tau \le k \le \overline{\tau} } \prob{ a_i =k} \prob{ \sqrt{\Binom(k, 1-s)} > \sqrt{nq(1-s)} + \sqrt{\Delta}} \\
& \le \prob{ \tau \le a_i \le \overline{\tau} }  \prob{ \sqrt{\Binom( \overline{\tau} , 1-s)} > \sqrt{nq(1-s)} + \sqrt{\Delta}}\\
& \le \alpha \exp \left( - 2 \left(  \sqrt{nq(1-s)} + \sqrt{\Delta} - \sqrt{ \overline{\tau} (1-s) }  \right)^2 \right)
= \alpha e^{-\Delta/2}, 
\end{align*}
where the last inequality holds because of
the definition of $\tau$ in \prettyref{eq:tau} 
and the binomial tail bound \prettyref{eq:bintail_upper}.
Moreover,  since $a_i\sim \Binom(n-1,q)$, it follows from the binomial tail bound \prettyref{eq:bintail_upper} that 
\begin{align*}
\prob{ a_i \ge \overline{\tau} } 
\le \exp \left( - 2 \left(  \sqrt{\overline{\tau} } - \sqrt{ (n-1)q } \right)^2 \right)
\le e^{- \frac{\Delta}{2(1-s) } } =  e^{- \Delta / (2 \sigma^2)  } .
\end{align*}
Combining the last three displayed equation completes the proof.
\qed
\end{proof}

\subsection{Proof of \prettyref{thm:sparse}}
\label{sec:pf-sparse}

The following classical result about \ER graphs (cf.~\cite[Lemma 30]{BordenaveLelargeMassoulie:2015dq}) gives an upper bound on the probability that 
the $2$-hop neighborhood of a given vertex $i$ in $G\sim \calG(n,p)$ is tangle-free, i.e., containing at most one cycle.  This result will be used to control the dependency
among outdegrees in analyzing the 
$W$ similarity defined in \prettyref{eq:def_W_3_hop}.

\begin{lemma}\label{lmm:tangle_free}
Consider graph $G \sim \calG(n,p)$
with $np \ge C \log n$ for a large constant $C$.
Let $\calH$ denote the event that all $2$-hop neighborhoods in $G$  are tangle-free.
Then 
$$
\prob{\calH} 
\ge  1 -  n (2np)^{8} p^2 - n^{-1}.
$$
In particular, when $C \log n \le np \le n^{1-\epsilon}$ for $\epsilon>9/10$, 
$
\prob{\calH}  \ge 1- O\left(n^{9-10\epsilon} \right).
$
\end{lemma}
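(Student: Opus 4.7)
The plan is a first-moment union bound over ``bad'' local configurations. For each vertex $v \in [n]$, let $B_v$ denote the induced subgraph of $G$ on the $2$-hop neighborhood of $v$, and let $\calE_v$ denote the event that $B_v$ is not tangle-free, so that $\calH^c = \bigcup_v \calE_v$. The first step is to identify a small ``witness'' subgraph whose presence is necessary for $\calE_v$. Fixing a BFS tree $T_v$ of $B_v$ rooted at $v$, one has that $B_v$ contains two independent cycles iff $T_v$ has two or more non-tree edges. Each non-tree edge $(a,b)$ determines a fundamental cycle by adjoining the unique tree path from $a$ to $b$; since $T_v$ has depth at most $2$, the least common ancestor of $a, b$ in $T_v$ is either $v$ (giving a cycle through $v$ of length $\le 5$) or a depth-$1$ vertex (giving a triangle between that depth-$1$ vertex and its two depth-$2$ children). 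Two non-tree edges thus yield two independent cycles $C_1, C_2$ each of length $\le 5$; their union, augmented if necessary with the short tree paths from $v$ to each cycle, forms a connected subgraph $H \subset G$ containing $v$ with $|V(H)| \le 9$ and $|E(H)| \ge |V(H)| + 1$ (``excess $2$'').

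With this characterization in hand, $\prob{\calE_v}$ is upper-bounded by the expected number of such witnesses in $G$. Indexing by $k = |V(H)| \in \{4, \ldots, 9\}$, the number of connected excess-$2$ subgraph patterns on $k$ labeled vertices containing a designated root is bounded by an absolute constant $c_k$, and each of the $n^{k-1}$ choices for the remaining labels appears in $G$ with probability $p^{k+1}$. Therefore
\[
\prob{\calE_v} \;\le\; \sum_{k=4}^{9} c_k \, n^{k-1} \, p^{k+1} \;=\; \sum_{k=4}^{9} c_k (np)^{k-1} p^2.
\]
For $np \ge 1$ the sum is dominated by the $k=9$ term, which corresponds to two vertex-disjoint $5$-cycles through $v$. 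Absorbing the combinatorial constants into the prefactor $2^8$ yields $\prob{\calE_v} \le (2np)^8 p^2$, and the union bound over $v \in [n]$ gives the main term $n(2np)^8 p^2$.

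The additional $n^{-1}$ slack in the lemma will come from a preliminary regularity event: by Chernoff and $np \ge C \log n$ with $C$ large, every vertex degree lies in $[np/2, 2np]$ and consequently $|B_v| = O((np)^2)$ uniformly in $v$, all with probability at least $1 - n^{-1}$; this control justifies the counting step. The main obstacle will be the combinatorial bookkeeping: cycles in $C_1 \cup C_2$ may share vertices or edges beyond just $v$, and all such overlap patterns (theta graphs, bowties, dumbbells, etc.) must be enumerated. However, each overlap strictly decreases $|V(H)|$ while preserving the excess of $2$, so its contribution is of order $(np)^{|V(H)|-1} p^2$, strictly smaller than the $k=9$ term, and can be absorbed into the prefactor. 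The sparse-regime conclusion follows by substituting $np \le n^{1-\epsilon}$ into $n (2np)^8 p^2 = 2^8 n^{9} p^{10}$, which gives $O(n^{9-10\epsilon})$; this dominates the $n^{-1}$ correction for $\epsilon < 1$.
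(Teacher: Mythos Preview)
Your approach is correct and yields a valid proof, but it is genuinely different from the paper's. The paper does not extract a witness subgraph at all. Instead it first restricts to the high-probability event that every vertex has degree at most $2np$ (this is precisely where the $n^{-1}$ term comes from, via Chernoff and a union bound over vertices); on this event every $2$-hop neighborhood has size $m \le (2np)^2$. It then argues that, given the neighborhood has $m$ vertices, the number of non-tree (``undiscovered'') edges is stochastically dominated by $\Binom\bigl(\binom{m}{2},p\bigr)$, so the probability of seeing at least two is at most $\binom{\binom{m}{2}}{2}p^2 \le \tfrac{1}{8}m^4 p^2 \le (2np)^8 p^2$. A final union bound over $v$ gives the stated inequality.

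Compared with this, your witness-subgraph route is more combinatorial but in one respect cleaner: the first-moment bound over connected excess-$2$ configurations on at most $9$ vertices through $v$ is \emph{unconditional}---no degree regularity is needed. In particular, your explanation of the $n^{-1}$ slack (``this control justifies the counting step'') is a misattribution in your own argument: your counting step stands on its own, and your method would actually give $\prob{\calH^c}\le C\,n(np)^8 p^2$ with no additive $n^{-1}$. The price you pay is the case analysis of witness shapes and the constant bookkeeping. You assert the constants absorb into $2^8$; the dominant $k=9$ contribution (two $5$-cycles through $v$ sharing only $v$) indeed gives at most $\tfrac{1}{8}n^8 p^{10}$ after accounting for symmetries, but you should verify that summing the lower-$k$ witness types does not push the total past $256$. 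For the $O(n^{9-10\epsilon})$ conclusion, of course, the exact constant is immaterial.
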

\begin{proof}
Let $\calH_i$ denote the event that 
the $2$-hop neighborhood of the vertex $i$ in $G$  
is tangle-free and let $\calH= \cap_{ i \in [n]} \calH_i$. Let $\ell=2$ throughout the proof. 
Consider the classical graph branching process
to explore the vertices in the $\ell$-hop neighborhood of $i$. See, \eg, \cite[Section 11.5]{AlonSpencer08} for a reference. Such 
a branching process discovers a set of edges which form a spanning tree 
of the $\ell$-hop neighborhood of $i$.
Then the $\ell$-hop neighborhood of $i$ is tangle-free, provided that the number of edges undiscovered by the branching process 
is at most one.

Let $m$ denote the size of the $\ell$-hop neighborhood of $i$ in graph $G\sim \calG(n,p)$. 
There are at most $\binom{m}{2}$ pairs of two distinct 
vertices in the $\ell$-hop neighborhood of $i$. Hence, the number of undiscovered edges 
is stochastically  dominated by $\Binom( \binom{m}{2} ,p)$. Thus, conditional on 
the size of the $\ell$-hop neighborhood of $i$
being $m$, the probability of $\calH_i^c$, by a union bound, is at most
$$
\prob{ \Binom(m(m-1)/2, p) \ge 2} \le\frac{1}{8} m^4 p^2,
$$

Moreover, since $np \ge C \log n$ for a large constant $C$, the maximum degree in $G$ is at most $2np$ with probability at least $1-n^{-2}$. Thus, $m \le (2np)^\ell$ 
with probability at least $1-n^{-2}$.
Therefore, the unconditional probability
$$
\prob{\calH_i^c } \le \frac{1}{8} (2np)^{4\ell} p^2
+ n^{-2}
\le (2np)^{4\ell} p^2 + n^{-2}. 
$$
The proof is complete by applying a union bound over $i \in [n]$ to the last display.
\qed
\end{proof}

Recall that $\tilde{N}_A(i)$ (resp.\ $\tilde{N}_B(i)$) denote the set of vertices in the $2$-hop neighborhood of $i$ in graph $A$ (resp.\ $B$). 
Let $\tilde{G}_A(i)$ (resp.\ $\tilde{G}_B(i)$) denote the $2$-hop neighborhood of $i$ in graph $A$ (resp.\ $B$), \ie, the subgraph induced by 
$\tilde{N}_A(i)$ (resp.\ $\tilde{N}_B(i)$). 
For notational simplicity, we use the same notation $a_j^{(i)}$ and $b_j^{(i)}$ as~\prettyref{eq:degmod1} and~\prettyref{eq:degmod2}  
for unnormalized outdegrees $\left| N_A(j) \setminus N_A[i] \right| $ and $\left| N_B(j) \setminus N_B[i] \right| $, respectively.  
Similar to the high-probability events defined in the beginning of \prettyref{sec:pf}, we also need to condition on a number of events regarding the $2$-hop neighborhoods of $i$ in $A$ and $k$ in $B$ in analyzing the $W$ statistic. 

First, for each $i \in [n]$, 
define the event $\Gamma_A(i)$ such that the following statements hold simultaneously:
\begin{align*}
   \frac{nq}{2} \le a_i& \le 2n q  \\
   \frac{nq}{2}  \le a_j^{(i)} & \le 2nq, \quad \forall j \in N_A(i)  \\
    \tilde{a}_i & \le (2nq)^2. 
 \end{align*}
 Similarly, define the event $\Gamma_B(i)$  such that the following statements hold simultaneously:
\begin{align*}
   \frac{nq}{2} \le b_i & \le 2n q  \\
   \frac{nq}{2}  \le b_j^{(i)} & \le 2nq, \quad \forall j \in N_B(i)  \\
    \tilde{b}_i & \le (2nq)^2. 
 \end{align*}
 Define the event $\Gamma_{ii}$ such that the following statements hold simultaneously:
\begin{align*}
    c_{ii} & \ge \frac{nq}{2}  \\
    c_{j}^{(i)} & \ge  \frac{nq}{2} , \quad \forall j \in N_A(i)  \cap N_B(i)  \\
    \sqrt{a_i - c_{ii}}, \sqrt{b_i- c_{ii} } & \le \sqrt{nq(1-s)} + \sqrt{2\log n},
 \end{align*}
 where
\begin{equation}
c_{j}^{(i)}\triangleq \left|  \left( N_A(j) \setminus N_A[i] \right) \cap \left( N_B(j) \setminus N_B[i]  \right)    \right|.
\label{eq:cjj}
\end{equation} 
Under the assumptions that $nq \ge C \log n$ for some sufficiently large constant $C$, 
and $\sigma \le \sigma_0$ for sufficiently small constants $\sigma_0$, 
using Chernoff bounds for binomial distributions \prettyref{eq:bintail} and the union bound, we have
$
\prob{\Gamma_A^c(i) }, \prob{\Gamma_B^c(i)} , \prob{\Gamma^c_{ii}} \le O(n^{-2}).
$




Second, for each pair of $i, k \in [n]$ with $i \neq k$, define 
the event $\Gamma_{ik}$ such that the following statement holds:
\begin{align*}
| N_A(i) \cap N_B(k) |  & \le 2 \\
| N_A(j) \cap \tilde{N}_B(k)  | & \le 2, \quad \forall j \in N_A(i) \setminus N_B[k] \\
| N_B(j) \cap \tilde{N}_A(i)  | & \le 2, \quad \forall j \in N_B(k) \setminus N_A[i].
\end{align*}
\begin{lemma}
\label{lmm:Gammaik}	
If $1 \le nq \le n^{1-\epsilon}$ for $\epsilon>9/10$, 
we have
$
\prob{  \Gamma_{ik} ^c} 
\le O\left(n^{7} q^{10} \right) 
= O\left( n^{7-10 \epsilon} \right)
$
for all $i \neq k$.
\end{lemma}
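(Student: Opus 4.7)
The plan is to union-bound the three events comprising $\Gamma_{ik}^c$ separately, controlling each by a first-moment union bound over explicit witness configurations. The basic tool is that in the correlated \ER model, for any finite collection of vertex pairs each designated as an $A$- or a $B$-edge, the probability that all specified edges are simultaneously present is at most $q^e$, where $e$ is the number of distinct pairs in the union: each pair is marginally $\Bern(q)$ under either graph, pairs on disjoint supports are independent, and for a single pair required in both graphs one has $\prob{A_{uv} = B_{uv} = 1} = qs \le q$. The main obstacle will be carefully enumerating the witness configurations for the second condition, in particular handling vertex coincidences that can merge several required $A$- and $B$-edges into fewer distinct pairs.

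For the first condition, applying the first-moment method to $\binom{|N_A(i) \cap N_B(k)|}{3}$ and using that $i \neq k$ makes the six pairs $\{i, j_l\}$ and $\{k, j_l\}$ (for $l=1,2,3$) all distinct yields $\prob{|N_A(i) \cap N_B(k)| \ge 3} \le \binom{n-2}{3} q^6 = O(n^3 q^6)$. For the second condition, I will bound the failure probability by the expected number of witness configurations consisting of a vertex $j \in N_A(i) \setminus N_B[k]$, three distinct vertices $v_1, v_2, v_3 \in N_A(j) \cap \tilde N_B(k)$, and for each $v_l$ either the direct $B$-edge $\{k, v_l\}$ or an intermediate $w_l$ realizing a length-two $B$-path $\{k, w_l\}, \{w_l, v_l\}$. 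In the generic case where all chosen vertices are distinct from each other and from $i, k$, enumerating by the number $y \in \{0,1,2,3\}$ of length-two paths gives $\binom{3}{y} n^{4+y}$ configurations with $7+y$ distinct required pairs, contributing a total of $n^4 q^7 (1+nq)^3 \le 8\, n^7 q^{10}$ once $nq \ge 1$. For each non-generic configuration involving a vertex merge, I will check that the drop in vertex count offsets the drop in the number of distinct required pairs, so that the contribution remains at most $O(n^7 q^{10})$; the one merge that would save multiple $q$-factors at once, namely $w_l = j$, is ruled out a priori because it forces $B_{kj}=1$ in contradiction with $j \notin N_B[k]$.

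The third condition is mapped to the second by the involution swapping $A$ with $B$ and $i$ with $k$, under which the correlated \ER model is distributionally invariant when $\pi^* = \mathrm{id}$, so the identical $O(n^7 q^{10})$ bound applies. Combining the three contributions via a final union bound yields $\prob{\Gamma_{ik}^c} = O(n^3 q^6 + n^7 q^{10}) = O(n^7 q^{10})$, since $n^3 q^6 \le n^7 q^{10}/(nq)^4 \le n^7 q^{10}$ under $nq \ge 1$; the sparsity hypothesis $nq \le n^{1-\epsilon}$, i.e.\ $q \le n^{-\epsilon}$, then translates this to $O(n^{7-10\epsilon})$ as claimed. The crux of the argument is the coincidence accounting for the second (and by symmetry, third) condition: recognizing that the restriction $j \notin N_B[k]$ built into $\Gamma_{ik}$ is exactly what is needed to prevent the dangerous $w_l = j$ merging from producing witness types that dominate the generic $n^7 q^{10}$ contribution.
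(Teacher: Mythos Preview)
Your proposal is correct. For the first defining event of $\Gamma_{ik}^c$ you argue exactly as the paper does. For the second (and by symmetry third) event, however, the paper takes a different and somewhat slicker route: instead of enumerating witness types and tracking vertex coincidences, it passes to the union graph $A\cup B\sim\calG(n,q(2-s))$. Given $j\in N_A(i)\setminus N_B[k]$ and three distinct $a,b,c\in N_A(j)\cap\tilde N_B(k)$, pick for each $x\in\{a,b,c\}$ a vertex $p(x)\in N_B(k)\cap N_B[x]$; the subgraph $S$ of $A\cup B$ induced on $\{i,j,k,a,b,c,p(a),p(b),p(c)\}$ has $v(S)\le 9$, contains $i$ and $k$, and remains connected after deleting the edges $\{j,a\}$ and $\{j,b\}$, forcing the excess condition $e(S)\ge v(S)+1$. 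One then union-bounds $\prob{\exists S\subset A\cup B:\ i,k\in V(S),\ v(S)\le 9,\ e(S)-v(S)\ge 1}$ by $\sum_{v\le 9}2^{\binom{v}{2}}n^{v-2}(2q)^{v+1}=O(n^7q^{10})$. The excess inequality $e(S)\ge v(S)+1$ is a single structural fact that automatically absorbs every coincidence among $\{j,a,b,c,p(a),p(b),p(c)\}$, so no case analysis is needed. Your direct enumeration reaches the same bound but at the cost of that case analysis; your observation that the restriction $j\notin N_B[k]$ is exactly what forbids the dangerous $w_\ell=j$ identifications (which, when stacked, would otherwise produce contributions like $n^4q^5\gg n^7q^{10}$) is the right diagnosis of where the argument would break, and matches the paper's implicit reliance on that restriction through the choice $p(x)\in N_B(k)$.
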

\begin{proof}
Fix $i \neq k$. Note that 
\begin{align*}
\prob{ |N_A(i) \cap N_B(k) | \ge 3 } 
&= \prob{ \exists a, b, c \in [n]: A_{ia}=A_{ib}=A_{ic}=1, B_{ka}=B_{kb}=B_{kc}=1} \\
&\le \sum_{a, b, c\in [n]}\prod_{j \in \{a,b,c\}} \prob{A_{ij}=1} \prob{B_{kj}=1} 
\le n^3 q^6 \le n^7q^{10}. 
\end{align*}
Next, suppose we are given any $j \in N_A(i) \setminus N_B[k]$
such that $|N_A(j) \cap \tilde{N}_B(k) | \ge 3$. 
Let $a, b, c$ denote three distinct vertices in $N_A(j) \cap \tilde{N}_B(k) $. 
For each $j' \in \{a,b,c\}$, 
let $p(j')$ denote a vertex in $N_B(k) \cap N_B[j']$ (which is non-empty since $j' \in \tilde N_B(k)$).
Consider the subgraph $S$ of the union graph $A \cup B$ 
induced by vertices in $\{i, j, k, a, b, c, p(a), p(b), p(c)\}$. 
Let $V(S)$ denote the set of distinct vertices in $S$
and $v(S)=|V(S)|$. 
Let $e(S)$ denote the number of edges in $S$. 
Note that $v(S) \le 9$. 
Also, if we delete the two edges $(j,a)$
and $(j,b)$, the graph $S$ is still connected;
thus  $e(S)-v(S) \ge1$. 
Therefore, by letting $\calK_n$ denote the complete graph on $[n]$
and noting that $A\cup B \sim \calG\left(n, q(2-s)\right)$, 
\begin{align*}
&\prob{\exists j \in N_A(i) \setminus N_B[k]: |N_A(j) \cap \tilde{N}_B(k) | \ge 3 }  \\
& \leq \prob{ \exists S \subset A \cup B: v(S) \le 9, e(S) - v(S) \ge 1  } \\
&\le \sum_{ v \le 9 }  \sum_{S \subset \calK_n: v(S) =v }
\indc{ (i, k) \in V(S) } \indc{e(S) - v(S) \ge 1} \prob{ S \subset A \cup B}  \\
& \le   \sum_{v \le 9 } 2^{\binom{v}{2}}  n^{v-2} (2q)^{v+1}  \le   O \left( n^7 q^{10} \right).
\end{align*}
 Similarly, we have $\prob{\exists j \in N_B(k) \setminus N_A[i]: |N_B(j) \cap \tilde{N}_A(i) | \ge 3 }  \le O \left( n^7 q^{10} \right)$ and hence $
\prob{\Gamma_{ik}^c} \le O\left(n^{7} q^{10} \right).
$ 
\end{proof}

Third, let $A \cup B$ denote the union graph of $A$ and $B$. Define
$$
\calH_{ii} = \left\{ \tilde G_{A \cup B} (i) \text{ is tangle-free}  \right\}
$$
and 
$$
\calH_{ik} = \left\{ \tilde G_A(i) \text{ and } \tilde G_B(k) \text{ are both tangle-free} \right\}.
$$

The next two lemmas are the counterparts of 
\prettyref{lmm:true} and \prettyref{lmm:fake}, which 
establish the desired separation of the $W$ statistic for true pairs and fake pairs.

\begin{lemma}[True pairs]\label{lmm:true_3_hop}
Assume that $nq \ge C \max\{ \log n, L^2 \}$, $L \ge L_0$ for some sufficiently large constants 
$C$ and $L_0$, $\sigma \le \sigma_0/L$ for some sufficiently small constant $\sigma_0>0$,
and $n^2 q^3 \sqrt{L} \le c_0$ for some  sufficiently small constant $c_0>0$. 
Then 
\begin{align}
\prob{ W_{ii} \le nq/4  \mid \tilde G_A(i) , \tilde G_B(i), \tilde G_{A \cup B}(i)  } \indc{\calH_{ii} \cap \Gamma_A(i) \cap \Gamma_B(i) \cap \Gamma_{ii} } \le e^{-\Omega(nq) }.
\end{align}
\end{lemma}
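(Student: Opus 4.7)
The strategy is to lower bound $W_{ii}$ by exhibiting an explicit large matching in the bipartite graph encoded by $Y^{(ii)}$. The natural candidate is the ``diagonal'' matching that pairs each common neighbor $j \in N_A(i) \cap N_B(i)$ with itself. Since $c_{ii} = |N_A(i) \cap N_B(i)| \ge nq/2$ on the event $\Gamma_{ii}$, it suffices to show that at least $nq/4$ of these common neighbors $j$ satisfy $Y^{(ii)}_{jj} = \indc{\tilde Z^{(ii)}_{jj} \le \eta} = 1$.

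For each fixed common neighbor $j \in N_A(i) \cap N_B(i)$, the analysis of $\tilde Z^{(ii)}_{jj}$ mirrors the proof of \prettyref{lmm:true}: using the decomposition of \prettyref{eq:muPQ1}--\prettyref{eq:muPQ2} applied with $(\mu_i,\nu_i)$ replaced by $(\tilde\mu^{(i)}_j,\tilde\nu^{(i)}_j)$, we get $\tilde Z^{(ii)}_{jj} \le d(P,Q) + \text{(uncorrelated remainder)} + |\rho-\rho'|$, where $P,Q$ are the (centered) empirical distributions of $\{\tilde a_\ell^{(i)}\}$ and $\{\tilde b_\ell^{(i)}\}$ over the $c_j^{(i)} \ge nq/2$ common neighbors $\ell$ of $j$ in $A$ and $B$ lying outside $N_A[i] \cup N_B[i]$ (cf.\ \prettyref{eq:cjj} and the definition of $\Gamma_{ii}$). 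An analog of \prettyref{lmm:beta}, with the denominators $(n-a_i-1)$, $(n-b_i-1)$ replaced by $(n-\tilde a_i)$, $(n-\tilde b_i)$ as in \prettyref{eq:outdegree2hop1}--\prettyref{eq:outdegree2hop2} (both of order $n$ by $\Gamma_A(i), \Gamma_B(i)$), combined with \prettyref{lmm:corsample} for the correlated core and \prettyref{lmm:tvconc} for the remainder, gives $\tilde Z^{(ii)}_{jj} \le \tilde\xi_{\rm true}$ with probability $\ge 1 - O(e^{-\Delta/2})$, for $\tilde\xi_{\rm true}$ of the same form as \prettyref{eq:xitrue}. Choosing $\Delta$ to be a sufficiently large absolute constant and invoking the hypotheses $\sigma \le \sigma_0/L$, $nq \ge C L^2$, together with $n^2q^3\sqrt{L} \le c_0$ (which controls the Berry--Esseen error under the enlarged normalization arising from $\tilde N_A(i), \tilde N_B(i)$), one verifies $\tilde\xi_{\rm true} \le \eta = \eta_0\sqrt{L/(nq)}$. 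Hence each common neighbor $j$ satisfies $Y^{(ii)}_{jj} = 1$ with conditional probability at least (say) $3/4$.

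The crux is to upgrade this single-pair bound into a concentration statement for $T \triangleq \sum_{j \in N_A(i) \cap N_B(i)} Y^{(ii)}_{jj}$. The indicators $\{Y^{(ii)}_{jj}\}$ are not a priori independent across $j$: two common neighbors $j, j'$ can share a further neighbor $\ell \notin N_A[i] \cup N_B[i]$, which would create a $4$-cycle $j \sim \ell \sim j' \sim i \sim j$ inside $\tilde G_{A \cup B}(i)$. This is precisely where the tangle-free event $\calH_{ii}$ is used: it bounds the number of cycles in $\tilde G_{A\cup B}(i)$ by one, so at most $O(1)$ common neighbors have overlapping outdegree contributions. After conditioning on the full 2-hop structure $(\tilde G_A(i), \tilde G_B(i), \tilde G_{A\cup B}(i))$, the residual randomness entering $\tilde Z^{(ii)}_{jj}$ for the remaining common neighbors consists of edges between $\tilde N_A(i) \cup \tilde N_B(i)$ and its complement, together with fresh Bernoulli randomness of $A$ and $B$ emanating from disjoint vertex-groups (one per common neighbor $j$). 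Consequently, all but $O(1)$ of the indicators $Y^{(ii)}_{jj}$ are conditionally mutually independent.

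Applying Chernoff's bound to the sum of these $\ge c_{ii} - O(1) \ge nq/2 - O(1)$ independent Bernoulli$(\ge 3/4)$ indicators yields $T \ge nq/3$ with probability $1 - e^{-\Omega(nq)}$, and hence $W_{ii} \ge T \ge nq/4$ with the same probability, as the diagonal matching has size $T$. The main obstacle is the rigorous implementation of the near-independence argument: one must track exactly which random edges enter $\tilde Z^{(ii)}_{jj}$ after the 2-hop conditioning, and verify that the handful of shared edges permitted by $\calH_{ii}$ only contributes an $O(1)$ additive error to $T$. Everything else is a relatively mechanical adaptation of the $\prettyref{lmm:true}$ argument to the modified outdegrees \prettyref{eq:outdegree2hop1}--\prettyref{eq:outdegree2hop2}.
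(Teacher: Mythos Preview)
Your proposal is correct and follows essentially the same approach as the paper: lower-bound $W_{ii}$ by the diagonal matching over $N_A(i)\cap N_B(i)$, use the tangle-free event $\calH_{ii}$ to reduce to at most one exceptional pair $(j_0,j_0')$ so that the remaining indicators $\indc{\tilde Z^{(ii)}_{jj}\le\eta}$ are conditionally independent, show each has success probability $\ge 3/4$, and finish with Chernoff. One small correction: the hypothesis $n^2q^3\sqrt{L}\le c_0$ enters not through Berry--Esseen but through an extra centering term $\|[\mu-\nu]_L\|_1+\|[\mu'-\nu']_L\|_1$ (absent from \prettyref{lmm:true}) that arises because, after conditioning on the full $2$-hop neighborhood in $A\cup B$, the true law of $\tilde a_\ell^{(i)}$ differs slightly from the centering $\Binomc(n-\tilde a_i,q)$; the paper controls this discrepancy by a separate elementary coupling lemma giving the bound $O\bigl(L(nq)^2q/\sqrt{nq}\bigr)$.
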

\begin{proof}
Throughout the proof, we condition on the  $2$-hop neighborhoods 
of $i$ in $A$, $B$, and $A \cup B$  such that event $\calH_{ii} \cap \Gamma_A(i) \cap \Gamma_B(i) \cap \Gamma_{ii}$ holds.

On the event $\calH_{ii}$, there is at most one cycle in the $2$-hop neighborhood of $i$ in the union graph $A \cup B$. 
Hence, there is at most one pair of vertices $j_0  \in N_A(i)$ and $j_0' \in N_B(i) $ with $j_0 \neq j_0'$ such that in the union graph 
$A \cup B$,
\begin{enumerate}[(a)]
\item either $j_0$ and $j_0'$ are adjacent; 
\item or there exist a neighbor $\ell \neq i $ of $j_0$
and a neighbor $\ell' \neq i$ of $j'_0$, where either
$\ell=\ell'$ or $\ell$ and $\ell'$ are adjacent.
\end{enumerate}
Then we claim that 
$\tilde{Z}^{(ii)}_{jj}$ 
are mutually independent across different $j$ in  $N_A(i) \cap N_B(i) \setminus \{j_0, j_0' \}$. 
Indeed, note that $\tilde{Z}^{(ii)}_{jj}$ is a function of 
$\{ \tilde{a}_\ell^{(i)}: \ell \in N_A(j) \setminus N_A[i] \}$ 
and $\{ \tilde{b}_\ell^{(i)}: \ell \in N_B(j) \setminus N_B[i] \}$. 
Fix a pair of $j \neq j' \in N_A(i) \cap N_B(i) \setminus \{j_0, j_0' \}$ and 
any $\ell \in ( N_A(j) \setminus N_A[i]) \cup ( N_B(j) \setminus N_B[i])$ 
and any $\ell' \in ( N_A(j') \setminus N_A[i]) \cup ( N_B(j') \setminus N_B[i])$.
First, we claim $\ell \neq \ell'$, and $\ell, \ell'$ are non-adjacent in the union graph $A \cup B$; 
otherwise, $(j,j')$ is another pair in addition to $(j_0,j_0')$ satisfying either the condition (a) or (b) mentioned above,
violating the tangle-free property.
Moreover, since we have excluded $i$'s closed $2$-hop neighborhoods
in the definition of outdegree $\tilde{a}_\ell^{(i)}$ and  $\tilde{b}_\ell^{(i)}$, 
it follows that 
 $(\tilde{a}_\ell^{(i)}, \tilde{b}_\ell^{(i)} ) $ is independent
from $(\tilde{a}_{\ell'}^{(i)}, \tilde{b}_{\ell'}^{(i)} ) $. 
Thus, $\tilde{Z}^{(ii)}_{jj}$ and $\tilde{Z}^{(ii)}_{j'j'}$ are independent.


By the definition of $W$ similarity in \prettyref{eq:def_W_3_hop}, we have   
$$
W_{ii} \ge \sum_{j \in N_A(i) \cap N_B(i) \setminus \{ j_0 \} } \indc{ \tilde{Z}^{(ii)}_{jj} \le \eta},
$$
where $\eta=\eta_0 \sqrt{ \frac{L}{nq}}$ as defined in \prettyref{eq:eta-threshold}.
We claim that 
\begin{align}
\prob{ \tilde{Z}^{(ii)}_{jj} \le \eta } \ge 1-  e^{-\Omega(L) }  \ge \frac{3}{4}, \label{eq:Z_ii_3_hop}
\end{align}
where the last inequality holds due to $L \ge L_0$. 
Also, on the event $\Gamma_{ii}$, $c_{ii} = | N_A(i) \cap N_B(i)| \ge nq/2$.
Then it follows from the independence of $\tilde{Z}^{(ii)}_{jj}$ across
different $j  \in N_A(i) \cap N_B(i) \setminus \{j_0 \}$  that 
$$
W_{ii} \overset{s.t.}{\ge}  \Binom\left( \frac{nq}{2} -1,  \frac{3}{4} \right).
$$
Therefore, by Chernoff's bound \prettyref{eq:bintail} for binomials, we get that 
$$
\prob{ W_{ii} \le nq/4 } \le e^{-\Omega(nq) }. 
$$

\nb{
It remains to verify claim \prettyref{eq:Z_ii_3_hop}. 
The proof follows the similar argument as the proof of \prettyref{lmm:true}. 
Specifically, recall that $\tilde{Z}^{(ii)}_{jj}= d\left( \tilde{\mu}_j^{(i)} , \tilde{\nu}_j^{(i)} \right)$,
where 
\begin{equation*}
\tilde{\mu}^{(i)}_j \triangleq  \frac{1}{ a^{(i)}_j } \sum_{ \ell \in N_A(j) \setminus N_A[i] } 
\delta_{ \tilde{a}_\ell^{(i)}} \; - \Binomc\left(n-\tilde{a}_i,q \right),
\end{equation*}
and
 \begin{equation*}
\tilde{\nu}^{(i)}_{j} \triangleq  \frac{1}{b^{(i)}_j } \sum_{ \ell \in N_B(j) \setminus N_B[i] } 
\delta_{ \tilde{b}_\ell^{(i)} } \; - \Binomc\left(n-\tilde{b}_i, q \right).
\end{equation*}
Recall that  $\tilde{a}_\ell^{(i)}$ (resp.~$\tilde{b}_\ell^{(i)}$) are the normalized ``outdegree'' of  vertex $\ell$ with the closed $2$-hop neighborhood of $i$ in $A$ (resp.~$B$) excluded;
$\tilde{a}_i$ (resp.~$\tilde{b}_i$) are the size the 2-hop neighborhood of $i$ in graph $A$ (resp.~$B$).

Note that for $\ell \in  N_A(j) \setminus N_A[i] $,
\begin{align*}
\tilde{a}_\ell^{(i)} 
&= \frac{1}{ \sqrt{  (n - \tilde{a}_i ) q (1-q)  } } \sum_{k \in \tilde{N}_A(i)^c } \left( A_{k \ell} - q  \right)  \\
&=\frac{1}{ \sqrt{  (n - \tilde{a}_i ) q (1-q)  } } 
\left[ \sum_{k \in \tilde{N}_A(i)^c  \cap \tilde{N}_B(i)^c }  A_{k \ell}  -  \left( n - \tilde{a}_i \right)q \right],
\end{align*}
where the last equality holds because  if $k \in  \tilde{N}_B(i)$, then $A_{k\ell} = 0$; otherwise, 
$\tilde{G}_{A \cup B} (i)$ is not  tangle-free. Moreover, note that $\ell \notin N_B(i)$; otherwise
$\tilde{G}_{A \cup B} (i)$ is not  tangle-free. Therefore, for all $k \in \tilde{N}_A(i)^c  \cap \tilde{N}_B(i)^c$,
$A_{k\ell} \sim \Bern(q)$. Hence, 
$$
\tilde{a}_\ell^{(i)}  \iiddistr  \frac{1}{ \sqrt{  (n - \tilde{a}_i ) q (1-q)  } }  
\left[ \Binom\left( \left| \tilde{N}_A(i)^c  \cap \tilde{N}_B(i)^c  \right|, q  \right) -  \left( n - \tilde{a}_i \right)q  \right] \triangleq \mu.
$$
Similarly,  for $\ell \in  N_B(j) \setminus N_B[i] $,
\begin{align*}
\tilde{b}_\ell^{(i)} 
&= \frac{1}{ \sqrt{  (n - \tilde{b}_i ) q (1-q)  } } \sum_{k \in \tilde{N}_B(i)^c } \left(B_{k \ell} - q \right)  \\
&=\frac{1}{ \sqrt{  (n - \tilde{b}_i ) q (1-q)  } } 
 \left[ \sum_{k \in \tilde{N}_B(i)^c  \cap \tilde{N}_A(i)^c } B_{k \ell} -   \left( n - \tilde{b}_i \right) q  \right] .
\end{align*}
Thus, 
$$
\tilde{b}_\ell^{(i)}  \iiddistr  \frac{1}{ \sqrt{  (n - \tilde{b}_i ) q (1-q)  } }  
\left[ \Binom\left( \left| \tilde{N}_A(i)^c  \cap \tilde{N}_B(i)^c  \right|, q  \right)  - \left(n - \tilde{b}_i \right) q \right] \triangleq \mu'.
$$

Analogous to \prettyref{eq:muPQ1} and \prettyref{eq:muPQ2}, the centered empirical distribution can be rewritten as 
\begin{align*}
\tilde{\mu}^{(i)}_j = & ~ 
\rho P  + (1-\rho) P'  + \mu - \nu  \\
\tilde{\nu}^{(i)}_{j}  = & ~ \rho' Q + (1-\rho') Q'  + \mu' - \nu',
\end{align*}
where 
\begin{align*}
\rho \triangleq  \frac{c_{j}^{(i)}   }{a^{(i)}_j }  , \quad \rho'\triangleq \frac{c^{(ii)}_{jj}}{b_j^{(i)}}, 
\end{align*}
and
\begin{align*}
P \triangleq  & ~ \frac{1}{c_{j}^{(i)}  } \sum_{ \ell \in  \left( N_A(j) \setminus N_A[i] \right) \cap \left( N_B(j) \setminus N_B[i]  \right)   }  
\delta_{ \tilde{a}_\ell^{(i)}} - \mu, \quad P' ~\triangleq  \frac{1}{a^{(j)}_i -c_{j}^{(i)} } \sum_{ \ell  \in   \left( N_A(j) \setminus N_A[i] \right) \setminus \left( N_B(j) \setminus N_B[i]  \right)   }   \delta_{ \tilde{a}_\ell^{(i)}}- \mu, \\
Q \triangleq  & ~ \frac{1}{c_{j}^{(i)} } \sum_{  \ell \in  \left( N_A(j) \setminus N_A[i] \right) \cap \left( N_B(j) \setminus N_B[i]  \right)     } \delta_{ \tilde{b}_\ell^{(i)} }  - \mu', \quad Q'~\triangleq  \frac{1}{b_j^{(i)} -c_{j}^{(i)}} \sum_{ \ell \in  \left( N_B(j) \setminus N_B[i]  \right) \setminus \left( N_A(j) \setminus N_A[i] \right) }    \delta_{ \tilde{b}_\ell^{(i)} }- \mu',
\end{align*}
and $\nu= \Binomc\left(n-\tilde{a}_i,q \right)$ and $\nu'=\Binomc\left(n-\tilde{b}_i, q \right)$.

Similar to \prettyref{eq:Zii1}, we have that 
\begin{align}
\tilde{Z}^{(ii)}_{jj}
\leq \underbrace{ \| [ \mu - \nu ]_L \|_1 +\| [ \mu' - \nu' ]_L \|_1 }_{\rm (I)}
+\underbrace{d(P,Q)}_{\rm (II)} + 
\underbrace{ (1-\rho) \|[P']_L\|_1 + (1-\rho') \|[Q']_L\|_1}_{\rm (III)} + 
\underbrace{|\rho-\rho'|  \times \|[Q]_L\|_1}_{\rm (IV)}
.  \label{eq:Zii1_3_hop}
\end{align}

For (I), we need the following lemma to control the discrepancy between the distribution $\mu$ (resp.\ $\mu'$)
and the ideal standardized binomial distribution $\nu$ (resp.\ $\nu'$). 
\begin{lemma}\label{lmm:distri_discrepancy}
Let $m, n \in \naturals$ with $m\le n$ and $\eta_1, \ldots, \eta_m, q \in [0,1]$. 
Suppose $X_i \iiddistr \Bern(q)$ for $1 \le i \le n$
and $Y_i$'s are independently distributed as $\Bern(\eta_i)$ for $1 \le i \le m$. 
Let $S=\sum_{i=1}^n X_i$ and $T= \sum_{i=1}^m Y_i + \sum_{i=m+1}^n X_i$. 
Let $\mu_0$ and $\nu_0$  denote the law of $ \frac{S- nq}{\sqrt{nq(1-q)}}$  and 
$\frac{T- nq}{\sqrt{nq(1-q)}}$, respectively. 
Assume $m \le n/2$ and $nq=\Omega(1)$. Then 
\begin{align}
d\left( \mu_0, \nu_0 \right) = \left\|  \left[\mu_0-  \nu_0 \right]_L \right\|_1 \le  O \left( L \frac{ \sum_{i=1}^m |\eta_i -q| }{\sqrt{nq} } \right) .
\end{align}
\end{lemma}
\begin{proof}
For $1 \le i \le m$, we couple $X_i$ and $Y_i$ as follows. 
When $\eta_i \le q$, generate $Y_i \sim \Bern(\eta_i)$, and let $X_i=1$ if $Y_i=1$ and $X_i \sim \Bern(q-\eta_i)$ if $Y_i=0$.
When $\eta_i >	q$, generate $X_i \sim \Bern(q)$, and  let $Y_i=1$ if $X_i=1$ and $Y_i \sim \Bern(\eta_i-q)$ if $X_i=0$. 
Let $X=\sum_{i=m+1}^{n} X_i$, 
$Y=\sum_{i=1}^m Y_i$, and $Z= \sum_{i=1}^m X_i$. 
Then $S= X+Z$
and $T=X+Y$. 
Let $\xi=\sqrt{nq(1-q)}$. 
Then 
\begin{align*}
d\left( \mu_0, \nu_0 \right) 
&= \sum_{\ell=1}^L \left| \mu_0 ( I_\ell) - \nu_0( I_\ell)  \right| \\
&= \sum_{\ell=1}^L \left|  \prob{S \in \xi I_\ell + nq } - \prob{T \in \xi I_\ell + nq }   \right|\\
& \le \sum_{\ell=1}^L \max\left\{  \prob{ S \in \xi I_\ell + nq, T \notin \xi I_\ell +nq } , 
 \prob{ S \notin \xi I_\ell + nq , T \in \xi I_\ell + nq } \right\}.
\end{align*}
It remains to show $\prob{ S \in \xi I_\ell + nq, T \notin \xi I_\ell +nq } \le 
O \left(  \frac{ \sum_{i=1}^m | \eta_i-q | }{\sqrt{nq} } \right)$; the proof for 
$\prob{ S \in \xi I_\ell + nq, T \notin \xi I_\ell +nq }$ is analogous. 
Note that 
\begin{align*}
\prob{ S \in \xi I_\ell + nq, T \notin \xi I_\ell+ nq }
=\prob{ X \in \xi I_\ell+ nq - Z , X \notin \xi I_\ell + nq- Y}
\le O\left( \frac{\expect{|Y-Z|} }{\sqrt{nq}} \right),
\end{align*}
where the last inequality follows analogous to \prettyref{lmm:beta}. 
The conclusion follows since $\expect{|Y-Z|} \le  \sum_{i=1}^m \expect{|X_i-Y_i|}  =(1- \min\{ \eta_i, q \} ) \sum_{i=1}^m | \eta_i-q| $ by definition.
 \qed
\end{proof}

Applying \prettyref{lmm:distri_discrepancy} (with $\eta_i\equiv 0$ and $m \leq \tilde a_i+\tilde b_i$) and noting that $\tilde{a}_i, \tilde{b}_i \le (2nq)^2$, we get that 
\begin{align}
\| [ \mu - \nu ]_L \|_1 +\| [ \mu' - \nu' ]_L \|_1
\le O \left( \frac{L(nq)^2q}{ \sqrt{nq} } \right) .
\end{align}
Analogous to \prettyref{lmm:beta}, under the assumptions that $\sigma \le \sigma_0/L$ and $nq \ge CL^2$, we have that 
for any $ \ell \in  \left( N_A(j) \setminus N_A[i] \right) \cap \left( N_B(j) \setminus N_B[i]  \right)$
and any interval $I \subset [-1/2,1/2]$ with $|I| =1/L$, conditional on the $2$-hop neighborhoods 
of $i$ in both $A$ and $B$,  
      \begin{align*}
 \prob{ \tilde{a}_\ell^{(i)} \in I, \tilde{b}_\ell^{(i)} \notin I  } + \prob{ \tilde{a}_\ell^{(i)} \notin I, \tilde{b}_\ell^{(i)}  \in I  }  \leq \frac{c_1}{L}
      \end{align*}
      for a sufficiently small constant $c_1$. 
      
For (II), applying \prettyref{lmm:corsample} with 
$$
\{X_j\}_{j=1}^m =
\{ \tilde{a}_\ell^{(i)} \}_{ \ell \in  \left( N_A(j) \setminus N_A[i] \right) \cap \left( N_B(j) \setminus N_B[i]  \right)  }, 
\quad 
\{ Y_j\}_{j=1}^m =\{ \tilde{b}_\ell^{(i)} \}_{  \ell \in  \left( N_A(j) \setminus N_A[i] \right) \cap \left( N_B(j) \setminus N_B[i]  \right) },
$$ 
and $m=c_{j}^{(i)} \ge \frac{nq}{2}$ on the event $\Gamma_{ii}$, we get that  with probability at least $1-e^{-\Omega(L) }$,
    \begin{equation} 
d(P,Q) \leq   c_2 \sqrt{\frac{L}{nq} } ,  \label{eq:true1_3_hop}
\end{equation} 
 for a sufficiently small constant $c_2$. 

For (III), 
applying \prettyref{lmm:tvconc} with $k=L$ implies that 
$ \|[P']_L\|_1 \leq 2 \sqrt{\frac{L}{a^{(i)}_j- c_{j}^{(i)}  }}  $
and $\|[Q']_L\|_1 \leq  2 \sqrt{\frac{L}{ b^{(i)}_j- c_{j}^{(i)} } }$,
each with probability at least $1-e^{-L/2}$.
Therefore, by the union bound, with probability at least $1-e^{-\Omega(L)}$,
\begin{align}
(1-\rho) \|[P']_L\|_1 + (1-\rho') \|[Q']_L\|_1
& \le \frac{2}{a^{(i)}_j }  \sqrt{L} \sqrt{ a^{(i)}_j- c_{j}^{(i)}  } + \frac{ 2 }{b^{(i)}_j }   \sqrt{L}   \sqrt{ b^{(i)}_j- c_{j}^{(i)}  } \nonumber  \\
&  \leq \frac{8}{nq}   \sqrt{L} \left( \sqrt{nq \sigma^2 } + \sqrt{2\log n} \right) ,
\label{eq:true2_3_hop}
\end{align}
where the last inequality holds because on the event $\Gamma_A(i) \cap \Gamma_B(i)\cap \Gamma_{ii}$, $a^{(i)}_j, b^{(i)}_j  \ge nq/2$,
\begin{align*}
\sqrt{ a^{(i)}_j- c_{jj}^{ (ii)}  } & \le  \sqrt{ a_j - c_j + b_i - c_i }  \le \sqrt{a_j - c_j} + \sqrt{ b_i - c_i } \le 2 \left(  \sqrt{nq(1-s) } + \sqrt{ 2\log n } \right)
\end{align*} 
and similarly for $\sqrt{ b^{(i)}_j- c_{j}^{(i)}  }$. 


Finally, for (IV), applying \prettyref{lmm:tvconc} with $k=L$ implies that 
with probability at least $1-e^{-L/2}$, 
$$
 \|[Q]_L\|_1 \leq 2 \sqrt{\frac{L}{c_{j}^{(i)}  }} \le 2\sqrt{2} \sqrt{\frac{L}{nq}},
 $$  
 where the last inequality holds due to $c_{j}^{(i)} \ge nq/2$ on event $\Gamma_{ii}$.  
Moreover,
$$
|\rho-\rho'| \le  \max \{1-\rho, 1-\rho' \}  
\le \frac{2}{nq} \left( \sqrt{nq(1-s) } + \sqrt{ 2\log n } \right)^2 \\
\le 4 \sigma^2 + 8 \frac{\log n}{nq} .
$$
Therefore, 
\begin{equation}
|\rho-\rho'|  \times  \|[Q]_L\|_1  
\le 8\sqrt{2}  \sqrt{\frac{L}{nq}} \left(  \sigma^2 + 2  \frac{\log n}{nq}  \right).
\label{eq:true3_3_hop}
\end{equation}

Assembling \prettyref{eq:Zii1_3_hop} with \prettyref{eq:true1_3_hop}, \prettyref{eq:true2_3_hop}, \prettyref{eq:true3_3_hop}, 
we get that with probability at least $1-e^{-\Omega(L)}$, 
\begin{align*}
\tilde{Z}^{(ii)}_{jj}
& \leq  
c_2 \sqrt{\frac{L}{nq} } 
+ \frac{8}{nq}   \sqrt{L} \left( \sqrt{nq \sigma^2 } + \sqrt{2 \log n} \right)+ 8\sqrt{2}  \sqrt{\frac{L}{nq}} \left(  \sigma^2 + 2  \frac{\log n}{nq}  \right) + O \left( \frac{L(nq)^2q}{ \sqrt{nq} } \right)  \\
& \leq  \eta_0 \sqrt{ \frac{L} {nq} } = \eta 
\end{align*}
for some sufficiently small absolute constant $\eta_0>0$, where the last inequality holds due to 
the assumptions that $nq \ge C L$ for some sufficiently large constant $C$, $\sigma\le \sigma_0/L$,
and $n^2 q^3 \sqrt{L} \le c_0$ for some  sufficiently small constant $c_0>0$. 
Thus we arrive at the desired \prettyref{eq:Z_ii_3_hop}.
}
\qed
\end{proof}

\begin{lemma}[Fake pairs]\label{lmm:fake_3_hop}
Suppose $L \ge C \log (nq)$, $nq \ge C \max\{ \log n, L^2 \}$ for some sufficiently large constant $C$,
and $q \le n^{-\epsilon}$ for $\epsilon> 9/10$. 
Fix $i \neq k$. 
Then 
\begin{align}
\prob{ W_{ik} \ge nq/4  \mid \tilde G_A(i) , \tilde G_B(k)  } 
\indc{ \calH_{ik} \cap \Gamma_A(i)  \cap \Gamma_B(k) \cap \Gamma_{ik} } \le e^{-\Omega \left(nq \right)}.
\end{align}
\end{lemma}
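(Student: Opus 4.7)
I plan to bound $W_{ik}$ by the matching number of the bipartite graph $Y^{(ik)}$, and to take a union bound over all matchings of size $m \triangleq \lceil nq/4 \rceil$. If $W_{ik} \ge m$, there exist $S \subseteq N_A(i)$ and $T \subseteq N_B(k)$ with $|S|=|T|=m$ and a bijection $\phi: S \to T$ such that $\tilde Z^{(ik)}_{j,\phi(j)} \le \eta$ for every $j \in S$. On $\Gamma_A(i) \cap \Gamma_B(k)$ we have $a_i, b_k \le 2nq$, so the number of such triples $(S, T, \phi)$ is at most $\binom{a_i}{m}\binom{b_k}{m}m! \le (C_1 nq)^m$ for some absolute constant $C_1$. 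Thus it suffices to show, for each fixed matching $\phi$, the joint bound
\[
\prob{\bigcap_{j \in S}\{\tilde Z^{(ik)}_{j,\phi(j)} \le \eta\}} \le (nq)^{-C_2 m}
\]
conditional on $\tilde G_A(i), \tilde G_B(k)$ satisfying $\calH_{ik}\cap \Gamma_A(i)\cap \Gamma_B(k)\cap \Gamma_{ik}$, where $C_2$ can be made arbitrarily large by choosing $L_0$ large. A union bound then yields $\prob{W_{ik}\ge m} \le (C_1 (nq)^{1-C_2})^m \le e^{-m} = e^{-\Omega(nq)}$ once $C_2$ is large relative to $C_1$.

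The marginal estimate on a single pair $(j,j')$ with $j \in N_A(i)\setminus N_B[k]$, $j' \in N_B(k)\setminus N_A[i]$ and $j\neq j'$ follows from an adaptation of \prettyref{lmm:fake}. I would decompose $\tilde\mu^{(i)}_j$ and $\tilde\nu^{(k)}_{j'}$ according to the common indices $N_A(j)\cap N_B(j')$, whose cardinality is at most $2$ on $\Gamma_{ik}$; the ``correlated'' contribution to the total variation is therefore $O((nq)^{-1})$. The residual part consists of $\Theta(nq)$ samples $\tilde a^{(i)}_\ell$ (from $A$-edges) and $\tilde b^{(k)}_{\ell'}$ (from $B$-edges) over disjoint index sets, and these are genuinely independent because the cross-graph coupling $A_{m\ell}\leftrightarrow B_{m\ell}$ only acts at the same $\ell$. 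Applying \prettyref{lmm:indsample} to the residual part, with $\eta = \eta_0\sqrt{L/(nq)}$ for $\eta_0$ a sufficiently small constant, gives $\prob{\tilde Z^{(ik)}_{j,j'}\le \eta}\le e^{-\Omega(L)}$; the choice $L \ge L_0\log(nq)$ with $L_0$ large makes this at most $(nq)^{-C_2}$ for any prescribed $C_2$.

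The main obstacle is lifting this marginal estimate to the joint estimate along $\phi$. The key input is the tangle-free event $\calH_{ik}$: in a tangle-free 2-hop neighborhood of $i$ in $A$, for any two distinct $j_1, j_2 \in N_A(i)$ the sets $N_A(j_1)\setminus N_A[i]$ and $N_A(j_2)\setminus N_A[i]$ must be disjoint, apart from at most one exceptional pair arising from the single allowed cycle; likewise on the $B$-side. After conditioning on $\tilde G_A(i)$ and $\tilde G_B(k)$, the outdegrees $\{\tilde a^{(i)}_\ell\}$ and $\{\tilde b^{(k)}_{\ell'}\}$ appearing in distinct matching pairs depend on edges of $A$ and $B$ with disjoint endpoint sets, so the cross-graph coupling (which only ties $A_{m\ell}$ to $B_{m\ell}$ at the same $\ell$) produces no further dependence. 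Hence the randomness across distinct matching pairs becomes mutually independent, and the joint probability factorizes (up to a harmless $O(1)$ correction for the exceptional pair) as the product of marginals, bounded by $(nq)^{-C_2 m}$. Carrying this decoupling through with full rigor, in particular handling the boundary indices in $\tilde N_A(i)\cup \tilde N_B(k)$ and the small cross-overlaps controlled by $\Gamma_{ik}$ without losing the exponential bound, is the most delicate step of the argument.
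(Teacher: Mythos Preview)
Your independence claim is the gap, and it bites both in the marginal bound and in the factorization across matching pairs. You assert that the outdegrees $\tilde a^{(i)}_\ell$ and $\tilde b^{(k)}_{\ell'}$ for $\ell\neq\ell'$ are independent because ``the cross-graph coupling only acts at the same $\ell$''. But the coupling ties $A_e$ to $B_e$ for the same \emph{edge} $e$, not the same index. For $\ell\in\calL\triangleq\bigcup_{j\in J}(N_A(j)\setminus N_A[i])$ and $\ell'\in\calL'\triangleq\bigcup_{j'\in J'}(N_B(j')\setminus N_B[k])$, the single edge $\{\ell,\ell'\}$ contributes $A_{\ell\ell'}$ to $\tilde a^{(i)}_\ell$ (whenever $\ell'\notin\tilde N_A(i)$) and $B_{\ell\ell'}$ to $\tilde b^{(k)}_{\ell'}$ (whenever $\ell\notin\tilde N_B(k)$); both conditions are generic under $\Gamma_{ik}$. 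Hence the $A$-side samples feeding one matching pair are correlated with the $B$-side samples feeding \emph{every} other matching pair, and your product bound $(nq)^{-C_2 m}$ does not follow. The same issue already invalidates the direct application of \prettyref{lmm:indsample} for a single pair $(j,j')$: the two sample sequences $\{\tilde a^{(i)}_\ell\}_{\ell\in N_A(j)\setminus N_A[i]}$ and $\{\tilde b^{(k)}_{\ell'}\}_{\ell'\in N_B(j')\setminus N_B[k]}$ are not independent.

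The paper handles this dependence by first conditioning on the edge sets $E_A(\calL,\calL')$ and $E_B(\calL,\calL')$ (exactly the device in \prettyref{fig:lemma2}), which freezes every edge of the form $\{\ell,\ell'\}$ and renders the residual outdegrees mutually independent. Even then the $\tilde Z$ statistics across matching pairs are not quite independent (any $\ell\in\calL\cap\calL'$ contributes an $A$-outdegree to one pair and a $B$-outdegree to another), so the paper does \emph{not} factorize. Instead it replaces the hard threshold by a $(1/\eta)$-Lipschitz function $F$, sets $W'=\sum_{j,j'}F(\tilde Z^{(ik)}_{jj'})M_{jj'}$, and applies McDiarmid's inequality in the conditioned outdegrees. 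The tangle-free event enters here only to control the bounded-difference constant: each outdegree appears in at most two profiles, and because $M$ is a matching each profile appears in at most one summand, giving constant $O(1/(nq\eta))$ over $O((nq)^2)$ coordinates and hence a tail $e^{-\Omega((nq\eta)^2)}=e^{-\Omega(nqL)}$, which dominates the $(2nq)!$ union bound once $L\gtrsim\log(nq)$. Your factorization route could in principle be salvaged by the same conditioning together with stripping from each profile the at-most-two indices in $\tilde N_A(i)$ or $\tilde N_B(k)$ (the paper does this for its marginal bound), but as written the decoupling step is incorrect, not merely delicate.
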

\begin{proof}
Fix a pair of vertices $i \neq k$ and condition on the $2$-hop neighborhoods 
of $i$ in $A$ and $k$ in $B$ such that the event $H_{ik} \cap \Gamma_A(i) \cap \Gamma_B(k) \cap \Gamma_{ik}  $ holds. 
Fix a feasible solution $M$ in \prettyref{eq:def_W_3_hop}; in other words, $M$ is a bipartite matching (possibly imperfect) between 
the neighborhoods ${N}_A(i)$ and ${N}_B(k)$. 

For the ease of notation, let $J=N_A(i) \setminus N_B[k]$ and $J'=N_B(k) \setminus N_A[i]$. 
Recall the matrix $Y^{(ik)}$ defined in \prettyref{eq:Y_3_hop}.
Since $M$ is a matching, it follows that 
$$
\iprod{Y^{(ik)}}{M} \le 2 |N_A [i]  \cap N_B [k] | + \sum_{j \in J , j' \in J' } Y^{(ik)}_{jj'} M_{jj'} \le \frac{nq}{8} + \sum_{j \in J , j' \in J' } Y^{(ik)}_{jj'} M_{jj'},
$$
where the last inequality holds 
because  $ |N_A[i] \cap N_B[k]| \le 4 \le nq/16$ on the event $\Gamma_{ik}$ under the assumption that $nq \ge C \log n$. 



Note that on the event $\calH_{ik}$, 
there is at most one cycle in the $2$-hop neighborhood
of $i$ in $A$, and at most one cycle in the $2$-hop neighborhood
of $k$ in $B$.

We next bound $\sum_{j \in J, j' \in J'} Y^{(ik)}_{jj'} M_{jj'}$
using McDiarmid's inequality, where $Y^{(ik)}_{jj'}=
\indc{\tilde{Z}^{(ik)}_{jj'} \le \eta}$ and $\eta=\eta_0 \sqrt{ \frac{L}{nq}}$ as defined in \prettyref{eq:eta-threshold}.
To circumvent the discontinuity of the indicator function, 
define a piecewise linear function $F$ which decreases linearly from $1$ to $0$ from $\eta$ to $2\eta$, so that 
$\indc{x \leq \eta} \leq F(x)$ for all $x$. 
 Furthermore, $F$ is Lipschitz with constant $1/\eta$. Define
\begin{equation}
W' \triangleq \sum_{j \in J, j' \in J'} F \left( \tilde Z^{(ik)}_{jj'} \right) M_{jj'}.
    \label{eq:WW2}
\end{equation}
Then we have
\begin{align}
 \iprod{Y^{(ik)}}{M}
 \le \frac{nq}{8} +  W'. \label{eq:Y_M_bound}
\end{align}
Let $\calL=\cup_{j \in J}  \left( N_A(j) \setminus N_A[i] \right)$ 
and $\calL'= \cup_{j' \in J'} \left( N_B(j') \setminus N_B[k] \right)$. 
Next we claim that, on the event $\calH_{ik} \cap \Gamma_A(i) \cap \Gamma_B(k)$, 
$W'$, 
as a function of $\{ 
( \tilde a_\ell^{(i)}, \tilde b_{\ell}^{(k)}): 
\ell \in \calL \cap \calL'\} $, 
$\{\tilde a_\ell^{(i)}: \ell \in \calL \setminus \calL' \}$, and 
$\{\tilde b_{\ell'}^{(k)}:\ell' \in \calL' \setminus \calL \}$, satisfies the bounded difference property with constant $O(\frac{1}{nq\eta})$.
This is verified by the following reasoning:
\begin{itemize}
    \item Fix $\ell \in \calL \backslash \calL'$.
    We consider the impact of modifying the value of $\tilde a_\ell^{(i)}$ on that of $W'$.
    On the tangle-free event $\calH_{ik}$, 
there are at most two distinct choices of $j$
such that $\ell \in N_A(j) \setminus N_A[i]$. Therefore 
$\tilde a_\ell^{(i)}$ appears in the empirical distribution $\tilde \mu_j^{(i)}$ for at most two different $j\in N_A(i)$.
Furthermore, since $\ell \notin \calL'$, $\tilde a_\ell^{(i)}$ does not appear in any $\tilde \nu_j^{(k)}$.
    Recall that any $m$-observation empirical distribution as a function of each observation satisfies the bounded difference property (with respect to the total variation distance) with constant $O(\frac{1}{m})$ 
    (cf.~\prettyref{eq:mcdia}). 
    On the event $\Gamma_A(i)$, we have 
    $a_j^{(i)} \ge  nq/4$.
    Thus modifying $\tilde a_\ell^{(i)}$ can change $\tilde \mu_j^{(i)}$ in total variation by at most $O(\frac{1}{nq})$.
    Furthermore, \textit{crucially, since $M$ is a matching}, for each $j$ there exists at most one $j'$ such that $M_{jj'} \neq 0$ in the double sum \prettyref{eq:WW2}.
    Finally, since $F$ is $(1/\eta)$-Lipschitz continuous by design, we conclude that 
    $\tilde a_\ell^{(i)} \mapsto W'$ has the desired bounded difference property with constant $O(\frac{1}{nq\eta})$.

    \item 
    Entirely analogously, since $b_j^{(k)} \ge  nq/4$ on the event $\Gamma_B(k)$,
		the mappings 
    $\tilde b_{\ell'}^{(k)} \mapsto W'$ for any $\ell' \in \calL' \backslash \calL$ and 
    $(\tilde a_\ell^{(i)}, \tilde b_{\ell}^{(k)}) \mapsto W'$ for any $\ell \in \calL \cap \calL$ all satisfy the bounded difference property with constant $O(\frac{1}{nq\eta})$ on the event $\calH_{ik} \cap \Gamma_A(i) \cap \Gamma_B(k)$.
    
\end{itemize}



Recall that in the definition of outdegree $\tilde a_\ell^{(i)}$, we have excluded the $2$-hop neighborhood of $i$ in $A$;
similarly, in the definition of outdegree $\tilde b_{\ell'}^{(k)}$, we have excluded the $2$-hop neighborhood of $k$ in $B$. Therefore, we have that
\begin{itemize}
\item 
$\{ (\tilde a_\ell^{(i)}, \tilde b_{\ell}^{(k)} ) \} $ are independent across different $\ell  \in \calL \cap \calL'$; 
    \item $\{ \tilde a_\ell^{(i)}\} $ are independent across different $\ell \in \calL\setminus \calL'$;
    \item $\{\tilde b_{\ell'}^{(k)} \}$ are independent across different
$\ell' \in \calL' \setminus \calL$;
\item $\{ ( \tilde a_\ell^{(i)}, \tilde b_{\ell'}^{(k)} ): \ell \in \calL \cap \calL' \} $ are independent of $\{ \tilde a_\ell^{(i)}: \ell \in \calL \setminus \calL', \; \tilde b_{\ell'}^{(k)} : \ell \in \calL' \setminus \calL  \} $.
\end{itemize}
However, $\tilde a_\ell^{(i)}$ for $\ell \in \calL \setminus \calL'$ and $\tilde b_{\ell'}^{(k)}$ for 
$\ell' \in \calL' \setminus \calL$ may be dependent, 
because $A_{\ell \ell'}$ may
contribute to the outdegree $\tilde a_\ell^{(i)}$, and $B_{\ell \ell'}$ may contribute 
to the outdegree $\tilde b_{\ell'}^{(k)}$. Fortunately,
similar to the reasoning in \prettyref{fig:lemma2}, 
conditioned on the edge sets $E_A(\calL, \calL')$ and 
$E_B(\calL, \calL')$, the outdegrees $\{ \tilde a_\ell^{(i)}: \ell \in \calL \setminus \calL' \}$
and $\{\tilde b_{\ell'}^{(k)}:\ell' \in \calL' \setminus \calL\}$ are independent, since the definition of the outdegree in \prettyref{eq:outdegree2hop1}--\prettyref{eq:outdegree2hop2}
excludes the two-hop neighborhood.

In particular, 
write $E(\calL, \calL')=(E_A(\calL,\calL'), E_B(\calL, \calL'))$ for simplicity,
and let $\calF_{ik}$ denote an event (to be specified later) that is measurable with respect to $E(\calL, \calL')$
and holds with high probability: $\prob{\calF_{ik}} \ge 1- \exp\left(\Omega(nq)\right)$. 
Conditioned on $E(\calL, \calL')$ such that the event $\calF_{ik}$ holds, 
applying McDiarmid's inequality
and noting that $|\calL|, |\calL'| \le (2nq)^2$ on the event $\Gamma_A(i) \cap \Gamma_B(k)$, we get that 
\begin{equation}
\prob{ W' - \expect{ W' \mid E(\calL, \calL')   } \ge  \frac{nq}{16}~\Big|~  E(\calL, \calL') } 
\le \exp \left( -c_1 (nq\eta)^2  \right),
\label{eq:mcdiarmid2}
\end{equation}
where  $c_1$ is an absolute constant.  

We next compute $\expect{ W' \mid E(\calL, \calL')}$. We first claim that for all $j \in J$ and $j' \in J'$, 
\begin{align}
 \prob{  Z^{(ik)}_{jj' } \le 2\eta ~\Big|~ E(\calL, \calL') } \le e^{-\Omega(L) } .  \label{eq:Z_ik_3_hop}
\end{align}
By definition of $W'$, we have
\begin{align*}
\expect{ W' ~\Big|~ E(\calL, \calL')} & = \sum_{ j \in J, j' \in J' } 
\expect{ F \left( \tilde Z^{(ik)}_{jj' } \right) ~\Big|~ E(\calL, \calL')} M_{jj'} \\
& \le \sum_{ j \in J, j' \in J' } \prob{  Z^{(ik)}_{jj' } \le 2\eta ~\Big|~ E(\calL, \calL') }\\
& \le O\left(e^{-\Omega(L) } \right) \sum_{ j \in J, j' \in J' } M_{jj'} \\
& \le O(e^{-\Omega(L) } nq) \le \frac{nq}{16},
\end{align*}
where the first inequality follows by the definition of $F$; 
the second inequality holds due to \prettyref{eq:Z_ik_3_hop}; 
the third inequality is due to $|J| \le 2nq$ on the event $\Gamma_A(i)$ and that $M$ is a matching; the last inequality holds due to $L \ge L_0 \log n$.
Combining the last displayed equation with \prettyref{eq:mcdiarmid2}, we obtain
$$
\prob{ W' \ge nq/8  \mid E(\calL, \calL')    }  \indc{\calF_{ik} } \le \exp \left( -c_1 (nq\eta)^2  \right).
$$
Averaging over the last displayed equation yields that 
$$
\prob{ \left\{ W' \ge nq/8 \right\}  \cap \calF_{ik}  } \le  \exp \left( -c_1 (nq\eta)^2  \right).
$$
Combining the last displayed equation with \prettyref{eq:Y_M_bound}, we obtain
\begin{align*}
  \prob{
  \left\{ \langle Y^{(ik)}, M \rangle
  \ge nq/4  \right\} \cap \calF_{ik}  } \le  \prob{ \left\{ W' \ge nq/8 \right\}  \cap \calF_{ik}  }  \le \exp \left(
    -c_1 ( nq \eta)^2 \right).
\end{align*}
Finally, applying a union bound over the set of all  possible matching $M$ and recalling the definition of similarity  $W_{ik}$ in \prettyref{eq:def_W_3_hop}, we  get that  
$$
\prob{ \left\{ W_{ik} \ge nq/4 \right\} \cap \calF_{ik} } 
\le (2nq)! \times e^{ -c_1( nq \eta)^2 }
\le e^{-\Omega\left(nq \log(nq) \right)},
$$
where the last inequality holds due to the choice of $\eta$ 
in \prettyref{eq:eta-threshold}
and the assumption that $L \ge L_0 \log (nq)$. 
Therefore, by a union bound, 
$$
\prob{  W_{ik} \ge nq/4} \le \prob{ \left\{ W_{ik} \ge nq/4 \right\} \cap \calF_{ik} } 
+ \prob{\calF^c_{ik} } \le e^{-\Omega(nq)}. 
$$

\nb{
It remains to specify the event $\calF_{ik}$ and verify the claim \prettyref{eq:Z_ik_3_hop} when 
conditioned on $E(\calL, \calL')$ such that the event $\calF_{ik}$ holds.
The proof follows a similar argument as in the proof of \prettyref{lmm:fake}.
Specifically, recall that $\tilde{Z}^{(ik)}_{j j' }= d\left( \tilde{\mu}_j^{(i)} , \tilde{\nu}_{j'} ^{(k)} \right)$,
where 
\begin{equation*}
\tilde{\mu}^{(i)}_j \triangleq  \frac{1}{ a^{(i)}_j } \sum_{ \ell \in N_A(j) \setminus N_A[i] } 
\delta_{ \tilde{a}_\ell^{(i)}} \; - \Binomc\left(n-\tilde{a}_i,q \right),
\end{equation*}
and
 \begin{equation*}
\tilde{\nu}^{(k)}_{j'} \triangleq  \frac{1}{b^{(k)}_{j'}  } \sum_{ \ell \in N_B(j') \setminus N_B[k ] } 
\delta_{ \tilde{b}_\ell^{(k)} } \; - \Binomc\left(n-\tilde{b}_k, q \right).
\end{equation*}

Let $\nu= \Binomc\left(n-\tilde{a}_i,q \right)$ and $\nu'=\Binomc\left(n-\tilde{b}_k, q \right)$.
Observe that for $\ell \in \tilde{N}_B(k)$, $\tilde{a}_\ell^{(i)}$ is no longer distributed as $\nu$ after conditioning on the 
$2$-hop neighborhood $\tilde{N}_B(k)$, and likewise for $ \tilde{b}_\ell^{(k)} $ for $\ell \in \tilde{N}_A(i)$. 
Therefore, we decompose $\tilde{\mu}^{(i)}_j $ and $\tilde{\nu}^{(k)}_{j'} $ as 
\begin{align*}
\tilde{\mu}^{(i)}_j = & ~ 
\kappa \hat{P}  + (1-\kappa) \tilde{P}   \\
\tilde{\nu}^{(k)}_{j'}  = & ~ \kappa' \hat{Q} + (1-\kappa') \tilde{Q} ,
\end{align*}
where 
\begin{align*}
\kappa \triangleq  \frac{ | \hat{S}  |  }{a^{(i)}_j } ,  
\quad \kappa '\triangleq \frac{ | \hat{T} | }{b_{j'}^{(k)}}, 
\end{align*}
and 
\begin{align*}
\tilde{S} &=\left( N_A(j) \setminus N_A[i] \right)  \cap  \tilde{N}_B(k)^c, \quad \hat{S} = \left( N_A(j) \setminus N_A[i] \right)  \cap  \tilde{N}_B(k)  \\
\tilde{T} & = \left( N_B(j') \setminus N_B[k] \right)  \cap \tilde{N}_A(i)^c, \quad \hat{T} = \left( N_B(j') \setminus N_B[k] \right) \cap \tilde{N}_A(i), 
\end{align*}
and 
\begin{align*}
\tilde{P} \triangleq  & ~ \frac{1}{ | \tilde{S} |  } \sum_{ \ell \in  \tilde{S}  }  
\delta_{ \tilde{a}_\ell^{(i)}} - \nu, 
\quad \hat{P} ~\triangleq  \frac{1}{ |  \hat{S} |  } 
\sum_{ \ell  \in  \hat{S}   }   \delta_{ \tilde{a}_\ell^{(i)}}- \nu, \\
\tilde{Q}  \triangleq  & ~ \frac{1}{ | \tilde{T}   | } \sum_{  \ell \in    \tilde{T}   } 
\delta_{ \tilde{b}_\ell^{(k)} }  - \nu', 
\quad \hat{Q}~\triangleq  \frac{1}{ | \hat{T} |} 
\sum_{ \ell \in \hat{T}  }    \delta_{ \tilde{b}_\ell^{(k)} }- \nu'.
\end{align*}

Therefore, we have 
\begin{align}
\tilde{Z}^{(ik)}_{j j' } & \geq   (1-\kappa) d( \tilde{P}, \tilde{Q} )  - 
\kappa \| [ \hat{P}  ]_L\|_1 - \kappa'  \|[ \hat{Q} ]_L\|_1 - |\kappa-\kappa'| \times \|[\tilde{Q} ]_L\|_1.
\end{align}
On the event $\Gamma_A(i) \cap \Gamma_B(k) \cap \Gamma_{ik}$, 
we have $a^{(i)}_j, b_{j'}^{(k)} \ge nq/2$, and $| \hat{S}  | , | \hat{T} | \le 2$.
Therefore, $\kappa, \kappa'  \le \frac{4}{nq}$.
Since $\| [ \hat{P}  ]_L\|_1,   \|[ \hat{Q} ]_L\|_,  \|[\tilde{Q} ]_L\|_1 \le 2$, it follows that 
\begin{align}
\tilde{Z}^{(ik)}_{j j' }  \ge \frac{1}{2} d( \tilde{P}, \tilde{Q} )  - O\left(  \frac{1}{nq} \right).   \label{eq:Zik1_3_hop}
\end{align}

It remains to lower bound $d( \tilde{P}, \tilde{Q} ) $. 
Conditioning on $E(\calL, \calL')$, we aim to apply \prettyref{lmm:indsample} with 
$m=| \tilde{S}  |$, 
$m'=| \tilde{T} |$, $m_0=nq$,
$
\{X_\ell \}_{\ell=1}^m = \{ \tilde{a}_\ell^{(i)}\}_{\ell \in \tilde{S}   },$
and
$
\{Y_\ell \}_{\ell=1}^{m'} = \{ \tilde{b}_\ell^{(k)}\}_{\ell \in \tilde{T}  } 
$.
Note that since $\kappa, \kappa' \le 1/2$ and $nq/2 \le a_j^{(i)}, b_{j'}^{(k)} \le 2nq$, it follows that 
$m, m' =\Theta(m_0)$. Also, as previously argued, after conditioning on $E(\calL, \calL')$, 
$\{ \tilde{a}_\ell^{(i)}\}_{\ell \in \tilde{S}   }$ and 
$ \{ \tilde{b}_\ell^{(k)}\}_{\ell \in \tilde{T}  } $ are two independent sequence of real-valued
random variables.  
It remains to check the assumption \prettyref{eq:partition} in \prettyref{lmm:indsample}, that is, 
there exists a set $\calL_0 \subset \tilde{S}$ with 
$|\calL_0| \ge m/4$ and constants $c_1, c_2 \in (0,1]$
such that 
$
\frac{c_1}{L} \leq \prob{ \tilde{a}_\ell^{(i)} \in I }  \leq \frac{c_2}{L}
$
for all interval $I \subset [-2,2]$ of length $1/L$. 

To this end, recall that 
\begin{align*}
\tilde{a}_\ell^{(i)} 
= \frac{1}{ \sqrt{  (n - \tilde{a}_i ) q (1-q)  } } \sum_{u \in \tilde{N}_A(i)^c } \left( A_{u \ell} - q  \right).
\end{align*}
For $\ell \in  \tilde{S}$:
\begin{itemize}
\item If $u \in N_B[k]$, then $B_{u\ell}=0$; otherwise, $\ell \in \tilde{N}_B (k)$, violating $\ell \in \tilde{S}$. 
Thus, $A_{u\ell} \sim \Bern(q')$ with $q'=\prob{A_{u\ell}=1|B_{u\ell}=0}= \frac{q(1-s)}{1-ps} \le q$; 
\item If $u \in \calL'$, then $A_{u \ell}$ is deterministic when conditioning on $E_A(\calL, \calL')$;
\item If $u \notin \tilde{N}_B(k)$, then $A_{u\ell} \sim \Bern(q)$. 
\end{itemize}

Recall that $e_A(\ell, S)$ denotes the number of edges between vertex $\ell$ and vertices in $S$ in graph $A$. 
Define $\phi=| \calL' \setminus \tilde{N}_A(i) | $, 
$\psi=| N_B[k] \setminus \tilde{N}_A(i) |$, and 
$$
\calL_0 = \left\{ \ell \in \tilde{S} : \left| e_A\left( \ell, \calL' \setminus \tilde{N}_A(i) \right) -   
\phi q \right|
\le \sqrt{nq(1-q)/2}
 \right\}.
$$
Define the event 
\begin{align}
\calF_{ik}= \left\{ |\calL_0| \ge m/4  \right\},\label{eq:def_F0}
\end{align}
which is measurable with respect to $E_A(\calL,\calL')$ since $\tilde{S}\subset \calL$.
Note that for each $\ell\in\calL$,
 $e_A\left( \ell, \calL' \setminus \tilde{N}_A(i) \right)
\sim \Binom\left( \phi , q\right)$. Hence, by Chebyshev's inequality, 
$$
\prob{\ell \in \calL_0} \ge 1- \frac{2 \phi }{n} \ge \frac{1}{2},
 $$
where the last inequality holds because $\phi \le |\calL'| \leq \tilde{b}_k\le (2nq)^2$ and $q\le n^{-\epsilon}$ for $\epsilon>9/10$. 
Moreover, $e_A\left( \ell, \calL' \setminus \tilde{N}_A(i) \right) $ are independent across 
$\ell \in \tilde{S}$. Hence, $|\calL_0|$ is stochastically lower bounded by $\Binom(m,1/2)$.
It follows from the binomial tail bound~\prettyref{eq:bintail} and the fact that $m=\Omega(nq)$ that 
$$
\prob{\calF_{ik} }= \prob{ |\calL_0| \ge m/4} \ge 1- \exp(\Omega(nq)).
$$
Let 
$$
u_\ell = \frac{1}{\sqrt{ (n-\tilde{a}_i - \phi - \psi) q (1-q) + \psi q'(1-q') }} 
\left[ e_A\left(\ell,  \tilde{N}_A(i)^c \setminus L' \right) -  (n-\tilde{a} - \phi - \psi) q - \psi q' \right]
$$
and
$$
v_\ell=\frac{1}{ \sqrt{ (n-\tilde{a}_i) q (1-q) } } \left[ \psi (q'-q) + e_A\left( \ell, \calL' \setminus \tilde{N}_A(i) \right) - \phi q.
\right]
$$
Let 
$$
\alpha_\ell = \sqrt{ \frac{(n-\tilde{a}_i - \phi - \psi) q (1-q) + \psi q'(1-q')}{(n-\tilde{a}_i) q (1-q)  } }.
$$
Then 
$\tilde{a}_\ell^{(i)}  = \alpha_\ell u_\ell + v_\ell$. 
Note that on event $\Gamma_A(i) \cap \Gamma_B(k)$, $\tilde{a}_i \le (2nq)^2$ and 
$\phi, \psi \le 2nq$. 
Since $q' \le q \le n^{-\epsilon}$, it follows that $1/\sqrt{2} \le \alpha_\ell \le 1$.
Moreover,  $|v_\ell | \le 1$ for all $\ell \in \calL_0$. 
By the Berry-Esseen theorem, we have 
$$
\prob{ \tilde{a}_\ell^{(i)} \in I } = \prob{ u_\ell \in \frac{ I - v_\ell }{\alpha_\ell}}
= \prob{ \calN(0,1)  \in \frac{ I - v_\ell }{\alpha_\ell} } \pm \frac{O \left( 1\right)}{\sqrt{nq}}
= \frac{\Theta(1)}{L}   \pm \frac{O \left( 1\right)}{\sqrt{nq}} = \frac{\Theta(1)}{L},
$$
where the last equality holds due to $nq \ge CL^2$.

Conditioning on $E(\calL, \calL')$ such that event $\calF_{ik}$ holds and applying \prettyref{lmm:indsample}, 
we get that 
\begin{align}
\prob{ d( \tilde{P} , \tilde{Q} ) \leq  \alpha_1 \sqrt{\frac{L}{ n q } }   ~ \Bigg| ~ E(\calL, \calL')  }  \indc{\calF_{ik}}
 \le e^{-\Omega(L)}, \label{eq:fake2_3_hop}
\end{align}
where $\alpha_1$ is some absolute constant.

Combining \prettyref{eq:Zik1_3_hop} with 
\prettyref{eq:fake2_3_hop},
 we have that conditioned on  $E(\calL, \calL')$ such that event $\calF_{ik}$ holds, with probability at least 
$1-e^{-\Omega(L)}$, 
\[
\tilde{Z}^{(ik)}_{j j' } \geq  \frac{\alpha_1}{2}  \sqrt{\frac{L}{nq}} 
- O\left(  \frac{1}{nq} \right)
> 2 \eta_0 \sqrt{\frac{L}{nq} }  = 2\eta
\]
for some sufficiently small constant $\eta_0$, where the last inequality holds 
due to $nq \ge C \log n$. 
Thus we arrive at the desired claim \prettyref{eq:Z_ik_3_hop}.
}
\qed
\end{proof}

With \prettyref{lmm:true_3_hop}
and \prettyref{lmm:fake_3_hop}, we are 
ready to prove \prettyref{thm:sparse}.

\begin{proof}[Proof of \prettyref{thm:sparse}]
Let $\calH$ denote the event that 
all $2$-hop neighborhoods in the union graph
$A \cup B$ are tangle-free. 
Under the assumption that 
$q \le n^{-\epsilon}$ for $\epsilon>9/10$ and the fact that
the union graph $A \cup B \sim \calG(n, ps(2-s))$,
it follows from \prettyref{lmm:tangle_free} that
$\prob{ \calH} \ge 1-O\left(n^{9-10\epsilon}\right)$. 
Define the event $\calF= \calH \cap \left(\cap_i \left( \Gamma_A(i) \cap \Gamma_B(i) \right) \right) \cap \left( \cap_{i,k} \Gamma_{ik} \right)$. 
It follows that 
$$
\prob{\calF^c} \le \prob{\calH^c} + \sum_{i \in [n]} \left( \prob{\Gamma_A^c(i) } +\prob{\Gamma_B^c(i) }   \right) 
+ \sum_{i,k \in [n]} \prob{\Gamma^c_{ik}} 
\le O\left(n^{9-10\epsilon}\right). 
$$

Applying  \prettyref{lmm:true_3_hop} with $L=C \log (nq)$ and averaging over the 2-hop neighborhoods $\tilde{N}_A(i)$ and $\tilde{N}_B(i)$
and noting that $nq \ge C_0 \log n$ for a large constant $C_0$, $q \le n^{-\epsilon}$ for $\epsilon>9/10$,
and $\sigma \le \sigma_0/L$ for a sufficiently small
constant $\sigma_0$, we get that 
$$
\prob{ \left\{ W_{ii} \le \frac{nq}{4}  \right\} \cap \calH_{ii} \cap \Gamma_A(i) \cap \Gamma_B(i) \cap \Gamma_{ii} } \le e^{-\Omega(nq)} \le n^{-2} .
$$
Similarly, for $i \neq k$,  applying \prettyref{lmm:fake_3_hop} with $L=C \log (nq)$ and averaging over the 2-hop neighborhoods $\tilde{N}_A(i)$ and $\tilde{N}_B(k)$,
we get that 
\begin{align*}
\prob{  \left\{  W_{ik}  \ge \frac{nq}{4} \right\} \cap \calH_{ik} \cap \Gamma_A(i) \cap\Gamma_B(k)  \cap \Gamma_{ik}  } 
 \le  e^{-\Omega(nq ) } \le n^{-3}.
\end{align*}

By the union bound and the fact that $\calH \subset H_{ii}$, we have
\begin{align*}
\prob{  \left\{ \min_{i \in [n]}  W_{ii} \le \frac{nq}{4}  \right\} \cap \calF  }
&\le \sum_{i \in [n]} \prob{ \left\{ W_{ii} \le \frac{nq}{4} \right\} \cap \calF  } \\
& \le 
\sum_{i \in [n]} \prob{ \left\{ W_{ii} \le \frac{nq}{4}  \right\} \cap \calH_{ii} \cap \Gamma_A(i) \cap \Gamma_B(i) \cap \Gamma_{ii}  }  \le n^{-1}.
\end{align*}

Similarly, by the union bound and the fact that $\calH \subset H_{ik}$, we have
\begin{align*}
\prob{  \left\{ \max_{i\neq k } W_{ik}   \ge \frac{nq}{4}  \right\} \cap \calF  }
&\le \sum_{i \neq k} \prob{ \left\{ W_{ik} \ge \frac{nq}{4} \right\} \cap \calF  } \\
& \le \sum_{i \neq k } \prob{ \left\{ W_{ik} \ge \frac{nq}{4} \right\} \cap \calH_{ik}  \cap \Gamma_A(i) \cap\Gamma_B(k) \cap \Gamma_{ik} }   \le  n^{-1} .
\end{align*}

In conclusion, by the union bound, 
\begin{align*}
&\prob{\min_{i \in [n]}  W_{ii}  \le \max_{i\neq k } W_{ik}   } \\
& \le \prob{\calF^c} + \prob{  \left\{ \min_{i \in [n]}  W_{ii} \le \frac{nq}{4}  \right\} \cap \calF  }+ \prob{  \left\{ \max_{i\neq k } W_{ik}   \ge \frac{nq}{4}  \right\} \cap \calF  } \\
& \le O\left(n^{9-10\epsilon}\right).
\end{align*}
Thus with probability at least
$1-O\left(n^{9-10\epsilon}\right)$,  \prettyref{alg:dist_3_hop} outputs
$\hat{\pi}=\pi^*$. \qed
\end{proof}

\def \figpath {JXMatlab/pdf_files/}

\section{Numerical experiments}
\label{sec:exp}
In this section, we empirically evaluate the performance of 
degree profile matching (DP), a quadratic programming relaxation of QAP based on 
doubly stochasticity (QP), and a spectral relaxation (SP). 

The performance metric is defined as follows:
for a given estimator $\hat{\pi}$ of the ground-truth permutation $\pi^*$, we define its accuracy rate as the fraction of correctly matched pairs: 
\begin{align}
\mathrm{acc}(\hat{\pi}) \triangleq \frac{1}{n}  \sum_{i \in [n]} 
\indc{\pi^*(i)=\hat{\pi}(i)} \, .
 \end{align}

Recall that we use outdegrees instead of degrees in our degree profile matching \prettyref{alg:dist}
to reduce the dependency and facilitate the theoretical analysis.  In all numerical experiments,
we simply use degree profiles defined through the usual vertex degrees. 
Moreover, instead of using the $Z$ distance \prettyref{eq:distance} defined as 
the total variation distance between discretized degree profiles, we directly use the $1$-Wasserstein $W_1$-distance  
between degree profiles; see \prettyref{eq:dp_distance} with $p=1$. 
Note that for two empirical distributions with the same sample size, such as $\mu$ and $\nu$ in \prettyref{eq:mu_nu_def}, one can compute their 
$W_1$-distance by sorting the samples:
$$
W_1 (\mu, \nu) = \sum_{i=1}^n \left| X_{(i)} - Y_{(i)} \right|,
$$
where $X_{(1)} \ge \cdots \geq X_{(n)}$ and $Y_{(1)} \ge \cdots \geq Y_{(n)}$.
If the sample sizes are different, as is the case for \ER graphs, it is more convenient to compute the $W_1$-distance 
using either the CDF characterization \prettyref{eq:mu_nu_def} or the original coupling definition.

For the QP method, note that the optimum solution of the quadratic programming relaxation
of QAP may not be a permutation matrix. 
Thus we round the optimal solution to $\calS_n$ by projection:
$ \min_{ \Pi \in \calS_n} \| \Pi - \hat{D} \|_F^2$,
which is a linear assignment problem and efficiently solvable via max-weighted bipartite matching.

For the SP method, we compute the eigenvectors $u$ of $A$ and 
$v$ of $B$ corresponding to the largest eigenvalue. Then we align $u$ and $v$,
by finding the permutation
$\pi$ that minimizes the Euclidean distance $\sum_{i \in [n]} | u_i - v_{\pi(i)}|^2$.
This is equivalent to $\min_{ \Pi \in \calS_n} \| \Pi - u v^\top \|_F^2$,
which again can be efficiently solved via max-weighted bipartite matching.

For each method, we can potentially boost its accuracy using the iterative clean-up procedure described in \prettyref{alg:clean-up}.
\begin{algorithm}
\caption{Iterative clean-up procedure} \label{alg:clean-up}
\begin{algorithmic}[1]
\STATE {\bfseries Input:} Graphs $A$ and $B$ on $n$ vertices; a permutation $\pi$ on $[n]$; and
the maximum number of iterations $T$;
\STATE {\bfseries Output:} A permutation $\hat\pi$ on $[n]$.
\STATE (Initialization) Initialize $\Pi_0$ to be the permutation matrix corresponding to $\pi$
 \FOR{$t=1,\ldots, T$}
 \STATE Solve the linear assignment problem
 \begin{align}
\Pi_{t+1} \in \arg \max_{\Pi \in \calS_n}  \iprod{\Pi}{A\Pi_t B}
\label{eq:iter_clean_up}
 \end{align}
 \ENDFOR
\STATE Output $\hat\pi$ to be the permutation corresponding to $\Pi_{T+1}$.
\end{algorithmic}
\end{algorithm}

Note that $(A\Pi_t B)_{ik}$ in \prettyref{eq:iter_clean_up} can be viewed as the number of ``common'' neighbors $j$
between $i$ and $k$ under the permutation $\pi_t$ in the sense that
$j$ is $i$'s neighbor in $A$ and $\pi_t(j)$ is $k$'s neighbor in $B$. 
Hence, \prettyref{eq:iter_clean_up} finds 
the matching which maximizes the total sum of ``common'' neighbors under $\pi_t$.
This resembles  the second stage of \prettyref{alg:seed} for seeded graph matching. 
Alternatively, by
rewriting  the objective in \prettyref{eq:iter_clean_up}  as $\vecc(\Pi)^\top  (B \otimes A) \vecc(\Pi_t)$, 
where $B \otimes A$ denotes the Kronecker product and $\vecc(\Pi) \in \reals^{n^2}$ denotes the vectorized version of the matrix $\Pi$, we can reduce \prettyref{eq:iter_clean_up} to 
the projected power iteration discussed in~\cite{onaran2017projected}. 

For ease of notation, we denote by DP+ the degree profile matching algorithm followed by the iterative clean-up procedure. 
Similarly, we define QP+ and SP+. We run the iterative clean-up procedure up to $T=100$
iterations. Also, for the sake of computational efficiency,
 instead of using the max-weighted bipartite matching algorithm to solve \prettyref{eq:iter_clean_up} exactly,
 we use the following standard greedy matching algorithm to approximately solve \prettyref{eq:iter_clean_up} with 
input weight matrix being $A\Pi_t B$.

\begin{algorithm}
\caption{Greedy  Matching} \label{alg:greedy_matching}
\begin{algorithmic}[1]
\STATE {\bfseries Input:} A bipartite graph with $n\times n$ symmetric 
edge weight matrix $W$;
\STATE {\bfseries Output:} A $n\times n$ permutation matrix $\Pi$.
\STATE (Initialization) Initialize $\calM=\emptyset$
 \FOR{all $(i,j)$ in decreasing order of $W_{ij}$}
 \STATE Add $(i,j)$ to $\calM$ if $\calM$ forms a matching 
 \ENDFOR
\STATE 
Output $\Pi$, where $\Pi_{ij}=1$ if $(i,j) \in \calM$ and $\Pi_{ij}=0$ otherwise.
\end{algorithmic}
\end{algorithm}




\subsection{Wigner Matrices}
\label{sec:exp-gaussian}
We  evaluate the performance of all three algorithms as well as their cleaned-up version on the correlated Wigner model given in \prettyref{sec:gaussian}. 
The results are shown in \prettyref{fig:Wigner_comparison} as a function of the noise magnitude $\sigma$ with $n=1000$ fixed.
Clearly, QP dominates DP, which, in turn, significantly outperforms SP in term of the matching accuracy.
Furthermore, the iterative clean-up procedure significantly boosts the accuracy rates for all three methods.
Computationally, QP needs to solve a quadratic program, where the Hessian matrix in the objective function involves Kronecker product 
$B \otimes A$ and thus is of dimension $n^2 \times n^2. $
Hence,  QP is much more computationally expensive and memory costly than either DP and SP. 
In our simulation of QP, we developed a fast solver for QP based on the alternating direction method of multipliers (ADMM)
algorithm~\cite[Section 5.2]{boyd2011distributed} and that avoids computing $B \otimes A$; nevertheless, even with this fast solver,
to generate the simulation results in \prettyref{fig:Wigner_comparison}, QP takes around 85 minutes, 
while DP takes about 7 minutes, and SP takes about 23 seconds.

\begin{figure}[ht]
\centering
\includegraphics[width=0.5\columnwidth]{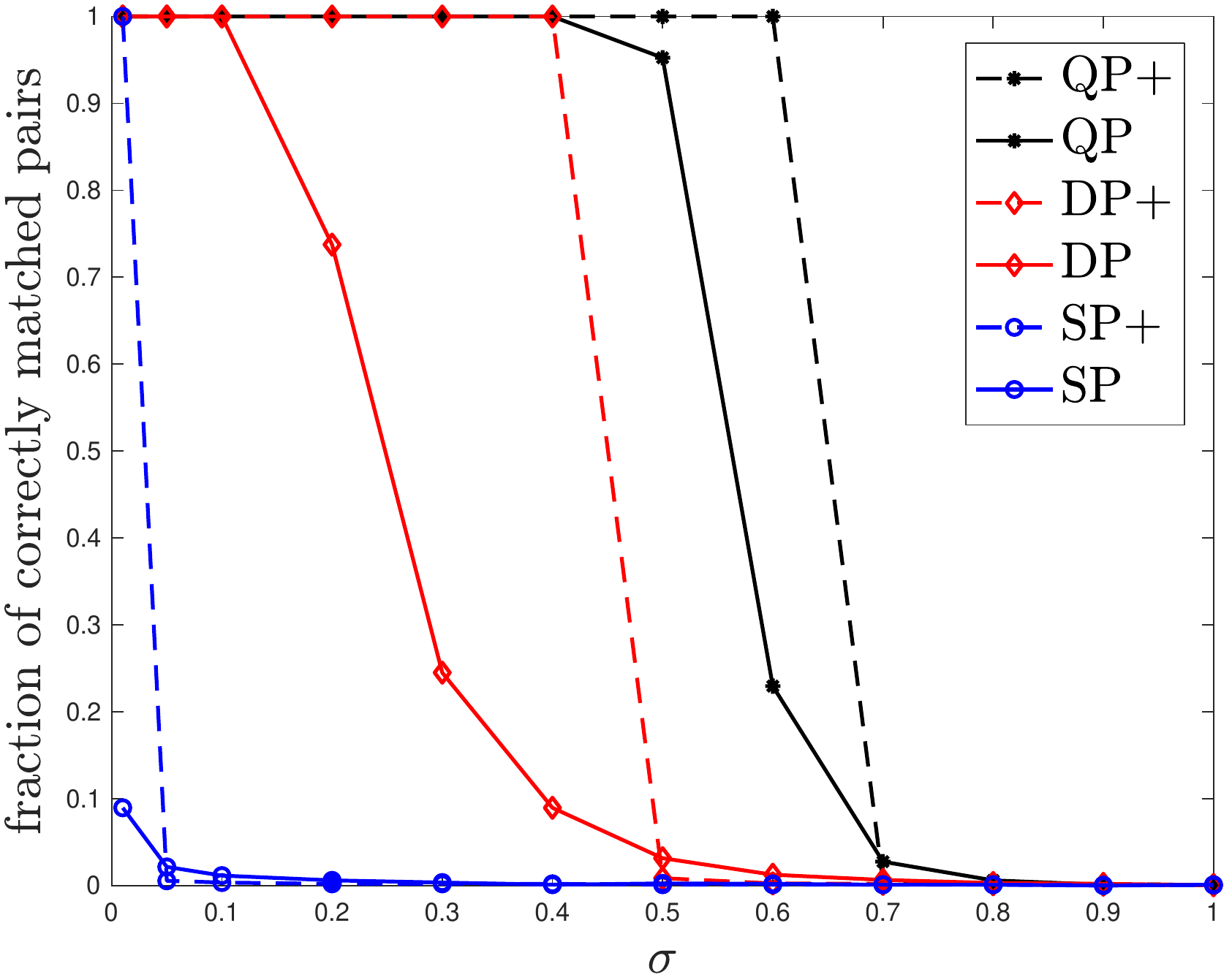}
\caption{Simulated correlated Wigner model with $n=1000$ and varying $\sigma$. 
For each value of $\sigma$, the accurate rate shown is the median of $10$ independent runs.}
\label{fig:Wigner_comparison}
\end{figure}

Next we simulate the performance of DP and DP+ for different matrix sizes ranging 
from $100$ up to $1600$. The results are depicted in~\prettyref{fig:Wigner_phase}. 
Since our theory predicts that DP succeeds in exact recovery
when $\sigma \log n \le c$ for a small constant $c$, we rescale the $x$-axis as
$\sigma \log n$.  As we can see, the curves for different $n$ align well with each other.
Moreover, the accuracy rate of DP gradually drops off from $1$ to $0$ when $\sigma \log n$ is above $0.7$,
while that of DP+ sharply drops off from $1$ to $0$ when $\sigma \log n $ is above $3.3$.

\begin{figure}[ht]
	 \centering
	 \subfigure[The degree-profile (DP)  algorithm.]%
	 {\includegraphics[width=0.45\textwidth]{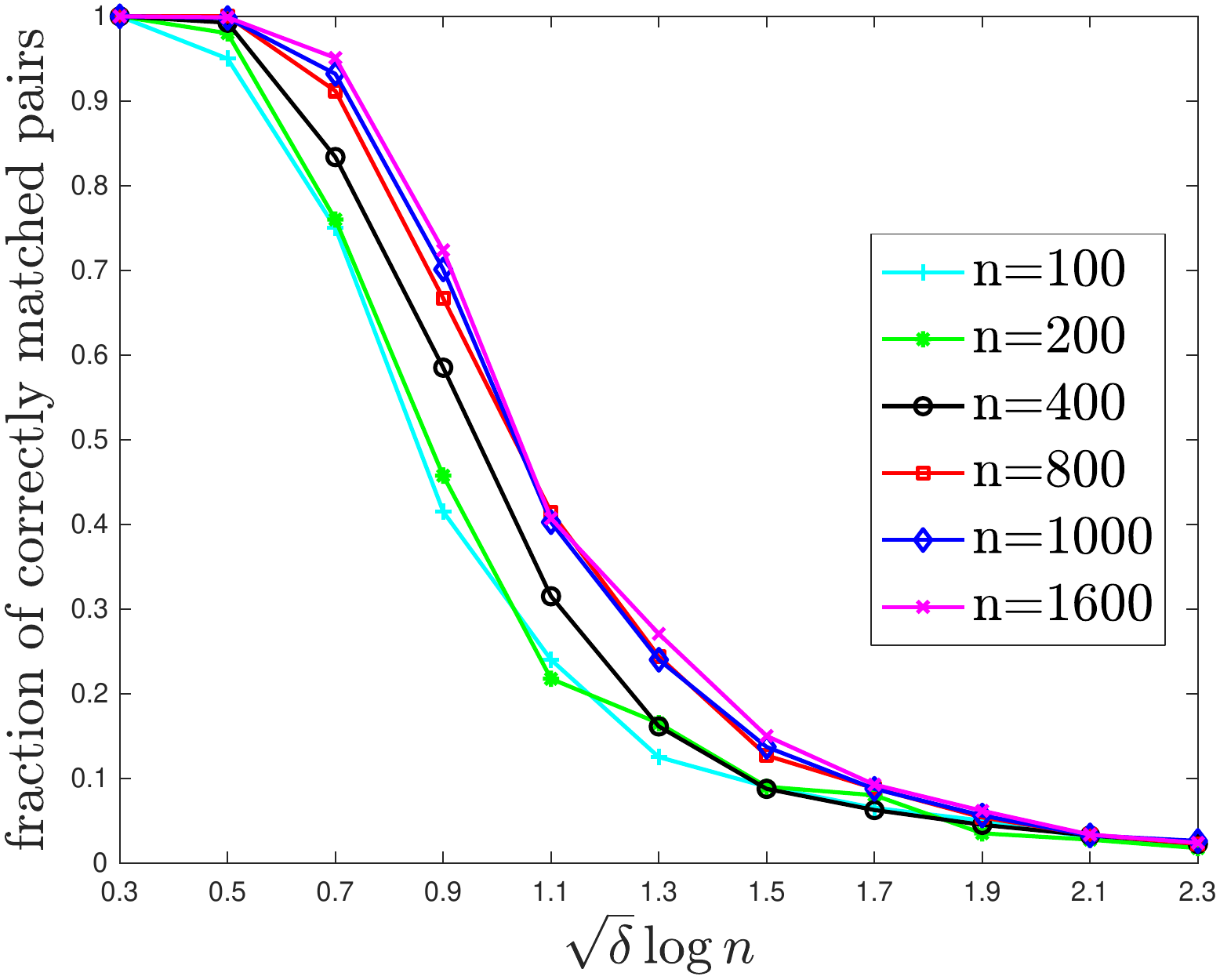} }
	 \subfigure[The degree profile followed by the iterative clean-up procedure (DP+).]%
	 {\includegraphics[width=0.45\textwidth]{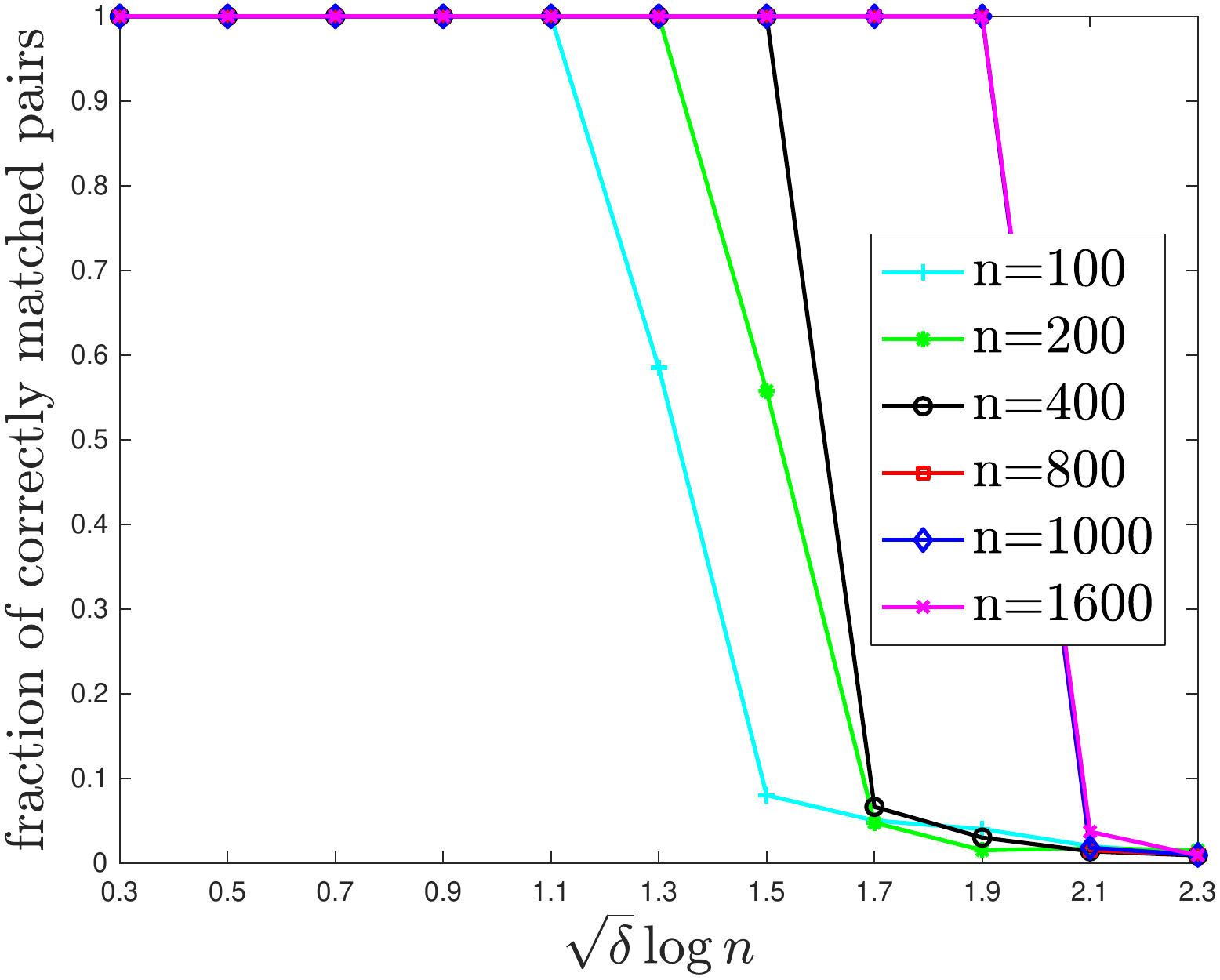} }
	 \caption{Simulated correlated Wigner model with varying $n$ and $\sigma$. 
For each value of $\sigma$, the accurate rate shown is the median of $10$ independent runs.
    }
	\label{fig:Wigner_phase}
\end{figure}


\subsection{\ER Graphs}
\label{sec:exp-er}
We evaluate the performance of all three algorithms as well as their cleaned-up version on the correlated \ER graph model $\calG(n,q;s)$. We focus on sparse graphs where the edge probability of the parent graph is fixed to be $p\triangleq q/s=\log^2(n)/n$. The simulation results for dense graphs (such as $p=1/2$) are similar and thus omitted. 

The results are shown in \prettyref{fig:ER_comparison} as a function of the edge deletion 
probability $\delta \triangleq 1-s$ with $n=1000$ fixed. Analogous to the Wigner case, QP dominates DP, which, in turn, significantly outperforms SP in term of the matching accuracy, and the iterative clean-up procedure significantly boosts the accuracy rates for all three methods. Computationally, 
to generate the simulation results in \prettyref{fig:ER_comparison}, 
QP takes around 51 minutes, DP takes about 2 minutes, and SP takes about 12 seconds.
\nb{Note that each of these methods is run on the same architecture under the same conditions}.

\begin{figure}[ht]
\centering
\includegraphics[width=0.5\columnwidth]{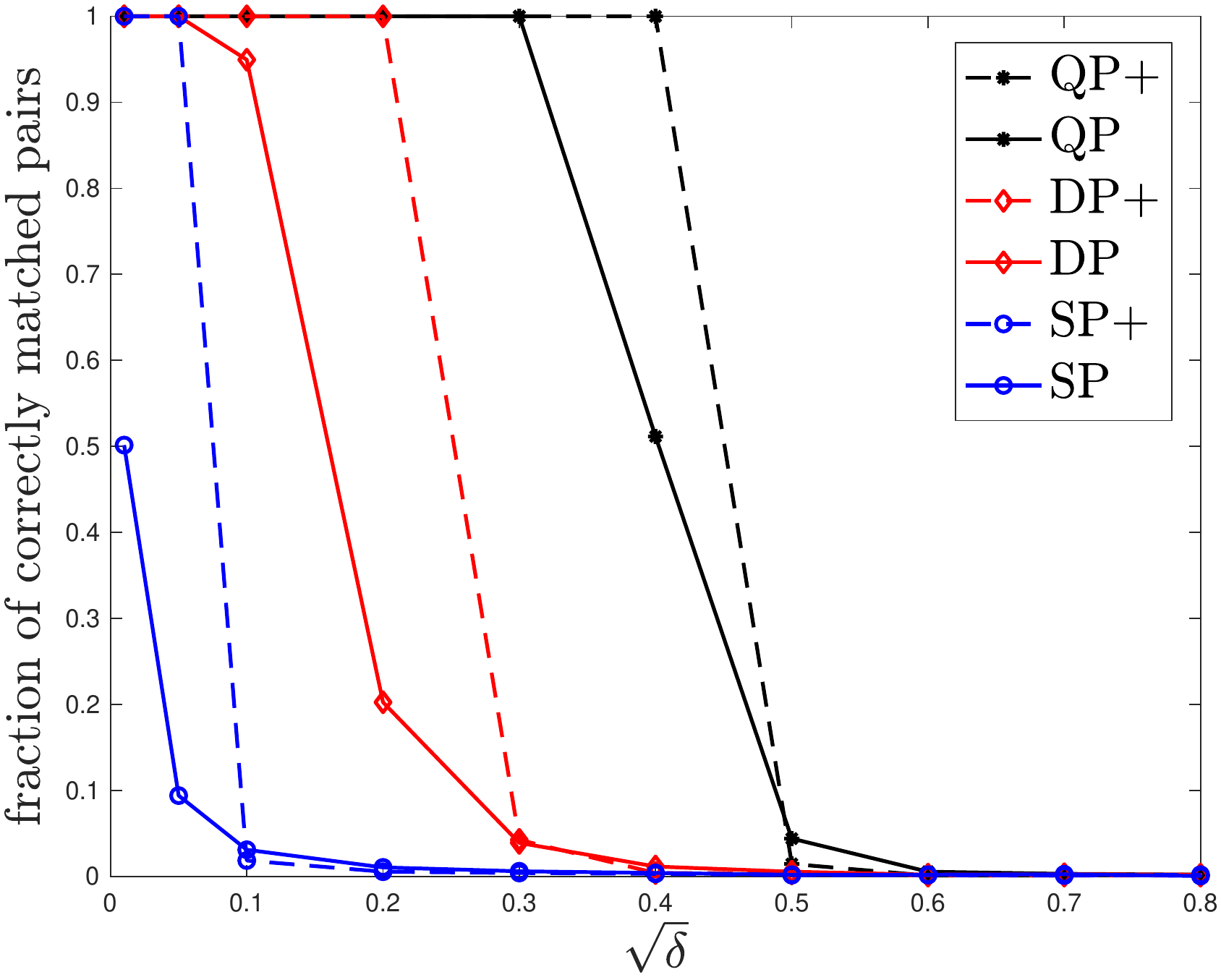}
\caption{Simulated correlated \ER graph model $\calG(n,q;s)$ 
with $n=1000$, $p\triangleq q/s=\log^2(n)/n$, and varying $\sqrt{\delta} =\sqrt{1-s}$. 
For each value of $\sqrt{\delta}$, the accurate rate shown is the median of $10$ independent runs.}
\label{fig:ER_comparison}
\end{figure}

Next we simulate the performance of DP and DP+ for different graph sizes ranging 
from $100$ up to $1600$. The results are depicted in~\prettyref{fig:ER_phase}. 
Since our theory predicts that DP succeeds in exact recovery
when $\sqrt{\delta} \log n \le c$ for a small constant $c$, we  rescale the $x$-axis as
$\sqrt{\delta}\log n$.  
As we can see, the curves for different $n$ align well with each other.
Analogous to the Wigner case, 
the accuracy rate of DP gradually drops off from $1$ to $0$ when $\sqrt{\delta}\log n$ exceeds $0.5$,
while that of DP+ sharply drops off from $1$ to $0$ when $\sqrt{\delta}\log n$ exceeds $2$. 
 \begin{figure}[ht]
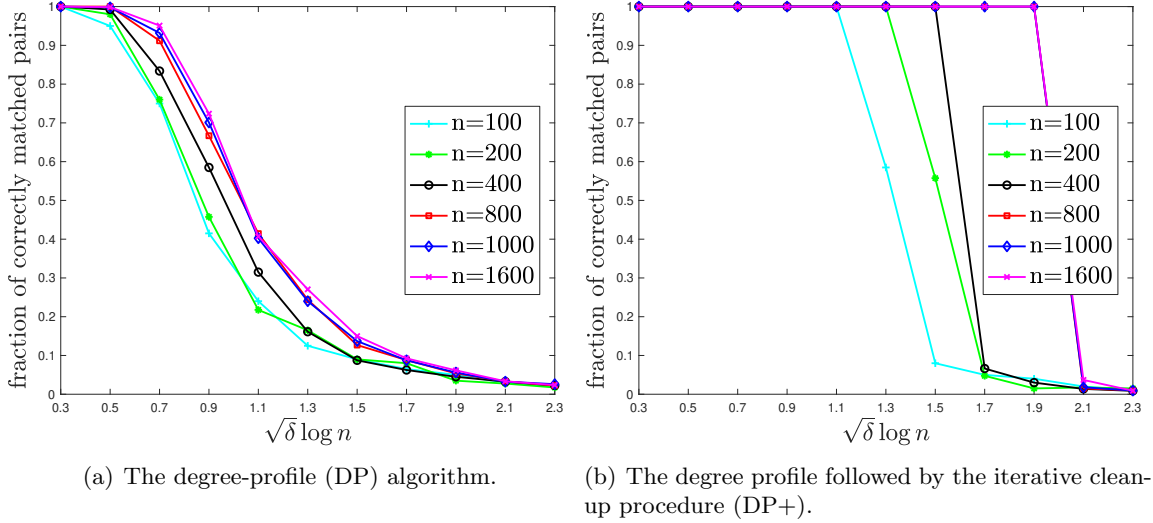

	 \centering
	 \subfigure[The degree-profile (DP)  algorithm.]%
	 {\includegraphics[width=0.45\textwidth]{"\figpath ER_phase_DP"}
        \label{fig:ER}}
	 \subfigure[The degree profile followed by the iterative clean-up procedure (DP+).]%
	 {\includegraphics[width=0.45\textwidth]{"\figpath ER_phase_DPplus"}
    \label{fig:ER_cleanup} }
	 \caption{Simulated correlated \ER graph model with varying $n$ and $\delta$ with 
    edge probability in the parent graph fixed to be $\log^2(n)/n$. 
For each value of $\delta$, the accurate rate shown is the median of $10$ independent runs.
    }
	\label{fig:ER_phase}
\end{figure}


\subsection{Subsampled Real Graphs}
In this section, we generate two graphs $A$ and $B$
by independently subsampling a real parent graph $G$. 

Inspired by previous work~\cite{kazemi2015growing}, we consider the Slashdot network.
The Slashdot network contains links between the users of Slashdot 
(a technology-related news website). The network was obtained in February 2009
and is available on Stanford Large Network Dataset Collection (SNAP)~\cite{SNAP09}.
To generate the parent graph $G$, we first focus on the subnetwork induced by 
the users whose ID is at most $750$, and then connect user $i$ and user $j$
if either $i$ has a directed link to $j$ or vice versa. This gives rise to a graph
$G$ with $750$ vertices and $3338$ edges. The graph $G$ is connected and has a heavy-tailed
degree distribution. In particular,  there are
$216$ degree-$1$ vertices, $102$ degree-$2$ vertices, and the average degree is 
around $9$, while the maximum degree is $524$ and there are $9$ vertices whose degree is at 
least $100.$

To obtain two correlated graphs $A$ and $B$, 
we first independently subsample the edges of $G$ twice 
with probability $s$, and then relabel the vertices in $B$
according to a random permutation $\pi^*$. 

We simulate the performance of the three algorithms (DP, QP, and SP) 
as well as their cleaned-up version, with inputs $A$ and $B$.
The edge subsampling probability 
$s$ varies from $0.6$ to $1$, or equivalently $\delta$ varies from $0$ to $0.4$, 
and the results are shown in \prettyref{fig:SlashDot}. 

\begin{figure}[ht]
\centering
\includegraphics[width=0.5\columnwidth]{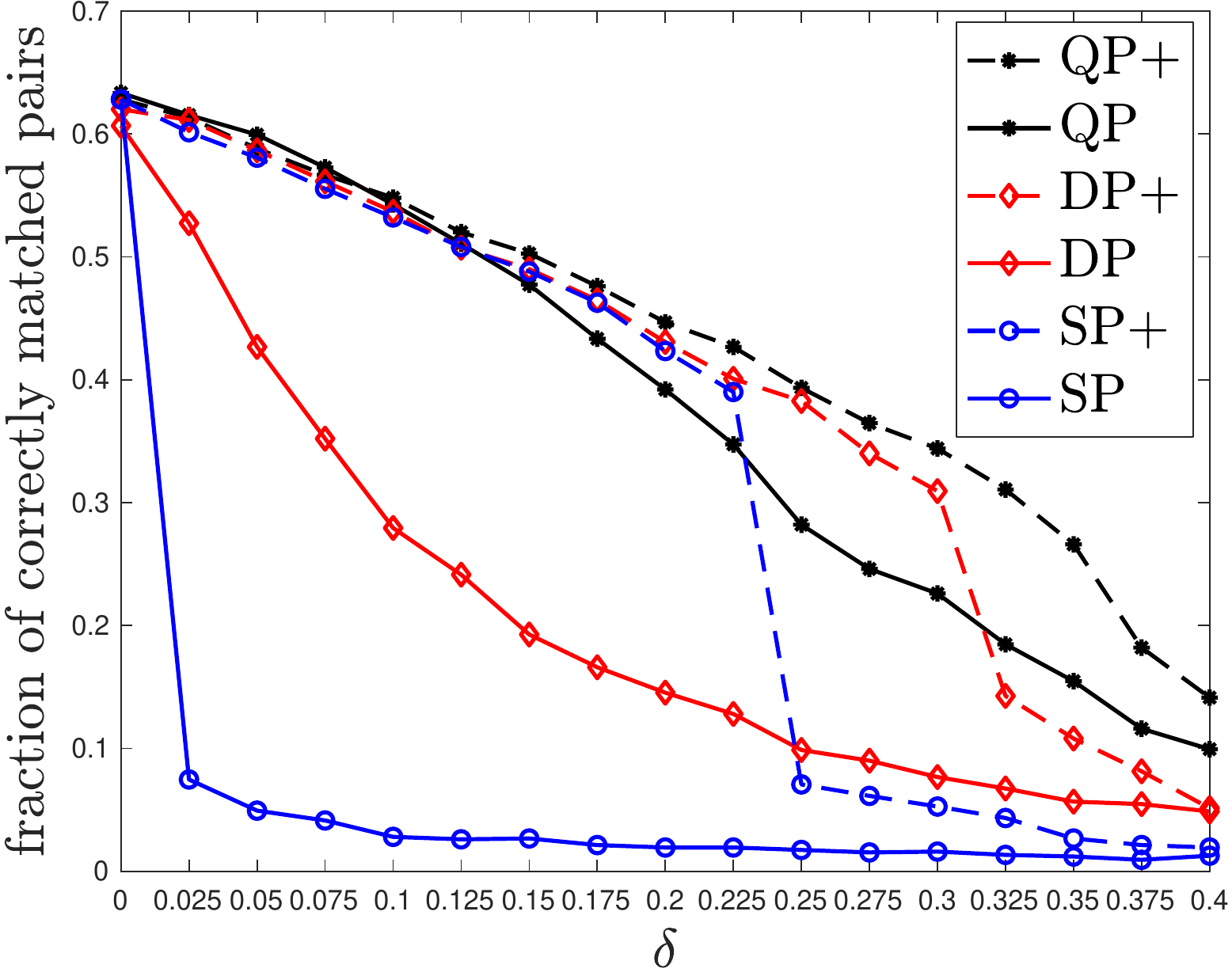}
\caption{ Slashdot network with $n=750$ and varying $ \delta=1-s$. 
For each value of $\delta$, 
the accurate rate shown is the median of $10$ independent runs.}
\label{fig:SlashDot}
\end{figure}

Note that in the noiseless case of $\delta=0$, the accuracy rates of 
all three algorithms as well as their cleaned-up version are about the same 
and around $0.62$. However, in the noisy case,  
QP dominates DP, which, in turn, significantly outperforms SP in term of the matching accuracy;
this is consistent with the observations in the previous two subsections.
In particular, as soon as $\delta$ becomes positive, the accuracy of SP
drops off sharply as expected because the leading eigenvectors 
of $A$ and $B$ are highly sensitive to the perturbation. In contrast, 
the accuracy rates of DP and QP drop off gradually as $\delta$ increases.

Analogous to our synthetic experiments, the iterative clean-up
procedure significantly improves the accuracy of all three methods.
In fact, the accuracy rates of 
all three methods after clean-up (QP+, DP+, and SP+)
are about the same for all $\delta \le 0.225$. At $\delta=0.25$,
the accuracy rate of SP+ drops off sharply, while the 
accuracy rates of QP+ and DP+ continue to decrease gradually 
and match each other 
until $\delta \le 0.3$. At $\delta=0.325$, the accuracy rate of DP+
drops off sharply, while the 
accuracy rate of QP+ continues to decrease gradually.  

Computationally, to generate the simulation results in \prettyref{fig:SlashDot}, 
QP takes about $290$ minutes, DP takes about $2$ minutes, and SP takes about $18$ seconds.

\begin{appendices}

\section{Auxiliary results}
	\label{app:aux}
	
	Recall the following tail bound for binomial random variable 
    $X\sim \Binom(n,p)$ \cite[Theorems 4.4, 4.5]{Mitzenmacher05} 
	\begin{align}
\prob{X \geq (1+t) np}  &\leq e^{-\frac{t^2}{3} np}, \quad 0 \leq t \leq 1 \nonumber \\
	\prob{X \leq (1-t) np}  &\leq e^{-\frac{t^2}{2} np}, \quad 0 \leq t \leq 1
	\label{eq:bintail}
	\end{align}
      and 
    \begin{align}
    \prob{X \ge R} \le 2^{-R}, \quad R \ge 6np. \label{eq:bintail_large}
    \end{align}

    \begin{theorem}[\cite{Okamoto1959}]\label{thm:binom_lower_tail}
Let $X \sim \Bin(n,p)$. It holds that
\begin{align}
\prob{X \le n t} &\le \exp \left( - n \left( \sqrt{p} - \sqrt{t} \right)^2\right), \quad \forall 0 \le t \le p 
\label{eq:bintail_lower}\\
\prob{X \ge n t} & \le \exp \left( - 2n  \left( \sqrt{t} - \sqrt{p} \right)^2\right), \quad \forall p \le t \le 1. 
\label{eq:bintail_upper}
\end{align}
\end{theorem}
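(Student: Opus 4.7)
The stated result is Okamoto's classical refinement of the Chernoff--Hoeffding inequality for the binomial distribution, and I would prove it by the standard Cram\'er/Chernoff method together with an explicit lower bound on the Kullback--Leibler divergence $D(t\|p)=t\log(t/p)+(1-t)\log((1-t)/(1-p))$. Concretely, starting from Markov's inequality applied to $e^{-\lambda X}$ (for the lower tail) and $e^{\lambda X}$ (for the upper tail), and using independence of the Bernoulli summands to compute the moment generating function, one obtains after optimizing in $\lambda>0$ the bound
\begin{equation*}
\prob{X \le nt} \le e^{-nD(t\|p)} \quad (0\le t\le p), \qquad \prob{X \ge nt} \le e^{-nD(t\|p)} \quad (p\le t\le 1).
\end{equation*}
Thus the problem reduces to verifying two one-variable inequalities for the KL divergence: (i) $D(t\|p) \ge (\sqrt{p}-\sqrt{t})^2$ for $0\le t\le p$, which yields the lower-tail bound with constant $1$; and (ii) $D(t\|p) \ge 2(\sqrt{t}-\sqrt{p})^2$ for $p\le t\le 1$, which yields the upper-tail bound with constant $2$.

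For both inequalities, my plan is a calculus argument. Fix $p\in(0,1)$ and define $g(t)\triangleq D(t\|p)-(\sqrt{p}-\sqrt{t})^2$ (for the first inequality). Using $\tfrac{d}{dt}D(t\|p)=\log\tfrac{t(1-p)}{p(1-t)}$, one checks $g(p)=g'(p)=0$; it then suffices to show $g$ is convex, or more directly, that $g$ is decreasing on $[0,p]$ (since $g(p)=0$, this will give $g(t)\ge 0$ for $t\le p$). The natural substitution is $u=\sqrt{t/p}\in[0,1]$, which turns the right-hand side into $p(1-u)^2$ and makes the monotonicity transparent via a Taylor expansion around $u=1$. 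The upper-tail version is analogous: define $h(t)\triangleq D(t\|p)-2(\sqrt{t}-\sqrt{p})^2$ on $[p,1]$, show $h(p)=h'(p)=0$, and verify $h''(t)\ge 0$ by using $h''(t)=\tfrac{1}{t(1-t)}-\tfrac{\sqrt{p}}{t^{3/2}}$ and noting this is non-negative on $[p,1]$ (equivalently, $2\sqrt{t}\ge \sqrt{p}(1-t)$, which holds since $\sqrt{t}\ge \sqrt{p}$ and $1-t\le 1$).

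The main technical obstacle I foresee is the asymmetry in the constants (factor $1$ for the lower tail versus factor $2$ for the upper tail): the convexity argument for $h$ is slightly delicate because $\sqrt{p}/t^{3/2}$ needs to be dominated by $1/[t(1-t)]$ precisely on the range $t\ge p$, and breaks outside this range. A cleaner alternative, which avoids tracking cases, is to choose $\lambda$ in the Chernoff step non-optimally, namely $\lambda=\tfrac{1}{2}\log(p/t)$ for the lower tail and $\lambda=\tfrac{1}{2}\log(t(1-p)/(p(1-t)))$ for the upper tail, and directly estimate the resulting MGF using $1-x\le e^{-x}$ and $\sqrt{1-x}\le 1-x/2$. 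This route essentially reproduces Okamoto's original computation and bypasses the divergence-comparison step entirely; I would use it as the fallback if the convexity approach becomes tangled. Since the statement is attributed to \cite{Okamoto1959}, no new ideas are required, and the argument above merely records the standard proof.
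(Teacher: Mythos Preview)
The paper does not provide its own proof of this theorem; it is stated in Appendix~A as a cited auxiliary result from Okamoto~(1959), so there is no paper proof to compare against. Your Chernoff--KL route is the standard modern argument, and for the upper tail~(ii) it works as you describe: $h''(t)=\tfrac{1}{t(1-t)}-\tfrac{\sqrt p}{t^{3/2}}\ge 0$ on $[p,1]$ reduces to $\sqrt t\ge \sqrt p\,(1-t)$ (not $2\sqrt t$, but this still follows immediately from $\sqrt t\ge\sqrt p$ and $1-t\le 1$), and convexity plus $h(p)=h'(p)=0$ gives $h\ge 0$.

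For the lower tail~(i), however, neither of your two stated options is correct as written: on $[0,p]$ the function $g(t)=D(t\|p)-(\sqrt p-\sqrt t)^2$ is neither convex nor decreasing. Indeed $g''(t)=\tfrac{1}{t(1-t)}-\tfrac{\sqrt p}{2t^{3/2}}\to-\infty$ and $g'(t)\to+\infty$ as $t\downarrow 0$. What does work is a slightly sharper shape argument: $g''$ changes sign exactly once on $(0,p)$ (since $2\sqrt t/(1-t)$ is increasing), so $g'$ is unimodal with $g'(0^+)=+\infty$ and $g'(p)=0$; hence $g$ first increases then decreases on $[0,p]$, and $\min_{[0,p]}g=\min\{g(0),g(p)\}=0$ because $g(0)=-\log(1-p)-p\ge 0$. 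Alternatively, your fallback of choosing $\lambda$ explicitly and estimating the MGF directly---which is exactly Okamoto's original argument---handles both tails uniformly and is the cleanest route here.
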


\section{Analysis for seeded graph matching}
\label{app:seed}

In this section we analyze \prettyref{alg:seed} for seeded graph matching. 
Note that when \prettyref{alg:seed} is used as a subroutine in \prettyref{alg:distdeg}, 
the seed set $S$ is obtained from \prettyref{alg:dist} based on 
matching degree profiles, which can potentially depend on the edges between the non-seeded vertices.
To deal with this dependency, the following lemma gives a sufficient condition for the seeded graph matching subroutine 
(\prettyref{alg:seed}) to succeed, even if the seed set is chosen adversarially:

\begin{lemma}[Seeded graph matching]
\label{lmm:seed}	
Assume $n\geq 4$, $s \ge 30 q$, and 
\begin{equation}
n (qs)^2 \ge 2^{11} \times 3 \log^2 n.
\label{eq:nqslog2}
\end{equation}
\nb{If the number of seeds satisfies $m \ge \frac{96 \log n}{q s}$, then with probability $1 - 5n^{-1}$, the following holds}:
for any $\pi_0:S \to T$ that coincides with true permutation $\pi^*$ on the seed set $S$, 
(i.e. $\pi_0 = \pi^*|_S$) with $|S|=m$, 
\prettyref{alg:seed} with $\pi_0$ as the seed set and threshold $\kappa =\frac{1}{2} mqs$ 
 outputs $\hat \pi = \pi$.	
\end{lemma}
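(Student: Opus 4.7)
The plan is to analyze the two stages of \prettyref{alg:seed} separately, via the two auxiliary results \prettyref{lmm:seed_matching_mm} (first stage) and \prettyref{lmm:seed_matching_last} (second stage) indicated in the proof structure of \prettyref{thm:guarantee-distance-deg}. Let me begin with some notation. Define the ``common-edge'' graph $E^\star$ by $E^\star_{ij} \triangleq A_{ij} \cdot B_{\pi^*(i)\pi^*(j)}$; marginally $E^\star$ is distributed as $\calG(n,qs)$ since an edge appears in both samples with probability $ps^2 = qs$. With this notation, for a true pair $(i,\pi^*(i))$ with $i \in S^c$, $n_{i\pi^*(i)} = |N_{E^\star}(i) \cap S|$, a count against a random graph; for a fake pair $(i,k)$ with $k \neq \pi^*(i)$, $n_{ik} = |F_{ik} \cap S|$ where $F_{ik} \triangleq \{j : A_{ij} = B_{k\pi^*(j)} = 1\}$. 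Crucially $F_{ik}$ involves two independent parent edges $(i,j)$ and $(\pi^{*-1}(k),j)$, so $|F_{ik}|$ is stochastically dominated by $\Binom(n-2,q^2)$.

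For stage one, I would establish a high-probability event on which, for \emph{every} seed set $S$ of size $m$, the first-stage matching $\tilde\pi_1$ agrees with $\pi^*|_{S^c}$ outside a controllably small ``bad set''. The no-fake-edge condition in $H$ requires $|F_{ik}\cap S| < \kappa = mqs/2$ simultaneously over $(i,k)$ and $S$. Since $\max_{|S|=m} |F_{ik} \cap S| = \min(|F_{ik}|,m)$, it suffices to prove $\max_{i\neq\pi^{*-1}(k)} |F_{ik}| < \kappa$. Applying the tail bound $\mathbb{P}\{\Binom(n,q^2) \ge R\} \le 2^{-R}$ for $R \ge 6nq^2$ (cf.~\prettyref{eq:bintail_large}) together with the separation $s \ge 30q$ and $n(qs)^2 \ge 2^{11}\cdot 3\log^2 n$, a union bound over the $O(n^2)$ pairs yields this with probability $1 - O(n^{-1})$. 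For the true pairs, the double-counting identity $\sum_{i \in S^c} |N_{E^\star}(i) \cap S| = |E(S,S^c)|$ combined with Chernoff concentration of $|E(S,S^c)|$ around $m(n-m)qs$ shows that only a small fraction of vertices $i \in S^c$ can have $|N_{E^\star}(i) \cap S| < \kappa$ for a given $S$; a careful uniformization (sketched below) then controls this simultaneously over $S$. Together these imply $\tilde\pi_1$ agrees with $\pi^*$ on a set of size at least $n - o(n)$.

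For stage two, I would analyze the decomposition
\[
w_{i\pi^*(i)} - w_{ik} = \sum_{j \in J} A_{ij}\bigl[B_{\pi^*(i)\pi^*(j)} - B_{k\pi^*(j)}\bigr] + \sum_{j \in J^c} A_{ij}\bigl[B_{\pi^*(i)\pi_1(j)} - B_{k\pi_1(j)}\bigr],
\]
where $J = \{j : \pi_1(j) = \pi^*(j)\}$. A case analysis on the parent edges involved shows the $J$-sum has mean $|J| q(s-q) \ge \tfrac{29}{30}|J|qs$ (using $s\ge 30q$), since $A_{ij}$ and $B_{\pi^*(i)\pi^*(j)}$ share a parent edge while $A_{ij}$ and $B_{k\pi^*(j)}$ do not. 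The $J^c$-sum has mean zero, since $\pi_1(j) \neq \pi^*(j)$ makes all three edges reference distinct parent edges. Bernstein's inequality controls fluctuations at order $O(\sqrt{nq\log n})$. Under $|J| \ge n - o(n)$ and $n(qs)^2 \gtrsim \log^2 n$, the mean gap $\gtrsim nqs$ dominates the fluctuation, so a union bound over $(i,k)$ gives $w_{i\pi^*(i)} > w_{ik}$ with failure probability $O(1/n)$, forcing the sort to return $\pi^*$.

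The hard part will be the uniform-in-$S$ control of the bad set $B_S = \{i \in S^c : |N_{E^\star}(i)\cap S| < \kappa\}$ in stage one: a naive union bound over $\binom{n}{m}$ subsets is hopeless. I expect this is addressed by extracting a deterministic structural event on $E^\star$ (e.g.\ that all vertices have $E^\star$-degree concentrated around $nqs$, and that no large subset has anomalously many low-degree vertices), from which a combinatorial argument controls $|B_S|$ uniformly in $S$. A secondary subtlety is that $\pi_1$ in stage two is data-dependent, so the Bernstein analysis of $w_{i\pi^*(i)} - w_{ik}$ must be conducted conditionally on the stage-one output, exploiting that once $J$ is fixed the relevant sums remain functions of independent Bernoulli variables.
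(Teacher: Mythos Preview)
Your stage-one fake-pair argument has a genuine gap. You propose to show $\max_{i\neq \pi^{*-1}(k)}|F_{ik}| < \kappa = \tfrac{1}{2}mqs$, so that $H_{ik}=0$ for every fake pair regardless of $S$. But $|F_{ik}|$ is (essentially) $\Binom(n-2,q^2)$ with mean $\approx nq^2$, and under the hypotheses $n(qs)^2 \ge 2^{11}\cdot 3\log^2 n$ one has $nq^2 \ge nq^2s^2 \ge 6144\log^2 n$, whereas for the minimal seed size $m = 96\log n/(qs)$ the threshold is only $\kappa = 48\log n$. Thus $|F_{ik}| \ge \kappa$ is the \emph{typical} behavior, not a tail event, and an adversary can always place $\kappa$ elements of $F_{ik}$ into $S$ to force $H_{ik}=1$. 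Your sufficient condition is simply false in the regime of interest, and the argument cannot be repaired by tail bounds: the Chernoff precondition $\kappa \ge 6nq^2$ you invoke is not satisfied.

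Relatedly, your diagnosis that ``a naive union bound over $\binom{n}{m}$ subsets is hopeless'' is mistaken, and this misdiagnosis is what led you to the flawed uniform-in-$S$ strategy. The paper's \prettyref{lmm:seed_matching_mm} does exactly the naive union bound over seed sets. The point is that the paper does \emph{not} try to kill individual fake entries $H_{ik}$; instead, for a fixed $S$ it compares the matching weight $w(\pi^*)$ against $w(\pi)$ for every permutation $\pi$ with $\ell$ non-fixed points. For each such $\pi$, one extracts an independent subset of size $\ell/3$ from its cycle decomposition and gets failure probability $\exp(-c\,mqs\,\ell)$; union over $n^\ell$ permutations and over $\ell$ leaves $\exp(-\Omega(m\log n)) = n^{-\Omega(m)}$, which comfortably absorbs the $\binom{n}{m}\le n^m$ factor from the seed-set union bound. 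This permutation-level comparison, rather than an edge-level one, is the key idea you are missing. For stage two the paper likewise handles the data-dependence of $\pi_1$ by a union bound over all $\pi_1$ with at most $n-\ell$ errors (there are at most $n^{n-\ell}$ of them), not by conditioning; your conditioning sketch would need to confront that the relevant Bernoulli variables have already been partially revealed through $\pi_1$.
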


We start by analyzing the first stage of \prettyref{alg:seed}, which upgrades a partial (but correct) permutation
$\pi_0: S \to T$ to a full permutation $\pi_1:[n] \to [n]$ with at most $O(\log n/q)$ errors, 
even if the seed set $S$ is adversarially chosen.

\begin{lemma}\label{lmm:seed_matching_mm}
Assume 
$n\geq 2$, 
$m q s \ge 96 \log n$, and $s \ge 12q$.
Recall the threshold
$
\kappa =\frac{1}{2} mqs 
$
in \prettyref{alg:seed}.
Then with probability at least $1- 2n^{ -m }$, the following holds
in \prettyref{alg:seed}:
for any partial permutation $\pi_0: S \to T$ such that $\pi_0 = \pi^*|_S$ and $|S|=m$, 
$\pi_1$  is guaranteed to have at most $\frac{192\log n}{qs}$ errors with respect to $\pi^*$,
i.e., $|\{i\in[n]: \pi_1(i) \neq \pi^*(i) \}| \leq \frac{192\log n}{qs}$.
\end{lemma}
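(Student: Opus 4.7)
The plan is to control the error count $e := |\{i : \pi_1(i) \neq \pi^*(i)\}|$ uniformly over all adversarially chosen seed sets $S$ of size $m$. Assume WLOG $\pi^* = \id$, so that $\pi_0(j) = j$ on $S$ and $n_{ik}(S) = \sum_{j \in S} A_{ij}B_{kj}$. Note $n_{ii}(S) \sim \Bin(m, qs)$ (the ``true pair'' count), while $n_{ik}(S) \sim \Bin(m, q^2)$ for $i \neq k$ in $S^c$ (the ``wrong pair'' count), and $\pi_1$ differs from $\pi^*$ only on $S^c$, where it coincides with $\tilde\pi_1$.

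First I will establish a purely combinatorial error decomposition. Let $e_1(S) := |\{i \in S^c : n_{ii}(S) < \kappa\}|$ count the missing diagonal edges of $H$, and let $W(S)$ denote the size of a maximum matching in the wrong-pair bipartite graph $\{(i,k) \in S^c \times S^c : i \neq k,\ n_{ik}(S) \geq \kappa\}$. Comparing the weight of the optimum $\tilde\pi_1$ against the trivial identity matching (which has weight $|\{i \in S^c : H_{ii}=1\}| = |S^c| - e_1(S)$) and keeping track of fixed points yields $e \leq e_1(S) + W(S)$; indeed, every non-fixed point of $\tilde\pi_1$ that contributes to the matching weight corresponds to a wrong-pair edge, and all such edges together form a matching in the wrong-pair graph. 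It therefore suffices to show uniformly in $S$ that both $e_1(S) \leq L$ and $W(S) \leq L$ with $L := \frac{96 \log n}{qs}$, each failing with probability at most $n^{-m}$.

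For the pointwise tails, Chernoff applied to $\Bin(m,qs)$ with $mqs \geq 96\log n$ gives $\Prob(n_{ii}(S) < \kappa) \leq \exp(-mqs/8) \leq n^{-12}$, while applying \prettyref{eq:bintail_large} to $\Bin(m,q^2)$ gives $\Prob(n_{ik}(S) \geq \kappa) \leq 2^{-\kappa}$, with the validity condition $\kappa \geq 6mq^2$ equivalent to the hypothesis $s \geq 12q$. The crucial observation enabling the union bound is independence: for a fixed $S$, the variables $\{n_{ii}(S)\}_{i \in S^c}$ are mutually independent because they use the pairwise disjoint edge sets $\{A_{ij}, B_{ij} : j \in S\}$; similarly, for any matching of wrong pairs $\{(i_j, k_j)\}$ with distinct left and distinct right endpoints, the statistics $\{n_{i_j k_j}(S)\}$ are mutually independent because distinct left endpoints give disjoint $A$-edges and distinct right endpoints give disjoint $B$-edges. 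A union bound over $(S, T)$ with $T \subset S^c$, $|T| = L+1$ then yields
\[
\Prob\bigl(\exists S:\ e_1(S) > L\bigr) \leq \binom{n}{m}\binom{n-m}{L+1}\exp\bigl(-(L+1)mqs/8\bigr),
\]
and a union bound over $(S, \text{wrong-pair matching of size }L+1)$ yields an analogous estimate with $2^{-(L+1)\kappa}$ in place of the exponential. Standard estimates $\binom{n}{k} \leq (en/k)^k$ reduce both to $\leq n^{-m}$.

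The main obstacle is making the union bound over the $\binom{n}{m}$ choices of $S$ (potentially as large as $2^n$) tight enough. The enabling observation is that the pointwise concentration automatically strengthens with $m$: writing $c := mqs/(8 \log n) \geq 12$, each bad diagonal event has probability $n^{-c}$, and the computation reduces to verifying $L(c-1) \gtrsim 2m$, which follows directly from $L = 96\log n/(qs)$ and $mqs \geq 96\log n$ (a similar accounting with $\log 2$ factors handles $W(S)$ using $s \geq 12q$). Combining the two uniform estimates via one more union bound yields $e \leq e_1(S) + W(S) \leq 2L = \frac{192\log n}{qs}$ simultaneously for every seed set $S$, with failure probability at most $2n^{-m}$.
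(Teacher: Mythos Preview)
Your error decomposition $e \leq e_1(S) + W(S)$ is clean and correct, and your treatment of $e_1(S)$ (using the genuine independence of $\{n_{ii}(S)\}_{i\in S^c}$) is fine. But there is a real gap in your treatment of $W(S)$.

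You claim that for any matching of wrong pairs $\{(i_j,k_j)\}$ the statistics $\{n_{i_jk_j}(S)\}$ are mutually independent ``because distinct left endpoints give disjoint $A$-edges and distinct right endpoints give disjoint $B$-edges.'' This overlooks the cross-correlation between $A$ and $B$ in the correlated \ER model: $A_{uv}$ and $B_{uv}$ are correlated through the parent graph. Concretely, if $(1,2)$ and $(2,3)$ both lie in your matching (which is allowed: left endpoints $1,2$ are distinct and right endpoints $2,3$ are distinct), then $n_{12}(S)$ depends on $\{B_{2\ell}:\ell\in S\}$ while $n_{23}(S)$ depends on $\{A_{2\ell}:\ell\in S\}$, and these are correlated. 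The paper explicitly flags this: ``$\{n_{i\pi(i)}: i \in S^c \backslash F\}$ are not mutually independent. For instance, $n_{i\pi(i)}$ and $n_{\pi(i),\pi(\pi(i))}$ are dependent.''

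The paper's remedy is to pass to an independent subset. For a permutation with $\ell$ non-fixed points, the cycle decomposition yields an independent set $\calI$ of size $\ge \ell/3$ on which the $n_{i\pi(i)}$ \emph{are} mutually independent; the remaining concentration and union bound are then run on $\calI$, at the cost of a factor~$3$ in the exponent. Your matching argument can be repaired the same way: two matching edges $(i_1,k_1)$ and $(i_2,k_2)$ ``conflict'' only if $i_1=k_2$ or $k_1=i_2$, so the conflict graph has maximum degree $\le 2$ and admits an independent set of size $\ge (L+1)/3$. On that subset your independence reasoning is valid, and the numerics still close (with $2^{-(L+1)\kappa/3}$ in place of $2^{-(L+1)\kappa}$). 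As written, however, the step is incorrect.
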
 

\begin{proof}[Proof of \prettyref{lmm:seed_matching_mm}]
Without loss of generality, we assume $\pi^*$ is the identity permutation.

Fix a seed set $S$ of cardinality $m$.
Since $\pi_0 = \pi^*|_S$, it follows that 
$$
n_{ik} = \sum_{j \in S} A_{ij} B_{k \pi_0(j)} =\sum_{j \in S} A_{ij} B_{k \pi^*(j)}.
$$

Recall that according to the definition of the weights in \prettyref{eq:weightw}, we have
\[
w(\pi^*) = \sum_{i \in S^c} \indc{n_{ii} \geq \kappa}. 
\]
First, we show that 
\begin{align}
\prob{ w(\pi^*)  \le  n -m-  \frac{ 32 \log n}{qs}} \le \exp \left(  - 2 m \log n  \right),
\label{eq:weight_true}
\end{align}
Indeed, for $i \in S^c$ we have $n_{i i} \iiddistr \Binom(m, qs)$. 
It follows from the Chernoff bound \prettyref{eq:bintail} that
$$
\prob{ n_{ii} \le \kappa} 
=\prob{ n_{ii} \le \frac{1}{2} mqs} \le \exp \left(  - \frac{1}{8} m q s \right).
$$
Therefore, 
$$
(n-m)-w(\pi^*) = \sum_{i \in S^c} \indc{n_{ii} < \kappa} 
\overset{s.t.}{\leq} \Binom \left( n-m,  \exp \left(  - \frac{1}{8} m q s \right) \right).
$$
Using the following fact (which follows from a simple union bound)
\begin{align}
\prob{ \Binom \left(n, p \right) \ge t} \le \binom{n}{t} p^t, \label{eq:binom_union_bound}
\end{align}
we get that
$$
\prob{ (n-m)-w(\pi^*) \ge t } \le \binom{n-m}{t} \exp \left(  - \frac{t}{8} m q s \right)
\le n^t \exp \left(  - \frac{t}{8} m q s \right) 
\le \exp \left(  - \frac{t}{16} m q s \right),
$$
where the last inequality holds due to the assumption that $mqs \ge 16 \log n$.
Setting $t=\frac{ 32 \log n}{qs}$, we arrive at the desired \prettyref{eq:weight_true}.

Next, fix any permutation $\pi$ such that $\pi |_S = \pi_0$ and it has $\ell$ non-fixed points. 
Since  by assumption $\pi_0=\pi^* |_S$ and $\pi^*$ is the identity permutation,
it follows that $\pi(i)=i$ for all $i \in S$.
Let $F = \{i \in S^c: \pi(i) = i\}$ denote the set of fixed points in $S^c$. Then 
$|F|=n-m-\ell$ and $|S^c\backslash F|=\ell$. Thus
$$
w(\pi) = \sum_{i \in F} \indc{n_{ii} \ge \kappa} + \sum_{ i \in S^c \backslash F} \indc{n_{i\pi(i)} \ge \kappa }
\le n-m-\ell + \sum_{ i \in S^c\backslash F} \indc{n_{i\pi(i)} \ge \kappa }.
$$
Note that for each $i \in S^c \backslash F$, $n_{i\pi(i)} \sim \Binom(m, q^2)$.
Since by assumption $s \ge 12q$, it follows that 
$\kappa = mqs/2 \ge 6 m q^2$. Hence, the Chernoff bound \prettyref{eq:bintail_large} 
yields that 
for each $i \in S^c \backslash F$,
$$
\prob{ n_{i \pi(i)} \ge \kappa } \le 2^{-  m q s/2 }
\le \exp \left(  - \frac{1}{4} m q s  \right).
$$

Note that $\{n_{i\pi(i)}: i \in S^c \backslash F\}$ are not mutually independent.
For instance, $n_{i \pi (i)}$ and $n_{\pi(i), \pi(\pi(i))}$ are dependent. To deal with this
dependency issue, we construct a subset $\calI
\subset S^c \backslash F$ with $|\calI| \ge \ell/3$
such that $\{n_{i\pi(i)}: i \in \calI\}$ are mutually independent.
In particular, consider the canonical cycle decomposition of permutation $\pi|_{S^c \backslash F}$. 
Let $\calC_1, \ldots, \calC_{a}$ denote the cycles. 
Since $\pi$ has no fixed point
in $S^c \backslash F$, each cycle $\calC_i$ has length $\ell_i \ge 2$.
Let $\Gamma$ denote the graph formed by the union of these cycles. 
Each cycle 
$C_i$ has an independent set $\calI_i$ of size $\lfloor \ell_i /2 \rfloor \ge \ell_i/3$. 
Let $\calI= \cup_{i=1}^a \calI_i$. 
Then $\calI$ is an independent set in $\Gamma$ and $|\calI| \ge \sum_{i=1}^a \ell_i/3=\ell/3$.
Since $\calI$ is an independent set, 
it follows that $\{i, \pi(i)\} \cap \{j, \pi(j)\} =\emptyset$ for all 
$i \neq j \in \calI$. Therefore, $\{n_{i\pi(i)}: i \in \calI\}$ are mutually independent.
Therefore, 
$$
\sum_{ i \in \calI } \indc{n_{i\pi(i)} \ge \kappa }
\overset{s.t.}{\leq} \Binom \left( |\calI| , \exp \left(  - \frac{1}{4} m q s \right) \right).
$$
Note that 
$$
w(\pi) \le n-m-\ell + \sum_{ i \in S^c\backslash F} \indc{n_{i\pi(i)} \ge \kappa }
\le n-m-|\calI| + \sum_{i \in \calI } \indc{n_{i\pi(i)} \ge \kappa }
$$
Using \prettyref{eq:binom_union_bound} again, we have
\begin{align*}
\prob{ w(\pi) \ge n -m-  \frac{ 32 \log n}{qs} }
& \le \prob{ \sum_{ i \in \calI} \indc{n_{i\pi(i)} \ge \kappa } \ge |\calI| -\frac{ 32 \log n}{qs} } \\
& \le \binom{|\calI| }{|\calI| - \frac{ 32 \log n}{qs}} \exp \left(  - \frac{1}{4} m q s \left( |\calI| - \frac{ 32 \log n}{qs}\right) \right) \\
& \le 2^{ \ell } \exp \left(  - \frac{1}{4} m q s \left( \frac{\ell}{3} - \frac{ 32 \log n}{qs}\right)\right) 
\le 2^{\ell} \exp \left(  - \frac{1}{24} m q s \ell  \right),
\end{align*}
where the last inequality holds provided $\ell qs\ge 192 \log n$.
Let $\Pi_\ell$ denote the set of permutations 
$\pi$ which has $\ell$ non-fixed points and satisfies $\pi |_S = \pi_0$.
Then $|\Pi_\ell| \le \binom{n-m}{\ell} \ell! \le n^\ell$. 
By the union bound, we have that for any $\ell \ge \frac{ 192 \log n}{qs}$,
$$
\prob{ \max_{\pi \in \Pi_\ell} w(\pi) \ge n -m-  \frac{ 32 \log n}{qs} }
\le (2n)^{\ell} \exp \left(  - \frac{1}{24} m q s \ell \right)
\le \exp \left(  - \frac{1}{48} m q s  \ell \right),
$$
where the last inequality holds due to the assumption that $mqs \ge 96 \log n$ and $n \ge 2$.
Applying the union bound again over $\ell$, we get that
\begin{align*}
\prob{ \max_{\ell \ge\frac{ 192 \log n}{qs}} \max_{\pi \in \Pi_\ell} 
w(\pi) \ge n -m-  \frac{ 32 \log n}{qs} }
& \le \sum_{\ell \ge \frac{ 192 \log n}{qs}} \exp \left(  - \frac{1}{48} m q s  \ell \right) \\
& \le \frac{ \exp \left(  - 4 m \log n   \right) }{ 1 - \exp \left(  - 4 m \log n  \right)} \\
& \le \exp \left(  - 2 m \log n   \right),
\end{align*}
where the last inequality holds due to $m \log n \ge \log 2$.
Combining the last displayed equation with \prettyref{eq:weight_true}
we get that with probability at least $1- 2 n^{-2m}$,
$\pi_1$ has at most $192\log n/(qs)$ errors with respect to $\pi^*$.

Finally, applying a simple union bound over all the 
$\binom{n}{m} \le n^m$
possible choices of seed set $S$ with $|S|=m$, 
we complete the proof. 
\qed
\end{proof}

The second stage of \prettyref{alg:seed} upgrades an almost exact full permutation $\pi_1:[n] \to [n]$ to an exact full permutation $\hat{\pi}: [n] \to [n]$.
The following lemma provides a worst-case guarantee even if $\pi_1$ is adversarially chosen.

\begin{lemma}\label{lmm:seed_matching_last}
Let $0 \le \ell \le n$.  
Assume that $(\ell-1) qs \ge 12 nq^2 +2 $ and $ (\ell-1) q s \ge 16 \max\{ 1, n-\ell\} \log n$. 
Then with probability at least $1-3n^{-1}$, the following holds
for \prettyref{alg:seed}:
for any $\pi_1$ with at most $n-\ell$ errors with respect to the true permutation $\pi^*$, we have
$\hat{\pi}=\pi^*$. 
\end{lemma}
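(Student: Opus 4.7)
My plan is to reduce the proof to uniformly separating the diagonal similarity scores $w_{ii}$ from the off-diagonal $w_{ik}$, simultaneously over every candidate $\pi_1$. Without loss of generality assume $\pi^* = \id$. For any $\pi_1$ with at most $n-\ell$ errors, let $F(\pi_1) = \{j: \pi_1(j) = j\}$, so $|F(\pi_1)| \ge \ell$. I will choose the threshold $\kappa \triangleq (\ell-1)qs/2$ and aim to show that with probability at least $1 - 3n^{-1}$,
\[
\min_{i \in [n]} w_{ii} \ge \kappa > \max_{i \neq k} w_{ik}
\]
for every permutation $\pi_1$ with at most $n-\ell$ non-fixed points. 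Strict separation forces the $n$ largest entries of $(w_{ik})$ to be exactly the diagonal, so that $\hat\pi = \id = \pi^*$.

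For true pairs, I will restrict $w_{ii} = \sum_j A_{ij} B_{i\pi_1(j)}$ to $j \in F(\pi_1) \setminus \{i\}$, giving the sum $\sum_{j \in F(\pi_1) \setminus \{i\}} A_{ij} B_{ij}$ of at least $\ell - 1$ independent $\Bern(qs)$ random variables (each $A_{ij} B_{ij}$ is a correlated subsample of the same parent edge $(i,j)$, and these parent edges are distinct across $j$). Stochastic domination then yields $w_{ii} \ge \Binom(\ell - 1, qs)$, and Chernoff's bound \prettyref{eq:bintail} gives $\prob{w_{ii} < \kappa} \le e^{-(\ell-1)qs/8}$.

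For fake pairs $(i,k)$ with $i \neq k$, I will use the coupling $A_{uv} = G_{uv} X_{uv}$, $B_{uv} = G_{uv} Y_{uv}$ to the parent graph $G \sim \calG(n,p)$ with independent $\Bern(s)$ masks $X, Y$. Each term $A_{ij} B_{k\pi_1(j)}$ depends on the two parent edges $(i,j)$ and $(k,\pi_1(j))$, which coincide only in the degenerate case $j = k$ with $\pi_1(k) = i$; otherwise the two factors are independent and the term is marginally $\Bern(q^2)$. A direct case check further shows that for two distinct indices $j_1, j_2 \notin \{k, \pi_1^{-1}(i)\}$ the four parent edges involved are all distinct, so the corresponding terms are independent. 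Excluding the $O(1)$ indices in $\{i, k, \pi_1^{-1}(i), \pi_1^{-1}(k)\}$ (on which the term either vanishes or is bounded by $1$), the remaining $n - O(1)$ terms are mutually independent $\Bern(q^2)$. Combined with the binomial upper-tail bound \prettyref{eq:bintail_large} and the hypothesis $(\ell-1)qs \ge 12 nq^2 + 2$ (which makes $\kappa - O(1) \ge 6 n q^2$), this gives $\prob{w_{ik} \ge \kappa} \le 2^{-\kappa + O(1)}$.

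Finally, I will take a union bound over all permutations $\pi_1$ with at most $n-\ell$ non-fixed points; the number of such permutations is at most $\sum_{r=0}^{n-\ell} \binom{n}{r} r! \le 2 n^{n-\ell}$. Combined with the union bounds over $n$ true pairs and $n(n-1)$ fake pairs, the total failure probability is bounded by
\[
2 n^{n-\ell} \bigl( n \, e^{-(\ell-1) qs / 8} + n^2 \, 2^{-\kappa + O(1)} \bigr),
\]
and the hypothesis $(\ell-1) q s \ge 16 \max\{1, n-\ell\} \log n$ drives each factor to $O(n^{-1})$. I expect the main obstacle to be precisely this uniform-in-$\pi_1$ union bound: because $\pi_1$ is the data-dependent output of the first stage, one must control the failure event simultaneously over exponentially many (in $n-\ell$) candidate permutations, which is what forces the $(n-\ell)\log n$ scaling in the hypothesis on $(\ell-1)qs$.
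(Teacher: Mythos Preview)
Your proposal is correct and follows essentially the same approach as the paper: fix $\pi_1$, separate $\min_i w_{ii}$ from $\max_{i\neq k} w_{ik}$ at the threshold $\tfrac12(\ell-1)qs$ via binomial tail bounds (restricting to $F(\pi_1)\setminus\{i\}$ for the lower bound and excluding a constant number of indices for the upper bound), and then union-bound over the at most $O(n^{n-\ell})$ permutations with at most $n-\ell$ non-fixed points. The only cosmetic differences are that the paper excludes three indices $\{i,k,\pi_1^{-1}(k)\}$ rather than four and writes the permutation count as $\binom{n}{n-\ell}(n-\ell)!\le n^{n-\ell}$; your slightly more conservative bookkeeping is equally valid.
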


\begin{proof}

Without loss of generality, we assume $\pi^*$ is the identity permutation. 

We first fix a permutation $\pi_1$
which has at least $\ell$ fixed points.
Let $F \subset [n]$ denote the set of fixed points of $\pi_1$.
Then $|F| \ge \ell$.  
Recall that
$$
w_{ik} = \sum_{j \in [n]} A_{ij} B_{k \pi_1(j)}. 
$$
Then for $i=k$, 
$$
w_{ii} \ge \sum_{j \in F \setminus \{i\} } A_{ij} B_{i j} 
\overset{s.t.}{\geq} \Binom( |F| -1, qs ).
$$
\nb{Similarly, for $i \neq k$, note that $A_{ij} B_{k \pi_1(j)} =0$  if $j=i$ or $j=\pi_1^{-1}(k)$. Thus, 
$w_{ik} = \sum_{j \in [n]\backslash\{i,\pi_1^{-1}(k)\}  }A_{ij} B_{k \pi_1(j)}. $
Moreover, $A_{ij} B_{k \pi_1(j)} \iiddistr \Bern(q^2)$ for all $j \in [n]\backslash\{i,\pi_1^{-1}(k) , k \}$.
Therefore, 
$$
w_{ik}  \le \sum_{j \in [n]\backslash\{i,\pi_1^{-1}(k), k \} } A_{ij} B_{k \pi_1(j)} +1 
 \overset{s.t.}{\leq} \Binom( n-2 , q^2 ) + 1.
$$
}
 It follows from the Chernoff bound \prettyref{eq:bintail} that 
 $$
 \prob{ w_{ii} \le \frac{1}{2} (\ell -1) qs }
 \le \prob{ \Binom\left( |F| -1 , qs \right) \le  \frac{1}{2} (\ell -1) qs}
 \le \exp \left(  - \frac{1}{8} (\ell -1) q s \right).
 $$
 Thus, by the union bound,
 $$
 \prob{ \min_{i \in [n] } w_{ii} \le \frac{1}{2} (\ell -1) qs }
 \le n \exp \left(  - \frac{1}{8} (\ell -1) q s \right) \le \exp \left(  - \frac{1}{16} (\ell -1) q s \right),
 $$
 where the last inequality holds due to the assumption that $ (\ell -1) q s \ge 16 \log n$.
 Moreover, since by assumption $ (\ell -1) qs /2 -1 \ge 6 n q^2$, it follows that
 the Chernoff bound \prettyref{eq:bintail_large} that for any $i \neq k$,
 $$
  \prob{ w_{ik} \ge \frac{1}{2} (\ell -1) qs }
  \le \prob{ \Binom(n-2, q^2) \ge \frac{1}{2} (\ell -1) qs -1 }
  \le 2^{ -  (\ell -1) qs /2 +1 } \le 2\exp \left( -\frac{1}{4}  (\ell -1) qs \right).
 $$
 Thus, by the union bound again, 
$$
 \prob{ \max_{i \neq k } w_{ik} \ge \frac{1}{2} (\ell -1) qs }
 \le 2n^2 \exp \left(  - \frac{1}{4} (\ell -1) q s \right) 
 \le 2 \exp \left(  - \frac{1}{8} (\ell -1) q s \right).
 $$

 In conclusion, 
for a fixed permutation $\pi_1$
 with at least $\ell$ fixed points, 
with probability at least $1-3\exp \left(  - \frac{1}{8} (\ell-1) q s \right) $,
 $$
\min_{i \in [n] } w_{ii} > \max_{i \neq k } w_{ik},
 $$
 and hence $\hat{\pi} = \pi^*$.

Finally, applying a simple union bound over all the 
$\binom{n}{n-\ell} (n-\ell)! \le n^{n-\ell}$
possible choices of permutation $\pi_1$ with at least $\ell$ fixed points, 
we get that even if $\pi_1$ is adversarially chosen, 
$\hat{\pi} = \pi^*$ with probability at least 
$$
1- 3 n^{n-\ell} 
\exp \left(  - \frac{1}{8} (\ell -1) q s \right) 
\ge 1- 3 \exp \left(  - \frac{1}{16} (\ell -1) q s \right) \ge 1-3n^{-1},
$$
where the first inequality holds due to $(\ell -1) qs \ge 16(n-\ell) \log n$
and the last inequality holds due to $(\ell -1) qs \ge 16 \log n$.
\qed
\end{proof}

We now prove \prettyref{lmm:seed}:
\begin{proof}[Proof of \prettyref{lmm:seed}]
In view of \prettyref{lmm:seed_matching_mm}, 
we get that with probability at least $1- 2n^{ -m }$, 
$\pi_1$ is guaranteed to have at most $192 \log n/(qs)$ errors with respect to $\pi^*$, 
even if $\pi_0$, or equivalently the seed set $S$, is adversarially chosen.

We next apply \prettyref{lmm:seed_matching_last} with $\ell = n- 192 \log n/(qs)$.
In view of the assumption $ n (qs)^2 \ge 2^{11} \times 3 \log^2 n$ and $n \ge 4$, 
we have 
$(\ell-1) \ge n/2$. Thus $(\ell-1) qs \ge n qs /2 \ge 16 \log n$,
and $(\ell-1) qs \ge nq s /2 \ge 12 nq^2+2$ in view of 
$s \ge 30 q$ and $nqs \ge 20$. Moreover, 
$(\ell-1) qs \ge n qs /2 \ge 2^{10} \times 3 \log^2 n / (qs) = 16(n-\ell) \log n$.
Therefore, all assumptions of \prettyref{lmm:seed_matching_last}
are satisfied. It follows from \prettyref{lmm:seed_matching_last}
that with probability at least 
$1-3n^{-1}$, $\hat{\pi}=\pi^*$, even if $\pi_1$ is adversarially chosen.

In conclusion, we get that with probability at least $1-5n^{-1}$,
\prettyref{alg:seed} with $\pi_0$ as the seed set outputs $\hat \pi = \pi$.
\qed
\end{proof}

\end{appendices}

\section*{Acknowledgment}

J.~Ding is supported in part by the NSF Grant DMS-1757479 and an Alfred Sloan fellowship.
Z.~Ma is supported in part by an NSF CAREER award DMS-1352060 and an Alfred Sloan fellowship.
Y.~Wu is supported in part by the NSF Grant CCF-1527105, an NSF CAREER award CCF-1651588, and an Alfred Sloan fellowship.
J.~Xu is supported by the NSF Grants CCF-1755960 and IIS-1838124.

J.~Ding and Y.~Wu would like to thank the Centre de Recherches Math\'ematiques at the Universit\'e de Montr\'eal, where some of the work was carried out during the Workshop on Combinatorial Statistics.
Y.~Wu is also grateful to David Pollard for helpful discussions on small ball probability.
J.~Xu would like to thank Nadav Dym  and Shahar Kovalsky for pointing out the connections between
fractional isomorphism and iterated degree sequences.
The authors are grateful to the anonymous referees for helpful comments and corrections.

\newcommand{\etalchar}[1]{$^{#1}$}

\end{document}